\documentclass[11pt, reqno]{amsart}
\usepackage{txfonts}
\usepackage{amsmath,amssymb,amsthm,mathrsfs,enumerate,bm,xcolor,multirow,pbox}
\usepackage{mathrsfs}
\usepackage{algorithm,algorithmic,float}
\usepackage{graphicx,color,framed,tikz,caption,subcaption}
\usepackage{enumitem}
\setlist{leftmargin=9mm}
\usepackage[colorlinks,linkcolor=black,citecolor=black,urlcolor=black]{hyperref}
\allowdisplaybreaks[4]
\numberwithin{equation}{section}
\newcommand{\N}{\mathbb{N}}
\newcommand{\R}{\mathbb{R}}

\newcommand{\pnorm}[2]{\lVert #1\rVert_{#2}}
\newcommand{\bigpnorm}[2]{\big\lVert#1\big\rVert_{#2}}
\newcommand{\biggpnorm}[2]{\bigg\lVert#1\bigg\rVert_{#2}}
\newcommand{\abs}[1]{\lvert#1\rvert}
\newcommand{\bigabs}[1]{\big\lvert#1\big\rvert}
\newcommand{\biggabs}[1]{\bigg\lvert#1\bigg\rvert}
\newcommand{\iprod}[2]{\langle#1,#2\rangle}
\newcommand{\bigiprod}[2]{\big\langle#1,#2\big\rangle}
\newcommand{\biggiprod}[2]{\bigg\langle#1,#2\bigg\rangle}

\renewcommand{\epsilon}{\varepsilon}

\renewcommand{\d}[1]{\mathrm{d}#1}

\newcommand{\smallo}{\mathfrak{o}}
\newcommand{\bigo}{\mathcal{O}}

\renewcommand{\tilde}{\widetilde}

\DeclareMathOperator{\E}{\mathbb{E}}
\DeclareMathOperator{\Prob}{\mathbb{P}}

\DeclareMathOperator{\sign}{\texttt{sgn}}

\DeclareMathOperator{\var}{Var}
\DeclareMathOperator{\cov}{Cov}

\DeclareMathOperator{\op}{op}

\DeclareMathOperator{\err}{\texttt{err}}

\DeclareMathOperator{\proj}{\mathsf{P}}

\DeclareMathOperator*{\argmin}{arg\,min\,}

\theoremstyle{definition}\newtheorem{problem}{Problem}[section]
\theoremstyle{definition}\newtheorem{definition}[problem]{Definition}
\theoremstyle{remark}\newtheorem{assumption}{Assumption}

\theoremstyle{remark}\newtheorem{remark}{Remark}
\theoremstyle{definition}
\theoremstyle{plain}\newtheorem{theorem}[problem]{Theorem}
\theoremstyle{plain}\newtheorem{question}{Question}
\theoremstyle{plain}\newtheorem{lemma}[problem]{Lemma}
\theoremstyle{plain}\newtheorem{proposition}[problem]{Proposition}
\theoremstyle{plain}\newtheorem{corollary}[problem]{Corollary}
\theoremstyle{plain}

\AtBeginDocument{%
	\def\MR#1{}
}

\begin{document}

\title[Nonconvex gradient descent]{Long-time dynamics and universality of nonconvex gradient descent}

\author[Q. Han]{Qiyang Han}

\address[Q. Han]{
Department of Statistics, Rutgers University, Piscataway, NJ 08854, USA.
}
\email{qh85@stat.rutgers.edu}

\date{\today}

\keywords{dynamic mean-field theory, empirical risk minimization, gradient descent, incoherence, phase retrieval, random initialization, single-index model, state evolution, universality}
\subjclass[2000]{60E15, 60G15}

\begin{abstract}
Nonconvex gradient descent is ubiquitous in modern statistical learning.  Yet, a precise long-time characterization of its full trajectory until algorithmic convergence or divergence remains elusive.

This paper develops a general approach to characterize the long-time trajectory behavior of nonconvex gradient descent in generalized single-index models in the large aspect ratio regime. In this regime, we show that for each iteration the gradient descent iterate concentrates around a deterministic vector called the \emph{Gaussian theoretical gradient descent}, whose dynamics can be tracked by a state evolution system of two recursive equations for two scalars. Our concentration guarantees hold universally for a broad class of design matrices and remain valid over long time horizons until algorithmic convergence or divergence occurs. Moreover, our approach reveals that gradient descent iterates are in general approximately independent of the data and strongly incoherent with the feature vectors, a phenomenon previously known as the `implicit regularization' effect of gradient descent in specific models under Gaussian data.

As an illustration of the utility of our general theory, we present two applications of different natures in the regression setting. In the first, we prove global convergence of nonconvex gradient descent with general independent initialization for a broad class of structured link functions, and establish universality of randomly initialized gradient descent in phase retrieval for large aspect ratios. In the second, we develop a data-free iterative algorithm for estimating state evolution parameters along the entire gradient descent trajectory, thereby providing a low-cost yet statistically valid tool for practical tasks such as hyperparameter tuning and runtime determination.

As a by-product of our analysis, we show that in the large aspect ratio regime, the Gaussian theoretical gradient descent coincides with a recent line of dynamical mean-field theory for gradient descent over the constant-time horizon.
\end{abstract}

\maketitle

\setcounter{tocdepth}{1}
\tableofcontents

\sloppy

\section{Introduction}

\subsection{Overview}
Suppose we observe i.i.d. data pairs $(X_i,Y_i)\in \R^{n}\times \R$ from the generalized single-index model
\begin{align}\label{def:model}
Y_i = \mathcal{F}\big(\iprod{X_i}{\mu_\ast},\xi_i\big),\quad i \in [m].
\end{align}
Here $\mathcal{F}:\R^2\to \R$ is a model function and the $\xi_i$'s are unobserved statistical errors. The model (\ref{def:model}) encompasses a wide range of concrete statistical models, including the commonly studied linear regression, nonlinear single-index regression, logistic regression, etc.; see, e.g., \cite{bellec2025observable} for more details. To fix notation, we reserve $n$ for the signal dimension and $m$ for the sample size throughout this paper. 

Given a loss function $\mathsf{L}:\R^2\to \R$, consider the standard empirical risk minimization problem
\begin{align}\label{def:ERM}
\hat{\mu}\in   \argmin_{\mu \in \R^n } \frac{1}{m}\sum_{i=1}^m \mathsf{L}\big(\iprod{X_i}{\mu},Y_i\big).
\end{align}
When the loss function $\mathsf{L}$ is designed so that $\mu_\ast$ is the global minimizer of the population loss corresponding to (\ref{def:ERM}), it is by now textbook knowledge that, in the large aspect ratio regime\footnote{Here $m\wedge n\geq 16$ is assumed merely for notational simplicity to avoid degeneracy in the notation $\log \log n$ and $\log \log m$.}
\begin{align}\label{def:aspect_ratio}
\phi\equiv m/n\gg 1,\quad m\wedge n\geq 16,
\end{align}
the empirical risk minimizer $\hat{\mu}$ will be close to the unknown signal $\mu_\ast$, and is sometimes even optimal in a suitable statistical sense; cf. \cite{van1996weak,van1998asymptotic,van2000empirical,wainwright2019high}.

On the other hand, as the optimization problem (\ref{def:ERM}) generally does not admit a closed form, the empirical risk minimizer $\hat{\mu}$ is usually computed by iterative algorithms. One of the most widely used iterative algorithms for this purpose is the gradient descent algorithm: with initialization $\mu^{(0)}\in \R^n$ and step size $\eta>0$, we compute, for $t=1,2,\ldots$,
\begin{align}\label{def:intro_grad_des}
\mu^{(t)}=\mu^{(t-1)}-\frac{\eta}{m}\cdot X^\top \partial_1 \mathsf{L}\big(X\mu^{(t-1)},Y\big) \in \R^n.
\end{align}
Here $\partial_1 \mathsf{L}(x,y)\equiv (\partial/\partial x)\mathsf{L}(x,y)$.

Our main interest in this paper lies in challenging situations where the model function $\mathcal{F}$ is highly nonlinear, so that the loss landscape $x\mapsto \partial_1\mathsf{L}(x,y)$ may exhibit arbitrary nonconvexity even if the loss function $\mathsf{L}$ itself is convex. In such scenarios, algorithmic convergence of the gradient descent iterates to $\mu_\ast$ is highly nontrivial, and the problem may even seem hopeless in the worst case due to this arbitrarily nonconvex nature of the loss landscape. 

Fortunately, in the statistical setting (\ref{def:model}), where there is a large amount of randomness in the observed data, the gradient descent iterates (\ref{def:intro_grad_des}) often exhibit a certain deterministic pattern in an `averaged sense' that performs substantially better than the worst-case scenario. 

By now, a common strategy in the literature for understanding such averaged-case behavior of gradient descent and its convergence properties in the regime (\ref{def:aspect_ratio}) is a two-step approach: 
\begin{enumerate}
	\item[(L1)] Establish concentration of the empirical loss (\ref{def:ERM}) around its population counterpart.
	\item[(L2)] Study possible benign landscape properties of the population loss corresponding to the empirical loss function.
\end{enumerate}
See, for example, \cite{loh2012high,candes2015phase,loh2015regularized,balakrishnan2017statistical,daskalakis2017ten,tian2017analytical,jain2017non,mei2018landscape,sun2018geometric,xu2018benefits,chi2019nonconvex,dwivedi2020singularity,chen2021spectral,zhang2022symmetry} for a partial list of papers and surveys with concrete implementations in various specific problems. 

While conceptually both natural and appealing, the aforementioned approach in general does not provide immediate insights into the behavior of nonconvex gradient descent iterates when they stay away from (local) benign regions that enforce fast algorithmic convergence.  

An interesting example pertinent to our model (\ref{def:model}) is randomly initialized gradient descent for (real) phase retrieval ($\mathcal{F}(z_0,\xi_0)=z_0^2+\xi_0$) with the squared loss $\mathsf{L}(x)=x^2/2$; cf. \cite{chen2019gradient}. In this case, the iterates start far from the benign region around $\mu_\ast$, and it is not clear apriori that they will ever enter it. Nonetheless, \cite{chen2019gradient} shows that under the specific Gaussian design ($X$ with i.i.d. $\mathcal{N}(0,1)$ entries), fast convergence still occurs in the regime (\ref{def:aspect_ratio}), up to logarithmic factors. Their analysis departs from the usual population landscape approach (L1)-(L2) and instead relies on a Gaussian-distribution-specific trajectory analysis.

This naturally leads to the following question:

\begin{question}\label{ques:per_ite_char}
	Can we develop a general approach to characterize the precise trajectory behavior of nonconvex gradient descent beyond Gaussian designs until algorithmic convergence or divergence in the regime (\ref{def:aspect_ratio})?
\end{question}

The goal of this paper is to provide an affirmative solution to Question \ref{ques:per_ite_char} for the gradient descent iterates (\ref{def:intro_grad_des}) in the generalized single-index model (\ref{def:model}) under the large aspect ratio regime (\ref{def:aspect_ratio}). In particular, we show that gradient descent (\ref{def:intro_grad_des}) concentrates around a much simpler deterministic vector, which we call the `\emph{Gaussian theoretical gradient descent}', whose dynamics can be tracked by a system of two recursive equations for two scalar parameters. Crucially, our concentration guarantee holds universally for a broad class of design matrices with independent entries and remains valid over long time horizons until algorithmic convergence or divergence occurs.

\subsection{Long-time dynamics and universality}
To formally describe our concentration guarantee for the gradient descent in (\ref{def:intro_grad_des}), let $\{\mathrm{u}_\ast^{(t)}\}\subset \R^n$ be defined recursively as follows: with the initialization $\mathrm{u}_\ast^{(0)}\equiv \mu^{(0)}$, for $t=1,2,\ldots$, let 
\begin{align}\label{def:intro_u_ast}
\mathrm{u}_\ast^{(t)}\equiv \big(1-\eta  \tau_\ast^{(t-1)}\big)\cdot \mathrm{u}_\ast^{(t-1)} + \eta \delta_\ast^{(t-1)}\cdot \mu_\ast \in \R^n,
\end{align}
where the scalar parameters $(\tau_\ast^{(t-1)},\delta_\ast^{(t-1)})\in \R^2$ are defined via $\mathrm{u}_\ast^{(t-1)}$ according to formula (\ref{def:tau_delta}) (with $\eta_t\equiv \eta$ therein). As will be shown in Proposition \ref{prop:u_Z_u_ast}, $\mathrm{u}_\ast^{(t)}$ coincides with the gradient descent obtained under the population loss with Gaussian data $X_i\equiv \mathsf{Z}_i\sim \mathcal{N}(0,I_n)$. It is therefore referred to as the \emph{Gaussian theoretical gradient descent} in what follows.

With $\{\mathrm{u}_\ast^{(t)}\}$ defined in (\ref{def:intro_u_ast}), we show in Theorem \ref{thm:grad_des_se} that the gradient descent iterate $\mu^{(t)}$ concentrates around $\mathrm{u}_\ast^{(t)}$: suppose that the data matrix $X$ has i.i.d. mean-zero, unit-variance entries; then, for some constant $M_t>0$ depending on the iteration number $t$,
\begin{align}\label{eqn:intro_gd_conc}
\pnorm{\mu^{(t)}-\mathrm{u}_\ast^{(t)}}{}\leq \frac{\mathrm{polylog}(n)}{(n\wedge \phi)^{1/2}}\cdot M_t,\quad \hbox{with high probability}.
\end{align}
As a consequence of (\ref{eqn:intro_gd_conc}), the gradient descent iterate $\mu^{(t)}$ concentrates around the deterministic vector $\mathrm{u}_\ast^{(t)}$ regardless of the detailed distribution of the entries of $X$, a phenomenon usually referred to as \emph{universality} in random matrix theory \cite{tao2012topics}. In particular, this universality principle allows us to capture the behavior of gradient descent $\mu^{(t)}$ for general design matrices $X$, under an idealized Gaussian data setting in which the dynamics can be computed precisely via Gaussian-distribution-specific methods.  

The proximity of $\mu^{(t)}$ and $\mathrm{u}_\ast^{(t)}$ in (\ref{eqn:intro_gd_conc}) can be understood from a different angle. Indeed, since the Gaussian theoretical gradient descent $\mathrm{u}_\ast^{(t)}$ is obtained under the population loss, (\ref{eqn:intro_gd_conc}) naturally suggests that the gradient descent iterate $\mu^{(t)}$ is almost `independent' of the data $X$. We prove in Theorem \ref{thm:grad_des_incoh} that such approximate independence holds in the sense that 
\begin{align}\label{def:incoherence}
\max_{i \in [m]} \abs{\iprod{X_i}{\mu^{(t)}}}\leq \mathrm{polylog}(n)\cdot M_t,\quad\hbox{with high probability}.
\end{align}
Interestingly, the quantitative formulation (\ref{def:incoherence}) of approximate independence for gradient descent has been confirmed and serves as a key analytic component in the Gaussian analysis of the aforementioned phase retrieval example in \cite{chen2019gradient}, and other matrix models in \cite{ma2020implicit}, under the terminology of `incoherence' or `implicit regularization'. Here our (\ref{def:incoherence}) shows that such approximate independence/incoherence is a much more general phenomenon for gradient descent that is not specific to particular models and the Gaussian design as in \cite{chen2019gradient,ma2020implicit}.

As mentioned earlier, an important technical aspect of our theory in (\ref{eqn:intro_gd_conc})-(\ref{def:incoherence}) lies in its long-time validity through the control of $M_t$. In particular, in the absence of algorithmic convergence, we show that $M_t \leq c^t$ for some constant $c>1$, so our theory (\ref{eqn:intro_gd_conc})-(\ref{def:incoherence}) remains valid for up to $\epsilon \log n$ iterations for sufficiently small $\epsilon>0$ in the general nonconvex gradient descent (\ref{def:intro_grad_des}). This represents the best attainable threshold for the permissible range of iterations, since nonconvex gradient descent may quickly diverge once $t \gg \log n$.

A more interesting scenario arises when algorithmic convergence eventually holds, but nonconvex gradient descent may spend an excessive amount of time searching before entering the final convergence stage. In such cases, we show that $M_t$ can be controlled via the cumulative progress of the Gaussian theoretical gradient descent ${\mathrm{u}_\ast^{(t)}}$ and the associated accumulated universality cost. A detailed formulation is given in Theorem~\ref{thm:nonconvex_generic_conv_long}.

As an illustration of the utility of our theory \eqref{eqn:intro_gd_conc}-\eqref{def:incoherence} and its long-time validity mentioned above, we specialize to the regression setting from \eqref{def:model} and develop two applications of distinct natures:
\begin{itemize}
	\item (\emph{Theoretical application}). Our theory \eqref{eqn:intro_gd_conc}-\eqref{def:incoherence} is used to establish global convergence of gradient descent with general independent initialization for a broad class of structured link functions. In particular, we prove universality of randomly initialized gradient descent in phase retrieval~\cite{chen2019gradient} for large aspect ratios $\phi\gg n$, whereas numerical experiments indicate certain unique subtleties of algorithmic (non-)universality in the moderate aspect ratio regime $1\ll \phi\ll n$.
	\item (\emph{Statistical application}). Our theory \eqref{eqn:intro_gd_conc}-\eqref{def:incoherence} is used to develop a data-free iterative algorithm for estimating the state evolution parameters $\{(\tau_\ast^{(t-1)}, \delta_\ast^{(t-1)})\}$ along the entire gradient descent trajectory. The proposed algorithm has low computational complexity that is independent of the problem size and requires no detailed knowledge of the unknown signal $\mu_\ast$. From a practical standpoint, these low-cost estimates of the state evolution parameters can be readily applied to tasks such as hyperparameter tuning of the step size and determining the algorithmic running time.	
\end{itemize}
From a technical perspective, the key ingredient of our approach to \eqref{eqn:intro_gd_conc}-\eqref{def:incoherence} relies on control of the smallest eigenvalues of the following two $n\times n$ symmetric matrices\footnote{Here the expectation is taken over independent variables $U\sim \mathrm{Unif}[0,1]$, $\mathsf{Z}_n\sim \mathcal{N}(0,I_n)$, and $\pi_m\sim \mathrm{Unif}[1:m]$. }
\begin{align}\label{def:intro_dyn_mat}
\begin{cases}
\mathbb{M}_{ \mathrm{u}_{\ast}^{(t)},\mu_\ast }^{\mathsf{Z}}\equiv \E^{(0)}   \,\partial_{11}\mathsf{L}\big(\bigiprod{\mathsf{Z}_n}{U\mathrm{u}_{\ast}^{(t)}+(1-U) \mu_\ast }, \mathcal{F}(\iprod{\mathsf{Z}_n}{\mu_\ast},\xi_{\pi_m})\big) \mathsf{Z}_n\mathsf{Z}_n^\top,\\
\mathbb{M}_{ \mathrm{u}_{\ast}^{(t)},\mathrm{u}_{\ast}^{(t)} }^{\mathsf{Z}}\equiv \E^{(0)}   \,\partial_{11}\mathsf{L}\big(\bigiprod{\mathsf{Z}_n}{\mathrm{u}_{\ast}^{(t)}}, \mathcal{F}(\iprod{\mathsf{Z}_n}{\mu_\ast},\xi_{\pi_m})\big) \mathsf{Z}_n\mathsf{Z}_n^\top,
\end{cases}
\end{align}
during the initial gradient descent search phase, up to $t_n=\bigo(\mathrm{polylog}(n))$ iterations. 

The relevance of the matrices $\big\{\mathbb{M}_{ \mathrm{u}_{\ast}^{(t)},\mu_\ast }^{\mathsf{Z}}\big\}$ in (\ref{def:intro_dyn_mat}) is easier to explain: under mild conditions on the loss function $\mathsf{L}$, it holds for $t=1,2,\ldots$ that 
\begin{align*}
\mathrm{u}_\ast^{(t)}-\mu_\ast = \big(I_n-\eta\cdot \mathbb{M}_{ \mathrm{u}_{\ast}^{(t-1)},\mu_\ast }^{\mathsf{Z}}\big)\, (\mathrm{u}_\ast^{(t-1)}-\mu_\ast).
\end{align*}
In words, control of the smallest eigenvalues of the matrices $\big\{\mathbb{M}_{ \mathrm{u}_{\ast}^{(t)},\mu_\ast }^{\mathsf{Z}}\big\}$ is equivalent to requiring that $\{\mathrm{u}_\ast^{(t)}\}$ do not blow up during the initial search phase $t \in [1:t_n]$. On the other hand, the role of the matrices $\big\{\mathbb{M}_{ \mathrm{u}_{\ast}^{(t)},\mathrm{u}_{\ast}^{(t)} }^{\mathsf{Z}}\big\}$ is more technical and, as will become clear below, provides a quantitative measure of the universality cost.

\subsection{Further related literature}

\subsubsection{Relation to dynamical mean-field theory}
A recent line of research in dynamical mean-field theory, rooted in statistical physics, develops iteration-wise characterizations of first-order methods in the mean-field regime $\phi = m/n \asymp 1$; cf. \cite{celentano2021high,gerbelot2024rigorous,han2025entrywise,fan2025dynamical,reeves2025dimension}. For gradient descent (\ref{def:intro_grad_des}), these works show that, if $X$ has i.i.d.\ mean-zero, unit-variance entries, then for $t=\bigo(1)$ or growing very slowly,
\begin{align}\label{eqn:intro_gd_mean_field}
\big(n^{1/2}\mu_{\ell}^{(t)}\big)_{\ell \in [n]} \stackrel{d}{\approx} \big(\Omega_{t;\ell}(\mathfrak{W}^{([1:t])})\big)_{\ell \in [n]},
\end{align}
where $\{\Omega_{t;\ell}:\R^t\to\R\}_{\ell\in[n]}$ are deterministic functions and $\{\mathfrak{W}^{([1:t])}\}$ are Gaussian random variables. 

As will be clear in Section \ref{section:mean_field_gd}, the dynamical mean-field theory (\ref{eqn:intro_gd_mean_field}) coincides with our theory \eqref{eqn:intro_gd_conc} for large aspect ratios $\phi\gg 1$ whenever $t=\bigo(1)$. Unfortunately, this short-time validity, in which the prescribed distributional approximation holds when $t=\bigo(1)$ or grows very slowly, presents a major limitation of \eqref{eqn:intro_gd_mean_field}. For instance, for gradient descent in phase retrieval, the random initialization $\mu^{(0)}\sim \mathcal{N}(0,I_n/n)$ is effectively treated as the trivial initialization $\mu^{(0)}=\bm{0}_n$, thereby leading to degeneracy in the state evolution and failing to capture the dynamics of gradient descent. Identifying conditions under which mean-field theory (\ref{eqn:intro_gd_mean_field}) yields long-time validity for gradient descent remains an open problem.

\subsubsection{Precise trajectory characterizations for other iterative algorithms}

A number of recent works have also obtained precise trajectory characterizations for various related models and iterative algorithms. Notable examples include:
\begin{itemize}
	\item other gradient descent methods in (generalized) linear models \cite{celentano2021high,fan2025dynamical,han2025entrywise};
	\item gradient descent methods for low-rank matrices \cite{chen2019gradient,ma2020implicit};
	\item gradient descent/flow for neural network training \cite{bordelon2023self,bordelon2024dynamics,han2025precise};
	\item stochastic gradient descent methods \cite{mignacco2020dynamical,benarous2021online,benarous2024high,gerbelot2024rigorous,collins2024hitting};
	\item approximate message passing algorithms \cite{bayati2011dynamics,bayati2015universality,rush2018finite,berthier2020state,chen2021universality,fan2022approximate,li2022non,dudeja2023universality,li2023approximate,wang2024universality,bao2025leave,lovig2025universality,reeves2025dimension};
	\item iterative algorithms beyond first-order methods \cite{chandrasekher2023sharp,chandrasekher2024alternating,lou2024hyperparameter,celentano2025state,okajima2025asymptotic};
	\item EM algorithms for Gaussian mixture models \cite{wu2021randomly}.
\end{itemize}
Among these works, long-time validity has been established for algorithms that employ sample splitting or fresh samples at each iteration \cite{benarous2021online,chandrasekher2023sharp,benarous2024high,collins2024hitting,chandrasekher2024alternating,lou2024hyperparameter}, or through case-/distribution-specific analysis \cite{rush2018finite,wu2021randomly,li2022non,li2023approximate,reeves2025dimension}. Universality for fully iterative algorithms has been rigorously confirmed in a few cases \cite{celentano2021high,chen2021universality,dudeja2023universality,wang2024universality,fan2025dynamical,han2025entrywise,han2025precise,lovig2025universality}. Related universality results for empirical risk minimizers of the type (\ref{def:ERM}) can be found, e.g., in \cite{panahi2017universal,elkaroui2018impact,oymak2018universality,abbasi2019universality,montanari2022universality,han2023universality,han2023distribution,montanari2023universality,dudeja2024spectral}. Achieving both long-time validity and universality for many of the above iterative algorithms remains a challenging open problem.

\subsection{Organization}
The rest of the paper is organized as follows. The abstract version of our theory \eqref{eqn:intro_gd_conc}-\eqref{def:incoherence} is presented in Section~\ref{section:main_results}. A long-time version of our general theory, as discussed above, is developed in Section~\ref{section:long_time_dynamics}. The prescribed two applications to the single-index regression model are given in Section~\ref{section:concrete_models}. The relation of our theory to mean-field dynamics is discussed in detail in Section~\ref{section:mean_field_gd}. Proofs are collected in Sections~\ref{section:proof_main_results}-\ref{section:proof_mean_field_dyn_approx} and in the appendices.

\subsection{Notation}
For any two integers $m,n$, let $[m:n]\equiv \{m,m+1,\ldots,n\}$ and $[n]\equiv [1:n]$. When $m>n$, it is understood that $[m:n]=\emptyset$.  

For $a,b \in \R$, $a\vee b\equiv \max\{a,b\}$ and $a\wedge b\equiv\min\{a,b\}$. For $a \in \R$, let $a_\pm \equiv (\pm a)\vee 0$. For a multi-index $a \in \mathbb{Z}_{\geq 0}^n$, let $\abs{a}\equiv \sum_{i \in [n]}a_i$. For $x \in \R^n$, let $\pnorm{x}{p}$ denote its $p$-norm $(0\leq p\leq \infty)$, and $B_{n;p}(R)\equiv \{x \in \R^n: \pnorm{x}{p}\leq R\}$. We simply write $\pnorm{x}{}\equiv\pnorm{x}{2}$ and $B_n(R)\equiv B_{n;2}(R)$. For $x \in \R^n$, let $\mathrm{diag}(x)\equiv (x_i\bm{1}_{i=j})_{i,j \in [n]} \in \R^{n\times n}$. 

For a matrix $M \in \R^{m\times n}$, let $\pnorm{M}{\op},\pnorm{M}{F}$ denote the spectral and Frobenius norms of $M$, respectively. $I_n$ is reserved for the $n\times n$ identity matrix, written simply as $I$ (in the proofs) if no confusion arises. 

We use $C_{x}$ to denote a generic constant that depends only on $x$, whose numerical value may change from line to line unless otherwise specified. $a\lesssim_{x} b$ and $a\gtrsim_x b$ mean $a\leq C_x b$ and $a\geq C_x b$, abbreviated as $a=\bigo_x(b)$ and $a=\Omega_x(b)$, respectively; $a\asymp_x b$ means both $a\lesssim_{x} b$ and $a\gtrsim_x b$. $\bigo$ and $\smallo$ (resp. $\mathcal{O}_{\mathbf{P}}$ and $\mathfrak{o}_{\mathbf{P}}$) denote the usual big-O and small-o notation (resp. in probability). By convention, sums and products over an empty set are understood as $\Sigma_{\emptyset}(\cdots)=0$ and $\Pi_{\emptyset}(\cdots)=1$. 

For a random variable $X$, we use $\Prob_X,\E_X$ (resp. $\Prob^X,\E^X$) to indicate that the probability and expectation are taken with respect to $X$ (resp. conditional on $X$). 

For $\Lambda>0$ and $\mathfrak{p}\in \N$, a measurable map $f:\R^n \to \R$ is called \emph{$\Lambda$-pseudo-Lipschitz of order $\mathfrak{p}$} iff $
\abs{f(x)-f(y)}\leq \Lambda\cdot  (1+\pnorm{x}{}+\pnorm{y}{})^{\mathfrak{p}-1}\cdot\pnorm{x-y}{}$ holds for all  $x,y \in \R^{n}$.
Moreover, $f$ is called \emph{$\Lambda$-Lipschitz} iff $f$ is $\Lambda$-pseudo-Lipschitz of order $1$, and in this case we often write $\pnorm{f}{\mathrm{Lip}}\leq L$, where $\pnorm{f}{\mathrm{Lip}}\equiv \sup_{x\neq y} \abs{f(x)-f(y)}/\pnorm{x-y}{}$.

\section{Main abstract results}\label{section:main_results}

\subsection{Further notation and definitions}

We consider a slightly more general gradient descent scheme that allows for varying step sizes across iterations: with initialization $\mu^{(0)}\in \R^n$ and step sizes $\{\eta_t\}\subset \R_{>0}$, we compute, for $t=1,2,\ldots$,
\begin{align}\label{def:grad_des}
\mu^{(t)}=\mu^{(t-1)}-\frac{\eta_{t-1}}{m}\cdot X^\top \partial_1 \mathsf{L}\big(X\mu^{(t-1)},Y\big) \in \R^n.
\end{align}
Throughout the paper, we use
\begin{align*}
\E^{(0)}[\cdot] \equiv \E\big[\cdot|\,\mu^{(0)},\mu_\ast,\xi\big].
\end{align*}
We also adopt the following notation.
\begin{definition}\label{def:G_M}
\begin{enumerate}
	\item Let $\mathfrak{S}:\R^3 \to \R$ be defined as
	\begin{align*}
	\mathfrak{S}(x_0,z_0,\xi_0)\equiv \partial_1 \mathsf{L}\big(x_0, \mathcal{F}(z_0,\xi_0)\big),\quad \forall (x_0,z_0,\xi_0) \in \R^3.
	\end{align*}
	\item For two generic vectors $\mathrm{u},\mathrm{v} \in \R^n$, with $U\sim \mathrm{Unif}\,[0,1]$ and $\pi_m\sim \mathrm{Unif}\,[1:m]$ independent of all other variables, let
	\begin{align*}
	M_{\mathrm{u},\mathrm{v} }(X)&\equiv \E_{U,\pi_m}   \,\partial_1\mathfrak{S}\big(\bigiprod{X_{\pi_m}}{U\mathrm{u}+(1-U) \mathrm{v} }, \iprod{X_{\pi_m}}{\mu_\ast},\xi_{\pi_m}\big) X_{\pi_m}X_{\pi_m}^\top,\\
	\mathbb{M}_{\mathrm{u},\mathrm{v} }^X&\equiv \E^{(0)}   \,\partial_1\mathfrak{S}\big(\bigiprod{X_{\pi_m}}{U\mathrm{u}+(1-U) \mathrm{v} }, \iprod{X_{\pi_m}}{\mu_\ast},\xi_{\pi_m}\big) X_{\pi_m}X_{\pi_m}^\top.
	\end{align*} 
	Here the expectation $\E^{(0)}$ is taken with respect to $X,U,\pi_m$.
	\item For two sequences of generic vectors $\{\mathrm{u}^{(s)}\},\{\mathrm{v}^{(s)}\} \subset \R^n$, and step sizes $\{\eta_s\} \subset \R$, where $s \in \mathbb{Z}_{\geq 0}$, let
	\begin{align*}
	\mathscr{M}_{ \mathrm{u}^{(\cdot)},\mathrm{v}^{(\cdot)} }^{(t),X}&\equiv 1+ \max_{\tau \in [0:t]}\sum_{s \in [0:\tau]} \bigg(\prod_{r \in [s:\tau]}\bigpnorm{I_n - \eta_r\cdot \mathbb{M}_{\mathrm{u}^{(r)},\mathrm{v}^{(r)}}^X }{\op}\bigg),\\
	\mathfrak{M}_{ \mathrm{u}^{(\cdot)},\mathrm{v}^{(\cdot)} }^{(t)}(X)&\equiv 1+ \max_{\tau \in [0:t]}\sum_{s \in [0:\tau]} \bigg(\prod_{r \in [s:\tau]}\bigpnorm{I_n - \eta_r\cdot M_{\mathrm{u}^{(r)},\mathrm{v}^{(r)}}(X) }{\op}\bigg).
	\end{align*}
	Here we usually omit the notational dependence on the step sizes $\{\eta_s\}$ for simplicity.
\end{enumerate}
\end{definition}
From the definition, we clearly have $\mathbb{M}_{\mathrm{u},\mathrm{v} }^X=\E^{(0)} M_{\mathrm{u},\mathrm{v} }(X)$.

\begin{definition}\label{def:state_evolution}
Let the \emph{state evolution} $\{\mathrm{u}_\ast^{(t)}\}$ be defined as follows: initialized with $\mathrm{u}_\ast^{(0)}\equiv \mu^{(0)}\in \R^n$, for $t=1,2,\ldots$, 
\begin{align}\label{def:u_ast}
\mathrm{u}_\ast^{(t)}\equiv \big(1-\eta_{t-1}  \tau_\ast^{(t-1)}\big)\cdot \mathrm{u}_\ast^{(t-1)} + \eta_{t-1} \delta_\ast^{(t-1)}\cdot \mu_\ast \in \R^n,
\end{align}
where with $\mathsf{Z}_n\sim \mathcal{N}(0,I_n)$,
\begin{align}\label{def:tau_delta}
\begin{cases}
\tau_\ast^{(t-1)}\equiv \E^{(0)}   \partial_1\mathfrak{S}\big(\iprod{\mathsf{Z}_n}{\mathrm{u}_{\ast}^{(t-1)}},\iprod{\mathsf{Z}_n}{ \mu_\ast},\xi_{\pi_m}\big)\in \R,\\
\delta_\ast^{(t-1)}\equiv -\E^{(0)}   \partial_2\mathfrak{S}\big(\iprod{\mathsf{Z}_n}{\mathrm{u}_{\ast}^{(t-1)}},\iprod{\mathsf{Z}_n}{ \mu_\ast},\xi_{\pi_m}\big)\in \R.
\end{cases}
\end{align}
\end{definition}
The state evolution $\{\mathrm{u}_\ast^{(t)}\}$ will play a pivotal role in this paper in understanding the behavior of the actual gradient descent $\{\mu^{(t)}\}$. Clearly, by iterating Eq. (\ref{def:u_ast}), $\mathrm{u}_\ast^{(t)}$ can be represented as a linear combination of the initialization $\mu^{(0)}$ and the true signal $\mu_\ast$.

For technical reasons, we also define a` theoretical version' of the gradient descent $\{\mu^{(t)}\}$ as follows.

\begin{definition}
\begin{enumerate}
	\item Let the \emph{theoretical gradient descent} $\{\mathrm{u}_X^{(t)} \}$ be defined as follows: initialized with $\mathrm{u}_X^{(0)}\equiv \mu^{(0)}\in \R^n$, for $t=1,2,\ldots$, let
	\begin{align}\label{def:theo_grad_des}
	\mathrm{u}_X^{(t)} \equiv \mathrm{u}_X^{(t-1)} - \frac{\eta_{t-1}}{m} \cdot \E^{(0)}  X^\top \mathfrak{S}\big(X\mathrm{u}_X^{(t-1)},X \mu_\ast,\xi\big)\in \R^n.
	\end{align}
	Here $\mathfrak{S}$ is understood as applied componentwise.
	\item With $\mathsf{Z}\equiv \mathsf{Z}_{m\times n}$ denoting an $m\times n$ matrix with i.i.d. $\mathcal{N}(0,1)$ entries, $\{\mathrm{u}_{\mathsf{Z}}^{(t)} \}$ is called the \emph{Gaussian theoretical gradient descent}.
\end{enumerate}
\end{definition}

The Gaussian theoretical gradient descent $\{\mathrm{u}_{\mathsf{Z}}^{(t)} \}$ defined above provides an interpretation of the state evolution $\{\mathrm{u}_\ast^{(t)}\}$ in Definition \ref{def:state_evolution}.

\begin{proposition}\label{prop:u_Z_u_ast}
$\mathrm{u}_{\mathsf{Z}}^{(t)} =\mathrm{u}_\ast^{(t)}$ holds for all $t=0,1,2,\ldots$.
\end{proposition}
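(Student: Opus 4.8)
The plan is to prove the identity by induction on $t$, the engine being Gaussian integration by parts (Stein's lemma) applied row by row to the matrix $\mathsf{Z}$. For the base case $t=0$, both $\mathrm{u}_{\mathsf{Z}}^{(0)}$ and $\mathrm{u}_\ast^{(0)}$ are defined to equal $\mu^{(0)}$, so there is nothing to check. Note also that, by the displayed recursion \eqref{def:theo_grad_des} and the definition of $\E^{(0)}$, each $\mathrm{u}_{\mathsf{Z}}^{(t)}$ is a deterministic function of $(\mu^{(0)},\mu_\ast,\xi)$, so the iteration is well posed.

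For the inductive step, assume $\mathrm{u}_{\mathsf{Z}}^{(t-1)}=\mathrm{u}_\ast^{(t-1)}$ and abbreviate $\mathrm{u}\equiv \mathrm{u}_\ast^{(t-1)}$. Comparing \eqref{def:theo_grad_des} with the state evolution recursion \eqref{def:u_ast}, it suffices to show that the drift term satisfies
$$\frac{1}{m}\E^{(0)}\,\mathsf{Z}^\top \mathfrak{S}\big(\mathsf{Z}\mathrm{u},\mathsf{Z}\mu_\ast,\xi\big)=\tau_\ast^{(t-1)}\cdot \mathrm{u}-\delta_\ast^{(t-1)}\cdot \mu_\ast,$$
since substituting this into \eqref{def:theo_grad_des} and rearranging yields exactly $\mathrm{u}_{\mathsf{Z}}^{(t)}=(1-\eta_{t-1}\tau_\ast^{(t-1)})\mathrm{u}+\eta_{t-1}\delta_\ast^{(t-1)}\mu_\ast=\mathrm{u}_\ast^{(t)}$.

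To establish the displayed identity, write $\mathsf{Z}^\top \mathfrak{S}(\mathsf{Z}\mathrm{u},\mathsf{Z}\mu_\ast,\xi)=\sum_{i=1}^m \mathsf{Z}_i\,\mathfrak{S}\big(\iprod{\mathsf{Z}_i}{\mathrm{u}},\iprod{\mathsf{Z}_i}{\mu_\ast},\xi_i\big)$ over the i.i.d.\ rows $\mathsf{Z}_i\sim \mathcal{N}(0,I_n)$ of $\mathsf{Z}$. For each fixed $i$ (the $\xi$'s are held fixed under $\E^{(0)}$), apply Stein's lemma to the Gaussian vector $\mathsf{Z}_i$ and the map $g\mapsto \mathfrak{S}(\iprod{g}{\mathrm{u}},\iprod{g}{\mu_\ast},\xi_i)$, whose gradient is $\partial_1\mathfrak{S}(\cdots)\,\mathrm{u}+\partial_2\mathfrak{S}(\cdots)\,\mu_\ast$; this gives
$$\E^{(0)}\,\mathsf{Z}_i\,\mathfrak{S}\big(\iprod{\mathsf{Z}_i}{\mathrm{u}},\iprod{\mathsf{Z}_i}{\mu_\ast},\xi_i\big)=\E^{(0)}\big[\partial_1\mathfrak{S}(\cdots)\big]\cdot \mathrm{u}+\E^{(0)}\big[\partial_2\mathfrak{S}(\cdots)\big]\cdot \mu_\ast.$$
Summing over $i$ and dividing by $m$, then using $\mathsf{Z}_i\equald \mathsf{Z}_n$ together with the fact that the uniform average over $i\in[m]$ is precisely the expectation over $\pi_m\sim \mathrm{Unif}[1:m]$, the coefficient of $\mathrm{u}$ becomes $\E^{(0)}\partial_1\mathfrak{S}(\iprod{\mathsf{Z}_n}{\mathrm{u}},\iprod{\mathsf{Z}_n}{\mu_\ast},\xi_{\pi_m})=\tau_\ast^{(t-1)}$ and the coefficient of $\mu_\ast$ becomes $\E^{(0)}\partial_2\mathfrak{S}(\cdots)=-\delta_\ast^{(t-1)}$, exactly as in \eqref{def:tau_delta}. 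This proves the displayed identity and hence the induction.

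The only point demanding care is the validity of Stein's lemma, which requires $\mathfrak{S}$ — equivalently $\mathsf{L}$ and $\mathcal{F}$ — to be weakly differentiable with at most polynomial growth so that the relevant integrals converge and boundary terms vanish; this is ensured by the standing regularity assumptions on the loss and model functions. Beyond this, the argument is a direct computation, so I do not anticipate a substantial obstacle.
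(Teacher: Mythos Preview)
Your proposal is correct and follows essentially the same approach as the paper: both arguments reduce the drift term via Gaussian integration by parts (Stein's lemma) applied to $\mathsf{Z}_n\sim\mathcal{N}(0,I_n)$, yielding the $\tau_\ast^{(t-1)}\mathrm{u}_\ast^{(t-1)}-\delta_\ast^{(t-1)}\mu_\ast$ decomposition, and then match the resulting recursion to \eqref{def:u_ast}. The only cosmetic difference is that the paper works coordinate-wise and compares recursions directly, whereas you spell out the induction and the row-sum/$\pi_m$-average identification explicitly; neither adds or omits any substantive idea.
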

\begin{proof}
	For any $k \in [n]$, by Gaussian integration-by-parts,
	\begin{align*}
	\mathrm{u}_{\mathsf{Z},k}^{(t)} &= \mathrm{u}_{\mathsf{Z},k}^{(t-1)} - \eta_{t-1} \cdot \E^{(0)}  \mathsf{Z}_{n,k} \,\mathfrak{S}\big(\iprod{\mathsf{Z}_n}{\mathrm{u}_{\mathsf{Z}}^{(t-1)}},\iprod{\mathsf{Z}_n}{ \mu_\ast},\xi_{\pi_m}\big)\\
	& = \Big(1 - \eta_{t-1} \cdot  \E^{(0)}   \partial_1\mathfrak{S}\big(\iprod{\mathsf{Z}_n}{\mathrm{u}_{\mathsf{Z}}^{(t-1)}},\iprod{\mathsf{Z}_n}{ \mu_\ast},\xi_{\pi_m}\big)\Big)\cdot \mathrm{u}_{\mathsf{Z},k}^{(t-1)}\\
	&\qquad -\eta_{t-1}\cdot \E^{(0)}   \partial_2\mathfrak{S}\big(\iprod{\mathsf{Z}_n}{\mathrm{u}_{\mathsf{Z}}^{(t-1)}},\iprod{\mathsf{Z}_n}{ \mu_\ast},\xi_{\pi_m}\big)\cdot \mu_{\ast,k}.
	\end{align*}
	The claim follows by comparing the above display to the definition of $\{\mathrm{u}_\ast^{(t)}\}$ in Definition \ref{def:state_evolution}.
\end{proof}

In view of the proposition above, we shall use `state evolution' and `Gaussian theoretical gradient descent' interchangeably for $\{\mathrm{u}_{\mathsf{Z}}^{(t)}=\mathrm{u}_{\ast}^{(t)}\}$ in the sequel.

\subsection{Assumptions}

We now state the assumptions for the general results below.

\begin{assumption}\label{assump:abstract}
Suppose the following hold for some $K,\Lambda\geq 2$ and $\mathfrak{p}\geq 1$:
\begin{enumerate}
	\item[(A1)] $\{X_{ij}\}_{i \in [m], j \in [n]}$ are independent random variables with $\E X_{ij}=0, \var(X_{ij})=1$ and  $ \max_{i \in [m], j \in [n]}\pnorm{X_{ij}}{\psi_2}\leq K$.
	\item[(A2)] For all $i \in [m]$ and $\alpha\in \mathbb{Z}_{\geq 0}^2, \abs{\alpha} \in [0:4]$, the mapping $(x,z)\mapsto \partial_\alpha \mathfrak{S}(x,z,\xi_i)$ is $\Lambda$-pseudo-Lipschitz of order $\mathfrak{p}$, and 
	$\abs{\partial_\alpha \mathfrak{S}(0,0,\xi_i)}\leq \Lambda$.
\end{enumerate}
\end{assumption}

\begin{remark}
Some technical remarks for Assumption \ref{assump:abstract} are in order:
\begin{enumerate}
	\item (A1) requires a sub-gaussian tail on the entries of the design matrix $X$, where $\pnorm{\cdot}{\psi_2}$ is the standard Orlicz-2/sub-gaussian norm; see, e.g., \cite[Section 2.1]{van1996weak} for a precise definition. We note that our theory below is non-asymptotic and allows the constant $K$ to grow at a certain rate, so the actual moment condition can be further relaxed by a standard truncation argument. 
	\item (A2) assumes sufficient smoothness of $\mathfrak{S}$ primarily to ensure universality. While we believe further technical work may lead to improved regularity conditions, we have opted to work with this condition to avoid excessive technicalities unrelated to the main new techniques developed in this work.
\end{enumerate}
\end{remark}

\subsection{Abstract results}

\subsubsection{Concentration of gradient descent}
The first main abstract result of this paper provides concentration of the gradient descent $\mu^{(t)}$ around the Gaussian theoretical gradient descent $\mathrm{u}_\ast^{(t)}$; its proof can be found in Section \ref{subsection:proof_grad_des_se}.

\begin{theorem}\label{thm:grad_des_se}
Suppose Assumption \ref{assump:abstract} holds for some $K,\Lambda\geq 2$ and $\mathfrak{p}\geq 1$. Then there exist a universal constant $c_0>1$ and another constant $c_{\mathfrak{p}}>1$ such that for any $x\geq 1$, it holds  with $\Prob^{(0)}$-probability at least $1-c_0 nt e^{-x}$ that
\begin{align*}
\pnorm{\mu^{(t)}-\mathrm{u}_{\ast}^{(t)}}{}
&\leq  \big(K\Lambda  L_{\ast}^{(0)} (1\vee \pnorm{\eta_{[0:t)}}{\infty})\cdot x \big)^{c_{\mathfrak{p}}}\cdot \Big(1+\max_{s \in [1:t-1]} n^{1/2}\pnorm{\mathrm{u}_\ast^{(s)}}{\infty}\vee \pnorm{\mathrm{u}_X^{(s)}}{}\Big)^{c_{\mathfrak{p}}} \\
&\qquad\qquad  \times  \Big\{\phi^{-1/2}\cdot \mathfrak{M}_{ \mu^{(\cdot)},\mathrm{u}_X^{(\cdot)} }^{(t-1)}(X)+ n^{-1/2}\cdot  \mathscr{M}_{ \mathrm{u}_X^{(\cdot)},\mathrm{u}_{\ast}^{(\cdot)} }^{(t-1),X} \Big\}.
\end{align*}
Here $
L_{\ast}^{(0)}\equiv 1+n^{1/2}\pnorm{\mu^{(0)}}{\infty}+n^{1/2}\pnorm{\mu_\ast}{\infty}\in \R$.
\end{theorem}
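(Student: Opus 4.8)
The plan is to interpolate between the actual iterate $\mu^{(t)}$ and the state evolution $\mathrm{u}_\ast^{(t)}$ through the theoretical gradient descent $\mathrm{u}_X^{(t)}$ of \eqref{def:theo_grad_des}, writing (recall $\mathrm{u}_\ast^{(t)}=\mathrm{u}_{\mathsf{Z}}^{(t)}$ by Proposition~\ref{prop:u_Z_u_ast})
\[
\mu^{(t)}-\mathrm{u}_\ast^{(t)}=\big(\mu^{(t)}-\mathrm{u}_X^{(t)}\big)+\big(\mathrm{u}_X^{(t)}-\mathrm{u}_{\mathsf{Z}}^{(t)}\big),
\]
and bounding the first difference by a \emph{concentration} estimate (a function of $X$ against its $\E^{(0)}$-mean) and the second by a \emph{universality} estimate (Gaussian comparison of two deterministic expectations). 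We work throughout conditionally on $(\mu^{(0)},\mu_\ast,\xi)$ under $\Prob^{(0)}$; under this conditioning both $\mathrm{u}_X^{(s)}$ and $\mathrm{u}_\ast^{(s)}$ are deterministic vectors, which is what legitimizes the fixed-vector arguments below.

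First I would derive the two governing recursions. Subtracting \eqref{def:theo_grad_des} from \eqref{def:grad_des} and applying the fundamental theorem of calculus to $\mathfrak{S}$ in its first argument along the segment $U\mu^{(t-1)}+(1-U)\mathrm{u}_X^{(t-1)}$ produces exactly the mean-value operator $M_{\mathrm{u},\mathrm{v}}(X)$ from Definition~\ref{def:G_M}, so that
\[
\mu^{(t)}-\mathrm{u}_X^{(t)}=\big(I_n-\eta_{t-1}M_{\mu^{(t-1)},\mathrm{u}_X^{(t-1)}}(X)\big)\big(\mu^{(t-1)}-\mathrm{u}_X^{(t-1)}\big)-\frac{\eta_{t-1}}{m}\,\mathfrak{E}^{(t-1)},
\]
with the centered fluctuation $\mathfrak{E}^{(s)}\equiv X^\top\mathfrak{S}(X\mathrm{u}_X^{(s)},X\mu_\ast,\xi)-\E^{(0)}X^\top\mathfrak{S}(X\mathrm{u}_X^{(s)},X\mu_\ast,\xi)$. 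Since $\mu^{(0)}=\mathrm{u}_X^{(0)}$, iterating annihilates the homogeneous term, and the resulting telescoping sum, bounded through the definition of $\mathfrak{M}^{(t-1)}_{\mu^{(\cdot)},\mathrm{u}_X^{(\cdot)}}(X)$, gives
\[
\pnorm{\mu^{(t)}-\mathrm{u}_X^{(t)}}{}\leq\big(1\vee\pnorm{\eta_{[0:t)}}{\infty}\big)\cdot\mathfrak{M}^{(t-1)}_{\mu^{(\cdot)},\mathrm{u}_X^{(\cdot)}}(X)\cdot\max_{0\leq s\leq t-1}\tfrac1m\pnorm{\mathfrak{E}^{(s)}}{}.
\]
An identical computation on \eqref{def:theo_grad_des} against its Gaussian counterpart---now the mean-value operator is $\mathbb{M}_{\mathrm{u},\mathrm{v}}^X=\E^{(0)}M_{\mathrm{u},\mathrm{v}}(X)$ (legitimate since $\mathrm{u}_X^{(s)},\mathrm{u}_\ast^{(s)}$ are deterministic), and $\mathrm{u}_X^{(0)}=\mathrm{u}_{\mathsf{Z}}^{(0)}$---yields the analogous bound with $\mathscr{M}^{(t-1),X}_{\mathrm{u}_X^{(\cdot)},\mathrm{u}_\ast^{(\cdot)}}$ in place of $\mathfrak{M}^{(t-1)}$ and the universality fluctuation $\mathfrak{U}^{(s)}\equiv\E^{(0)}X^\top\mathfrak{S}(X\mathrm{u}_\ast^{(s)},X\mu_\ast,\xi)-\E^{(0)}\mathsf{Z}^\top\mathfrak{S}(\mathsf{Z}\mathrm{u}_\ast^{(s)},\mathsf{Z}\mu_\ast,\xi)$ in place of $\tfrac1m\mathfrak{E}^{(s)}$.

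It then remains to bound the two fluctuations at a fixed $s$. For $\mathfrak{E}^{(s)}$: its $j$-th coordinate is a sum over $i\in[m]$ of independent centered variables $X_{ij}\mathfrak{S}(\iprod{X_i}{\mathrm{u}_X^{(s)}},\iprod{X_i}{\mu_\ast},\xi_i)$; by (A1) and the pseudo-Lipschitz growth in (A2) (so $\abs{\mathfrak{S}(z,w,\xi_i)}\lesssim\Lambda(1+\abs z+\abs w)^{\mathfrak{p}}$, and $\iprod{X_i}{\mathrm{u}_X^{(s)}}$ is sub-gaussian with scale $\asymp K\pnorm{\mathrm{u}_X^{(s)}}{}$) each summand is sub-Weibull, so a Bernstein-type tail bound gives $\abs{(\mathfrak{E}^{(s)})_j}\lesssim m^{1/2}(K\Lambda L_\ast^{(0)}x)^{c_\mathfrak{p}}(1+\pnorm{\mathrm{u}_X^{(s)}}{})^{c_\mathfrak{p}}$ off an event of probability $e^{-x}$; a union bound over $j\in[n]$ and an $\ell_2$-sum over the $n$ coordinates produce $\tfrac1m\pnorm{\mathfrak{E}^{(s)}}{}\lesssim\phi^{-1/2}(K\Lambda L_\ast^{(0)}x)^{c_\mathfrak{p}}(1+\pnorm{\mathrm{u}_X^{(s)}}{})^{c_\mathfrak{p}}$, which is the source of the $\phi^{-1/2}$ scaling. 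For $\mathfrak{U}^{(s)}$: expanding the $j$-th coordinate of $\E^{(0)}X^\top\mathfrak{S}(X\mathrm{u}_\ast^{(s)},X\mu_\ast,\xi)$ in the single entry $X_{ij}$ leaves, after using $\E X_{ij}=0$ and $\E X_{ij}^2=1$, a leading term $(\mathrm{u}_{\ast,j}^{(s)}\partial_1+\mu_{\ast,j}\partial_2)$ applied to $\mathfrak{S}$ with the $j$-th coordinate of its arguments deleted, plus remainders already of order $n^{-1}$ by the size of $\mathrm{u}_{\ast,j}^{(s)},\mu_{\ast,j}$; the Gaussian analogue has the same leading form, and comparing the two resulting scalar expectations by a Lindeberg swap over the $n-1$ remaining entries---each swap of order $n^{-3/2}$ up to the delocalization factors displayed below, and consuming the third derivative of $\partial_\bullet\mathfrak{S}$, i.e.\ the fourth derivative of $\mathfrak{S}$ provided by (A2)---shows that each coordinate of $\mathfrak{U}^{(s)}$ is $O\big(n^{-1}(K\Lambda)^{c_\mathfrak{p}}(1+n^{1/2}\pnorm{\mathrm{u}_\ast^{(s)}}{\infty}+n^{1/2}\pnorm{\mu_\ast}{\infty})^{c_\mathfrak{p}}\big)$, so summing over the $n$ coordinates yields $\pnorm{\mathfrak{U}^{(s)}}{}\lesssim n^{-1/2}(K\Lambda)^{c_\mathfrak{p}}(1+n^{1/2}\pnorm{\mathrm{u}_\ast^{(s)}}{\infty}+n^{1/2}\pnorm{\mu_\ast}{\infty})^{c_\mathfrak{p}}$; this is precisely where $L_\ast^{(0)}$ and the quantities $n^{1/2}\pnorm{\mathrm{u}_\ast^{(s)}}{\infty}$ in the statement enter, the $\ell_\infty$-norms being forced by the per-coordinate bookkeeping.

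Combining the two recursions with these fluctuation bounds, replacing $\pnorm{\mathrm{u}_X^{(s)}}{}$ and $n^{1/2}\pnorm{\mathrm{u}_\ast^{(s)}}{\infty}$ by their maxima over $s\in[1:t-1]$ (the $s=0$ contributions involve $\mu^{(0)}$ and are absorbed into $L_\ast^{(0)}$), and collecting the polynomial factors, gives the asserted inequality; the failure probability $c_0nte^{-x}$ comes from the union over the $t$ iterations and the $n$ coordinates in the concentration step, while the universality step is deterministic. The main obstacle is the universality estimate for $\mathfrak{U}^{(s)}$: one has to run the Gaussian comparison for the \emph{vector-valued} statistic $\tfrac1m X^\top\mathfrak{S}(X\mathrm{u}_\ast^{(s)},X\mu_\ast,\xi)$, carefully tracking per-coordinate magnitudes (which is exactly what forces the delocalization-type quantities $n^{1/2}\pnorm{\cdot}{\infty}$, rather than $\pnorm{\cdot}{2}$, to appear) and controlling the pseudo-Lipschitz remainders in the Taylor and Lindeberg expansions uniformly over $i\in[m]$ despite the polynomial growth order $\mathfrak{p}$; the concentration estimate, though technical for the same reason, is comparatively routine.
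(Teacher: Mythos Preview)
Your proposal is correct and follows essentially the same route as the paper: the same triangle decomposition through $\mathrm{u}_X^{(t)}$, the same mean-value recursions producing $\mathfrak{M}^{(t-1)}_{\mu^{(\cdot)},\mathrm{u}_X^{(\cdot)}}(X)$ and $\mathscr{M}^{(t-1),X}_{\mathrm{u}_X^{(\cdot)},\mathrm{u}_\ast^{(\cdot)}}$, a coordinatewise sub-Weibull Bernstein bound for the stochastic fluctuation (yielding the $\phi^{-1/2}$ and the $nte^{-x}$ probability), and for the deterministic universality term a cumulant expansion in the distinguished entry $X_{ij}$ followed by a Lindeberg swap on the remaining coordinates (yielding the $n^{-1/2}$ and the $n^{1/2}\pnorm{\cdot}{\infty}$ factors). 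The only cosmetic discrepancy is a missing $1/m$ in your definition of $\mathfrak{U}^{(s)}$ relative to how you later use it, but the intended scaling is clear.
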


\begin{remark}
Some technical remarks are in order:
\begin{enumerate}
	\item The constants $c_0>1$ and $c_{\mathfrak{p}}>1$ can be explicitly worked out, but we choose to keep their numeric values implicit for simplicity.
	\item The aspect ratio $\phi$ needs to be large, i.e., $\phi \gg 1$, for Theorem \ref{thm:grad_des_se} to be effective. Indeed, when $\phi\asymp 1$, the gradient descent $\mu^{(t)}$ does not concentrate around any deterministic vector, and a different type of theory is needed. The readers are referred to Section \ref{section:mean_field_gd} for more details along this line.
	\item The terms $\mathfrak{M}_{ \mu^{(\cdot)},\mathrm{u}_X^{(\cdot)} }^{(t-1)}(X)$ and $ \mathscr{M}_{ \mathrm{u}_X^{(\cdot)},\mathrm{u}_{\ast}^{(\cdot)} }^{(t-1),X}$ play different roles in the bound. In particular, the term $\mathfrak{M}_{ \mu^{(\cdot)},\mathrm{u}_X^{(\cdot)} }^{(t-1)}(X)$ quantifies the proximity between the actual gradient descent  $\{\mu^{(\cdot)}\}$ and the theoretical gradient descent $\{\mathrm{u}_X^{(\cdot)} \}$, whereas $ \mathscr{M}_{ \mathrm{u}_X^{(\cdot)},\mathrm{u}_{\ast}^{(\cdot)} }^{(t-1),X}$ quantifies the universality cost between the theoretical gradient descent $\{\mathrm{u}_X^{(\cdot)} \}$ and its Gaussian counterpart $\{\mathrm{u}_{\ast}^{(\cdot)} \}$.
\end{enumerate}
\end{remark}

\subsubsection{Incoherence of gradient descent}

To formally state our result on `incoherence', we need the following definition of leave-one-out iterates. 
\begin{definition}
 For $i \in [m]$, let $X_{[-i]}\in \R^{m\times n}$ be the matrix that sets the $i$-th row of $X$ to be zero. Let  $\{\mu^{(t)}_{[-i]}\equiv \mu^{(t)}(X_{[-i]})\}$ be the leave-one-out gradient descent iterates. 
\end{definition}

Now we may state our second abstract result on the incoherence property (\ref{def:incoherence}) for general gradient descent; its proof can be found in Section \ref{subsection:proof_grad_des_incoh}.

\begin{theorem}\label{thm:grad_des_incoh}
	Suppose Assumption \ref{assump:abstract} holds for some $K,\Lambda\geq 2$ and $\mathfrak{p}\geq 1$. Then there exist a universal constant $c_0>1$ and another constant $c_{\mathfrak{p}}>1$ such that for any $x\geq 1$, it holds  with $\Prob^{(0)}$-probability at least $1-c_0 m e^{-x}$ that
	\begin{align*}
	\max_{i \in [m]}\abs{\iprod{X_i}{\mu^{(t)}}}&\leq\big(K\Lambda (1\vee \pnorm{\eta_{[0:t)}}{\infty})\cdot x \big)^{c_{\mathfrak{p}}} \\
	&\qquad \times \Big(1+\max_{s \in [1:t]} \pnorm{\mu_{[-i]}^{(s)}}{}\Big)\cdot  \Big(1+\phi^{-1} \mathfrak{M}_{\mu^{(\cdot)}, \mu_{[-i]}^{(\cdot)} }^{(t-1)}(X)\Big).
	\end{align*}
\end{theorem}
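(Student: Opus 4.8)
The plan is to run a leave-one-out argument in the spirit of \cite{chen2019gradient,ma2020implicit}, but phrased entirely in terms of the abstract operator-norm functionals of Definition~\ref{def:G_M}, so that it reuses the spectral and concentration estimates already needed for Theorem~\ref{thm:grad_des_se}. Fix $i\in[m]$. Writing the gradient descent \eqref{def:grad_des} as $\mu^{(t)}=\mu^{(t-1)}-\frac{\eta_{t-1}}{m}\sum_{i'\in[m]}X_{i'}\,\mathfrak{S}(\iprod{X_{i'}}{\mu^{(t-1)}},\iprod{X_{i'}}{\mu_\ast},\xi_{i'})$ (using $Y_{i'}=\mathcal{F}(\iprod{X_{i'}}{\mu_\ast},\xi_{i'})$), the leave-one-out iterate $\mu^{(t)}_{[-i]}$ is the same recursion with the sum restricted to $i'\neq i$. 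Decompose $\iprod{X_i}{\mu^{(t)}}=\iprod{X_i}{\mu^{(t)}_{[-i]}}+\iprod{X_i}{\mu^{(t)}-\mu^{(t)}_{[-i]}}$. For the first term, $\mu^{(t)}_{[-i]}$ is independent of $X_i$, whose entries are mean-zero with $\pnorm{\cdot}{\psi_2}\le K$ by (A1), so a sub-gaussian tail bound gives $\abs{\iprod{X_i}{\mu^{(t)}_{[-i]}}}\lesssim K\sqrt{x}\,\pnorm{\mu^{(t)}_{[-i]}}{}$ with $\Prob^{(0)}$-probability $1-2e^{-x}$; the same device controls $\abs{\iprod{X_i}{\mu_\ast}}\lesssim K\sqrt x\,\pnorm{\mu_\ast}{}$ and, via Hanson--Wright, $\pnorm{X_i}{}\lesssim K\sqrt n$.

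\textbf{Exact perturbation recursion.} Let $\delta_i^{(t)}\equiv\mu^{(t)}-\mu^{(t)}_{[-i]}$, so $\delta_i^{(0)}=0$. Subtracting the two recursions, isolating the $i'=i$ term, and applying the fundamental theorem of calculus to $\mathfrak{S}(\iprod{X_{i'}}{\mu^{(t-1)}},\cdot)-\mathfrak{S}(\iprod{X_{i'}}{\mu^{(t-1)}_{[-i]}},\cdot)$ in the first coordinate (licensed by the $C^4$ regularity in (A2)) yields the identity
\begin{align*}
\delta_i^{(t)}=\bigl(I_n-\eta_{t-1}\,M_{\mu^{(t-1)},\mu^{(t-1)}_{[-i]}}(X_{[-i]})\bigr)\delta_i^{(t-1)}-\frac{\eta_{t-1}}{m}\,X_i\,\mathfrak{S}\bigl(\iprod{X_i}{\mu^{(t-1)}},\iprod{X_i}{\mu_\ast},\xi_{i}\bigr),
\end{align*}
where $M_{\cdot,\cdot}(X_{[-i]})$ is exactly the matrix of Definition~\ref{def:G_M} with $X$ replaced by $X_{[-i]}$ (the $U$-average reproduces the integral remainder, and $\pi_m\sim\mathrm{Unif}[1:m]$ with the $i$-th row zeroed reproduces the restricted sum). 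A rank-one comparison gives $\pnorm{M_{\mu^{(t-1)},\mu^{(t-1)}_{[-i]}}(X)-M_{\mu^{(t-1)},\mu^{(t-1)}_{[-i]}}(X_{[-i]})}{\op}\le\frac1m\pnorm{X_i}{}^2\sup_{u\in[0,1]}\abs{\partial_1\mathfrak{S}(\cdots)}=\bigo(\mathrm{polylog}(n)/\phi)$ times a polynomial in the quantities tracked below, since $\pnorm{X_i}{}^2\lesssim n$ and $m=n\phi$.

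\textbf{Unrolling and the $\phi^{-1}$ gain.} Iterating the identity with $\delta_i^{(0)}=0$ writes $\delta_i^{(t)}$ as $-\frac1m\sum_{s=0}^{t-1}\eta_s\bigl(\prod_{r>s}(I_n-\eta_r M_{\mu^{(r)},\mu^{(r)}_{[-i]}}(X_{[-i]}))\bigr)X_i\,\mathfrak{S}(\iprod{X_i}{\mu^{(s)}},\iprod{X_i}{\mu_\ast},\xi_{i})$. Bounding the sum of consecutive operator-norm products by $\mathfrak{M}^{(t-1)}_{\mu^{(\cdot)},\mu^{(\cdot)}_{[-i]}}(X)$ up to a universal constant — after absorbing the $X_{[-i]}$-versus-$X$ discrepancy of the previous step through a perturbation argument, which is valid because $\phi\gg 1$ keeps $\exp(\bigo(\mathrm{polylog}(n)/\phi))=1+\smallo(1)$ over the admissible horizon — and then Cauchy--Schwarz with $\pnorm{X_i}{}^2\lesssim K^2 n$ give
$\abs{\iprod{X_i}{\delta_i^{(t)}}}\le\pnorm{X_i}{}\pnorm{\delta_i^{(t)}}{}\lesssim (K\pnorm{\eta_{[0:t)}}{\infty})^{2}\,\phi^{-1}\,\mathfrak{M}^{(t-1)}_{\mu^{(\cdot)},\mu^{(\cdot)}_{[-i]}}(X)\cdot\max_{s\in[0:t-1]}\abs{\mathfrak{S}(\iprod{X_i}{\mu^{(s)}},\iprod{X_i}{\mu_\ast},\xi_{i})}$. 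This is precisely where the large aspect ratio enters: $\pnorm{X_i}{}^2/m\asymp 1/\phi$ is the source of the $\phi^{-1}$ prefactor, matching the $\phi^{-1}\mathfrak{M}^{(t-1)}$ in the statement (versus the $\phi^{-1/2}$ in Theorem~\ref{thm:grad_des_se}, which uses only one power of $\pnorm{X_i}{}/\sqrt m$).

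\textbf{Closing the loop and union bound.} By the growth bound from (A2), $\abs{\mathfrak{S}(\iprod{X_i}{\mu^{(s)}},\iprod{X_i}{\mu_\ast},\xi_{i})}\lesssim\Lambda(1+\abs{\iprod{X_i}{\mu^{(s)}}}+\abs{\iprod{X_i}{\mu_\ast}})^{\mathfrak{p}}$, and $\abs{\iprod{X_i}{\mu^{(s)}}}\le\abs{\iprod{X_i}{\mu^{(s)}_{[-i]}}}+\abs{\iprod{X_i}{\delta_i^{(s)}}}$; the first piece is $\lesssim K\sqrt x\,\pnorm{\mu^{(s)}_{[-i]}}{}$ by the first paragraph, while the second is exactly the incoherence quantity controlled by the previous step at an earlier iteration. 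Setting up an induction on $s\in[0:t]$ and substituting back into the display of the third paragraph, the power-$\mathfrak{p}$ nonlinearity is absorbed into the factors $\mathfrak{M}^{(t-1)}_{\mu^{(\cdot)},\mu^{(\cdot)}_{[-i]}}(X)$, $1+\max_{s\in[1:t]}\pnorm{\mu^{(s)}_{[-i]}}{}$ and powers of $K,\Lambda,1\vee\pnorm{\eta_{[0:t)}}{\infty},x$ (this is where the exponent $c_{\mathfrak p}$ in the statement originates), with the $\phi^{-1}$ prefactor ensuring the induction is stable. A union bound over $i\in[m]$ and over the $\bigo(t)$ concentration/spectral events used per coordinate, with $t$ folded into $c_0$ and the polylog, yields the claimed bound with failure probability $c_0 m e^{-x}$.

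\textbf{Main obstacle.} The crux is the self-referential structure in the last step: both the forcing term $\mathfrak{S}(\iprod{X_i}{\mu^{(s)}},\cdot)$ and the operator $M_{\mu^{(s)},\mu^{(s)}_{[-i]}}(X_{[-i]})$ depend nonlinearly (through the order-$\mathfrak p$ growth) on the very iterates and leave-one-out differences being bounded. Making the induction on $s$ close — confining all blow-up to the benign factors $\mathfrak{M}^{(t-1)}$, $\max_s\pnorm{\mu^{(s)}_{[-i]}}{}$ and the polylog — crucially exploits the aspect-ratio gain $\pnorm{X_i}{}^2/m\asymp 1/\phi\ll1$; without it the recursion would be unstable. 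A secondary point is verifying that the functional $\mathfrak{M}^{(t-1)}$ is insensitive to the $X_{[-i]}$-versus-$X$ rank-one perturbation, which again rests on $\phi\gg1$.
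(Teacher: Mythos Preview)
Your overall leave-one-out scheme is the right one, but the specific way you split the perturbation recursion creates an artificial self-reference that does not close to the stated bound. You write
\[
\delta_i^{(t)}=\bigl(I_n-\eta_{t-1}\,M_{\mu^{(t-1)},\mu^{(t-1)}_{[-i]}}(X_{[-i]})\bigr)\delta_i^{(t-1)}-\tfrac{\eta_{t-1}}{m}\,X_i\,\mathfrak{S}\bigl(\iprod{X_i}{\mu^{(t-1)}},\iprod{X_i}{\mu_\ast},\xi_{i}\bigr),
\]
so the forcing term carries $\iprod{X_i}{\mu^{(s)}}$, the very quantity you are trying to bound. Your induction then reads (already for $\mathfrak p=1$) $A_t\lesssim P\,(Q+A_{t-1})$ with $P\asymp\phi^{-1}\mathfrak M^{(t-1)}\Lambda\pnorm{\eta}{\infty}$, which iterates to $A_t\asymp P^tQ$ when $P\ge 1$ and only to $PQ$ when $P\ll 1$. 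The theorem, however, asserts a bound of order $(1+\phi^{-1}\mathfrak M^{(t-1)})\cdot Q$ with \emph{no} hypothesis on $\phi$; your stability argument (``the $\phi^{-1}$ prefactor ensuring the induction is stable'', and the $X$-vs-$X_{[-i]}$ comparison via $\exp(\bigo(\mathrm{polylog}(n)/\phi))=1+\smallo(1)$) silently imports the extra assumption $\phi^{-1}\mathfrak M^{(t-1)}\ll 1$, which is not available here. For $\mathfrak p>1$ the recursion is worse still.

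The fix is a one-line change in the decomposition. Split the bracket the other way:
\[
X^\top\mathfrak S(X\mu^{(t-1)},\cdot)-X_{[-i]}^\top\mathfrak S(X_{[-i]}\mu_{[-i]}^{(t-1)},\cdot)
=\underbrace{X^\top\bigl[\mathfrak S(X\mu^{(t-1)},\cdot)-\mathfrak S(X\mu_{[-i]}^{(t-1)},\cdot)\bigr]}_{=\,m\,M_{\mu^{(t-1)},\mu_{[-i]}^{(t-1)}}(X)\,\delta_i^{(t-1)}}
+\underbrace{X^\top\mathfrak S(X\mu_{[-i]}^{(t-1)},\cdot)-X_{[-i]}^\top\mathfrak S(X_{[-i]}\mu_{[-i]}^{(t-1)},\cdot)}_{=\,X_i\,\mathfrak S(\iprod{X_i}{\mu_{[-i]}^{(t-1)}},\,\iprod{X_i}{\mu_\ast},\,\xi_i)}.
\]
This is exactly what the paper does (Lemma~\ref{lem:grad_des_loo_diff}): the matrix is now $M_{\cdot,\cdot}(X)$, so $\mathfrak M_{\mu^{(\cdot)},\mu_{[-i]}^{(\cdot)}}^{(t-1)}(X)$ appears directly with no rank-one correction; and the forcing term involves $\iprod{X_i}{\mu_{[-i]}^{(s)}}$, which is independent of $X_i$ and is bounded by $K\sqrt x\,\pnorm{\mu_{[-i]}^{(s)}}{}$ in one shot. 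Unrolling then gives $\pnorm{\delta_i^{(t)}}{}\lesssim (m\phi)^{-1/2}\mathfrak M^{(t-1)}\cdot(\text{poly in }K,\Lambda,x,\pnorm{\mu_{[-i]}^{(\cdot)}}{})$ with no induction, and $\abs{\iprod{X_i}{\mu^{(t)}}}\le\abs{\iprod{X_i}{\mu_{[-i]}^{(t)}}}+\pnorm{X_i}{}\pnorm{\delta_i^{(t)}}{}$ yields the claim. Both of the obstacles you flag (self-reference, $X$ vs.\ $X_{[-i]}$) disappear with this split.
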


The incoherence property (\ref{def:incoherence}) was previously established for randomly initialized gradient descent in phase retrieval \cite{chen2019gradient}, and for spectrally initialized gradient descent in several related models \cite{ma2020implicit}, both under Gaussian designs. In these works, the incoherence property plays a pivotal role in theoretically validating the fast global convergence of vanilla gradient descent methods without explicit regularization, and is therefore referred to as the `implicit regularization' phenomenon. Our Theorem~\ref{thm:grad_des_incoh} shows that the incoherence property (\ref{def:incoherence}) is in fact an intrinsic feature of a much broader class of nonconvex gradient descent iterates, holding under a wide family of design matrices beyond Gaussian ensembles.

\subsection{A simple corollary}
It is illustrative at this point to work out a preliminary corollary of the above theorems to demonstrate some quantitative features of Theorems \ref{thm:grad_des_se} and \ref{thm:grad_des_incoh}.

\begin{corollary}\label{cor:general_se}
Suppose Assumption \ref{assump:abstract} holds for some $K,\Lambda\geq 2$ and $\mathfrak{p}= 1$. Then there exists some universal constant $c_0>1$ such that with $\Prob^{(0)}$-probability at least $1-m^{-100}$, uniformly in $t\leq m^{100}$,
\begin{align*}
&(n\wedge \phi)^{1/2}\pnorm{\mu^{(t)}-\mathrm{u}_\ast^{(t)}}{}+\max_{i \in [m]}\abs{\iprod{X_i}{\mu^{(t)}}}\\
&\leq \big(K\Lambda  L_{\ast}^{(0)} (1\vee \phi^{-1})\big)^{c_0}\cdot \big(1+ \pnorm{\eta_{[0:t)}}{\infty}\Lambda\big)^{c_0 t}\cdot  (\log m)^{c_0}.
\end{align*}
\end{corollary}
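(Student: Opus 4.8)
The plan is to specialize Theorems~\ref{thm:grad_des_se} and~\ref{thm:grad_des_incoh} to the Lipschitz regime $\mathfrak{p}=1$, in which the exponents $c_{\mathfrak{p}}$ collapse to universal constants and every pseudo-Lipschitz factor of the form $(1+\pnorm{x}{}+\pnorm{y}{})^{\mathfrak{p}-1}$ disappears, and then to feed into their right-hand sides crude but fully explicit bounds on all the quantities that appear. First I would fix $x=C\log m$ for a large universal $C$; since the two theorems fail with $\Prob^{(0)}$-probability at most $c_0 nte^{-x}$ and $c_0 me^{-x}$, a union bound over $t\in[1:m^{100}]$ costs only an extra factor $m^{\bigo(1)}$ and is absorbed by enlarging $C$, so on a single event $\mathcal{E}$ of probability at least $1-m^{-100}$ both conclusions hold for all $t\le m^{100}$ simultaneously. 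I would also include in $\mathcal{E}$ the standard sub-gaussian bound $\pnorm{X}{\op}\le C_0K(\sqrt m+\sqrt n)$ and, for each deterministic vector $v$ entering the analysis (namely $\mathrm{u}_X^{(r)},\mathrm{u}_\ast^{(r)},\mu_\ast$, and, conditionally on $X_{[-i]}$, the leave-one-out iterates $\mu_{[-i]}^{(r)}$, for $r\le m^{100}$), the concentration $\max_{i\in[m]}\abs{\iprod{X_i}{v}}\le C_0K\sqrt{\log m}\,\pnorm{v}{}$; each holds with probability at least $1-m^{-200}$ after the relevant union bounds and is absorbed as well.

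Next, working deterministically on $\mathcal{E}$, I would prove by induction on $t$ an a priori bound
\[
\pnorm{\mu^{(t)}}{}\vee\pnorm{\mathrm{u}_X^{(t)}}{}\vee\pnorm{\mathrm{u}_\ast^{(t)}}{}\vee\max_{i\in[m]}\pnorm{\mu_{[-i]}^{(t)}}{}\vee\big(n^{1/2}\pnorm{\mathrm{u}_\ast^{(t)}}{\infty}\big)\ \le\ L_\ast^{(0)}\cdot\Gamma_t,
\]
with $\Gamma_t\le\big(1+C_0K^2\Lambda(1\vee\phi^{-1})\pnorm{\eta_{[0:t)}}{\infty}\big)^{c_0 t}(\log m)^{c_0}$. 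The inductive step for $\mu^{(t)}$ and $\mu_{[-i]}^{(t)}$ follows from the recursion~\eqref{def:grad_des} and the bound $\pnorm{\partial_1\mathsf{L}(X\mu,Y)}{}^2=\sum_{i}\abs{\mathfrak{S}(\iprod{X_i}{\mu},\iprod{X_i}{\mu_\ast},\xi_i)}^2\lesssim\Lambda^2\big(m+\pnorm{X}{\op}^2(\pnorm{\mu}{}^2+\pnorm{\mu_\ast}{}^2)\big)$ — the only place the condition $\abs{\mathfrak{S}(0,0,\xi_i)}\le\Lambda$ together with the $\Lambda$-Lipschitz property of $\mathfrak{S}$ is used — combined with $\pnorm{X}{\op}^2/m\lesssim K^2(1\vee\phi^{-1})$; the step for $\mathrm{u}_X^{(t)}$ is identical after applying $\E_X$ and $\E\pnorm{X}{\op}^k\lesssim_k (K(\sqrt m+\sqrt n))^k$. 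For $\mathrm{u}_\ast^{(t)}$ I would use the representation $\mathrm{u}_\ast^{(t)}=a_t\mu^{(0)}+b_t\mu_\ast$ (immediate from~\eqref{def:u_ast}), where $\abs{a_t}+\abs{b_t}$ obeys a scalar recursion driven by $\abs{\tau_\ast^{(t-1)}}\vee\abs{\delta_\ast^{(t-1)}}\lesssim\Lambda(1+\pnorm{\mathrm{u}_\ast^{(t-1)}}{}+\pnorm{\mu_\ast}{})$ (Gaussian moments of $\iprod{\mathsf{Z}_n}{\cdot}$); this controls both $\pnorm{\mathrm{u}_\ast^{(t)}}{}$ and $n^{1/2}\pnorm{\mathrm{u}_\ast^{(t)}}{\infty}\le(\abs{a_t}+\abs{b_t})L_\ast^{(0)}$.

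With this in hand, I would bound the three coupling terms $\mathscr{M}_{\mathrm{u}_X^{(\cdot)},\mathrm{u}_\ast^{(\cdot)}}^{(t-1),X}$, $\mathfrak{M}_{\mu^{(\cdot)},\mathrm{u}_X^{(\cdot)}}^{(t-1)}(X)$ and $\mathfrak{M}_{\mu^{(\cdot)},\mu_{[-i]}^{(\cdot)}}^{(t-1)}(X)$. Using $\pnorm{I_n-\eta_r\Xi}{\op}\le1+\pnorm{\eta_{[0:t)}}{\infty}\pnorm{\Xi}{\op}$, it suffices to control $\pnorm{\mathbb{M}_{\mathrm{u}_X^{(r)},\mathrm{u}_\ast^{(r)}}^X}{\op}$ and $\pnorm{M_{\mu^{(r)},\mathrm{u}_X^{(r)}}(X)}{\op}\vee\pnorm{M_{\mu^{(r)},\mu_{[-i]}^{(r)}}(X)}{\op}$ for $r\le t-1$. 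For the deterministic-argument matrix $\mathbb{M}^X$ I would use a Stein-type identity (exact for $\mathsf{Z}$ via Gaussian integration by parts, controlled by the universality estimate in general) writing it as a scalar multiple of $I_n$ plus a rank-$\le2$ correction whose coefficients are $\lesssim\Lambda(1+\pnorm{\mathrm{u}_X^{(r)}}{}+\pnorm{\mathrm{u}_\ast^{(r)}}{}+\pnorm{\mu_\ast}{})$, so that its operator norm is controlled purely by the a priori bounds, \emph{without} a factor of $\pnorm{X}{\op}$. For the random-argument matrices I would write $M_{\mu^{(r)},\mathrm{v}^{(r)}}(X)=\tfrac1m X^\top\mathrm{diag}(d^{(r)})X$ and use $\pnorm{M}{\op}\le\tfrac1m\pnorm{X}{\op}^2\max_i\abs{d_i^{(r)}}$ with $\abs{d_i^{(r)}}\lesssim\Lambda\big(1+\max_i\abs{\iprod{X_i}{\mu^{(r)}}}+\max_i\abs{\iprod{X_i}{\mathrm{v}^{(r)}}}+\max_i\abs{\iprod{X_i}{\mu_\ast}}\big)$: the terms involving $\mathrm{v}^{(r)}\in\{\mathrm{u}_X^{(r)},\mu_{[-i]}^{(r)}\}$ and $\mu_\ast$ are handled by the (conditional) $\sqrt{\log m}$-concentration put into $\mathcal{E}$, while $\max_i\abs{\iprod{X_i}{\mu^{(r)}}}$ is precisely the incoherence quantity, so I would feed in the bound from Theorem~\ref{thm:grad_des_incoh} at iteration $r<t$. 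This is the point at which the argument closes: the two theorems cannot be decoupled, and Steps~2--4 must be run as a single induction on $t$ in which the incoherence bound at iterations $<t$ feeds the operator-norm bound on $M$, which feeds $\mathfrak{M}^{(t-1)}$, which feeds back into both theorems at iteration $t$.

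Finally I would assemble: substituting the bounds above into Theorem~\ref{thm:grad_des_se} with $\mathfrak{p}=1$ and $x=C\log m$, the two error terms combine through $\phi^{-1/2}+n^{-1/2}\le2(n\wedge\phi)^{-1/2}$, so multiplying by $(n\wedge\phi)^{1/2}$ produces a bound of the stated shape; substituting into Theorem~\ref{thm:grad_des_incoh}, the factor $1+\phi^{-1}\mathfrak{M}_{\mu^{(\cdot)},\mu_{[-i]}^{(\cdot)}}^{(t-1)}(X)$ is handled using $\phi^{-1}\le1\vee\phi^{-1}$ and $1+\max_s\pnorm{\mu_{[-i]}^{(s)}}{}$ by the a priori bound, and one then takes the maximum over $i$. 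Collecting all polynomial prefactors in $K,\Lambda,L_\ast^{(0)},1\vee\phi^{-1}$, all $(\log m)^{\bigo(1)}$ factors, and all $(1+\pnorm{\eta_{[0:t)}}{\infty}\Lambda)^{\bigo(t)}$ factors into one universal exponent $c_0$ gives the claim. I expect the main obstacle to be exactly the bookkeeping in the coupling step: verifying that chaining the incoherence bound through $\mathfrak{M}^{(t-1)}$ amplifies the estimates at each iteration only by a controlled multiplicative factor, so that the accumulated exponent stays of the advertised order $c_0t$ rather than degrading to a higher power of $t$.
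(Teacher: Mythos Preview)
Your plan would eventually work, but you have overlooked the single simplification that makes this corollary short, and this is precisely what the paper exploits: under $\mathfrak{p}=1$, Assumption~(A2) applied with $\alpha=0$ says that $(x,z)\mapsto\mathfrak{S}(x,z,\xi_i)$ is $\Lambda$-Lipschitz, hence $\abs{\partial_1\mathfrak{S}(x,z,\xi_i)}\le\Lambda$ \emph{globally}, with no dependence on $x,z$. Consequently, for any symmetric weighting one has
\[
\bigpnorm{\mathbb{M}_{\mathrm{u},\mathrm{v}}^X}{\op}\le \Lambda\,\bigpnorm{\E^{(0)}X_{\pi_m}X_{\pi_m}^\top}{\op}=\Lambda,
\qquad
\bigpnorm{M_{\mathrm{u},\mathrm{v}}(X)}{\op}\le \Lambda\,\pnorm{\hat\Sigma}{\op},
\]
uniformly in the vector arguments $\mathrm{u},\mathrm{v}$. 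The paper then uses the standard bound $\pnorm{\hat\Sigma}{\op}\lesssim K^2(1\vee\phi^{-1})$ (Vershynin) to get, on a single high-probability event,
\[
\mathscr{M}_{\cdot,\cdot}^{(t),X}\vee \mathfrak{M}_{\cdot,\cdot}^{(t)}(X)\le \big(1+\pnorm{\eta_{[0:t)}}{\infty}\Lambda\big)^{ct},
\]
after which it separately bounds $\pnorm{\mathrm{u}_X^{(\cdot)}}{}$, $n^{1/2}\pnorm{\mathrm{u}_\ast^{(\cdot)}}{\infty}$, and $\pnorm{\mu_{[-i]}^{(\cdot)}}{}$ by direct one-line recursions (again using only $\abs{\mathfrak{S}}\le\Lambda(1+\abs{x}+\abs{z})$), and feeds everything into Theorems~\ref{thm:grad_des_se} and~\ref{thm:grad_des_incoh} independently.

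The consequence for your write-up is that the entire coupling machinery you set up---feeding the incoherence from Theorem~\ref{thm:grad_des_incoh} at iteration $r<t$ back into $\pnorm{M_{\mu^{(r)},\cdot}(X)}{\op}$ to control $\mathfrak{M}^{(t-1)}$---is unnecessary here, and so is the Stein-type rank-two analysis of $\mathbb{M}^X$. Your estimate $\abs{d_i^{(r)}}\lesssim\Lambda(1+\max_i\abs{\iprod{X_i}{\mu^{(r)}}}+\cdots)$ is correct but strictly weaker than the available $\abs{d_i^{(r)}}\le\Lambda$; once you use the latter, the two theorems decouple completely and the ``chaining amplification'' concern you flag at the end disappears. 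Keep the a~priori recursions for $\pnorm{\mathrm{u}_X^{(t)}}{}$, $\pnorm{\mathrm{u}_\ast^{(t)}}{\infty}$, $\pnorm{\mu_{[-i]}^{(t)}}{}$---those are exactly what the paper does---but replace the operator-norm step by the trivial bound above.
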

The proof of the above corollary can be found in Section \ref{subsection:proof_general_se}. We note that the rate $\phi^{-1/2}$ cannot be further improved (up to logarithmic factors), while the rate $n^{-1/2}$ arises from the non-Gaussianity of the design. Moreover, the numeric constant $100$ can be replaced by an arbitrarily large constant at the cost of a possibly larger $c_0>0$.

A major limitation of Corollary~\ref{cor:general_se} lies in its permissible range of iterations $t$. In particular, the effective range is typically of order $\log n$ due to the factor $\big(1+ \pnorm{\eta_{[0:t)}}{\infty}\Lambda\big)^{c_0 t}$. Note that the $\log n$ threshold is both natural and unavoidable, since general nonconvex gradient descent may diverge rapidly for $t \gg \log n$ (a situation allowed under the assumptions of Corollary~\ref{cor:general_se}).

In the next section, we show that the effective range of $t$ can be substantially extended when algorithmic convergence is in force. As expected, this is achieved through a sharp, long-time control of the terms $\mathfrak{M}_{ \mu^{(\cdot)},\mathrm{u}_X^{(\cdot)} }^{(\cdot)}(X)$ and $\mathscr{M}_{ \mathrm{u}_X^{(\cdot)},\mathrm{u}_{\ast}^{(\cdot)} }^{(\cdot),X}$.

\section{Long-time dynamics}\label{section:long_time_dynamics}

\subsection{General results}

We consider a general nonconvex setting in which gradient descent converges and its long-time dynamics can be captured by the master Theorems \ref{thm:grad_des_se} and \ref{thm:grad_des_incoh}. 

To state our results, for any $t\geq 1$, with $\epsilon_n\equiv (\log n)^{-100}$, let
\begin{align}\label{def:Bn}
\mathcal{B}_0(t)&\equiv \sum_{s \in [0:t)} \prod_{r \in [s:t)} \Big(1+\eta_r\cdot \Big[\lambda_{\min}\Big(\mathbb{M}_{ \mathrm{u}_{\ast}^{(r)},\mu_\ast }^{\mathsf{Z}}\Big)\Big]_- \Big),\nonumber\\
\mathcal{B}(t)&\equiv \sum_{s \in [0:t)} \prod_{r \in [s:t)} \Big(1+\eta_r\cdot \Big[\lambda_{\min}\Big(\mathbb{M}_{ \mathrm{u}_{\ast}^{(r)},\mathrm{u}_{\ast}^{(r)} }^{\mathsf{Z}}\Big)\Big]_- +\epsilon_n\Big).
\end{align}
We have the following general result.
\begin{theorem}\label{thm:nonconvex_generic_conv_long}
	Fix $\eta_\ast\in (0,1)$ and step sizes $\{\eta_r\}$ such that
	\begin{align}\label{cond:nonconvex_generic_conv_long_eta}
	\eta_\ast\leq \eta_r\leq 1/\max\Big\{1,\pnorm{\mathbb{M}_{ \mathrm{u}_{\ast}^{(r)},\mu_\ast }^{\mathsf{Z}}}{\op},\pnorm{\mathbb{M}_{ \mathrm{u}_{\ast}^{(r)},\mathrm{u}_{\ast}^{(r)} }^{\mathsf{Z}}}{\op}\Big\}.
	\end{align}
	Suppose Assumption \ref{assump:abstract} holds for some $K,\Lambda\geq 2$ and $\mathfrak{p}\geq 1$, and the following hold:
	\begin{enumerate}
		\item[(N1)] $\mu_\ast$ is a stationary point of the Gaussian theoretical gradient descent (\ref{def:theo_grad_des}) (with $X=\mathsf{Z}$).
		\item[(N2)] For some $0\leq t_n\leq 1/\epsilon_n$ and $\sigma_\ast\in (0,1)$,
		\begin{align}\label{cond:nonconvex_generic_conv_long_lowerbdd}
		\inf_{t \geq t_n}\lambda_{\min}\Big(\mathbb{M}_{ \mathrm{u}_{\ast}^{(t)},\mu_\ast }^{\mathsf{Z}}\Big)\geq \sigma_\ast.
		\end{align}
	\end{enumerate}
	Further assume that for some constant $c_0>0$, it holds that
	\begin{align}\label{cond:nonconvex_generic_conv_long_Bn0}
	K\Lambda L_\ast^{(0)}\vee \log m \vee \Big\{\sup_{t\geq 1} n^{1/2}\pnorm{\mathrm{u}_\ast^{(t)}}{\infty}\Big\} \vee \mathcal{B}_0(t_n)\leq (\log n)^{c_0}.
	\end{align}
	 Then there exist constants $c_1=c_1(\mathfrak{p},c_0)>0$ and  $c_2=c_2(\mathfrak{p},c_0,\eta_\ast,\sigma_\ast)>0$ such that if 
	\begin{align}\label{cond:nonconvex_generic_conv_long_Bn}
	\phi\geq (\log m)^{c_1}\hbox{ and }\mathcal{B}(t_n)\leq (n\wedge \phi)^{1/2}/(\log n)^{c_1},
	\end{align}
	on an event with $\Prob^{(0)}$-probability at least $1-m^{-100}$, uniformly for $t\leq m^{100}$, 
	\begin{align}\label{eqn:nonconvex_generic_conv_long}
	(n\wedge \phi)^{1/2}\pnorm{\mu^{(t)}-\mathrm{u}_\ast^{(t)}}{} + \max_{i \in [m]}\abs{\iprod{X_i}{\mu^{(t)}}}\leq (\log n)^{c_2}\cdot \mathcal{B}(t\wedge t_n).
	\end{align}
\end{theorem}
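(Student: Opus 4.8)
The plan is to bootstrap Theorem~\ref{thm:grad_des_se} and Theorem~\ref{thm:grad_des_incoh} by replacing the generic, potentially exponentially large quantities $\mathfrak{M}_{\mu^{(\cdot)},\mathrm{u}_X^{(\cdot)}}^{(t-1)}(X)$, $\mathscr{M}_{\mathrm{u}_X^{(\cdot)},\mathrm{u}_{\ast}^{(\cdot)}}^{(t-1),X}$, and $\mathfrak{M}_{\mu^{(\cdot)},\mu_{[-i]}^{(\cdot)}}^{(t-1)}(X)$ by the sharper deterministic surrogate $\mathcal{B}(t\wedge t_n)$, valid uniformly over the long time horizon $t\leq m^{100}$. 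The key observation is that each of these $\mathfrak{M}$/$\mathscr{M}$ quantities is a sum of products of operator norms $\prod_r \|I_n - \eta_r M_r\|_{\op}$ for matrices $M_r$ that are empirical (or leave-one-out, or Gaussian-population) versions of the ``dynamic matrices'' in \eqref{def:intro_dyn_mat}. Each such $M_r$ should be controllable, via a concentration argument, by its Gaussian population counterpart $\mathbb{M}^{\mathsf{Z}}$ up to an additive error of order $\mathrm{polylog}(n)\cdot(n\wedge\phi)^{-1/2}$ — exactly the scale built into the $\epsilon_n$ slack in the definition of $\mathcal{B}(t)$. Then $\|I_n-\eta_r M_r\|_{\op}\leq 1+\eta_r([\lambda_{\min}(\mathbb{M}^{\mathsf{Z}})]_- + \epsilon_n)$ after using the step-size constraint \eqref{cond:nonconvex_generic_conv_long_eta} to control the upper spectral edge, and summing/multiplying these bounds reproduces $\mathcal{B}(t)$.

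\textbf{Main steps.} First, I would set up an induction on $t$ (in blocks, or running jointly with the probability budget) maintaining the hypothesis that the error bound \eqref{eqn:nonconvex_generic_conv_long} holds for all iterations up to $t$; the base case is handled by Corollary~\ref{cor:general_se} for $t\leq t_n=\bigo(\mathrm{polylog}(n))$, where condition \eqref{cond:nonconvex_generic_conv_long_Bn0} (bounding $\mathcal{B}_0(t_n)$, $L_\ast^{(0)}$, and $\sup_t n^{1/2}\|\mathrm{u}_\ast^{(t)}\|_\infty$ by $\mathrm{polylog}(n)$) ensures the short-time iterates and the state evolution stay bounded. Second, I would establish the \emph{state-evolution side} bound: that (N1) together with (N2) — the eventual uniform lower bound $\lambda_{\min}(\mathbb{M}^{\mathsf{Z}}_{\mathrm{u}_\ast^{(t)},\mu_\ast})\geq\sigma_\ast$ for $t\geq t_n$ — forces $\mathrm{u}_\ast^{(t)}\to\mu_\ast$ geometrically after time $t_n$, using the contraction identity $\mathrm{u}_\ast^{(t)}-\mu_\ast = (I_n-\eta_{t-1}\mathbb{M}^{\mathsf{Z}}_{\mathrm{u}_\ast^{(t-1)},\mu_\ast})(\mathrm{u}_\ast^{(t-1)}-\mu_\ast)$ noted in the introduction; this gives uniform boundedness of $\{\mathrm{u}_\ast^{(t)}\}$ and hence that $\mathcal{B}(t\wedge t_n)\asymp\mathcal{B}(t_n)$ saturates (the geometric tail past $t_n$ contributes only an $O(1)$ factor). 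Third, the \emph{universality-cost side}: show $\mathscr{M}_{\mathrm{u}_X^{(\cdot)},\mathrm{u}_{\ast}^{(\cdot)}}^{(t-1),X}\lesssim \mathrm{polylog}(n)\cdot\mathcal{B}(t\wedge t_n)$ by comparing, iteration by iteration, $\mathbb{M}^{X}_{\mathrm{u}_X^{(r)},\mathrm{u}_\ast^{(r)}}$ to $\mathbb{M}^{\mathsf{Z}}_{\mathrm{u}_\ast^{(r)},\mathrm{u}_\ast^{(r)}}$ — the first comparison (swapping the design $X$ for $\mathsf{Z}$ at the level of population expectations $\E^{(0)}$) is a smooth-function universality estimate costing $O(n^{-1/2}\mathrm{polylog})$, and the second (swapping the anchor $\mathrm{u}_X^{(r)}$ for $\mathrm{u}_\ast^{(r)}$) is controlled by $\|\mathrm{u}_X^{(r)}-\mathrm{u}_\ast^{(r)}\|$, itself bounded by the running induction hypothesis. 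Fourth, the \emph{empirical-vs-theoretical side}: bound $\mathfrak{M}_{\mu^{(\cdot)},\mathrm{u}_X^{(\cdot)}}^{(t-1)}(X)$ and $\mathfrak{M}_{\mu^{(\cdot)},\mu_{[-i]}^{(\cdot)}}^{(t-1)}(X)$ similarly, now using fluctuation concentration of $M_{\mathrm{u},\mathrm{v}}(X)$ around $\mathbb{M}^{X}_{\mathrm{u},\mathrm{v}}=\E^{(0)}M_{\mathrm{u},\mathrm{v}}(X)$ (a matrix Bernstein / sub-exponential quadratic-form estimate under (A1)), again with error $O((n\wedge\phi)^{-1/2}\mathrm{polylog})$ absorbed into the $\epsilon_n$ slack. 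Finally, feed all three surrogates back into Theorems~\ref{thm:grad_des_se}–\ref{thm:grad_des_incoh}, check that the right-hand side is $\leq(\log n)^{c_2}\mathcal{B}(t\wedge t_n)$ under \eqref{cond:nonconvex_generic_conv_long_Bn} (which guarantees $\phi^{-1/2}$ and $n^{-1/2}$ times the surrogate is $\ll 1$, closing the bootstrap), and take a union bound with $x\asymp\log m$ over $t\leq m^{100}$ to convert the per-$t$ high-probability statements into a uniform one.

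\textbf{Main obstacle.} The crux is the simultaneous, \emph{uniform-in-time} control of the matrix quantities: one needs the perturbation $\|I_n-\eta_r M_r\|_{\op} - \big(1+\eta_r[\lambda_{\min}(\mathbb{M}^{\mathsf{Z}})]_-\big)$ to be bounded by $\epsilon_n=(\log n)^{-100}$ \emph{for every} $r\leq m^{100}$, and this additive error is what multiplies through the product $\prod_{r\in[s:t)}$ — so over $t$ up to $t_n\asymp 1/\epsilon_n$ iterations the accumulated slack stays $O(1)$, but only barely, and one must be careful that the concentration error genuinely scales like $(n\wedge\phi)^{-1/2}\mathrm{polylog}$ rather than degrading with $t$. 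This requires that the anchors $\mathrm{u}_X^{(r)}$, $\mu^{(r)}$, $\mu_{[-i]}^{(r)}$ entering $M_r$ stay within $\mathrm{polylog}(n)$ of $\mathrm{u}_\ast^{(r)}$ in the relevant norms, which is precisely the induction hypothesis — so the delicacy is in making the bootstrap self-consistent, i.e. verifying that the constants $c_1,c_2$ can be chosen (depending only on $\mathfrak{p},c_0,\eta_\ast,\sigma_\ast$) so that the error produced at step $t$ is no larger than what was assumed, using the gap \eqref{cond:nonconvex_generic_conv_long_Bn} between $(n\wedge\phi)^{1/2}$ and $(\log n)^{c_1}\mathcal{B}(t_n)$ as the margin. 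A secondary technical point is handling the regime $t\in(t_n, m^{100}]$ where the Gaussian dynamics has already contracted: here one argues that both $\mu^{(t)}$ and $\mathrm{u}_\ast^{(t)}$ are trapped near $\mu_\ast$ and the local strong convexity encoded by $\sigma_\ast$ in (N2) makes the iteration a genuine contraction in the empirical problem too, so no further error accumulates and $\mathcal{B}(t\wedge t_n)=\mathcal{B}(t_n)$ remains the right bound.
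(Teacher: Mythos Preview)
Your proposal is largely aligned with the paper's approach: the proof indeed proceeds by controlling $\mathscr{M}_{\mathrm{u}_X^{(\cdot)},\mathrm{u}_{\ast}^{(\cdot)}}^{(t),X}$ and $\mathfrak{M}_{\mu^{(\cdot)},\mathrm{u}_X^{(\cdot)}}^{(t)}(X)$ (jointly with $\max_i\mathfrak{M}_{\mu^{(\cdot)},\mu_{[-i]}^{(\cdot)}}^{(t)}(X)$) by the surrogate $\mathcal{B}(t\wedge t_n)$ via induction, then feeding these bounds into Theorems~\ref{thm:grad_des_se}--\ref{thm:grad_des_incoh}. Your description of the $\mathscr{M}$-side (step three) and the state-evolution contraction (step two) match the paper's Propositions~\ref{prop:nonconvex_generic_conv_long_Mscr} and~\ref{prop:nonconvex_generic_conv_long_contraction} essentially verbatim.

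There is, however, a genuine gap in step four. You propose to bound $\mathfrak{M}_{\mu^{(\cdot)},\mathrm{u}_X^{(\cdot)}}^{(t)}(X)$ via ``fluctuation concentration of $M_{\mathrm{u},\mathrm{v}}(X)$ around $\mathbb{M}^X_{\mathrm{u},\mathrm{v}}$ (a matrix Bernstein / sub-exponential quadratic-form estimate).'' This does not work directly: the anchor $\mathrm{u}=\mu^{(t)}$ depends on \emph{all} of $X$, so the summands $\partial_1\mathfrak{S}(\langle X_i,U\mu^{(t)}+\cdots\rangle,\cdots)X_iX_i^\top$ are not independent across $i$, and matrix Bernstein is inapplicable. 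The paper's route (Proposition~\ref{prop:nonconvex_generic_conv_long_Mcal}) is instead to compare $M_{\mu^{(t)},\mathrm{u}_X^{(t)}}(X)$ directly to $\mathbb{M}^{\mathsf{Z}}_{\mathrm{u}_\ast^{(t)},\mathrm{u}_\ast^{(t)}}$; the swap error is controlled by $\max_{i}\big(|\langle X_i,\mu^{(t)}-\mu_{[-i]}^{(t)}\rangle|+|\langle X_i,\mu_{[-i]}^{(t)}-\mathrm{u}_X^{(t)}\rangle|+|\langle X_i,\mathrm{u}_\ast^{(t)}-\mathrm{u}_X^{(t)}\rangle|\big)$, and each piece is small precisely because the leave-one-out insertion decouples $X_i$ from the remaining anchor. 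Only \emph{after} this swap to the deterministic anchor $\mathrm{u}_\ast^{(t)}$ does one apply a concentration-plus-universality estimate for weighted sample covariances (the paper's Proposition~\ref{prop:weighted_sample_cov}). Crucially, controlling the first swap term $|\langle X_i,\mu^{(t)}-\mu_{[-i]}^{(t)}\rangle|$ requires the leave-one-out $\mathfrak{M}_{\mu^{(\cdot)},\mu_{[-i]}^{(\cdot)}}$ already in the induction hypothesis, and bounding \emph{that} quantity in turn recycles the same decomposition --- this is the ``joint induction without leaving out further samples'' the paper flags as the main subtlety. You mention both $\mathfrak{M}$'s and the leave-one-out anchors, so you are close; but the mechanism is not ``concentration around $\mathbb{M}^X$'' --- it is leave-one-out decoupling to reduce to a deterministic-anchor problem, then concentration.
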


The proof of the above theorem relies on the long-time estimates for $\mathscr{M}_{ \mathrm{u}_X^{(\cdot)},\mathrm{u}_{\ast}^{(\cdot)} }^{(t),X}$ and $\mathfrak{M}_{ \mu^{(\cdot)},\mathrm{u}_X^{(\cdot)} }^{(t)}(X)$ in Section \ref{subsection:long_time_dynamics_proof_sketch} ahead, with all details presented in Sections \ref{subsection:proof_nonconvex_generic_conv_long_lemma}-\ref{subsection:proof_nonconvex_generic_conv_long_complete}. 

We note that all the conditions in Theorem \ref{thm:nonconvex_generic_conv_long} are imposed on the Gaussian theoretical gradient descent iterates $\{\mathrm{u}_{\ast}^{(\cdot)}\}$. A major feature of Theorem \ref{thm:nonconvex_generic_conv_long} is that it allows the gradient descent to be nonconvergent in the initial search phase $t\leq t_n$, where $t_n$ can be as large as $(\log n)^{100}$. For our long-time dynamics to hold, we require $\mathcal{B}_0(t_n)$ and $\mathcal{B}(t_n)$ to be well controlled in the sense of (\ref{cond:nonconvex_generic_conv_long_Bn0})-(\ref{cond:nonconvex_generic_conv_long_Bn}) in this initial search phase:
\begin{itemize}
	\item The matrices $\big\{\mathbb{M}_{ \mathrm{u}_{\ast}^{(t)},\mu_\ast }^{\mathsf{Z}}\big\}$ appearing in $\mathcal{B}_0(t_n)$ directly control the dynamics of the Gaussian theoretical gradient descent iterates $\{\mathrm{u}_\ast^{(t)}\}$, as under (N1),
	\begin{align}\label{eqn:dynamics_gaussian_N1}
	\mathrm{u}_\ast^{(t)}-\mu_\ast = \Big(I_n-\eta_{t-1}\cdot \mathbb{M}_{ \mathrm{u}_{\ast}^{(t-1)},\mu_\ast }^{\mathsf{Z}}\Big)\,\big(\mathrm{u}_\ast^{(t-1)}-\mu_\ast\big),\quad t=1,2,\ldots.
	\end{align}
	Therefore, $\mathcal{B}_0(t_n)$  requires that at least the Gaussian problem is under control via the smallest eigenvalues of $\big\{\mathbb{M}_{ \mathrm{u}_{\ast}^{(t)},\mu_\ast }^{\mathsf{Z}}\big\}$.
	\item The role of the matrices $\big\{\mathbb{M}_{ \mathrm{u}_{\ast}^{(t)},\mathrm{u}_{\ast}^{(t)} }^{\mathsf{Z}}\}$ in $\mathcal{B}(t_n)$ appears more technical, and can be viewed in a certain sense as quantifying the universality between the theoretical gradient descent iterates $\{\mathrm{u}_X^{(t)}\}$ and their Gaussian counterparts $\{\mathrm{u}_{\ast}^{(t)}\}$. 
\end{itemize} 

\subsection{Checking conditions of Theorem \ref{thm:nonconvex_generic_conv_long}}
To apply Theorem \ref{thm:nonconvex_generic_conv_long} in concrete examples, we need to determine the parameters $t_n$, $\sigma_\ast$ in the condition (N2) and verify the growth bounds (\ref{cond:nonconvex_generic_conv_long_Bn0})-(\ref{cond:nonconvex_generic_conv_long_Bn}) for $\mathcal{B}_0(t_n)$ and $\mathcal{B}(t_n)$.

We first develop some understanding of the condition (N2). 

\begin{proposition}\label{prop:nonconvex_generic_conv_long_contraction}
	Suppose (A2) holds for some $\Lambda\geq 2$ and $\mathfrak{p}\geq 1$. Fix step sizes $\{\eta_r\}$ such that (\ref{cond:nonconvex_generic_conv_long_eta}) holds with  $\eta_\ast \in (0,1)$. 
	\begin{enumerate}
		\item If (N2) holds for some $t_n\geq 0$ and $\sigma_\ast \in (0,1)$, then for $t\geq 0$,
		\begin{align*}
		\pnorm{\mathrm{u}_\ast^{(t)}-\mu_\ast}{}\leq \mathcal{B}_0(t_n)\cdot \big(1-\eta_\ast\sigma_\ast\big)^{(t-t_n)_+}\cdot \pnorm{\mu^{(0)}-\mu_\ast}{}.
		\end{align*}
		\item Suppose there exist some $\epsilon_0 \in (0,1)$ and $\tau_n,\mathcal{D}_n \in [0,n^{100}]$ such that for all $t\geq 0$,
		\begin{align*}
		\pnorm{\mathrm{u}_\ast^{(t)}-\mu_\ast}{}\leq \mathcal{D}_n\cdot \big(1-\epsilon_0\big)^{(t-\tau_n)_+}\cdot \pnorm{\mu^{(0)}-\mu_\ast}{}.
		\end{align*}
		With $\mathsf{Z}_1\sim \mathcal{N}(0,1)$, let
		\begin{align}\label{def:varrho_generic}
		\varrho_\ast &\equiv \min\Big\{\E^{(0)}   \partial_1\mathfrak{S}\big(\pnorm{\mu_\ast}{}\mathsf{Z}_1, \pnorm{\mu_\ast}{}\mathsf{Z}_1,\xi_{\pi_m}\big),\nonumber\\
		&\qquad\qquad  \E^{(0)}  \mathsf{Z}_1^2\cdot \partial_1\mathfrak{S}\big(\pnorm{\mu_\ast}{}\mathsf{Z}_1, \pnorm{\mu_\ast}{}\mathsf{Z}_1,\xi_{\pi_m}\big)\Big\}.
		\end{align}
		If $\varrho_\ast>0$ and $\Lambda L_\ast^{(0)}\leq (\log n)^{c_0}$ for some $c_0>1$, then there exists some $c_1=c_1(\mathfrak{p},c_0,\epsilon_0)>0$ such that (N2) is satisfied with $t_n=\tau_n+c_1 (\log \log n+\log \mathcal{D}_n)$ and $\sigma_\ast =\varrho_\ast/2$ for $n\geq n_0(\mathfrak{p},c_0,\epsilon_0,\varrho_\ast)$.
	\end{enumerate}
\end{proposition}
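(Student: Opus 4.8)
\emph{Part (1).} The plan is to reduce to the exact linear recursion obeyed by the Gaussian state evolution. By identity~(\ref{eqn:dynamics_gaussian_N1}) — which holds since $\mu_\ast$ is a stationary point of the Gaussian theoretical gradient descent, that is, condition (N1) — iterating gives $\mathrm{u}_\ast^{(t)}-\mu_\ast=\prod_{r=0}^{t-1}\bigl(I_n-\eta_r\mathbb{M}_{\mathrm{u}_\ast^{(r)},\mu_\ast}^{\mathsf{Z}}\bigr)(\mu^{(0)}-\mu_\ast)$. Each $\mathbb{M}_{\mathrm{u}_\ast^{(r)},\mu_\ast}^{\mathsf{Z}}$ is symmetric, and (\ref{cond:nonconvex_generic_conv_long_eta}) forces $\eta_r\pnorm{\mathbb{M}_{\mathrm{u}_\ast^{(r)},\mu_\ast}^{\mathsf{Z}}}{\op}\le 1$, so each eigenvalue $\lambda$ of $\mathbb{M}_{\mathrm{u}_\ast^{(r)},\mu_\ast}^{\mathsf{Z}}$ satisfies $|1-\eta_r\lambda|\le 1$ if $\lambda\ge 0$ and $|1-\eta_r\lambda|=1+\eta_r[\lambda]_-$ if $\lambda<0$; hence $\pnorm{I_n-\eta_r\mathbb{M}_{\mathrm{u}_\ast^{(r)},\mu_\ast}^{\mathsf{Z}}}{\op}\le 1+\eta_r\bigl[\lambda_{\min}(\mathbb{M}_{\mathrm{u}_\ast^{(r)},\mu_\ast}^{\mathsf{Z}})\bigr]_-$. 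For $r\ge t_n$, condition (N2) together with $\eta_r\ge\eta_\ast$ upgrades this to $\pnorm{I_n-\eta_r\mathbb{M}_{\mathrm{u}_\ast^{(r)},\mu_\ast}^{\mathsf{Z}}}{\op}\le 1-\eta_\ast\sigma_\ast$. Splitting the product at $r=t\wedge t_n$: the tail contributes at most $(1-\eta_\ast\sigma_\ast)^{(t-t_n)_+}$, and the head, $\prod_{r=0}^{(t\wedge t_n)-1}\bigl(1+\eta_r[\lambda_{\min}(\mathbb{M}_{\mathrm{u}_\ast^{(r)},\mu_\ast}^{\mathsf{Z}})]_-\bigr)$, is dominated by the $s=0$ summand of $\mathcal{B}_0(t_n)$ (its remaining factors being $\ge 1$), hence is $\le\mathcal{B}_0(t_n)$. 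This gives the asserted bound.

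\emph{Part (2).} Here I would verify (N2) by perturbing $\mathbb{M}_{\mathrm{u}_\ast^{(t)},\mu_\ast}^{\mathsf{Z}}$ off the limiting matrix $\mathbb{M}_{\mu_\ast,\mu_\ast}^{\mathsf{Z}}$. First, $\lambda_{\min}(\mathbb{M}_{\mu_\ast,\mu_\ast}^{\mathsf{Z}})$ is computed exactly by rotational invariance: with $e\equiv\mu_\ast/\pnorm{\mu_\ast}{}$ and $g\equiv\iprod{\mathsf{Z}_n}{e}\sim\mathcal{N}(0,1)$ independent of $P_{e^\perp}\mathsf{Z}_n$, the scalar weight $\partial_1\mathfrak{S}(\iprod{\mathsf{Z}_n}{\mu_\ast},\iprod{\mathsf{Z}_n}{\mu_\ast},\xi_{\pi_m})=\partial_1\mathfrak{S}(\pnorm{\mu_\ast}{}g,\pnorm{\mu_\ast}{}g,\xi_{\pi_m})$ depends on $\mathsf{Z}_n$ only through $g$; expanding $\mathsf{Z}_n\mathsf{Z}_n^\top$ along $e$ and $e^\perp$, the cross terms vanish and $\E^{(0)}[(P_{e^\perp}\mathsf{Z}_n)(P_{e^\perp}\mathsf{Z}_n)^\top]=I_n-ee^\top$, whence $\mathbb{M}_{\mu_\ast,\mu_\ast}^{\mathsf{Z}}=a\,ee^\top+b\,(I_n-ee^\top)$ with $a,b$ exactly the two expectations defining $\varrho_\ast$ in (\ref{def:varrho_generic}); so $\lambda_{\min}(\mathbb{M}_{\mu_\ast,\mu_\ast}^{\mathsf{Z}})=\min\{a,b\}=\varrho_\ast$. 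Second, I would prove the stability estimate $\pnorm{\mathbb{M}_{\mathrm{u},\mu_\ast}^{\mathsf{Z}}-\mathbb{M}_{\mu_\ast,\mu_\ast}^{\mathsf{Z}}}{\op}\lesssim_{\mathfrak{p}}\Lambda\,(1+\pnorm{\mu_\ast}{}+\pnorm{\mathrm{u}-\mu_\ast}{})^{\mathfrak{p}-1}\pnorm{\mathrm{u}-\mu_\ast}{}$, valid for all $\mathrm{u}\in\R^n$, using that $\pnorm{M}{\op}=\sup_{\pnorm{w}{}=1}|w^\top M w|$ for symmetric $M$, that $\partial_1\mathfrak{S}$ is $\Lambda$-pseudo-Lipschitz of order $\mathfrak{p}$ in its first two arguments by (A2), and Hölder together with Gaussian moment bounds for $\iprod{\mathsf{Z}_n}{\mu_\ast}$, $\iprod{\mathsf{Z}_n}{\mathrm{u}-\mu_\ast}$ and $w^\top\mathsf{Z}_n$.

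Finally I would combine these. From $\Lambda L_\ast^{(0)}\le(\log n)^{c_0}$ we get $\pnorm{\mu_\ast}{}\vee\pnorm{\mu^{(0)}-\mu_\ast}{}\le L_\ast^{(0)}\le(\log n)^{c_0}$, hence $\pnorm{\mathrm{u}_\ast^{(t)}-\mu_\ast}{}\le\mathcal{D}_n(1-\epsilon_0)^{(t-\tau_n)_+}(\log n)^{c_0}$. One then picks $c_1=c_1(\mathfrak{p},c_0,\epsilon_0)$ so large that for $t\ge t_n\equiv\tau_n+c_1(\log\log n+\log\mathcal{D}_n)$ both $\pnorm{\mathrm{u}_\ast^{(t)}-\mu_\ast}{}\le 1$ — making the pseudo-Lipschitz prefactor in the stability estimate $\lesssim_{\mathfrak{p}}(\log n)^{c_0(\mathfrak{p}-1)}$ — and $C_{\mathfrak{p}}(\log n)^{c_0(\mathfrak{p}+1)}\mathcal{D}_n(1-\epsilon_0)^{(t-\tau_n)_+}\le\varrho_\ast/2$. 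The stability estimate then gives $\pnorm{\mathbb{M}_{\mathrm{u}_\ast^{(t)},\mu_\ast}^{\mathsf{Z}}-\mathbb{M}_{\mu_\ast,\mu_\ast}^{\mathsf{Z}}}{\op}\le\varrho_\ast/2$ for all $t\ge t_n$, so Weyl's inequality and $\lambda_{\min}(\mathbb{M}_{\mu_\ast,\mu_\ast}^{\mathsf{Z}})=\varrho_\ast$ yield $\lambda_{\min}(\mathbb{M}_{\mathrm{u}_\ast^{(t)},\mu_\ast}^{\mathsf{Z}})\ge\varrho_\ast/2$, which is (N2) with $\sigma_\ast=\varrho_\ast/2$. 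The delicate point is that $C_{\mathfrak{p}}(\log n)^{c_0(\mathfrak{p}+1)}\mathcal{D}_n(1-\epsilon_0)^{(t-\tau_n)_+}\le\varrho_\ast/2$ naively demands $(t-\tau_n)_+\gtrsim_{\epsilon_0}\log(1/\varrho_\ast)+\mathfrak{p}c_0\log\log n+\log\mathcal{D}_n$; to keep $c_1$ free of $\varrho_\ast$ one absorbs $\log(1/\varrho_\ast)$ into $\log\log n$, which is legitimate once $n\ge n_0(\mathfrak{p},c_0,\epsilon_0,\varrho_\ast)$. (When $\tau_n$ is polylogarithmic one also has $t_n\le 1/\epsilon_n$, as required by (N2).)

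The routine ingredients — the Gaussian integration-by-parts behind (\ref{eqn:dynamics_gaussian_N1}), the rotational-invariance diagonalization of $\mathbb{M}_{\mu_\ast,\mu_\ast}^{\mathsf{Z}}$, Weyl's inequality, and the pseudo-Lipschitz moment estimates — are standard. I expect the main obstacle to be the calibration in Part (2): one must simultaneously drive the iterate into the unit ball around $\mu_\ast$ (so the pseudo-Lipschitz prefactor stays polylogarithmic) and shrink the perturbation below $\varrho_\ast/2$, all within an additive budget of $c_1(\log\log n+\log\mathcal{D}_n)$ iterations past $\tau_n$ with $c_1$ independent of $\varrho_\ast$ — which is precisely what pushes the $\log(1/\varrho_\ast)$ dependence into the threshold $n_0$.
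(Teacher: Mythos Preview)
Your proof is correct and follows essentially the same strategy as the paper: for Part~(1), both you and the paper invoke (N1) to obtain the linear recursion~(\ref{eqn:dynamics_gaussian_N1}) and then bound the factor operator norms via~(\ref{cond:nonconvex_generic_conv_long_eta}) and (N2); for Part~(2), the paper computes $\lambda_{\min}(\mathbb{M}_{\mu_\ast,\mu_\ast}^{\mathsf{Z}})=\varrho_\ast$ via Gaussian integration by parts (rather than your rotational-invariance decomposition) and carries a slightly coarser stability estimate with extra $\log n$ and $n^{-200}$ terms, but the perturbation-off-the-limit scheme and the absorption of $\log(1/\varrho_\ast)$ into the threshold $n_0$ are identical.
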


The proof of the above proposition can be found in Section \ref{subsection:proof_nonconvex_generic_conv_long_contraction}. Moreover, the above proposition shows that (\ref{cond:nonconvex_generic_conv_long_lowerbdd}) is essentially equivalent to requiring approximate linear convergence of the Gaussian theoretical gradient descent $\{\mathrm{u}_\ast^{(t)}\}$, modulo the initial $t_n$ iterations (up to an additive $\log \log n+\log \mathcal{D}_n$ factor).

Verifying the bound $\mathcal{B}(t_n)$ may vary in difficulty depending on the magnitude of the search time $t_n$ of the Gaussian theoretical gradient descent iterates $\{\mathrm{u}_\ast^{(t)}\}$. In particular, if $\{\mathrm{u}_\ast^{(t)}\}$ do not spend a significantly long search time so that $t_n$ can be taken as $t_n \lesssim \log \log n$, then both $\mathcal{B}_0(t_n)$ and $\mathcal{B}(t_n)$ can be controlled with $\mathcal{B}_0(t_n)\vee\mathcal{B}(t_n)\leq \mathrm{polylog}(n)$:

\begin{corollary}\label{cor:nonconvex_generic_conv}
	Fix step sizes $\{\eta_r\}$ such that (\ref{cond:nonconvex_generic_conv_long_eta}) holds with  $\eta_\ast \in (0,1)$. Suppose Assumption \ref{assump:abstract} holds for some $K,\Lambda\geq 2$, and (N1)-(N2) hold. Further assume that there exists some constant $c_0>0$ such that $K\Lambda L_\ast^{(0)}\vee \log m\leq (\log n)^{c_0}$ and $t_n\leq c_0\log \log n$ hold. Then there exist constants $c_1=c_1(\mathfrak{p},c_0)>0$ and  $c_2=c_2(\mathfrak{p},c_0,\eta_\ast,\sigma_\ast)>0$ such that if $\phi \geq (\log m)^{c_1}$, on an event with $\Prob^{(0)}$-probability at least $1-m^{-100}$, uniformly for $t\leq m^{100}$, 
	\begin{align*}
	(n\wedge \phi)^{1/2}\pnorm{\mu^{(t)}-\mathrm{u}_\ast^{(t)}}{} + \max_{i \in [m]}\abs{\iprod{X_i}{\mu^{(t)}}}\leq (\log n)^{c_2}.
	\end{align*}
\end{corollary}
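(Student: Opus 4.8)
The plan is to deduce this corollary directly from Theorem \ref{thm:nonconvex_generic_conv_long} by verifying its remaining hypotheses, namely the growth bound \eqref{cond:nonconvex_generic_conv_long_Bn0} and the bound on $\mathcal{B}(t_n)$ in \eqref{cond:nonconvex_generic_conv_long_Bn}, under the additional assumption $t_n \leq c_0 \log\log n$. First I would observe that \eqref{cond:nonconvex_generic_conv_long_Bn0} requires controlling three quantities: $K\Lambda L_\ast^{(0)} \vee \log m$ is $(\log n)^{c_0}$ by hypothesis; the supremum $\sup_{t\geq 1} n^{1/2}\pnorm{\mathrm{u}_\ast^{(t)}}{\infty}$ needs to be shown $\mathrm{polylog}(n)$; and $\mathcal{B}_0(t_n)$ must be $\mathrm{polylog}(n)$. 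The key point is that under (N2), part (1) of Proposition \ref{prop:nonconvex_generic_conv_long_contraction} gives $\pnorm{\mathrm{u}_\ast^{(t)}-\mu_\ast}{} \leq \mathcal{B}_0(t_n)\cdot(1-\eta_\ast\sigma_\ast)^{(t-t_n)_+}\pnorm{\mu^{(0)}-\mu_\ast}{}$, so once $\mathcal{B}_0(t_n)$ is controlled, the whole trajectory $\{\mathrm{u}_\ast^{(t)}\}$ stays within $\mathrm{polylog}(n)$ of $\mu_\ast$ in $\ell_2$, and combined with the assumed bound on $n^{1/2}\pnorm{\mu_\ast}{\infty}$ (implicit in $L_\ast^{(0)}$) and a bound on $n^{1/2}\pnorm{\mathrm{u}_\ast^{(t)}}{\infty}$, we get the $\ell_\infty$ control; I would need to track the $\ell_\infty$-norm propagation through the recursion \eqref{def:u_ast} using the pseudo-Lipschitz bounds from (A2) to bound $\tau_\ast^{(t-1)}, \delta_\ast^{(t-1)}$.

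The crux is therefore bounding $\mathcal{B}_0(t_n)$ and $\mathcal{B}(t_n)$ when $t_n \leq c_0\log\log n$. Each factor $1+\eta_r[\lambda_{\min}(\mathbb{M}_{\mathrm{u}_\ast^{(r)},\mu_\ast}^{\mathsf{Z}})]_- $ in $\mathcal{B}_0(t_n)$ is at most $1+\eta_r\pnorm{\mathbb{M}_{\mathrm{u}_\ast^{(r)},\mu_\ast}^{\mathsf{Z}}}{\op} \leq 2$ by the step-size condition \eqref{cond:nonconvex_generic_conv_long_eta}, so each product over $r\in[s:t_n)$ is at most $2^{t_n - s} \leq 2^{t_n}$, and summing over $s\in[0:t_n)$ gives $\mathcal{B}_0(t_n) \leq t_n 2^{t_n} \leq c_0(\log\log n)\cdot 2^{c_0\log\log n} = (\log n)^{c_0\log 2}\cdot\log\log n \leq (\log n)^{c_0'}$ for a suitable $c_0'$. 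The same crude bound applied to $\mathcal{B}(t_n)$, where each factor is $1+\eta_r[\lambda_{\min}(\mathbb{M}_{\mathrm{u}_\ast^{(r)},\mathrm{u}_\ast^{(r)}}^{\mathsf{Z}})]_- + \epsilon_n \leq 2 + \epsilon_n \leq 3$, yields $\mathcal{B}(t_n) \leq t_n 3^{t_n} \leq (\log n)^{c_0'' }$, which is certainly $\leq (n\wedge\phi)^{1/2}/(\log n)^{c_1}$ once $\phi \geq (\log m)^{c_1'}$ for large enough $c_1'$ and $n$ is large. This verifies \eqref{cond:nonconvex_generic_conv_long_Bn}.

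Having checked all hypotheses, Theorem \ref{thm:nonconvex_generic_conv_long} applies and gives, on an event of $\Prob^{(0)}$-probability at least $1-m^{-100}$, uniformly for $t\leq m^{100}$, the bound $(n\wedge\phi)^{1/2}\pnorm{\mu^{(t)}-\mathrm{u}_\ast^{(t)}}{} + \max_{i\in[m]}\abs{\iprod{X_i}{\mu^{(t)}}} \leq (\log n)^{c_2}\cdot\mathcal{B}(t\wedge t_n)$. Since $\mathcal{B}(t\wedge t_n) \leq \mathcal{B}(t_n) \leq (\log n)^{c_0''}$ from the previous step (the map $t\mapsto\mathcal{B}(t)$ being nondecreasing as each summand is positive and an extra nonnegative term is added), the right side is bounded by $(\log n)^{c_2 + c_0''}$, and renaming the exponent yields the claimed bound. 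I expect the main obstacle to be the bookkeeping in the $\ell_\infty$ estimate for $\sup_t n^{1/2}\pnorm{\mathrm{u}_\ast^{(t)}}{\infty}$: one must argue that the linear recursion \eqref{def:u_ast} does not inflate the $\ell_\infty$-norm beyond $\mathrm{polylog}(n)$ over the $\mathrm{polylog}(n)$-length search phase and then contracts, which requires combining the $\ell_2$ contraction from Proposition \ref{prop:nonconvex_generic_conv_long_contraction}(1) with uniform bounds on $|\tau_\ast^{(t-1)}|, |\delta_\ast^{(t-1)}|$ derived from (A2) — everything else is the crude exponential-in-$t_n$ counting above, which is routine precisely because $t_n$ is only $\log\log n$.
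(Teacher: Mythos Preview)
Your proposal is correct and follows essentially the same approach as the paper: bound $\mathcal{B}_0(t_n)$ and $\mathcal{B}(t_n)$ by the crude exponential-in-$t_n$ estimate (each factor $\leq 2+\epsilon_n$ by the step-size condition \eqref{cond:nonconvex_generic_conv_long_eta}, so the sum is at most $e^{t_n+1}\leq (\log n)^{c}$), verify the $\ell_\infty$ bound $\sup_t n^{1/2}\pnorm{\mathrm{u}_\ast^{(t)}}{\infty}\leq (\log n)^{c}$ by propagating the recursion \eqref{def:u_ast}, and then invoke Theorem~\ref{thm:nonconvex_generic_conv_long}. The paper carries out the $\ell_\infty$ step a bit more explicitly via the quantity $\mathscr{M}_{\tau_\ast^{(\cdot)}}^{(t)}$ and Lemma~\ref{lem:nonconvex_generic_conv_long_tau} (which identifies $\tau_\ast^{(t)}$ as an eigenvalue of $\mathbb{M}_{\mathrm{u}_\ast^{(t)},\mu_\ast}^{\mathsf{Z}}$, hence $\tau_\ast^{(t)}\geq\sigma_\ast$ for $t\geq t_n$, giving the post-search contraction $|1-\eta_r\tau_\ast^{(r)}|\leq 1-\eta_\ast\sigma_\ast$), but this is exactly the mechanism you describe informally.
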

The proof of the above corollary can be found in Section \ref{subsection:proof_nonconvex_generic_conv}.

A more challenging situation arises when the gradient descent has a long search time so that $t_n$ is large. Intuitively, a long search time $t_n$ may cause the gradient descent to blow up in the worst case. In this sense, controlling $\mathcal{B}_0(t_n)$ and $\mathcal{B}(t_n)$ to prevent such blow-up is a fundamental rather than merely a technical issue. In these cases, one typically needs to exploit problem-specific structures in a precise manner.

\subsection{Long-time estimates for $\mathscr{M}_{ \mathrm{u}_X^{(\cdot)},\mathrm{u}_{\ast}^{(\cdot)} }^{(t),X}$ and $\mathfrak{M}_{ \mu^{(\cdot)},\mathrm{u}_X^{(\cdot)} }^{(t)}(X)$}\label{subsection:long_time_dynamics_proof_sketch}

In order to prove Theorem \ref{thm:nonconvex_generic_conv_long} via the master Theorems \ref{thm:grad_des_se} and \ref{thm:grad_des_incoh}, the key is to produce long-time estimates for $\mathscr{M}_{ \mathrm{u}_X^{(\cdot)},\mathrm{u}_{\ast}^{(\cdot)} }^{(t),X}$ and $\mathfrak{M}_{ \mu^{(\cdot)},\mathrm{u}_X^{(\cdot)} }^{(t)}(X)$ under (N1)-(N2).

For the deterministic quantity $\mathscr{M}_{ \mathrm{u}_X^{(\cdot)},\mathrm{u}_{\ast}^{(\cdot)} }^{(t),X}$, we have:

\begin{proposition}\label{prop:nonconvex_generic_conv_long_Mscr}
	Suppose Assumption \ref{assump:abstract} holds for some $K,\Lambda\geq 2$ and $\mathfrak{p}\geq 1$, (N1)-(N2) in Theorem \ref{thm:nonconvex_generic_conv_long} hold, and there exists a constant $c_0>0$ such that (\ref{cond:nonconvex_generic_conv_long_Bn0}) holds. Fix step sizes $\{\eta_r\}$ such that (\ref{cond:nonconvex_generic_conv_long_eta}) holds with  $\eta_\ast \in (0,1)$.  Then there exist constants $c_1=c_1(\mathfrak{p},c_0)>0$ and  $c_2=c_2(\mathfrak{p},c_0,\eta_\ast,\sigma_\ast)>0$ such that if $\phi\geq (\log m)^{c_1}$ and $\mathcal{B}(t\wedge t_n)\leq n^{1/2}/(\log n)^{c_1}$, 
	\begin{align*}
	\mathscr{M}_{ \mathrm{u}_X^{(\cdot)},\mathrm{u}_{\ast}^{(\cdot)} }^{(t),X}\vee \max_{r \in [0:t]} n^{1/2}\pnorm{\mathrm{u}_{\ast}^{(r+1)} - \mathrm{u}_{X}^{(r+1)} }{}\leq (\log n)^{c_2}\cdot \mathcal{B}(t\wedge t_n).
	\end{align*}
\end{proposition}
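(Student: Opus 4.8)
The plan is to run a strong induction on $t$ using, as bootstrap quantity, $\Delta_t \equiv 1 + n^{1/2}\max_{s\in[0:t]}\pnorm{\mathrm{u}_\ast^{(s)}-\mathrm{u}_X^{(s)}}{}$, and to show $\Delta_t \le (\log n)^{c_2}\mathcal{B}(t\wedge t_n)$ (in the relevant range of $t$, where $\mathcal{B}(t\wedge t_n)\ge 1$) together with the companion estimate $\mathscr{M}_{\mathrm{u}_X^{(\cdot)},\mathrm{u}_\ast^{(\cdot)}}^{(t),X}\le (\log n)^{c_2}\mathcal{B}(t\wedge t_n)$, which is the assertion of the proposition. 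The starting point is the linear recursion for the gap: subtracting the defining equations (\ref{def:theo_grad_des}) of $\mathrm{u}_X^{(t)}$ and $\mathrm{u}_\ast^{(t)}=\mathrm{u}_{\mathsf{Z}}^{(t)}$ and writing a mean-value decomposition of $\partial_1\mathfrak{S}$ in the link variable (so that $\tfrac1m\E^{(0)}X^\top[\mathfrak{S}(X\mathrm{u},X\mu_\ast,\xi)-\mathfrak{S}(X\mathrm{v},X\mu_\ast,\xi)]=\mathbb{M}^X_{\mathrm{u},\mathrm{v}}(\mathrm{u}-\mathrm{v})$ by Definition \ref{def:G_M}) yields, for $t\ge 1$,
\[
\mathrm{u}_\ast^{(t)}-\mathrm{u}_X^{(t)}=\big(I_n-\eta_{t-1}\mathbb{M}^X_{\mathrm{u}_X^{(t-1)},\mathrm{u}_\ast^{(t-1)}}\big)\big(\mathrm{u}_\ast^{(t-1)}-\mathrm{u}_X^{(t-1)}\big)-\eta_{t-1}E_{t-1},
\]
where $E_{t-1}\equiv\tfrac1m\big(\E^{(0)}\mathsf{Z}^\top\mathfrak{S}(\mathsf{Z}\mathrm{u}_\ast^{(t-1)},\mathsf{Z}\mu_\ast,\xi)-\E^{(0)}X^\top\mathfrak{S}(X\mathrm{u}_\ast^{(t-1)},X\mu_\ast,\xi)\big)$ is a purely deterministic single-step universality error for the population gradient at the deterministic point $\mathrm{u}_\ast^{(t-1)}$. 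Unrolling (the homogeneous term vanishes since $\mathrm{u}_\ast^{(0)}=\mathrm{u}_X^{(0)}$) and using $\eta_s\le 1$ gives the clean inequality $\pnorm{\mathrm{u}_\ast^{(t)}-\mathrm{u}_X^{(t)}}{}\le \big(\max_{s\in[0:t-1]}\pnorm{E_s}{}\big)\cdot\mathscr{M}_{\mathrm{u}_X^{(\cdot)},\mathrm{u}_\ast^{(\cdot)}}^{(t-1),X}$.

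The argument rests on three deterministic ingredients, valid under (A2) and (\ref{cond:nonconvex_generic_conv_long_eta})--(\ref{cond:nonconvex_generic_conv_long_Bn0}) (the last of which forces $\pnorm{\mathrm{u}_\ast^{(r)}}{2}$, $\pnorm{\mu_\ast}{2}$, and hence all order-$\mathfrak{p}$ pseudo-Lipschitz prefactors, to be $\mathrm{polylog}(n)$). First, a Lipschitz-in-argument bound $\pnorm{\mathbb{M}^X_{\mathrm{u},\mathrm{v}}-\mathbb{M}^X_{\mathrm{u}',\mathrm{v}}}{\op}\le (\log n)^{c}\pnorm{\mathrm{u}-\mathrm{u}'}{}$ (and symmetrically in the second slot), obtained from $\pnorm{\E_{X_i}g(X_i)X_iX_i^\top}{\op}\lesssim_K\sqrt{\E_{X_i}g(X_i)^2}$ together with the pseudo-Lipschitz property of $\partial_1\mathfrak{S}$ and fourth-moment control of the linear forms $\iprod{X_i}{\cdot}$. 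Second, a single-iteration universality estimate $\pnorm{\mathbb{M}^X_{\mathrm{u}_\ast^{(r)},\mathrm{u}_\ast^{(r)}}-\mathbb{M}^{\mathsf{Z}}_{\mathrm{u}_\ast^{(r)},\mathrm{u}_\ast^{(r)}}}{\op}\le n^{-1/2}(\log n)^{c}$ and $\pnorm{E_s}{}\le n^{-1/2}(\log n)^{c}$; this is essentially the single-step universality that already underlies Theorem \ref{thm:grad_des_se}, proved by a Lindeberg swap of the entries of each row of $X$ to Gaussian, using that both $\mathbb{M}^X_{\mathrm{u}_\ast^{(r)},\mathrm{u}_\ast^{(r)}}$ and $\mathbb{M}^{\mathsf{Z}}_{\mathrm{u}_\ast^{(r)},\mathrm{u}_\ast^{(r)}}$ agree, up to $O(n^{-1/2})$ in operator norm, with $\tau_\ast^{(r)}I_n$ plus a correction supported on the rank-$\le2$ subspace $\mathrm{span}\{\mathrm{u}_\ast^{(r)},\mu_\ast\}$ (and likewise that the population gradient agrees with its Gaussian counterpart, a vector in $\mathrm{span}\{\mathrm{u}_\ast^{(r)},\mu_\ast\}$, up to an $n^{-1/2}(\log n)^c$-norm remainder), the delocalization of $\mathrm{u}_\ast^{(r)},\mu_\ast$ keeping all Taylor coefficients small. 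Third, since (\ref{cond:nonconvex_generic_conv_long_eta}) gives $\eta_r\lambda_{\max}(\mathbb{M}^{\mathsf{Z}}_{\mathrm{u}_\ast^{(r)},\mathrm{u}_\ast^{(r)}})\le1$, the matrix $I_n-\eta_r\mathbb{M}^{\mathsf{Z}}_{\mathrm{u}_\ast^{(r)},\mathrm{u}_\ast^{(r)}}$ is positive semidefinite, so $\pnorm{I_n-\eta_r\mathbb{M}^{\mathsf{Z}}_{\mathrm{u}_\ast^{(r)},\mathrm{u}_\ast^{(r)}}}{\op}=1-\eta_r\lambda_{\min}(\mathbb{M}^{\mathsf{Z}}_{\mathrm{u}_\ast^{(r)},\mathrm{u}_\ast^{(r)}})\le 1+\eta_r[\lambda_{\min}(\mathbb{M}^{\mathsf{Z}}_{\mathrm{u}_\ast^{(r)},\mathrm{u}_\ast^{(r)}})]_-$, precisely the generic factor of $\mathcal{B}(\cdot)$ in (\ref{def:Bn}).

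These are combined as follows. Granting the induction hypothesis at all $s\le t$ and the standing assumption $\mathcal{B}(t\wedge t_n)\le n^{1/2}/(\log n)^{c_1}$, one has $\pnorm{\mathrm{u}_X^{(r)}-\mathrm{u}_\ast^{(r)}}{}\le n^{-1/2}(\log n)^{c_2}\mathcal{B}(r\wedge t_n)\le (\log n)^{c_2-c_1}$ for $r\le t$, so choosing $c_1$ large relative to $c_2$ makes both the first and second ingredient errors at most $\epsilon_n=(\log n)^{-100}$; combining with the third ingredient, $\pnorm{I_n-\eta_r\mathbb{M}^X_{\mathrm{u}_X^{(r)},\mathrm{u}_\ast^{(r)}}}{\op}\le 1+\eta_r[\lambda_{\min}(\mathbb{M}^{\mathsf{Z}}_{\mathrm{u}_\ast^{(r)},\mathrm{u}_\ast^{(r)}})]_-+2\epsilon_n$ for all $r\le t$. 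For the tail $r\ge t_n$, after enlarging $t_n$ by an additive $O(\log\log n+\log\mathcal{D}_n)$ (allowed since $\mathcal{B}_0(t_n)$ is polylog, so this inflates $\mathcal{B}(t_n)$ only by a polylog factor while keeping $t_n\le1/\epsilon_n$) so that $\pnorm{\mathrm{u}_\ast^{(r)}-\mu_\ast}{}\le(\log n)^{-c}$ by Proposition~\ref{prop:nonconvex_generic_conv_long_contraction}(1), the chain $\mathbb{M}^X_{\mathrm{u}_X^{(r)},\mathrm{u}_\ast^{(r)}}\approx\mathbb{M}^{\mathsf{Z}}_{\mathrm{u}_\ast^{(r)},\mathrm{u}_\ast^{(r)}}\approx\mathbb{M}^{\mathsf{Z}}_{\mathrm{u}_\ast^{(r)},\mu_\ast}$ (first/second ingredients) together with (N2) upgrades this to a genuine contraction $\pnorm{I_n-\eta_r\mathbb{M}^X_{\mathrm{u}_X^{(r)},\mathrm{u}_\ast^{(r)}}}{\op}\le 1-\eta_\ast\sigma_\ast/2<1$. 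Inserting these two bounds into the definition of $\mathscr{M}_{\mathrm{u}_X^{(\cdot)},\mathrm{u}_\ast^{(\cdot)}}^{(t),X}$: the portions of each product with indices $<t_n$ reproduce (up to a bounded number of transitional factors on the $O(\log\log n)$-window and $(1+2\epsilon_n)^{t_n}\le e$) the quantity $\mathcal{B}(t\wedge t_n)$, while any product extending past $t_n$ carries a geometric factor, so the corresponding part of the $s$-sum is $O((\eta_\ast\sigma_\ast)^{-1})$; hence $\mathscr{M}_{\mathrm{u}_X^{(\cdot)},\mathrm{u}_\ast^{(\cdot)}}^{(t),X}\le(\log n)^{O(1)}\mathcal{B}(t\wedge t_n)$. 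Feeding this and $\max_{s\le t}\pnorm{E_s}{}\le n^{-1/2}(\log n)^c$ into the unrolled recursion of the first paragraph bounds $n^{1/2}\pnorm{\mathrm{u}_\ast^{(t+1)}-\mathrm{u}_X^{(t+1)}}{}$ by $(\log n)^{O(1)}\mathcal{B}(t\wedge t_n)$, closing the induction once $c_2$ (and then $c_1$) is taken large enough to dominate all these $(\log n)^{O(1)}$ and $(\mathfrak{p},c_0,\eta_\ast,\sigma_\ast)$-dependent factors.

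The hard part is the operator-norm version of the single-iteration universality (second ingredient). A bare Lindeberg swap controls only a fixed quadratic form $v^\top(\mathbb{M}^X-\mathbb{M}^{\mathsf{Z}})v$; to make the bound uniform over unit $v$ one must split $\R^n$ into the $O(1)$-dimensional ``signal'' subspace $\mathrm{span}\{\mathrm{u}_\ast^{(r)},\mu_\ast\}$, where the comparison reduces to a low-dimensional smooth integral (amenable to CLT/Lindeberg), and its orthogonal complement, where both matrices equal $\tau_\ast^{(r)}$ times the projection up to an entrywise $O(n^{-1/2})$ deviation whose operator-norm effect must be estimated by a genuine (not entrywise-to-operator) argument; the third- and fourth-order Taylor remainders appearing there are exactly what consume the four-derivative smoothness in (A2). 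A secondary, bookkeeping-heavy point is the interplay of the exponents $c_0,c_1,c_2$ and the precise amount by which $t_n$ is enlarged, which must be arranged so that every error term is genuinely $\le\epsilon_n$ and the resulting product telescopes against $\mathcal{B}$.
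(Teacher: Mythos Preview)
Your proposal is correct and follows essentially the same approach as the paper's proof: both run an induction whose closure hinges on the operator-norm comparison $\bigpnorm{\mathbb{M}^X_{\mathrm{u}_X^{(r)},\mathrm{u}_\ast^{(r)}}-\mathbb{M}^{\mathsf{Z}}_{\mathrm{u}_\ast^{(r)},\mathrm{u}_\ast^{(r)}}}{\op}\le\epsilon_n$, obtained by combining a Lipschitz estimate (your first ingredient) with a single-step universality result (your second ingredient; this is the paper's Proposition~\ref{prop:weighted_sample_cov}), then split the product in $\mathscr{M}$ at a threshold $t_n'=t_n+O(\log\log n)$ where, via Lemma~\ref{lem:nonconvex_generic_conv_long_min_eig}, the post-threshold factors become contractive. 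The only cosmetic difference is that the paper inducts directly on the bound $\mathscr{M}^{(t-1),X}\le n^{1/2}/(\log n)^{c_\ast}$ and then reads off the gap $\pnorm{\mathrm{u}_\ast^{(\cdot)}-\mathrm{u}_X^{(\cdot)}}{}$ from Proposition~\ref{prop:diff_theo_grad_des}, whereas you phrase the induction on $\Delta_t$ and derive the $\mathscr{M}$-bound alongside; these are equivalent via the unrolled recursion you wrote down (which is exactly Proposition~\ref{prop:diff_theo_grad_des}).
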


The estimate in Proposition \ref{prop:nonconvex_generic_conv_long_Mscr} is proved by an induction scheme that exploits the contraction nature of the Gaussian theoretical gradient descent. To illustrate the main idea, by Definition \ref{def:G_M}, we need to control $\big\{\pnorm{I_n - \eta_r\cdot \mathbb{M}_{ \mathrm{u}_{X}^{(r)},\mathrm{u}_{\ast}^{(r)} }^{X} }{\op}: r \in [0:t]\big\}$. To do so, we replace $\mathbb{M}_{ \mathrm{u}_{X}^{(r)},\mathrm{u}_{\ast}^{(r)} }^{X}$ with its Gaussian counterpart $\mathbb{M}_{ \mathrm{u}_{\ast}^{(r)},\mathrm{u}_{\ast}^{(r)} }^{\mathsf{Z}}$ and prove that, for $n$ large enough,
\begin{align}\label{ineq:nonconvex_generic_conv_long_Mscr_sketch_1}
&\bigabs{\,\bigpnorm{I_n - \eta_r\cdot \mathbb{M}_{ \mathrm{u}_{X}^{(r)},\mathrm{u}_{\ast}^{(r)} }^{X} }{\op} - \bigpnorm{I_n - \eta_r\cdot \mathbb{M}_{ \mathrm{u}_{\ast}^{(r)},\mathrm{u}_{\ast}^{(r)} }^{\mathsf{Z}} }{\op }}\nonumber\\
&\leq \bigo\,\big(\mathrm{polylog}(n)\cdot \pnorm{\mathrm{u}_{X}^{(r)}-\mathrm{u}_{\ast}^{(r)}}{} \big) + \epsilon_n/2.
\end{align}
On the other hand, we prove that 
\begin{align}\label{ineq:nonconvex_generic_conv_long_Mscr_sketch_2}
\pnorm{\mathrm{u}_{X}^{(r)}-\mathrm{u}_{\ast}^{(r)}}{}&\leq \mathrm{polylog}(n)\cdot n^{-1/2}\cdot \mathscr{M}_{ \mathrm{u}_X^{(\cdot)},\mathrm{u}_{\ast}^{(\cdot)} }^{(r-1),X}.
\end{align}
From (\ref{ineq:nonconvex_generic_conv_long_Mscr_sketch_1}) and (\ref{ineq:nonconvex_generic_conv_long_Mscr_sketch_2}), it follows that control of $\mathscr{M}_{ \mathrm{u}_X^{(\cdot)},\mathrm{u}_{\ast}^{(\cdot)} }^{(t),X}$ can be reduced to that of $\big\{\mathscr{M}_{ \mathrm{u}_X^{(\cdot)},\mathrm{u}_{\ast}^{(\cdot)} }^{(r),X}: r \in [0:t-1]\big\}$, and its order at iteration $t$ can be controlled, provided that the entire past history is uniformly well controlled. The details of this induction scheme can be found in Section \ref{subsection:proof_nonconvex_generic_conv_long_Mscr_sketch}.

Next, for the random quantity $\mathfrak{M}_{ \mu^{(\cdot)},\mathrm{u}_X^{(\cdot)} }^{(t)}(X)$, we have:

\begin{proposition}\label{prop:nonconvex_generic_conv_long_Mcal}
	Suppose Assumption \ref{assump:abstract} holds for some $K,\Lambda\geq 2$ and $\mathfrak{p}\geq 1$, (N1)-(N2) in Theorem \ref{thm:nonconvex_generic_conv_long} hold, and there exists a universal constant $c_0>0$ such that (\ref{cond:nonconvex_generic_conv_long_Bn0}) holds. Fix step sizes $\{\eta_r\}$ such that (\ref{cond:nonconvex_generic_conv_long_eta}) holds with  $\eta_\ast \in (0,1)$. Then there exist constants $c_1=c_1(\mathfrak{p},c_0)>0$ and  $c_2=c_2(\mathfrak{p},c_0,\eta_\ast,\sigma_\ast)>0$ such that if $\phi\geq (\log m)^{c_1}$ and $\mathcal{B}(t_n)\leq (n\wedge \phi)^{1/2}/(\log n)^{c_1}$, on an event with $\Prob^{(0)}$-probability at least $1-m^{-100}$, uniformly for $t\leq m^{100}$, 
	\begin{align*}
	&\mathfrak{M}_{ \mu^{(\cdot)},\mathrm{u}_X^{(\cdot)} }^{(t)}(X) \vee  \max_{i \in [m]} \mathfrak{M}_{\mu^{(\cdot)}, \mu_{[-i]}^{(\cdot)} }^{(t)}(X)\leq (\log n)^{c_2}\cdot \mathcal{B}(t\wedge t_n).
	\end{align*}
\end{proposition}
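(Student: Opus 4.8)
\textbf{Proof strategy for Proposition \ref{prop:nonconvex_generic_conv_long_Mcal}.}

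The plan is to mirror the inductive scheme used for the deterministic quantity $\mathscr{M}_{ \mathrm{u}_X^{(\cdot)},\mathrm{u}_{\ast}^{(\cdot)} }^{(t),X}$ in Proposition \ref{prop:nonconvex_generic_conv_long_Mscr}, but now carrying the randomness of $X$ through a leave-one-out device and a union bound over the event from Theorem \ref{thm:grad_des_se}. The key is that $\mathfrak{M}_{ \mu^{(\cdot)},\mathrm{u}_X^{(\cdot)} }^{(t)}(X)$ is built from the operator norms $\pnorm{I_n - \eta_r\cdot M_{ \mu^{(r)},\mathrm{u}_X^{(r)} }(X)}{\op}$, and $M_{ \mu^{(r)},\mathrm{u}_X^{(r)} }(X)$ is a sum over one random row $X_{\pi_m}$ of a rank-one term. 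First I would establish a perturbation estimate in the spirit of \eqref{ineq:nonconvex_generic_conv_long_Mscr_sketch_1}: replace $M_{ \mu^{(r)},\mathrm{u}_X^{(r)} }(X)$ by $\mathbb{M}_{ \mathrm{u}_{\ast}^{(r)},\mathrm{u}_{\ast}^{(r)} }^{\mathsf{Z}}$ in two stages — a within-$X$ concentration step (empirical average of the rank-one matrices over rows of $X$ concentrates around $\mathbb{M}_{ \mu^{(r)},\mathrm{u}_X^{(r)} }^X$ with a $\phi^{-1/2}\,\mathrm{polylog}(n)$ error, using (A1)–(A2) and a standard covering/Bernstein argument for sub-gaussian rows), followed by a Gaussian-universality step ($\mathbb{M}_{ \mu^{(r)},\mathrm{u}_X^{(r)} }^X$ versus $\mathbb{M}_{ \mathrm{u}_{\ast}^{(r)},\mathrm{u}_{\ast}^{(r)} }^{\mathsf{Z}}$, controlled by $\mathrm{polylog}(n)\cdot(\pnorm{\mu^{(r)}-\mathrm{u}_X^{(r)}}{}+\pnorm{\mathrm{u}_X^{(r)}-\mathrm{u}_{\ast}^{(r)}}{})$ plus the $n^{-1/2}\,\mathrm{polylog}(n)$ non-Gaussianity cost, the latter already packaged as $\epsilon_n/2$). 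By (\ref{cond:nonconvex_generic_conv_long_eta}) the matrices $I_n-\eta_r\mathbb{M}_{ \mathrm{u}_{\ast}^{(r)},\mathrm{u}_{\ast}^{(r)} }^{\mathsf{Z}}$ have operator norm at most $1+\eta_r[\lambda_{\min}(\cdots)]_-$, so the perturbed product telescopes into exactly the definition of $\mathcal{B}(t)$ up to the accumulated $\epsilon_n$ slack, which is harmless because $t_n\leq 1/\epsilon_n$.

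Next I would close the induction on $r$. At step $r$, the proximity bound $\pnorm{\mu^{(r)}-\mathrm{u}_X^{(r)}}{}\lesssim \mathrm{polylog}(n)\cdot\phi^{-1/2}\cdot\mathfrak{M}_{ \mu^{(\cdot)},\mathrm{u}_X^{(\cdot)} }^{(r-1)}(X)$ (a one-step-shifted version of Theorem \ref{thm:grad_des_se}, isolating the $\mathfrak{M}$ term and absorbing the $\mathscr{M}$ term via Proposition \ref{prop:nonconvex_generic_conv_long_Mscr}) and the already-established $\pnorm{\mathrm{u}_X^{(r)}-\mathrm{u}_{\ast}^{(r)}}{}\lesssim \mathrm{polylog}(n)\cdot n^{-1/2}\cdot\mathcal{B}(r\wedge t_n)$ from Proposition \ref{prop:nonconvex_generic_conv_long_Mscr} feed the perturbation estimate, so that $\mathfrak{M}_{ \mu^{(\cdot)},\mathrm{u}_X^{(\cdot)} }^{(r)}(X)$ is bounded in terms of $\mathcal{B}(r\wedge t_n)$ and $\max_{s<r}\mathfrak{M}_{ \mu^{(\cdot)},\mathrm{u}_X^{(\cdot)} }^{(s)}(X)$; the condition $\mathcal{B}(t_n)\leq (n\wedge\phi)^{1/2}/(\log n)^{c_1}$ is precisely what makes the self-referential error term a strict contraction, so the induction does not lose more than a bounded power of $\log n$ per the whole horizon. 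For $r\geq t_n$, \eqref{cond:nonconvex_generic_conv_long_lowerbdd} gives $\lambda_{\min}(\mathbb{M}_{ \mathrm{u}_{\ast}^{(r)},\mu_\ast }^{\mathsf{Z}})\geq\sigma_\ast$; one then shows (using Proposition \ref{prop:nonconvex_generic_conv_long_contraction} and the smoothness (A2)) that $\lambda_{\min}(\mathbb{M}_{ \mathrm{u}_{\ast}^{(r)},\mathrm{u}_{\ast}^{(r)} }^{\mathsf{Z}})$ is also eventually positive since $\mathrm{u}_\ast^{(r)}\to\mu_\ast$, so the factors $1+\eta_r[\lambda_{\min}]_-+\epsilon_n$ become $1+\epsilon_n$ and the tail of $\mathcal{B}(t)$ saturates: $\mathcal{B}(t)\asymp\mathcal{B}(t\wedge t_n)$ up to $(\log n)^{\bigo(1)}$, matching the right side of the claimed bound. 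The leave-one-out variant $\mathfrak{M}_{\mu^{(\cdot)}, \mu_{[-i]}^{(\cdot)} }^{(t)}(X)$ is handled identically — $\mu_{[-i]}^{(r)}$ is within $\mathrm{polylog}(n)\cdot\phi^{-1}\cdot(\cdots)$ of $\mu^{(r)}$ (this is exactly the content bounded in Theorem \ref{thm:grad_des_incoh}), and $X_{[-i]}$ still satisfies (A1) — so the same perturbation-plus-induction argument applies, now with a union bound over $i\in[m]$ costing only an extra $\log m$ absorbed into the polylog.

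The main obstacle I anticipate is the self-consistency of the induction: because $\mathfrak{M}_{ \mu^{(\cdot)},\mathrm{u}_X^{(\cdot)} }^{(t)}(X)$ appears (shifted by one iteration) on the right-hand side of the proximity bound coming from Theorem \ref{thm:grad_des_se}, naively iterating would compound a $\mathrm{polylog}(n)$ factor per step and destroy long-time validity after $\mathrm{polylog}(n)$ iterations. The resolution — and the technical heart of the proof — is to observe that once $r>t_n$ the operator-norm factors are genuinely $\leq 1+\epsilon_n$ (strict contraction up to negligible slack), so the product defining $\mathfrak{M}$ does not grow, and the dangerous multiplicative factors only accumulate over the $t_n\leq(\log n)^{100}$ initial steps, where they are controlled by the hypothesis $\mathcal{B}_0(t_n)\vee\mathcal{B}(t_n)\leq(\log n)^{c_0}$ from \eqref{cond:nonconvex_generic_conv_long_Bn0}; one must carefully track that the perturbation errors in \eqref{ineq:nonconvex_generic_conv_long_Mscr_sketch_1}-type bounds depend only on $\mathcal{B}(r\wedge t_n)$ and not on the larger running bound for $\mathfrak{M}$, which is what decouples the recursion. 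Making this rigorous requires choosing $c_1$ large enough (depending on $\mathfrak{p},c_0$) so that the gap $(n\wedge\phi)^{1/2}/(\log n)^{c_1}$ dominates every polylog arising from the covering numbers and sub-gaussian tail bounds, and choosing $c_2$ to absorb the finitely many $(\log n)^{\bigo(1)}$ losses incurred during the search phase and in the final Gaussian eigenvalue estimates.
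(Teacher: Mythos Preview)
Your overall architecture (perturbation to $\mathbb{M}_{\mathrm{u}_\ast^{(r)},\mathrm{u}_\ast^{(r)}}^{\mathsf{Z}}$, then telescope into $\mathcal{B}(t\wedge t_n)$, with contraction after $t_n$) is right, but the decomposition you propose for the perturbation has a genuine gap at the concentration step, and that gap is exactly the reason the paper's proof looks different from yours.

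You write that ``the empirical average of the rank-one matrices over rows of $X$ concentrates around $\mathbb{M}_{\mu^{(r)},\mathrm{u}_X^{(r)}}^X$ with a $\phi^{-1/2}\,\mathrm{polylog}(n)$ error, using (A1)--(A2) and a standard covering/Bernstein argument''. This does not work: the summands $\partial_1\mathfrak{S}(\langle X_i,U\mu^{(r)}+(1-U)\mathrm{u}_X^{(r)}\rangle,\ldots)X_iX_i^\top$ are \emph{not} independent across $i$, because $\mu^{(r)}$ depends on all rows of $X$. Freezing $\mu^{(r)}$ and applying Bernstein gives the right rate for each fixed vector, but a uniform bound over a ball of radius $\mathrm{polylog}(n)$ in $\R^n$ requires a union bound over an $n$-dimensional net, which blows up the probability estimate unless $\log m\gtrsim n$. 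So your step~1 only yields the claimed $\phi^{-1/2}$ rate in the regime $\phi\gtrsim e^{c n}$ (or at best $\phi\gtrsim n$ with a sharper matrix-deviation inequality), not under the hypothesis $\phi\geq(\log m)^{c_1}$.

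What the paper does instead is to \emph{not} concentrate $M_{\mu^{(r)},\mathrm{u}_X^{(r)}}(X)$ around its mean, but to first replace $\mu^{(r)},\mathrm{u}_X^{(r)}$ by the deterministic $\mathrm{u}_\ast^{(r)}$ \emph{inside the random matrix $M$}. The perturbation error is then of the form $\max_{i\in[m]}|\langle X_i,\mu^{(r)}-\mathrm{u}_\ast^{(r)}\rangle|$, which cannot be bounded by $\sqrt{\log m}\cdot\pnorm{\mu^{(r)}-\mathrm{u}_\ast^{(r)}}{}$ (again because $\mu^{(r)}$ depends on $X_i$). Here is where the leave-one-out enters \emph{inside the induction for $\mathfrak{M}_{\mu^{(\cdot)},\mathrm{u}_X^{(\cdot)}}$ itself}: one writes $\langle X_i,\mu^{(r)}-\mathrm{u}_\ast^{(r)}\rangle=\langle X_i,\mu^{(r)}-\mu_{[-i]}^{(r)}\rangle+\langle X_i,\mu_{[-i]}^{(r)}-\mathrm{u}_\ast^{(r)}\rangle$, controls the first piece via $\mathfrak{M}_{\mu^{(\cdot)},\mu_{[-i]}^{(\cdot)}}^{(r-1)}(X)$ (Lemma~\ref{lem:grad_des_loo_diff}), and the second by sub-gaussian concentration since $X_i\perp\mu_{[-i]}^{(r)}$. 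Only after this replacement does one apply Proposition~\ref{prop:weighted_sample_cov} to the now-deterministic-weight matrix $M_{\mathrm{u}_\ast^{(r)},\mathrm{u}_\ast^{(r)}}(X)$. The consequence is that the induction for $\mathfrak{M}_{\mu^{(\cdot)},\mathrm{u}_X^{(\cdot)}}^{(t)}(X)$ is \emph{coupled} to that for $\max_i\mathfrak{M}_{\mu^{(\cdot)},\mu_{[-i]}^{(\cdot)}}^{(t)}(X)$ --- they must be propagated jointly, not ``handled identically'' in parallel as you suggest. This joint induction is the technical heart of the argument (and is flagged explicitly in the paper's sketch in \eqref{ineq:nonconvex_generic_conv_long_Mcal_sketch_1}); your proposal treats the leave-one-out quantity as an afterthought rather than as the enabling device for the main bound.
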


The estimate in Proposition \ref{prop:nonconvex_generic_conv_long_Mcal} is much harder to prove due to the random nature of $\mathfrak{M}_{ \mu^{(\cdot)},\mathrm{u}_X^{(\cdot)} }^{(t)}(X)$. In fact, an inductive scheme for $\mathfrak{M}_{ \mu^{(\cdot)},\mathrm{u}_X^{(\cdot)} }^{(t)}(X)$ does not hold solely along its own past iterates, but also relies on the leave-one-out versions $\big\{\mathfrak{M}_{\mu^{(\cdot)}, \mu_{[-i]}^{(\cdot)} }^{(t)}(X)\big\}$. We therefore provide (high-probability) control of the desired term $\mathfrak{M}_{ \mu^{(\cdot)},\mathrm{u}_X^{(\cdot)} }^{(t)}(X)$ by a \emph{joint induction scheme} that simultaneously controls both terms
\begin{align}\label{ineq:nonconvex_generic_conv_long_Mcal_sketch_1}
\mathfrak{M}_{ \mu^{(\cdot)},\mathrm{u}_X^{(\cdot)} }^{(r)}(X),\, \max_{i \in [m]} \mathfrak{M}_{\mu^{(\cdot)}, \mu_{[-i]}^{(\cdot)} }^{(r)}(X),\quad r \in [0:t-1].
\end{align}
A technical subtlety in implementing this joint induction scheme is to control the leave-one-out terms  $\big\{\mathfrak{M}_{\mu^{(\cdot)}, \mu_{[-i]}^{(\cdot)} }^{(\cdot)}(X)\big\}$ again by the same terms in (\ref{ineq:nonconvex_generic_conv_long_Mcal_sketch_1}), without leaving out further samples, to ensure valid long-time control. The details of this technical joint induction scheme can be found in Section \ref{subsection:proof_nonconvex_generic_conv_long_Mcal}.

\section{Applications to regression}\label{section:concrete_models}

\subsection{Data model}

We consider the single-index regression model
\begin{align}\label{def:model_single_index}
Y_i = \varphi (\iprod{\mu_\ast}{X_i})+\xi_i,\quad i \in [m],
\end{align}
where $\varphi:\R \to \R$ is a link function. The data model (\ref{def:model_single_index}) can be recast into the general form (\ref{def:model}) by setting $\mathcal{F}(z_0,\xi_0)\equiv \varphi(z_0)+\xi_0$. Consider the squared loss function $\mathsf{L}:\R^2\to \R$ with
\begin{align*}
\mathsf{L}(x_0,y_0) = \big(\varphi(x_0)-y_0\big)^2/2.
\end{align*}
Note that the loss landscape can be highly nonconvex due to the potentially arbitrarily complicated nature of $\varphi$. To avoid notational complications, we focus on constant step sizes $\eta_t=\eta$.

The Gaussian theoretical gradient descent iterates $\{\mathrm{u}_\ast^{(t)}\}$ for the model (\ref{def:model_single_index}) take the following explicit form.
\begin{definition}\label{def:single_index_theo_gd}
Let 
\begin{align}\label{def:se_single_index}
\mathrm{u}_\ast^{(t)}\equiv \big(1-\eta \tau_\ast^{(t-1)}\big)\cdot \mathrm{u}_\ast^{(t-1)} + \eta \delta_\ast^{(t-1)}\cdot \mu_\ast
\end{align}
be determined with the key parameters $\{(\tau_\ast^{(t-1)},\delta_\ast^{(t-1)})\}$ computed recursively by
\begin{align}\label{def:tau_delta_single_index}
\tau_\ast^{(t-1)}&\equiv \E^{(0)} \big(\varphi'(\iprod{\mathsf{Z}_n}{\mathrm{u}_{\ast}^{(t-1)}})\big)^2 \nonumber\\
&\qquad + \E^{(0)}  \big(\varphi(\iprod{\mathsf{Z}_n}{\mathrm{u}_{\ast}^{(t-1)}})-\varphi(\iprod{\mathsf{Z}_n}{\mu_\ast})-\xi_{\pi_m}\big)\varphi''( \iprod{\mathsf{Z}_n}{\mathrm{u}_{\ast}^{(t-1)}}), \nonumber\\
\delta_\ast^{(t-1)}&\equiv \E^{(0)}  \varphi'\big(\iprod{\mathsf{Z}_n}{\mathrm{u}_{\ast}^{(t-1)}}\big) \varphi'\big(\iprod{\mathsf{Z}_n}{\mu_\ast}\big).
\end{align}
\end{definition}

Due to the presence of the noise $\{\xi_i\}$,  $\mathrm{u}_\ast^{(\cdot)}=\mu_\ast$ is in general only an approximate stationary point of the above system of equations (\ref{def:se_single_index}) and (\ref{def:tau_delta_single_index}).

\subsection{Convergence of nonconvex gradient descent}\label{subsection:conv_noncvx_gd_single_index}

\subsubsection{Convergence with structured $\varphi$'s}
We first examine a general class of structured link functions $\varphi$'s that lead to fast global convergence of gradient descent with moderate search time $t_n$. For notational simplicity, the signal is normalized with $\pnorm{\mu_\ast}{}=1$.

To formally state our result, note that (\ref{def:varrho_generic}) reduces in this setting to 
\begin{align*}
\varrho_\ast\equiv \min\big\{\E^{(0)}  \big(\varphi'(\mathsf{Z}_1)\big)^2,\E^{(0)}  \big(\mathsf{Z}_1\varphi'(\mathsf{Z}_1)\big)^2\big\},\quad \mathsf{Z}_1\sim \mathcal{N}(0,1).
\end{align*}

\begin{theorem}\label{thm:single_index_se}
Suppose the following hold:
\begin{enumerate}
	\item (A1) holds for some $K\geq 2$. 
	\item For some $\Lambda\geq 2$, $\mathfrak{p}\geq 1$, the functions $\{\varphi^{(q)}\}_{q \in [0:5]}$ are $\Lambda$-pseudo-Lipschitz of order $\mathfrak{p}$ and are bounded by $\Lambda$ at $0$. 
	\item For some nonincreasing function $\kappa:\R_{\geq 0} \to (0,1]$, 
	\begin{align}\label{cond:single_index_se_mnt}
	\inf_{\abs{x}\leq z} \abs{\varphi'(x)} \geq \kappa(z)>0,\quad \forall z\geq 0.
	\end{align}
\end{enumerate}
Then under $(K\Lambda L_\ast^{(0)})\vee \log m\leq (\log n)^{100}$, there exists $\eta_0=\eta_0(\varrho_\ast,\kappa,\Lambda,\pnorm{\mu^{(0)}}{})\in (0,1)$ such that for $\eta\leq \eta_0$, the following hold whenever $\abs{\E_{\pi_m}\xi_{\pi_m}}\leq \eta^{1/2}$:
\begin{enumerate}
	\item Proposition \ref{prop:nonconvex_generic_conv_long_contraction}-(2) is verified for $\tau_n=0$, $\mathcal{D}_n=1$ and $\epsilon_0=\eta(\kappa_\ast\wedge \varrho_\ast)/4$ where $\kappa_\ast\equiv \kappa^2\big(6(1+\pnorm{\mu^{(0)}}{})\big)$. 
	\item There exists another constant $c_1=c_1(\mathfrak{p},\eta,\varrho_\ast,\kappa)>0$ such that if $\phi\geq (\log m)^{c_1}$, on an event with $\Prob^{(0)}$-probability at least $1-m^{-100}$, uniformly for $t\leq m^{100}$,  (\ref{eqn:nonconvex_generic_conv_long}) holds with a bound $(\log n)^{c_1}$ and
	\begin{align*}
	\pnorm{\mu^{(t)}-\mu_\ast}{} \leq \bigg(1-\frac{\eta(\kappa_\ast\wedge \varrho_\ast)}{4}\bigg)^t\cdot \pnorm{\mu^{(0)}-\mu_\ast}{}+ \frac{ (\log n)^{c_1} }{(n\wedge \phi)^{1/2}}.
	\end{align*}
\end{enumerate}
\end{theorem}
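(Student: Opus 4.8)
The argument has two layers matching items (1) and (2): a self-contained study of the deterministic low-dimensional dynamics of $\{\mathrm{u}_\ast^{(t)}\}$, and an assembly of that with the general results of Section~\ref{section:long_time_dynamics}. As a preliminary I would verify that the stated hypotheses imply Assumption~\ref{assump:abstract}: (A1) is assumed, and (A2) follows from hypothesis~(2) since every partial $\partial_\alpha\mathfrak{S}$ with $\abs{\alpha}\le 4$ of $\mathfrak{S}(x_0,z_0,\xi_0)=(\varphi(x_0)-\varphi(z_0)-\xi_0)\varphi'(x_0)$ is, by the product rule, a polynomial expression in $\varphi^{(0)}(x_0),\dots,\varphi^{(5)}(x_0)$, $\varphi^{(0)}(z_0),\varphi^{(1)}(z_0)$ and $\xi_0$, hence $\Lambda'$-pseudo-Lipschitz of some order $\mathfrak{p}'=\mathfrak{p}'(\mathfrak{p})$ with $\Lambda'$ polynomial in $\Lambda$ (and the noise magnitude). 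I would also record the reduction that $\mathrm{u}_\ast^{(t)}\in\mathrm{span}\{\mu^{(0)},\mu_\ast\}$ for every $t$ and that its component orthogonal to $\mu_\ast$ stays a scalar multiple of that of $\mu^{(0)}$; hence the dynamics collapse to two scalars, $\alpha_t:=\iprod{\mathrm{u}_\ast^{(t)}}{\mu_\ast}$ and a signed orthogonal magnitude $\beta_t$, with $\beta_t=(1-\eta\tau_\ast^{(t-1)})\beta_{t-1}$, $\alpha_t=(1-\eta\tau_\ast^{(t-1)})\alpha_{t-1}+\eta\delta_\ast^{(t-1)}$, and $(\tau_\ast^{(t-1)},\delta_\ast^{(t-1)})$ explicit functions of $(\alpha_{t-1},\alpha_{t-1}^2+\beta_{t-1}^2)$ through the joint law of $(\iprod{\mathsf{Z}_n}{\mathrm{u}_\ast^{(t-1)}},\iprod{\mathsf{Z}_n}{\mu_\ast})$.

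For item~(1), the plan is to establish the one-step contraction $\pnorm{\mathrm{u}_\ast^{(t)}-\mu_\ast}{}\le(1-\epsilon_0)\pnorm{\mathrm{u}_\ast^{(t-1)}-\mu_\ast}{}$ with $\epsilon_0=\eta(\kappa_\ast\wedge\varrho_\ast)/4$, together with the a~priori bound $\sup_t\pnorm{\mathrm{u}_\ast^{(t)}}{}\le R_0$ for $R_0\asymp 1+\pnorm{\mu^{(0)}}{}$, by a \emph{joint} induction on $t$ (the a~priori bound follows from the contraction via $\pnorm{\mathrm{u}_\ast^{(t)}}{}\le 1+\pnorm{\mathrm{u}_\ast^{(t)}-\mu_\ast}{}$, and the contraction uses that $\mathrm{u}_\ast^{(t-1)}$ lies in the ball). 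The key observation is that, up to a noise remainder of size $O\big(\eta\,\abs{\E_{\pi_m}\xi_{\pi_m}}\,R_0\big)$, one has $\mathrm{u}_\ast^{(t)}-\mu_\ast=(I_n-\eta\,\mathbb{M}_{\mathrm{u}_\ast^{(t-1)},\mu_\ast}^{\mathsf{Z}})(\mathrm{u}_\ast^{(t-1)}-\mu_\ast)$, and that — since $\int_0^1\partial_1\mathfrak{S}\big(\iprod{\mathsf{Z}_n}{s\mathrm{u}+(1-s)\mu_\ast},\iprod{\mathsf{Z}_n}{\mu_\ast},0\big)\iprod{\mathsf{Z}_n}{\mathrm{u}-\mu_\ast}\,ds=\mathfrak{S}(\iprod{\mathsf{Z}_n}{\mathrm{u}},\iprod{\mathsf{Z}_n}{\mu_\ast},0)$ and $\mathfrak{S}(z,z,0)=0$ — the relevant quadratic form equals $\E\big[(a-b)(\varphi(a)-\varphi(b))\varphi'(a)\big]$ with $a=\iprod{\mathsf{Z}_n}{\mathrm{u}}$, $b=\iprod{\mathsf{Z}_n}{\mu_\ast}$, which is nonnegative because $\varphi$ is strictly increasing by hypothesis~(3). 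Restricting to $\{\abs{a}\vee\abs{b}\le R\}$ for $R\asymp 1+\pnorm{\mu^{(0)}}{}$, a mean value argument plus hypothesis~(3) and a Gaussian tail bound give $\iprod{\mathrm{u}-\mu_\ast}{\mathbb{M}_{\mathrm{u},\mu_\ast}^{\mathsf{Z}}(\mathrm{u}-\mu_\ast)}\gtrsim\kappa_\ast\pnorm{\mathrm{u}-\mu_\ast}{}^2$ on the relevant ball, while near $\mu_\ast$ a direct computation of the eigenvalues of $\mathbb{M}_{\mu_\ast,\mu_\ast}^{\mathsf{Z}}$ (whose smallest one is $\min\{\E[(\varphi'(\mathsf{Z}_1))^2],\E[(\mathsf{Z}_1\varphi'(\mathsf{Z}_1))^2]\}$ up to an $O(\eta^{1/2})$ noise correction) supplies the constant $\varrho_\ast$; together with $\pnorm{\mathbb{M}_{\mathrm{u},\mu_\ast}^{\mathsf{Z}}}{\op}\le\mathrm{poly}(\Lambda,1+\pnorm{\mu^{(0)}}{})$ this yields, upon expanding $\pnorm{\mathrm{u}_\ast^{(t)}-\mu_\ast}{}^2$ and choosing $\eta_0=\eta_0(\varrho_\ast,\kappa,\Lambda,\pnorm{\mu^{(0)}}{})$ small enough to dominate the $\eta^2$ term and the noise remainder by the gap between $1-\eta(\kappa_\ast\wedge\varrho_\ast)$ and $(1-\epsilon_0)^2$, the asserted contraction; the induction then closes and gives item~(1) with $\tau_n=0$, $\mathcal{D}_n=1$.

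For item~(2), item~(1) is precisely the hypothesis of Proposition~\ref{prop:nonconvex_generic_conv_long_contraction}-(2) with $\tau_n=0$, $\mathcal{D}_n=1$, $\epsilon_0=\eta(\kappa_\ast\wedge\varrho_\ast)/4$; since $\varrho_\ast>0$ (immediate from hypothesis~(3)) and $\Lambda L_\ast^{(0)}\le(\log n)^{100}$, that proposition returns condition~(N2) with $t_n=c_1(\log\log n+\log\mathcal{D}_n)=c_1\log\log n$ and $\sigma_\ast=\varrho_\ast/2$. Condition~(N1) for (\ref{def:theo_grad_des}) with $X=\mathsf{Z}$ is checked by Gaussian integration by parts (the noise bias, if present, perturbs the fixed point by $O(\eta^{1/2})$ and is absorbed into the same perturbative bookkeeping); the a~priori bound from item~(1) makes the step-size condition (\ref{cond:nonconvex_generic_conv_long_eta}) hold with $\eta_\ast=\eta$ for $\eta\le\eta_0$, and $K\Lambda L_\ast^{(0)}\vee\log m\le(\log n)^{100}$ and $t_n\le c_0\log\log n$ hold by hypothesis. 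Corollary~\ref{cor:nonconvex_generic_conv} then yields, for $\phi\ge(\log m)^{c_1}$, on an event of $\Prob^{(0)}$-probability at least $1-m^{-100}$ and uniformly for $t\le m^{100}$, the bound $(n\wedge\phi)^{1/2}\pnorm{\mu^{(t)}-\mathrm{u}_\ast^{(t)}}{}+\max_{i\in[m]}\abs{\iprod{X_i}{\mu^{(t)}}}\le(\log n)^{c_1}$, i.e.\ (\ref{eqn:nonconvex_generic_conv_long}) with the claimed bound; the final display follows from the triangle inequality $\pnorm{\mu^{(t)}-\mu_\ast}{}\le\pnorm{\mu^{(t)}-\mathrm{u}_\ast^{(t)}}{}+\pnorm{\mathrm{u}_\ast^{(t)}-\mu_\ast}{}$ combined with the contraction of item~(1).

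The main obstacle is item~(1): the contraction must hold from the very first iteration ($\tau_n=0$) even when $\mu^{(0)}$ is far from $\mu_\ast$, a regime in which the naive factor $1-\eta\tau_\ast^{(t-1)}$ can exceed $1$ so that the orthogonal component transiently expands. What rescues the global contraction is the monotonicity identity $\iprod{\mathrm{u}-\mu_\ast}{\mathbb{M}_{\mathrm{u},\mu_\ast}^{\mathsf{Z}}(\mathrm{u}-\mu_\ast)}=\E[(a-b)(\varphi(a)-\varphi(b))\varphi'(a)]\ge 0$, and the real work is to turn this into the quantitative lower bound $\gtrsim(\kappa_\ast\wedge\varrho_\ast)\pnorm{\mathrm{u}-\mu_\ast}{}^2$ uniformly over the (a~priori bounded) trajectory while simultaneously controlling the nonconvexity- and noise-induced error terms. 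Everything else — verifying Assumption~\ref{assump:abstract} and~(N1), the a~priori boundedness, and the bookkeeping in item~(2) — is routine given the already-established results of Section~\ref{section:long_time_dynamics}.
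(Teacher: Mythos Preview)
Your proposal is essentially correct and follows the paper's own route. The paper packages item~(1) as a standalone lemma proving the contraction $\pnorm{\mathrm{u}_\ast^{(t)}-\mu_\ast}{}\le(1-\tfrac{\eta}{4}(\kappa_\ast\wedge\varrho_\ast))\pnorm{\mathrm{u}_\ast^{(t-1)}-\mu_\ast}{}$ via exactly your mechanism: it rewrites the state evolution as gradient descent on the population loss $\mathsf{R}(u)=\E^{(0)}[\mathsf{L}(\iprod{\mathsf{Z}_n}{u},\mathcal{F}(\iprod{\mathsf{Z}_n}{\mu_\ast},\xi_{\pi_m}))]$, expands $\pnorm{\mathrm{u}_\ast^{(t)}-\mu_\ast}{}^2$, lower-bounds the cross term $\iprod{u-\mu_\ast}{\nabla\mathsf{R}(u)}$ by $\tfrac12\kappa^2(3(\pnorm{u}{}+1))\pnorm{u-\mu_\ast}{}^2$ minus a noise term (your monotonicity identity appears there as $\E[\varphi'(a)\,\varphi'(Ua+(1-U)b)\,(a-b)^2]$ restricted to a high-probability Gaussian event), and near $\mu_\ast$ switches to the Hessian computation giving $\varrho_\ast$. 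Item~(2) is then obtained precisely as you describe: Proposition~\ref{prop:nonconvex_generic_conv_long_contraction}-(2) supplies (N2) with $t_n\asymp\log\log n$, Corollary~\ref{cor:nonconvex_generic_conv} yields the $(\log n)^{c_1}$ bound, and a triangle inequality closes.

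Two minor remarks. First, the $(\alpha_t,\beta_t)$ reduction you set up in the preliminary is not actually needed here (the paper reserves it for the phase-retrieval analysis); the population-loss argument works directly in $\R^n$. Second, the paper's far-regime bound on the $\eta^2$ term uses a crude absolute bound on $\pnorm{\nabla\mathsf{R}(\mathrm{u}_\ast^{(t-1)})}{}$ (not one proportional to $\pnorm{\mathrm{u}_\ast^{(t-1)}-\mu_\ast}{}$), which forces a lower threshold $\pnorm{\Delta_\ast^{(t-1)}}{}\gtrsim\eta^{1/2}$ and hence the explicit two-regime split; your operator-norm bound $\pnorm{\mathbb{M}_{\mathrm{u},\mu_\ast}^{\mathsf{Z}}}{\op}\le\mathrm{poly}(\Lambda,1+\pnorm{\mu^{(0)}}{})$ is a slightly cleaner way to handle the same term. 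Your treatment of (N1) under noise bias is as imprecise as the paper's, but since the contraction lemma already delivers what (N1) would feed into Proposition~\ref{prop:nonconvex_generic_conv_long_contraction}-(1), the downstream arguments go through.
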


The proof of the above theorem can be found in Section \ref{subsection:proof_single_index_se}. In this example, gradient descent at most spends a moderate search time with $t_n \lesssim \log \log n$, and therefore enjoys nice linear convergence properties as guaranteed by Corollary \ref{cor:nonconvex_generic_conv}.

\begin{remark}
	Some technical remarks are in order:
	\begin{enumerate}
		\item While we do not present an explicit formula for $\eta_0$ in the statement of Theorem \ref{thm:single_index_se}, such a form can be easily found in the proof. 
		\item The two terms in conclusion (2) are familiar in the statistical literature. In particular, the first term indicates linear convergence of the optimization error, whereas the second term represents the statistical error (up to multiplicative logarithmic factors).
	\end{enumerate}
\end{remark}

\subsubsection{Universality of gradient descent in phase retrieval}

When the qualitative condition (\ref{cond:single_index_se_mnt}) is not satisfied for the link function $\varphi$, the behavior of gradient descent can be much more complicated. Here we consider the example of the real-valued phase retrieval problem with $\varphi(x)=x^2$ \cite{candes2015phase,chen2019gradient}. Again for notational simplicity, the signal is normalized with $\pnorm{\mu_\ast}{}=1$. In this setting, 
\begin{align*}
\mathfrak{S}(x_0,z_0,\xi_0)&\equiv 2(x_0^2-z_0^2-\xi_0)\cdot x_0,\quad\partial_1\mathfrak{S}(x_0,z_0,\xi_0)\equiv 2\cdot \big(3x_0^2-z_0^2-\xi_0\big).
\end{align*}
The key parameters $\{(\tau_\ast^{(t-1)},\delta_\ast^{(t-1)})\}$  in (\ref{def:tau_delta_single_index}) become
\begin{align*}
\tau_\ast^{(t-1)}& \equiv 2\cdot \big(3\pnorm{\mathrm{u}_\ast^{(t-1)}}{}^2-1\big)-2 \E_{\pi_m}\xi_{\pi_m},\quad \delta_\ast^{(t-1)} \equiv 4 \iprod{\mathrm{u}_\ast^{(t-1)}}{\mu_\ast},
\end{align*}
and $\{\mathrm{u}_\ast^{(t)}\}$ in (\ref{def:se_single_index}) takes the form
\begin{align}\label{def:tau_delta_pr}
\mathrm{u}_\ast^{(t)}& = \big[1-2\eta\big(3\pnorm{\mathrm{u}_\ast^{(t-1)}}{}^2-1\big)+2\eta\cdot \E_{\pi_m}\xi_{\pi_m}\big]\cdot \mathrm{u}_\ast^{(t-1)}+ 4\eta \iprod{\mathrm{u}_\ast^{(t-1)}}{\mu_\ast}\cdot \mu_\ast.
\end{align}
The following theorem is proven in Section \ref{subsection:proof_pr_se}.  
\begin{theorem}\label{thm:pr_se}
Consider the phase retrieval problem with $\varphi(x)=x^2$. Suppose (A1) holds for some $K\geq 2$ and $\E X_{ij}^3=0$ for all $i \in [m], j \in [n]$, and the initialization $\mu^{(0)}\in \R^n$ satisfies 
\begin{align}\label{cond:pr_initialization}
\abs{\iprod{\mu^{(0)}}{\mu_\ast}}\geq {1}/{\sqrt{n\log n}},\quad \bigabs{\pnorm{\mu^{(0)}}{}-1}\leq {1}/{\log n}.
\end{align}
Further assume that $KL_\ast^{(0)}\vee \log m\leq (\log n)^{c_0}$, $\eta\leq 1/c_0$ and $\abs{\E_{\pi_m}\xi_{\pi_m}}\leq 1/(\log n)^{c_0}$ hold for a sufficiently large constant $c_0>0$. Then there exists some $c_1=c_1(c_0,\eta)>0$ such that the following hold:
\begin{enumerate}
	\item (N2) is satisfied for some $t_n\leq c_1\log n$ with $\sigma_\ast=1$ and $\mathcal{B}_0(t_n)\vee n^{-1/2}\mathcal{B}(t_n)\leq (\log n)^{c_1}$. 
	\item Suppose further that $\phi/n\geq (\log n)^{c_1}$. Then, on an event with $\Prob^{(0)}$-probability at least $1-m^{-100}$, uniformly for $t\leq m^{100}$, with $\epsilon_\ast^{(0)}\equiv \sign(\iprod{\mu^{(0)}}{\mu_\ast})$,
	\begin{align*}
	&\pnorm{\mu^{(t)}-\mathrm{u}_\ast^{(t)}}{} +\Big(\pnorm{\mu^{(t)}-\epsilon_\ast^{(0)}\mu_\ast}{} - c_1\big(1-\eta\big)^{(t-c_1\log n)_+}\Big)_+ \leq \frac{ (\log n)^{c_1} }{(n\wedge \{\phi/n\})^{1/2}}.
	\end{align*}
\end{enumerate}
\end{theorem}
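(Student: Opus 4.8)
The plan is to verify, on the explicit two-parameter state evolution (\ref{def:tau_delta_pr}), all the hypotheses of Theorem~\ref{thm:nonconvex_generic_conv_long}, and then to push the resulting long-time estimates through the master bounds. Two preliminary reductions are harmless. First, by the sign symmetry $\varphi(-x)=\varphi(x)$ of phase retrieval, the data model is invariant under $\mu_\ast\mapsto -\mu_\ast$, so we may assume $\iprod{\mu^{(0)}}{\mu_\ast}>0$, i.e. $\epsilon_\ast^{(0)}=+1$; the general case follows by relabeling. Second, the noise enters only through the scalar bias $\bar\xi\equiv\E_{\pi_m}\xi_{\pi_m}$ with $\abs{\bar\xi}\leq(\log n)^{-c_0}$: when $\bar\xi=0$, a direct computation from (\ref{def:tau_delta_pr}) shows $\mathrm{u}_\ast^{(t)}=\mu_\ast$ is an exact stationary point, so (N1) holds; for $\bar\xi\neq0$ the Gaussian theoretical gradient descent has a unique nearby exact stationary point $\mu_\ast^\sharp=(1+\bar\xi/4+o(\bar\xi))\mu_\ast$ with $\pnorm{\mu_\ast^\sharp-\mu_\ast}{}\lesssim\abs{\bar\xi}$, and running the argument with $\mu_\ast^\sharp$ and reinstating the $O(\abs{\bar\xi})$ discrepancy at the end is all that is needed. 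Assumption (A2) holds for $\varphi(x)=x^2$ with absolute pseudo-Lipschitz constants and $\Lambda\asymp1\vee\max_i\abs{\xi_i}$, so the hypotheses $KL_\ast^{(0)}\vee\log m\leq(\log n)^{c_0}$, $\eta\leq1/c_0$, $\abs{\bar\xi}\leq(\log n)^{-c_0}$ are exactly what puts (\ref{cond:nonconvex_generic_conv_long_Bn0}) and the step-size constraint (\ref{cond:nonconvex_generic_conv_long_eta}) inside the polylog budget, once the operator norms of the two matrix families in (\ref{def:intro_dyn_mat}) are seen to be absolute constants (which follows from the norm control in the next paragraph).

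The core is a low-dimensional dynamics analysis along the lines of the trajectory analysis of~\cite{chen2019gradient}. Write $\alpha_t\equiv\iprod{\mathrm{u}_\ast^{(t)}}{\mu_\ast}$, $\mathrm{u}_{\ast,\perp}^{(t)}\equiv\mathrm{u}_\ast^{(t)}-\alpha_t\mu_\ast$, $\beta_t\equiv\pnorm{\mathrm{u}_{\ast,\perp}^{(t)}}{}$, $\rho_t\equiv\pnorm{\mathrm{u}_\ast^{(t)}}{}$, so that $\rho_t^2=\alpha_t^2+\beta_t^2$ and, by (\ref{def:tau_delta_pr}), $\alpha_{t+1}=\big(1+6\eta-6\eta\rho_t^2+2\eta\bar\xi\big)\alpha_t$ and $\mathrm{u}_{\ast,\perp}^{(t+1)}=\big(1+2\eta-6\eta\rho_t^2+2\eta\bar\xi\big)\mathrm{u}_{\ast,\perp}^{(t)}$, a genuinely two-dimensional recursion for $(\alpha_t,\beta_t)$. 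Under (\ref{cond:pr_initialization}) we have $\alpha_0\geq(n\log n)^{-1/2}$ and $\beta_0^2=1+O(1/\log n)$. For $\eta$ below an absolute constant one shows: (i) $\rho_t$ stays in a fixed interval $[\underline c,\overline c]\subset(0,\infty)$ for all $t$ (hence the step-size condition and all operator-norm bounds hold); (ii) $\alpha_t$ grows geometrically out of its $\Omega((n\log n)^{-1/2})$ initialization to a constant in $t_n\asymp(\log n)/\eta\lesssim\log n$ steps while $\beta_t$ decays geometrically, $\beta_t\leq(1-c\eta)^t\beta_0$; and (iii) for $t\geq t_n$ the pair $(\alpha_t,\beta_t)$ converges to $(1,0)$ at linear rate $1-\Theta(\eta)$, so $\pnorm{\mathrm{u}_\ast^{(t)}-\mu_\ast^\sharp}{}\leq c_1(1-\eta)^{(t-t_n)_+}$ and therefore $\pnorm{\mathrm{u}_\ast^{(t)}-\epsilon_\ast^{(0)}\mu_\ast}{}\leq c_1(1-\eta)^{(t-c_1\log n)_+}+O((\log n)^{-c_0})$. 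Running the same estimates coordinatewise (using that $\mathrm{u}_\ast^{(t)}$ is a fixed linear combination of $\mu^{(0)}$ and $\mu_\ast$) also gives $\sup_t n^{1/2}\pnorm{\mathrm{u}_\ast^{(t)}}{\infty}\leq(\log n)^{c_0}$.

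Given the scalar picture, the two matrix families in (\ref{def:intro_dyn_mat}) can be computed explicitly: each is a scalar multiple of $I_n$ on $\mathrm{span}\{\mu_\ast,\mathrm{u}_{\ast,\perp}^{(t)}\}^\perp$ plus an explicit $2\times2$ block on $\mathrm{span}\{\mu_\ast,\mathrm{u}_{\ast,\perp}^{(t)}\}$ whose entries are quadratic in $(\alpha_t,\beta_t)$ (Gaussian moments up to order four, plus an average over the interpolation variable for the first family). From these formulas one reads off: $\lambda_{\min}\big(\mathbb{M}_{\mathrm{u}_\ast^{(t)},\mu_\ast}^{\mathsf{Z}}\big)\to4+O(\abs{\bar\xi})\geq\sigma_\ast=1$ as $t\to\infty$, giving (N2) with the stated $\sigma_\ast$ and some $t_n\lesssim\log n$; $\big[\lambda_{\min}\big(\mathbb{M}_{\mathrm{u}_\ast^{(r)},\mu_\ast}^{\mathsf{Z}}\big)\big]_-\lesssim\beta_r$ during the search phase, so $\mathcal{B}_0(t_n)\leq t_n\prod_r(1+C\eta\beta_r)\leq t_n\exp\!\big(C\eta\sum_r\beta_r\big)\lesssim\log n\leq(\log n)^{c_1}$ (the geometric decay of $\beta_r$ makes $\sum_r\beta_r=O(1)$); whereas $\big[\lambda_{\min}\big(\mathbb{M}_{\mathrm{u}_\ast^{(r)},\mathrm{u}_\ast^{(r)}}^{\mathsf{Z}}\big)\big]_-$ is only bounded by an absolute constant during the search phase (it is $\asymp6(1-\rho_r^2)$-ish and genuinely $\Theta(1)$-negative while $\alpha_r$ is small), so the best available bound is $\mathcal{B}(t_n)\leq t_n\,(1+C\eta)^{t_n}\lesssim\log n\cdot n^{C\eta}\leq n^{1/2}(\log n)^{c_1}$ for $\eta$ below an absolute constant. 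This establishes part~(1). For part~(2), Propositions~\ref{prop:nonconvex_generic_conv_long_Mscr}--\ref{prop:nonconvex_generic_conv_long_Mcal} then give $\mathscr{M}_{\mathrm{u}_X^{(\cdot)},\mathrm{u}_\ast^{(\cdot)}}^{(t),X}\vee\mathfrak{M}_{\mu^{(\cdot)},\mathrm{u}_X^{(\cdot)}}^{(t)}(X)\lesssim(\log n)^{c_2}\mathcal{B}(t\wedge t_n)\lesssim n^{1/2}(\log n)^{c_2}$ on the prescribed event, and plugging these into Theorem~\ref{thm:grad_des_se} the term $\phi^{-1/2}\mathfrak{M}$ produces $(\phi/n)^{-1/2}(\log n)^{O(1)}$. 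The non-Gaussianity term $n^{-1/2}\mathscr{M}$ would only be $(\log n)^{O(1)}$, which is not enough; this is precisely where $\E X_{ij}^3=0$ is used — for the cubic loss the third moment of the design enters the universality expansion, and matching it with the Gaussian upgrades the non-Gaussianity rate from $n^{-1/2}$ to $n^{-1}$, so $n^{-1}\mathscr{M}\lesssim n^{-1/2}(\log n)^{O(1)}$. Combining gives $\pnorm{\mu^{(t)}-\mathrm{u}_\ast^{(t)}}{}+\max_{i\in[m]}\abs{\iprod{X_i}{\mu^{(t)}}}\lesssim(\log n)^{O(1)}\big(n\wedge(\phi/n)\big)^{-1/2}$; adding the scalar convergence estimate from the previous paragraph and a triangle inequality yields the bound on $\big(\pnorm{\mu^{(t)}-\epsilon_\ast^{(0)}\mu_\ast}{}-c_1(1-\eta)^{(t-c_1\log n)_+}\big)_+$.

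The main obstacle is the second paragraph: reproving, at the level of the population recursion (\ref{def:tau_delta_pr}), the phenomenon that random initialization escapes the $\Omega((n\log n)^{-1/2})$-small signal component in only $O(\log n)$ steps while the norm $\rho_t$ never leaves a bounded band — all with a uniform, absolute-constant control on $\eta$, which is what simultaneously keeps (\ref{cond:nonconvex_generic_conv_long_eta}) valid and holds $\mathcal{B}(t_n)$ below $n^{1/2}$. A secondary but structural subtlety is the bookkeeping in the last paragraph: the $\Theta(\log n)$-length search phase genuinely inflates $\mathcal{B}(t_n)$, hence $\mathscr{M}$ and $\mathfrak{M}$, by a factor $n^{1/2}$, so the generic Theorem~\ref{thm:nonconvex_generic_conv_long} only delivers a polylog bound; one must use the third-moment cancellation $\E X_{ij}^3=0$ to recover the genuinely vanishing rate $\big(n\wedge(\phi/n)\big)^{-1/2}$, which is the reason this result needs $\phi\gg n$ rather than merely $\phi\gg1$.
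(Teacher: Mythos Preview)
Your overall architecture is right—verify (N1)–(N2), bound $\mathcal{B}_0(t_n)$ and $\mathcal{B}(t_n)$, then invoke the long-time machinery with the third-moment improvement—but the heart of the argument, the two-parameter dynamics, is incorrect, and this breaks the $\mathcal{B}_0(t_n)$ bound.

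You claim $\beta_t\leq(1-c\eta)^t\beta_0$ decays geometrically throughout the search phase. This is false. From the recursion $\beta_{t}=\big[1-6\eta(\alpha_{t-1}^2+\beta_{t-1}^2-1/3)+2\eta\bar\xi\big]\beta_{t-1}$, while $\alpha_{t-1}$ is still $o(1)$ the multiplier tends to $1$ as $\beta_{t-1}^2\to1/3$; hence $\beta_t$ does not decay to zero but \emph{stabilizes near $1/\sqrt{3}$} for the entire $O(\log n)$-long Stage~2 (this is exactly the three-stage picture in \cite{chen2019gradient} that the paper cites). Your claim $\sum_r\beta_r=O(1)$ therefore fails: in fact $\sum_{r\leq t_n}\beta_r\asymp t_n\asymp\log n$, and your route to $\mathcal{B}_0(t_n)\lesssim\log n$ via $\big[\lambda_{\min}(\mathbb{M}_{\mathrm{u}_\ast^{(r)},\mu_\ast}^{\mathsf{Z}})\big]_-\lesssim\beta_r$ collapses.

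The actual mechanism controlling $\mathcal{B}_0(t_n)$ is much more delicate. One computes (the paper does this via explicit $2\times2$ diagonalization) that the smallest eigenvalue of $\mathbb{M}_{\mathrm{u}_\ast^{(r)},\mu_\ast}^{\mathsf{Z}}$ is, up to $O(\alpha_r)$, equal to $4\rho_r^2-2\sqrt{\rho_r^4+\rho_r^2}$, which vanishes \emph{exactly} at $\rho_r^2=1/3$. Since Stage~2 pins $\rho_r^2$ to $1/3$ with error $O(1/\log^4 n)$, one gets $\big[\lambda_{\min}(\mathbb{M}_{\mathrm{u}_\ast^{(r)},\mu_\ast}^{\mathsf{Z}})\big]_-\lesssim 1/\log^4 n$ on this long stretch; then $\prod_r(1+c\eta/\log^4 n)$ over $O(\log n)$ factors stays $O(1)$, and the remaining $O(\log\log n)$ iterations contribute only polylog. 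This fine cancellation—$\beta_t$ sitting at the precise value that makes the eigenvalue vanish—is the whole point, and it is absent from your sketch.

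Your $\mathcal{B}(t_n)$ bound also has a bookkeeping slip. Since $t_n\asymp(\log n)/\eta$, your product $(1+C\eta)^{t_n}$ is $\approx e^{C\log n}=n^{C}$, not $n^{C\eta}$; so ``$\eta$ below an absolute constant'' does not rescue it. The paper's argument instead observes that during Stage~2 both $\alpha_t$ and the $\mathcal{B}$-product grow at the \emph{same} geometric rate $1+4\eta$ (because $\big[\lambda_{\min}(\mathbb{M}_{\mathrm{u}_\ast^{(r)},\mathrm{u}_\ast^{(r)}}^{\mathsf{Z}})\big]_-\approx4$), and since $\alpha_t$ climbs from $\gtrsim(n\log n)^{-1/2}$ to $O(1/\mathrm{polylog}(n))$ over Stage~2, the product is automatically $\lesssim n^{1/2}\mathrm{polylog}(n)$, independent of $\eta$. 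Your final paragraph on the third-moment upgrade and the resulting $(n\wedge\{\phi/n\})^{-1/2}$ rate is correct.
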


\begin{remark}
The initialization condition (\ref{cond:pr_initialization}) is the same as in \cite[Eq. (21) in Lemma 1]{chen2019gradient}, and can be easily verified with high probability for the canonical Gaussian initialization $\mu^{(0)}\sim \mathcal{N}(0,I_n/n)$. 
\end{remark}

An important qualitative feature in this example is that  the gradient descent spends a long search time $t_n\asymp \log n$ before entering the convergence stage. This necessitates a nontrivial, case-specific control for $\mathcal{B}_0(t_n)$ and $\mathcal{B}(t_n)$. 

To provide some technical insights, let $\proj_{\mu_\ast}\equiv \mu_\ast \mu_\ast^\top$ be the projection onto $\mu_\ast$, and $\proj_{\mu_\ast}^\perp\equiv I_n-\proj_{\mu_\ast}$. Further let
\begin{align*}
\alpha_t\equiv \iprod{\mathrm{u}_\ast^{(t)}}{\mu_\ast},\quad \beta_t\equiv \pnorm{ \proj_{\mu_\ast}^\perp(\mathrm{u}_\ast^{(t)}) }{}.
\end{align*}
Applying $\proj_{\mu_\ast}$ and $\proj_{\mu_\ast}^\perp$ to both sides of (\ref{def:tau_delta_pr}) yields that
\begin{align}\label{def:alpha_beta_pr}
\alpha_t & = \big[1-6\eta\cdot (\alpha_{t-1}^2+\beta_{t-1}^2-1)+2\eta\cdot \E_{\pi_m}\xi_{\pi_m}\big]\cdot \alpha_{t-1},\nonumber\\
\beta_t& = \big[1-6\eta\cdot \big(\alpha_{t-1}^2+\beta_{t-1}^2-1/3\big)+2\eta\cdot \E_{\pi_m}\xi_{\pi_m}\big]\cdot \beta_{t-1}.
\end{align}
\cite{chen2019gradient} shows that the state evolution (\ref{def:alpha_beta_pr}) proceeds in three stages:
\begin{itemize}
	\item (\emph{Stage 1}). The signal remains small $\abs{\alpha_t}\approx n^{-1/2}$, whereas $\beta_t$ decreases from $1$ to $1/\sqrt{3}$. This stage takes $\abs{\mathcal{T}_1}=\bigo(1)$ iterations.
	\item (\emph{Stage 2}). The signal $\abs{\alpha_t}$ increases to $1$ exponentially, whereas $\beta_t$ remains at $1/\sqrt{3}$. This stage takes $\abs{\mathcal{T}_2}=\bigo(\log n)$ iterations.
	\item (\emph{Stage 3}). The signal $\abs{\alpha_t}$ remains at $1$, whereas $\beta_t$ drops down to a sufficiently small constant exponentially. This stage takes $\abs{\mathcal{T}_3}=\bigo(1)$ iterations.
\end{itemize}
After the combined search phase $\mathcal{T}\equiv \cup_{\ell \in [3]} \mathcal{T}_\ell$, gradient descent enters a locally strongly convex region with linear convergence. The key to controlling $\mathcal{B}_0(t_n)$ and $\mathcal{B}(t_n)$ for our purpose is to identify a substage $\mathcal{T}_2'\subset \mathcal{T}_2$ of the time-dominant Stage 2 such that $\abs{\mathcal{T}_2\setminus \mathcal{T}_2'}\lesssim \log\log n$, and for $t \in \mathcal{T}_2'$,
\begin{align}\label{ineq:pr_control_M}
\lambda_{\min}\Big(\mathbb{M}_{ \mathrm{u}_\ast^{(t)},\mu_\ast }^{\mathsf{Z}}\Big)\geq - c/\log^4 n,\quad \lambda_{\min}\Big(\mathbb{M}_{ \mathrm{u}_{\ast}^{(t)},\mathrm{u}_{\ast}^{(t)} }^{\mathsf{Z}}\Big)\approx -4.
\end{align}
The estimates in~\eqref{ineq:pr_control_M} show that the Gaussian theoretical gradient-descent iterates $\{\mathrm{u}_\ast^{(t)}\}$ remain nearly stable for $t\in\mathcal{T}_2'$, whereas the universality cost incurred when passing from the Gaussian design to other designs can be substantially larger. Indeed, two important and distinct features arise in Theorem~\ref{thm:pr_se}:
\begin{enumerate}
	\item Universality holds in the substantially larger aspect ratio regime $\phi \ge n \cdot \mathrm{polylog}(n)$ (equivalently, $n \le m^{1/2}/\mathrm{polylog}(n)$), whereas for the Gaussian design it already suffices that $\phi \ge \mathrm{polylog}(n)$.
	\item Universality applies to general designs whose entries match the first three moments of the Gaussian design, rather than merely the first two.
\end{enumerate}

\begin{figure}[t]
	\centering
	\begin{minipage}{0.325\textwidth}
		\centering
		\includegraphics[width=\linewidth]{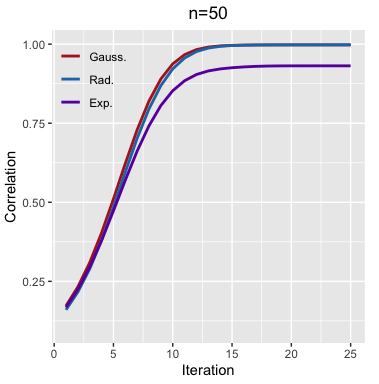}
	\end{minipage}
	\begin{minipage}{0.325\textwidth}
		\centering
		\includegraphics[width=\linewidth,height=0.2\textheight]{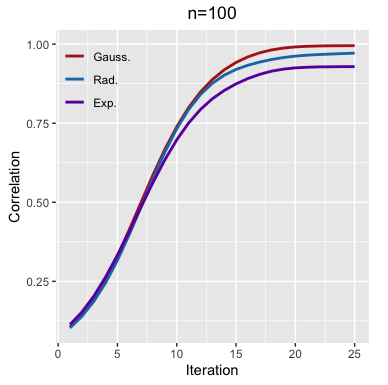}
	\end{minipage}
	\begin{minipage}{0.325\textwidth}
		\centering
		\includegraphics[width=\linewidth,height=0.2\textheight]{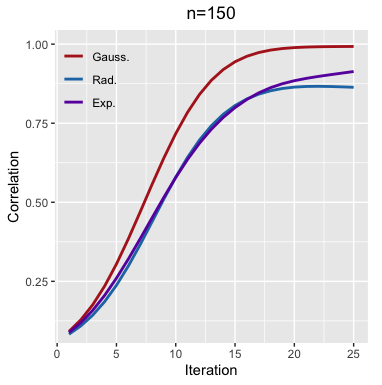}
	\end{minipage}
	\caption{Randomly initialized gradient descent in phase retrieval with step size $\eta=0.1$, sample size $m=3000$, varying dimensions $n$, and true signal $\mu_\ast=\bm{1}_n/n^{1/2}$. \emph{Left}: $n=50$. \emph{Middle}: $n=100$. \emph{Right}: $n=150$. All panels choose random initialization $\mu^{(0)}$ with $\sqrt{n}\iprod{\mu^{(0)}}{\mu_\ast}/\pnorm{\mu^{(0)}}{}\approx 0.7$ and report the mean correlation over $B=50$ Monte Carlo replications.}
	\label{fig:pr}
\end{figure}

Figure~\ref{fig:pr} numerically investigates these issues in the universality theory under the Rademacher design and the exponential design (whose entries are i.i.d.\ standardized $\mathrm{Exp}(1)$ variables), with a sample size $m=3000$ and dimensions $n \in \{50,100,150\}$. Our theory in Theorem~\ref{thm:pr_se} guarantees universality of algorithmic convergence up to dimensions on the order of $n \sim m^{1/2} \approx 55$ for general designs with $\E X_{ij}^3 = 0$. To facilitate a fair comparison, we report numerical results with approximately the same scaled initial correlation $\sqrt{n}\,\langle \mu^{(0)}, \mu_\ast \rangle / \|\mu^{(0)}\| \approx 0.7$. We find that:
\begin{itemize}
	\item In the left panel of Figure~\ref{fig:pr} ($n=50$), universality of randomly initialized gradient descent is confirmed for the Rademacher design, but does not extend to the exponential design with $\E X_{ij}^3 \neq 0$.
	\item In the middle ($n=100$) and right ($n=150$) panels, with moderate aspect ratios, the convergence behavior observed for Gaussian design does not extend to either the Rademacher or the exponential design.
\end{itemize}
A qualitatively similar pattern to that in Figure~\ref{fig:pr} is observed for larger sample sizes (figures omitted for brevity). These numerical experiments suggest the following conjectures: (i) the third-moment condition $\E X_{ij}^3=0$ is necessary in the regime $\phi \gg n$, and (ii) randomly initialized gradient descent for phase retrieval may exhibit distribution-specific non-universality in the regime $1 \ll \phi \ll n$. Proving (or disproving) these conjectures lies beyond the technical scope of this paper.

\subsection{Data-free estimation of state evolution}

\subsubsection{Data-free estimation of state evolution}

We first develop a data-free iterative algorithm to estimate the state evolution parameters $\{(\tau_\ast^{(t-1)}, \delta_\ast^{(t-1)})\}$, whose computational complexity is independent of the problem size and requires no detailed knowledge of the unknown signal $\mu_\ast$.

To motivate the algorithm, in view of (\ref{def:se_single_index}), we have the following recursions for $\{\gamma_\ast^{(t)}\equiv \pnorm{\mathrm{u}_\ast^{(t)}}{}\}$ and $\{\alpha_\ast^{(t)}\equiv \iprod{\mathrm{u}_\ast^{(t)}}{\mu_\ast}\}$ :
\begin{align}\label{eqn:recursion_gamma_alpha}
(\gamma_\ast^{(t)})^2& = \big(1-\eta \tau_\ast^{(t-1)}\big)^2\cdot (\gamma_\ast^{(t-1)})^2+ \big(\eta \delta_\ast^{(t-1)}\big)^2\cdot  \pnorm{\mu_\ast}{}^2\nonumber\\
&\qquad + 2\eta \delta_\ast^{(t-1)} \big(1-\eta \tau_\ast^{(t-1)}\big)\cdot \alpha_\ast^{(t-1)},\nonumber\\
\alpha_\ast^{(t)}&= \big(1-\eta \tau_\ast^{(t-1)}\big)\cdot \alpha_\ast^{(t-1)}+ \eta\delta_\ast^{(t-1)}\cdot \pnorm{\mu_\ast}{}^2.
\end{align}
The recursions in the above display naturally suggest our proposed Algorithm \ref{def:alg_cor_est}, as detailed below.

\begin{algorithm}
	\caption{Data-free estimation of state evolution}\label{def:alg_cor_est}
	\begin{algorithmic}[1]
		\STATE \textbf{Input}: link function $\varphi$, signal strength $\pnorm{\mu_\ast}{}$, and the step size $\eta>0$.
		\STATE \textbf{Initialization}: $\hat{\gamma}_\ast^{(0)}\equiv \pnorm{\mu^{(0)}}{}\in \R$ and $\hat{\alpha}_\ast^{(0)}\in [\pm \hat{\gamma}_\ast^{(0)}\pnorm{\mu_\ast}{}]$.
		\FOR{$ t = 1,2,\ldots$}
		\STATE Compute the covariance matrix:
		\begin{align*}
		\hat{\Sigma}^{(t-1)}\equiv 
		\begin{pmatrix}
		(\hat{\gamma}_\ast^{(t-1)})^2 & \hat{\alpha}_\ast^{(t-1)}\\
		\hat{\alpha}_\ast^{(t-1)} & \pnorm{\mu_\ast}{}^2
		\end{pmatrix}\in \R^{2\times 2}.
		\end{align*}
		\STATE With $\hat{\mathsf{Z}}^{(t-1)}\sim \mathcal{N}(0,\hat{\Sigma}^{(t-1)})\in \R^2$, compute:
		\begin{align*}
		\hat{\tau}_\ast^{(t-1)}&\equiv \hat{\E} \big(\varphi'(\hat{\mathsf{Z}}_1^{(t-1)})\big)^2 + \hat{\E} \big(\varphi(\hat{\mathsf{Z}}_1^{(t-1)})-\varphi(\hat{\mathsf{Z}}_2^{(t-1)})\big)\varphi''(\hat{\mathsf{Z}}_1^{(t-1)}), \nonumber\\
		\hat{\delta}_\ast^{(t-1)}&\equiv \hat{\E} \varphi'\big(\hat{\mathsf{Z}}_1^{(t-1)}\big) \varphi'\big(\hat{\mathsf{Z}}_2^{(t-1)}\big).
		\end{align*}
		Here $\hat{\E}$ is evaluated under the randomness of $\hat{\mathsf{Z}}^{(\cdot)}$ only.
		\item Compute the estimate $\hat{\gamma}_\ast^{(t)}$:
		\begin{align*}
		\hat{\gamma}_\ast^{(t)}&\equiv \big\{\big(1-\eta \hat{\tau}_\ast^{(t-1)}\big)^2\cdot (\hat{\gamma}_\ast^{(t-1)})^2+ \big(\eta \hat{\delta}_\ast^{(t-1)}\big)^2\cdot  \pnorm{\mu_\ast}{}^2\\
		&\qquad\qquad + 2\eta \hat{\delta}_\ast^{(t-1)} \big(1-\eta \hat{\tau}_\ast^{(t-1)}\big)\cdot \hat{\alpha}_\ast^{(t-1)}\big\}_+^{1/2}\wedge e^{100}.
		\end{align*}
		\STATE Compute the estimate $\hat{\alpha}_\ast^{(t)}$:
		\begin{align*}
		\hat{\alpha}_\ast^{(t)}& \equiv  \mathscr{T}_{\hat{\gamma}_\ast^{(t)}\cdot\pnorm{\mu_\ast}{} }\big[ \big(1-\eta \hat{\tau}_\ast^{(t-1)}\big)\cdot \hat{\alpha}_\ast^{(t-1)}+ \eta \hat{\delta}_\ast^{(t-1)}\cdot\pnorm{\mu_\ast}{}^2\big].
		\end{align*}
		Here $\mathscr{T}_M(x)\equiv (x\wedge M)\vee (-M)$ for any $M\geq 0,x \in \R$. 
		\ENDFOR
	\end{algorithmic}
\end{algorithm}

The following theorem provides a formal justification for Algorithm \ref{def:alg_cor_est}; its proof can be found in Section \ref{subsection:proof_single_index_cor_est}. 
\begin{theorem}\label{thm:single_index_cor_est}
Suppose the following hold:
\begin{enumerate}
	\item For some $\Lambda\geq 2$, the functions $\{\varphi^{(q)}\}_{q \in [0:2]}$ are globally bounded by $\Lambda$. 
	\item (N1)-(N2) hold for $t_n\geq 0$ and $\sigma_\ast\in (0,1)$.
\end{enumerate}
Fix $\epsilon \in (0,1)$ with $\pnorm{\mu_\ast}{}\in (\epsilon,1/\epsilon)$, and let $t_{\ast}(\epsilon)\equiv \max\{r \in \mathbb{N}: e^{-100}\leq \pnorm{\mathrm{u}_\ast^{(r)}}{}\leq e^{100}, \abs{\sin(\mathrm{u}_\ast^{(r)},\mu_\ast)}\geq \epsilon\}$. Then there exists some $c_1=c_1(\epsilon,\eta,\sigma_\ast,\Lambda)>0$ and $\eta_0=\eta_0(\sigma_\ast,\Lambda) \in (0,1)$ such that if $\eta\leq \eta_0$, for all $t\leq t_\ast(\epsilon)$, 
\begin{align}\label{ineq:single_index_cor_est}
\abs{\Delta \hat{\tau}_\ast^{(t)} }\vee \abs{\Delta \hat{\delta}_\ast^{(t)} } \vee \hat{\Delta}_\ast^{(t)}&\leq  \min\Big\{ c_1 \mathscr{B}^3(t\wedge t_n) \cdot \Big( \max_{s \in [1:t\wedge t_n]}  \hat{\Delta}_\ast^{(s-1)} +\abs{\E_{\pi_m} \xi_{\pi_m}}\Big),\nonumber\\
&\qquad\qquad   c_1^t\cdot  \big(\abs{\iprod{\mu^{(0)}}{\mu_\ast}-\hat{\alpha}_\ast^{(0)}}+\abs{\E_{\pi_m} \xi_{\pi_m}}\big) \Big\}.
\end{align}
Here $\Delta \hat{\#}_\ast^{(t)}\equiv \hat{\#}_\ast^{(t)}-\#_\ast^{(t)}$, $
\hat{\Delta}_\ast^{(\cdot)}\equiv \abs{\Delta \hat{\alpha}_\ast^{(\cdot)} }+\abs{\Delta (\hat{\gamma}_\ast^{(\cdot)})^2 }$, and for any $t\geq 1$, $\mathscr{B}(t)\equiv 1+ \sum_{s \in [0:t)} \prod_{r \in [s:t)} \abs{1-\eta \tau_\ast^{(r)}}$.

In particular, if $t\wedge t_n\leq 100\log \log n$, then the right hand side of (\ref{ineq:single_index_cor_est}) can be bounded by $(\log n)^{c_1}\cdot \big(\abs{\iprod{\mu^{(0)}}{\mu_\ast}-\hat{\alpha}_\ast^{(0)}}+\abs{\E_{\pi_m} \xi_{\pi_m}}\big)$.
\end{theorem}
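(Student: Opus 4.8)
The plan is to track the joint error vector $\hat{\Delta}_\ast^{(t)} \equiv |\Delta\hat{\alpha}_\ast^{(t)}| + |\Delta(\hat{\gamma}_\ast^{(t)})^2|$ along a coupled recursion and close a Gronwall-type induction. First I would compare the two recursions: the exact one from \eqref{eqn:recursion_gamma_alpha} for $(\gamma_\ast^{(t)})^2,\alpha_\ast^{(t)}$, and the algorithmic one for $(\hat\gamma_\ast^{(t)})^2,\hat\alpha_\ast^{(t)}$. Subtracting, the increment $\hat{\Delta}_\ast^{(t)}$ is governed by (i) the propagated error $\hat{\Delta}_\ast^{(t-1)}$ passing through the affine update maps — this contributes a multiplicative factor controlled by $|1-\eta\tau_\ast^{(t-1)}|$ and $|\eta\delta_\ast^{(t-1)}|$, hence the appearance of $\mathscr{B}(t)$ — and (ii) the error $|\Delta\hat\tau_\ast^{(t-1)}| + |\Delta\hat\delta_\ast^{(t-1)}|$ in the estimated state-evolution scalars. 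The truncations $\{\cdot\}_+^{1/2}\wedge e^{100}$ and $\mathscr{T}_{\hat\gamma_\ast^{(t)}\|\mu_\ast\|}$ only help (they are $1$-Lipschitz projections onto sets containing the true values, by Cauchy–Schwarz $|\alpha_\ast^{(t)}|\le \gamma_\ast^{(t)}\|\mu_\ast\|$ and, for $t\le t_\ast(\epsilon)$, the boundedness $e^{-100}\le\|\mathrm{u}_\ast^{(t)}\|\le e^{100}$ that holds by definition of $t_\ast(\epsilon)$), so they may be discarded in the error analysis after noting contractivity.

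The second step is to bound $|\Delta\hat\tau_\ast^{(t-1)}|$ and $|\Delta\hat\delta_\ast^{(t-1)}|$ by $\hat{\Delta}_\ast^{(t-1)}$ (plus lower-order noise terms). Both $\tau_\ast^{(t-1)}$ and $\delta_\ast^{(t-1)}$ are, by \eqref{def:tau_delta_single_index}, expectations of smooth functions of a Gaussian pair with covariance $\Sigma^{(t-1)}$ built from $((\gamma_\ast^{(t-1)})^2,\alpha_\ast^{(t-1)},\|\mu_\ast\|^2)$, while $\hat\tau_\ast^{(t-1)},\hat\delta_\ast^{(t-1)}$ use $\hat\Sigma^{(t-1)}$. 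Here I would invoke assumption (1) (the $\varphi^{(q)}$, $q\in[0:2]$, globally bounded by $\Lambda$) to write the difference as $\E g(\hat{\mathsf{Z}}^{(t-1)}) - \E g(\mathsf{Z}^{(t-1)})$ for a Lipschitz $g$, then use the standard fact that for jointly Gaussian vectors the map $\Sigma\mapsto \E g(\mathcal{N}(0,\Sigma))$ is Lipschitz in $\Sigma$ (e.g. via a Gaussian interpolation / Slepian-type smoothing argument, using that $\|\mathrm{u}_\ast^{(t-1)}\|$ is bounded away from $0$ and $\infty$ on $t\le t_\ast(\epsilon)$ so the relevant covariances stay in a compact non-degenerate set). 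This yields $|\Delta\hat\tau_\ast^{(t-1)}|\vee|\Delta\hat\delta_\ast^{(t-1)}|\lesssim_{\Lambda,\epsilon,\eta} \hat{\Delta}_\ast^{(t-1)}$. The noise term $|\E_{\pi_m}\xi_{\pi_m}|$ enters because, under (N1), $\mu_\ast$ is only an approximate fixed point; the discrepancy between the genuine state-evolution scalars and their noise-free versions is $O(|\E_{\pi_m}\xi_{\pi_m}|)$, which I would absorb as an additive driving term.

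The third step is to solve the resulting scalar recursion $\hat{\Delta}_\ast^{(t)} \le |1-\eta\tau_\ast^{(t-1)}|\cdot\hat{\Delta}_\ast^{(t-1)} + c\cdot\mathscr{B}(t-1)\,(\hat{\Delta}_\ast^{(t-1)} + |\E_{\pi_m}\xi_{\pi_m}|)$ — or, more carefully, unroll it to get the two bounds in \eqref{ineq:single_index_cor_est}. The crude bound $\hat\Delta_\ast^{(t)}\le c_1^t(\cdots)$ follows by treating every step as a uniformly bounded linear contraction/expansion (the scalars $\tau_\ast,\delta_\ast$ are bounded on the relevant range by smoothness and boundedness of $\|\mathrm{u}_\ast^{(r)}\|$). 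The sharper bound $c_1\mathscr{B}^3(t\wedge t_n)\cdot(\max_{s}\hat\Delta_\ast^{(s-1)} + |\E_{\pi_m}\xi_{\pi_m}|)$ comes from summing the propagated contributions: each term at index $s$ is multiplied by $\prod_{r\in[s:t)}|1-\eta\tau_\ast^{(r)}|$, whose sum over $s$ is exactly $\mathscr{B}(t)-1$; iterating the coupling between the $\alpha$- and $\gamma^2$-errors (the $\gamma^2$ update is quadratic in the state, so one factor of $\mathscr{B}$ enters from each of $\hat\gamma_\ast^{(t-1)}$, $\hat\alpha_\ast^{(t-1)}$, and the accumulation) produces the cube $\mathscr{B}^3$. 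Finally, for $t\wedge t_n\le 100\log\log n$ I would use (N2) via Proposition \ref{prop:nonconvex_generic_conv_long_contraction}: linear convergence of $\{\mathrm{u}_\ast^{(t)}\}$ forces $|1-\eta\tau_\ast^{(r)}|$ to be eventually $\le 1-\eta\sigma_\ast/2<1$ for $r\ge t_n$, so $\mathscr{B}(t\wedge t_n)\le \mathscr{B}(t_n)\lesssim t_n + (\text{geometric tail}) \lesssim \log\log n$, hence $\mathscr{B}^3(t\wedge t_n)\le(\log n)^{c_1}$, giving the stated polylog bound. I expect the main obstacle to be step two — obtaining a clean, dimension-free Lipschitz bound for $\Sigma\mapsto\E g(\mathcal{N}(0,\Sigma))$ that is uniform over the whole search trajectory $t\le t_\ast(\epsilon)$ — because this requires carefully exploiting the non-degeneracy and boundedness guarantees packaged in the definition of $t_\ast(\epsilon)$ and controlling the derivatives of $\varphi,\varphi',\varphi''$ against growth of the Gaussian arguments; the recursion bookkeeping in steps one and three, while intricate, is routine.
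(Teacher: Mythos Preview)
Your proposal is essentially the paper's approach: both proofs (i) bound $|\Delta\hat\tau_\ast^{(t)}|\vee|\Delta\hat\delta_\ast^{(t)}|\lesssim \hat\Delta_\ast^{(t)}+|\E_{\pi_m}\xi_{\pi_m}|$ via Lipschitz dependence on the $2\times 2$ covariance, (ii) set up separate linear recursions for $|\Delta(\hat\gamma_\ast^{(t)})^2|$ (with factor $(1-\eta\tau_\ast^{(t-1)})^2$) and $|\Delta\hat\alpha_\ast^{(t)}|$ (with factor $|1-\eta\tau_\ast^{(t-1)}|$), and (iii) iterate, exploiting $|1-\eta\tau_\ast^{(r)}|\le 1-\eta\sigma_\ast$ for $r\ge t_n$ from (N2).

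Two points of calibration. First, you flag Step~2 (the Lipschitz bound) as the main obstacle, but the paper dispatches it in one line: since $(\iprod{\mathsf Z_n}{\mathrm u_\ast^{(t)}},\iprod{\mathsf Z_n}{\mu_\ast})$ is a 2D Gaussian determined by $((\gamma_\ast^{(t)})^2,\alpha_\ast^{(t)},\pnorm{\mu_\ast}{}^2)$, one parametrizes by the orthogonal decomposition and observes that the perpendicular length $\{(\gamma_\ast^{(t)})^2-(\alpha_\ast^{(t)}/\pnorm{\mu_\ast}{})^2\}^{1/2}=\gamma_\ast^{(t)}|\sin(\mathrm u_\ast^{(t)},\mu_\ast)|\ge e^{-100}\epsilon$ is bounded away from zero on $t\le t_\ast(\epsilon)$, so $\sqrt{\cdot}$ is Lipschitz there. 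No interpolation is needed. Second, the genuinely delicate part---which your Step~3 glosses over---is closing the bound for $t>t_n$: the first branch of \eqref{ineq:single_index_cor_est} freezes both $\mathscr B$ and the max at $t\wedge t_n$, and to show $\hat\Delta_\ast^{(t)}$ does not grow past $t_n$ the paper obtains a self-referential inequality of the form $\hat\Delta_\ast^{(t)}\le c\,\mathfrak p((t-t_n)_+)\cdot(\max_{s\in[t_n:t]}\hat\Delta_\ast^{(s-1)}+\beta_n)$ with $\mathfrak p(s)=(s+1)^2(1-\eta\sigma_\ast)^s$, then splits into $t\le t_n+c_2$ (where $\mathfrak p\le c_2$) and $t>t_n+c_2$ (where $\mathfrak p<1$) and iterates. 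Your single displayed recursion with a $\mathscr B(t-1)$ factor already present at the one-step level is not what actually holds; the $\mathscr B$ factors emerge only after iterating, and the cube arises because the $\alpha$-recursion carries a $|\Delta\hat\gamma_\ast^{(t)}|$ term (from the truncation level $\mathscr T_{\hat\gamma_\ast^{(t)}\pnorm{\mu_\ast}{}}$) which itself costs $\mathscr B^2$ via the squared contraction factor in the $\gamma^2$-recursion.
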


We note that the numeric constant $100$ in both Algorithm \ref{def:alg_cor_est} and the above Theorem \ref{thm:single_index_cor_est} is chosen for convenience, and can be replaced by an arbitrarily large constant at the cost of a potentially larger $c_1>0$.

As an immediate application of Theorem \ref{thm:single_index_cor_est}, we may use Algorithm \ref{def:alg_cor_est} to provide consistent predictions of the correlation between the gradient descent $\mu^{(t)}$ and the unknown signal $\mu_\ast$: 
\begin{align}\label{def:oracle_correlation}
\mathsf{corr}_\ast^{(t)} \equiv \frac{\iprod{\mu^{(t)}}{\mu_\ast}}{ \pnorm{\mu^{(t)}}{}\cdot \pnorm{\mu_\ast}{} },\quad t=1,2,\ldots.
\end{align}
Note that since the underlying signal $\mu_\ast$ is unknown, the random quantity $\mathsf{corr}_\ast^{(t)}$ cannot be directly computed from the observed data. Using the output of Algorithm \ref{def:alg_cor_est}, we may naturally construct the estimate
\begin{align}\label{def:corr_est_se}
\widehat{\mathsf{corr}}_\ast^{(t)} \equiv \frac{\hat{\alpha}_\ast^{(t)}}{ \hat{\gamma}_\ast^{(t)}\cdot  \pnorm{\mu_\ast}{} }.
\end{align}
A major feature of $\widehat{\mathsf{corr}}_\ast^{(t)}$ in (\ref{def:corr_est_se}) is that its computational complexity is independent of the problem size, allowing it to be computed very efficiently via Algorithm~\ref{def:alg_cor_est}. From a practical standpoint, such low-cost estimators $\big\{\widehat{\mathsf{corr}}_\ast^{(t)}\big\}$ can be applied to tasks such as hyperparameter tuning of the step size, determining algorithmic running time, and related purposes.

\begin{remark}
	Some technical remarks for Algorithm \ref{def:alg_cor_est} and Theorem \ref{thm:single_index_cor_est} are in order:
	\begin{enumerate}
		\item Algorithm \ref{def:alg_cor_est} requires knowledge of the signal strength $\pnorm{\mu_\ast}{}$. If it is unknown, ad-hoc methods may be used to estimate this quantity. For instance, for sufficiently regular link functions $\varphi$, one may use the method of moments, e.g., the value of $m^{-1}\sum_{i=1}^m Y_i^q \approx \E^{(0)}\big\{\varphi(\pnorm{\mu_\ast}{}\mathsf{Z}_1)+\xi_1\big\}^q$ for $q \in \mathbb{N}$ to obtain an estimate for $\pnorm{\mu_\ast}{}$.
		\item Algorithm~\ref{def:alg_cor_est} is fundamentally different from the `gradient descent inference algorithm' developed in \cite{han2024gradient}. The latter operates in the mean-field regime $\phi \asymp 1$ and uses observed data to estimate only a partial set of the state evolution parameters (in our notation, the parameters $\{\tau_\ast^{(t-1)}\}$) for the purpose of constructing the so-called `debiased gradient descent iterates'. In contrast, the data-free Algorithm~\ref{def:alg_cor_est} computes estimates of the entire set of state evolution parameters $\{(\tau_\ast^{(t-1)}, \delta_\ast^{(t-1)})\}$ in the large-aspect-ratio regime $\phi \gg 1$ by taking advantage of the simple form in Definition~\ref{def:tau_delta_single_index}, and, more importantly, is computationally light with complexity independent of the problem size.
		\item The regularity conditions in Theorem \ref{thm:single_index_cor_est} may not be optimal. For instance, condition (1) in Theorem \ref{thm:single_index_cor_est} can possibly be further relaxed. We adopt the current set of assumptions to avoid non-essential complications in the proofs.
		\item The term $\max_{s \in [1:t\wedge t_n]} \hat{\Delta}_\ast^{(s-1)}$ in the claimed estimate of Theorem \ref{thm:single_index_cor_est} depends on both the gradient descent search time $t_n$ and the quality of the initialization $\hat{\alpha}_\ast^{(0)}$. If the gradient descent spends a moderate search time, i.e., $t_n \leq c_0 \log \log n$, then this term can be trivially bounded by $(\log n)^{c_1}$, but otherwise requires a case-specific treatment. We believe this term cannot be removed for free. 
	\end{enumerate}
\end{remark}

\subsubsection{Numerical illustrations}

\begin{figure}[t]
	\centering
	\begin{minipage}{0.325\textwidth}
	\centering
	\includegraphics[width=\linewidth]{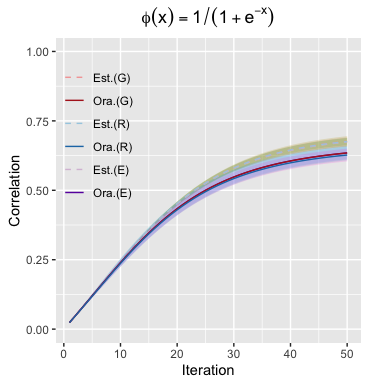}
    \end{minipage}
	\begin{minipage}{0.325\textwidth}
		\centering
		\includegraphics[width=\linewidth]{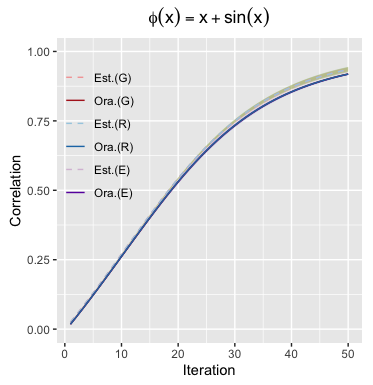}
	\end{minipage}
	\begin{minipage}{0.325\textwidth}
	\centering
	\includegraphics[width=\linewidth]{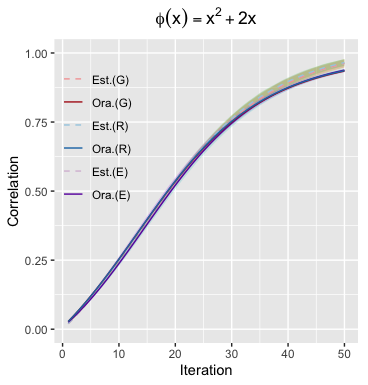}
    \end{minipage}
	\caption{Simulations for $\widehat{\mathsf{corr}}_\ast^{(t)}$ in (\ref{def:corr_est_se}) computed by Algorithm \ref{def:alg_cor_est} with three link functions.  \emph{Left}: $\varphi(x)=1/(1+e^{-x})$. \emph{Middle}: $\varphi(x)=x+\sin(x)$. \emph{Right}: $\varphi(x)=x^2+2x$. }
	\label{fig:mnt}
\end{figure}

We now examine the numerical performance of the proposed estimator $\widehat{\mathsf{corr}}_\ast^{(t)}$ in~\eqref{def:corr_est_se}, computed via Algorithm~\ref{def:alg_cor_est}, and compare it with the oracle correlation $\mathsf{corr}_\ast^{(t)}$ in~\eqref{def:oracle_correlation}. In Figure~\ref{fig:mnt}, we consider three link functions:
\begin{itemize}
	\item left panel: $\varphi(x)=1/(1+e^{-x})$ with step size $\eta=0.5$;
	\item middle panel: $\varphi(x)=x+\sin(x)$ with step size $\eta=0.01$;
	\item right panel: $\varphi(x)=x^2+2x$ with step size $\eta=0.005$.
\end{itemize}
Other simulation parameters are as follows:
\begin{itemize}
	\item signal dimension $n=300$;
	\item true signal $\mu_\ast=\bm{1}_n/\sqrt{n}$;
	\item sample size $m=10^4$;
	\item initialization $\mu^{(0)}\sim \mathcal{N}(0,I_n/n)$ with $\hat{\alpha}_\ast^{(0)}=0$;
	\item the expectation $\hat{\E}$ in Algorithm~\ref{def:alg_cor_est} is approximated by averaging $10^3$ i.i.d.\ draws of the underlying bivariate Gaussian random vector.
\end{itemize}
All simulations in Figure~\ref{fig:mnt} are based on $B=100$ Monte Carlo replications.

We note that the link functions in the left and middle panels of Figure~\ref{fig:mnt} satisfy condition~\eqref{cond:single_index_se_mnt} of Theorem~\ref{thm:single_index_se}; consequently, Theorem~\ref{thm:single_index_cor_est} applies with a moderate search time $t_n \lesssim \log\log n$, yielding a controlled error bound. By contrast, the link function in the right panel of Figure~\ref{fig:mnt} does not satisfy~\eqref{cond:single_index_se_mnt}, so Theorem~\ref{thm:single_index_cor_est} does not directly yield a comparable bound. Even so, our numerical simulations show excellent agreement between the estimated and oracle correlations. Moreover, in Figure~\ref{fig:mnt} we use Gaussian, Rademacher, and exponential designs and confirm the universality of our results under first-two-moment matching (without matching the third moment), as predicted by Theorem~\ref{thm:single_index_cor_est}.

\section{Relation to dynamical mean-field theory}\label{section:mean_field_gd}

\subsection{Mean-field state evolution}
In a recent line of works \cite{celentano2021high,gerbelot2024rigorous,han2024gradient,fan2025dynamical,han2025entrywise,han2025precise}, the precise dynamics of gradient descent have been characterized in a variety of statistical models in the so-called mean-field regime $\phi \asymp 1$, essentially over a constant-time horizon $t=\bigo(1)$. In this regime, due to the intrinsically moderate signal-to-noise ratio, the gradient descent iterates $\mu^{(t)}$ do not concentrate around any vector (for instance, $\mu_\ast$), and the corresponding mean-field theory characterizes their precise distributional behavior.

For convenience in the subsequent discussion, we follow the notation and setup used in \cite{han2024gradient}. In the form of the gradient descent $\mu^{(t)}$ in (\ref{def:grad_des}), to set up the same parameter scaling as in \cite{han2024gradient}, consider the renormalization
\begin{align}\label{def:mean_field_renormalization}
A\equiv X/n^{1/2},\quad \big\{\beta^{(t)}\equiv n^{1/2} \mu^{(t)}:t\geq 0\big\}.
\end{align}
Then the coordinates of the iterates $\{\beta^{(t)}\}$ are expected to have magnitude $\bigo(1)$, and the entries of $A$ have variance $1/n$. With the renormalization (\ref{def:mean_field_renormalization}), for $t=1,2,\ldots$,
\begin{align}\label{eqn:gd_mean_field}
\beta^{(t)}&\equiv \beta^{(t-1)}-\frac{\eta_{t-1}}{\phi}\cdot A^\top \partial_1 \mathsf{L}\big(A\beta^{(t-1)},\mathcal{F}(A\beta^{(0)},\xi)\big).
\end{align}
We now formally describe the state evolution in \cite{han2024gradient}.
\begin{definition}\label{def:gd_mean_field_se}
	Initialize with (i) two formal variables $\Omega_{-1}\equiv n^{1/2}\mu_\ast \in \R^n$ and $\Omega_0 \equiv n^{1/2}\mu^{(0)}\in \R^n$, and (ii) a Gaussian random variable $\mathfrak{Z}^{(0)}\sim \mathcal{N}(0,\pnorm{\mu_\ast}{}^2)$. For $t=1,2,\ldots$, we execute the following steps:
	\begin{enumerate}
		\item[(S1)] Let $\Upsilon_t: \mathbb{R}^{m\times [0:t]}\to \R^m$ be defined as follows: 
		\begin{align*}
		\Upsilon_t(\mathfrak{z}^{([0:t])})\equiv   \partial_1 \mathsf{L}\bigg(\mathfrak{z}^{(t)} - \frac{1}{\phi}\sum_{s \in [1:t-1]}\eta_{s-1}\rho_{t-1,s}\Upsilon_s(\mathfrak{z}^{([0:s])}) ,\mathcal{F}(\mathfrak{z}^{(0)},\xi)\bigg)\in \R^m.
		\end{align*}
		Here the coefficients are defined via
		\begin{align*}
		\rho_{t-1,s}\equiv \E^{(0)} \partial_{ \mathfrak{W}^{(s)}} \Omega_{t-1;\pi_n}(\mathfrak{W}^{([1:t-1])})\in \R,\quad s \in [1:t-1].
		\end{align*}
		\item[(S2)] Let $\mathfrak{Z}^{([0:t])}\in \R^{[0:t]}$ and $\mathfrak{W}^{([1:t])}\in \R^{[1:t]}$ be centered Gaussian random vectors whose laws at iteration $t$ are determined via the correlation specification: 
		\begin{align*}
		\cov(\mathfrak{Z}^{(t)},\mathfrak{Z}^{(s)})
		& \equiv \E^{(0)} \prod\limits_{\ast \in \{s-1,t-1\}} \Omega_{*;\pi_n} (\mathfrak{W}^{([1:*])}),\quad s \in [0:t];\\
		\cov(\mathfrak{W}^{(t)},\mathfrak{W}^{(s)})
		& \equiv \eta_{t-1}^2\phi^{-1}\cdot  \E^{(0)} \prod_{\ast \in \{s,t\}} \Upsilon_{*;\pi_m}(\mathfrak{Z}^{([0:*])}),\quad s \in [1:t].
		\end{align*}
		\item[(S3)] Let $\Omega_t: \R^{n\times [1:t]}\to \R^n$ be defined as follows:
		\begin{align*}
		\Omega_t\big(\mathfrak{w}^{([1:t])}\big)\equiv \mathfrak{w}^{(t)}+\sum_{s \in [1:t]} (\bm{1}_{t=s}-\eta_{t-1}\tau_{t,s})\cdot  \Omega_{s-1}(\mathfrak{w}^{([1:s-1])}) + \eta_{t-1} \delta_t\cdot (n^{1/2}\mu_\ast).
		\end{align*}
		Here the coefficients are defined via
		\begin{align*}
		\tau_{t,s} &\equiv  \E^{(0)}\partial_{\mathfrak{Z}^{(s)}} \Upsilon_{t;\pi_m}(\mathfrak{Z}^{([0:t])})\in \R,\quad s \in [1:t];\\
		\delta_t &\equiv -\E^{(0)}\partial_{\mathfrak{Z}^{(0)}} \Upsilon_{t;\pi_m}(\mathfrak{Z}^{([0:t])}) \in \R.
		\end{align*}
	\end{enumerate}
\end{definition}
The above state evolution is equivalent to \cite[Definition 2.1]{han2024gradient} up to (i) time-invariant loss $\mathsf{L}$, (ii) scale/sign changes in $\{\Upsilon_t\}$, $\{\tau_{t,s}\}$ and $\{\delta_t\}$, and (iii) a special choice of the maps $\{\mathsf{P}_t:\R^n\to \R^n\}$. We summarize below the  correspondence between the notation used in \cite[Definition 2.1]{han2024gradient} and Definition \ref{def:gd_mean_field_se}:
\begin{table}[H]
	\centering
	\begin{tabular}{c||c|c|c|c|c}
		\cite[Def. 2.1]{han2024gradient} & $\mathsf{L}_t$ & $\Upsilon_t$ &$\tau_{t,s}$ & $\delta_{t}$ &$\mathsf{P}_t$  \\ \hline
		Def. \ref{def:gd_mean_field_se} & $\mathsf{L}$ & $-\eta_{t-1}\Upsilon_t/\phi$  & $-\eta_{t-1}\tau_{t,s}$ & $\eta_{t-1}\delta_{t}$ & $\mathrm{id}(\R^n)$
	\end{tabular}
\end{table}

\subsection{Relation to dynamical mean-field theory}
Under regularity conditions, \cite[Theorem 2.2]{han2024gradient} proved that in the mean-field regime $\phi \asymp 1$, for $t=\bigo(1)$ (or growing very slowly),
 \begin{align}\label{eqn:gd_mean_field_dist}
 (\beta_\ell^{(t)}) \stackrel{d}{\approx} \big(\Omega_{t;\ell}(\mathfrak{W}^{([1:t])})\big)\quad\Leftrightarrow\quad (n^{1/2}\mu_\ell^{(t)}) \stackrel{d}{\approx} \big(\Omega_{t;\ell}(\mathfrak{W}^{([1:t])})\big).
 \end{align}
 Comparing (\ref{eqn:gd_mean_field_dist}) with Theorem \ref{thm:grad_des_se}, it is natural to conjecture that, at least for $t=\bigo(1)$,
 \begin{align}\label{eqn:gd_mean_field_dist_1}
 \big(\Omega_{t;\ell}(\mathfrak{W}^{([1:t])})\big) \approx \big(n^{1/2}\mathrm{u}_{\ast;\ell}^{(t)}\big)\,\hbox{ when }\phi\gg 1.
 \end{align}
The following theorem makes the foregoing heuristic precise, and more importantly, unveils the underlying mechanism purely from the state evolution formalism in Definition \ref{def:gd_mean_field_se}.

\begin{theorem}\label{thm:mean_field_dyn_approx}
Suppose $\phi\geq 1$, (A2) holds for some $\Lambda\geq 2$ and $\mathfrak{p}= 1$, and the step sizes $\{\eta_t\}$ are bounded by $\Lambda$.  Then there exists some $c_{t}>1$  such that 
\begin{align*}
&\bigpnorm{\bm{\tau}^{[t]} - \mathrm{diag}\big(\big\{\tau_\ast^{(s-1)}\big\}_{s \in [t]}\big) }{\op}+ \max_{1\leq s\leq t}\, \bigabs{\cov(\mathfrak{W}^{(t)},\mathfrak{W}^{(s)})}\\
&\qquad +\max_{\ell \in [n]} \E^{(0)}\bigabs{\Omega_{t;\ell}(\mathfrak{W}^{([1:t])})-  n^{1/2}\mathrm{u}_{\ast;\ell}^{(t)} }^{100}\leq \big(\Lambda L_\ast^{(0)} \pnorm{\mu_\ast}{}^{-1}\big)^{c_{t}} \cdot \phi^{-1/c_{t}}.
\end{align*}
\end{theorem}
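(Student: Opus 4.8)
The plan is to run an induction on the iteration count $t$ showing that, as $\phi\to\infty$, every ``memory'' ingredient of the mean-field state evolution in Definition~\ref{def:gd_mean_field_se} degenerates, so that the process collapses onto the memoryless recursion~\eqref{def:u_ast} defining $\{n^{1/2}\mathrm{u}_\ast^{(t)}\}$. The conceptual point is that each such ingredient carries an explicit $\phi^{-1}$ factor: the Onsager-type correction inside $\Upsilon_t$ is $\phi^{-1}\sum_{s\in[1:t-1]}\eta_{s-1}\rho_{t-1,s}\Upsilon_s$, and $\cov(\mathfrak{W}^{(t)},\mathfrak{W}^{(s)})$ carries the prefactor $\eta_{t-1}^2\phi^{-1}$. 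Hence $\Upsilon_t$ is within $\bigo_t(\phi^{-1})$ of the memoryless map $\widetilde{\Upsilon}_t(\mathfrak{z}^{([0:t])})\equiv\partial_1\mathsf{L}(\mathfrak{z}^{(t)},\mathcal{F}(\mathfrak{z}^{(0)},\xi))$, the Gaussian increments $\mathfrak{W}^{(t)}$ are $\bigo_t(\phi^{-1/2})$ in every $L^q$, and the recursion (S3) for $\Omega_t$ reduces, up to small errors, to $\Omega_t\approx(1-\eta_{t-1}\tau_{t,t})\Omega_{t-1}+\eta_{t-1}\delta_t\,n^{1/2}\mu_\ast$ with $\tau_{t,t}\approx\tau_\ast^{(t-1)}$ and $\delta_t\approx\delta_\ast^{(t-1)}$ --- exactly the structure of~\eqref{def:u_ast}.

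Before the induction I would record the deterministic a priori bounds $\pnorm{\mathrm{u}_\ast^{(t)}}{}\vee n^{1/2}\pnorm{\mathrm{u}_\ast^{(t)}}{\infty}\vee|\tau_\ast^{(t)}|\vee|\delta_\ast^{(t)}|\leq(\Lambda L_\ast^{(0)}\pnorm{\mu_\ast}{}^{-1})^{c_t}$: since $\mathrm{u}_\ast^{(t)}$ is a linear combination of $\mu^{(0)}$ and $\mu_\ast$ with coefficients governed by $\prod_{r<t}(1+\eta_r|\tau_\ast^{(r)}|)$ and $\sum_{r<t}\eta_r|\delta_\ast^{(r)}|$, and since (A2) with $\mathfrak{p}=1$ bounds $|\tau_\ast^{(r)}|,|\delta_\ast^{(r)}|$ by $\Lambda(1+\pnorm{\mathrm{u}_\ast^{(r)}}{})$ (pseudo-Lipschitz-plus-bounded-at-origin applied to the Gaussians $\iprod{\mathsf{Z}_n}{\mathrm{u}_\ast^{(r)}},\iprod{\mathsf{Z}_n}{\mu_\ast}$), a discrete Gr\"onwall recursion closes; one also notes $\widetilde{\Upsilon}_t$ is $\Lambda$-Lipschitz with at most linear growth in $(\mathfrak{z}^{(t)},\mathfrak{z}^{(0)})$. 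The inductive hypothesis at level $t$ then bundles: (i) $\max_{\ell\in[n]}\E^{(0)}|\Omega_{t;\ell}(\mathfrak{W}^{([1:t])})-n^{1/2}\mathrm{u}_{\ast;\ell}^{(t)}|^{100}\leq(\Lambda L_\ast^{(0)}\pnorm{\mu_\ast}{}^{-1})^{c_t}\phi^{-1/c_t}$; (ii) closeness of $\cov(\mathfrak{Z}^{(t)},\mathfrak{Z}^{(s)})$ to $\iprod{\mathrm{u}_\ast^{(t-1)}}{\mathrm{u}_\ast^{(s-1)}}$ (with $\mathrm{u}_\ast^{(-1)}\equiv\mu_\ast$, consistent with $\Omega_{-1}=n^{1/2}\mu_\ast$) within the same rate; (iii) $|\cov(\mathfrak{W}^{(t)},\mathfrak{W}^{(s)})|\leq(\cdots)\phi^{-1/c_t}$; (iv) $|\tau_{t,s}-\bm{1}_{s=t}\tau_\ast^{(t-1)}|+|\delta_t-\delta_\ast^{(t-1)}|\leq(\cdots)\phi^{-1/c_t}$; together with $\phi$-uniform boundedness of $|\rho_{t,s}|$, of $\E^{(0)}|\Upsilon_t|^q$ for a fixed large $q$, and of the Lipschitz constants of $\Upsilon_t,\Omega_t$ as functions --- the last being what keeps the nested Onsager correction controllable. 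The base cases $t=0,1$ are exact: $\Omega_0=n^{1/2}\mu^{(0)}$ and $\mathfrak{Z}^{(0)}\sim\mathcal{N}(0,\pnorm{\mu_\ast}{}^2)$ identically, the $t=1$ memory sum is empty, and $\var(\mathfrak{W}^{(1)})=\eta_0^2\phi^{-1}\E^{(0)}\Upsilon_{1;\pi_m}^2=\bigo(\phi^{-1})$.

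For the inductive step I would chain the implications in the order they appear in Definition~\ref{def:gd_mean_field_se}. First, (S2) writes $\cov(\mathfrak{Z}^{(t)},\mathfrak{Z}^{(s)})=\E^{(0)}\Omega_{t-1;\pi_n}\Omega_{s-1;\pi_n}$; substituting $\Omega_{\cdot;\pi_n}=n^{1/2}\mathrm{u}_{\ast;\pi_n}^{(\cdot)}+(\text{error})$ and invoking (i) at levels $\leq t-1$ in $L^2$, together with the sup-bound on $n^{1/2}\mathrm{u}_\ast^{(\cdot)}$, yields (ii) at level $t$; in particular $\var(\mathfrak{Z}^{(t)})=\pnorm{\mathrm{u}_\ast^{(t-1)}}{}^2+o(1)$ stays bounded. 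Second, a Gaussian comparison: coupling $(\iprod{\mathsf{Z}_n}{\mathrm{u}_\ast^{(t-1)}},\iprod{\mathsf{Z}_n}{\mu_\ast})\equald\Sigma_1^{1/2}\mathsf{Z}_2$ and $(\mathfrak{Z}^{(t)},\mathfrak{Z}^{(0)})\equald\Sigma_2^{1/2}\mathsf{Z}_2$ with $\mathsf{Z}_2\sim\mathcal{N}(0,I_2)$, and using that $\partial_1\mathsf{L}(\cdot,\mathcal{F}(\cdot,\xi))=\partial_1\mathfrak{S}$ and its partials are $\Lambda$-Lipschitz, gives $|\tau_{t,t}-\tau_\ast^{(t-1)}|+|\delta_t-\delta_\ast^{(t-1)}|\lesssim_t\Lambda\pnorm{\Sigma_1^{1/2}-\Sigma_2^{1/2}}{\op}\lesssim_t\Lambda\pnorm{\Sigma_1-\Sigma_2}{\op}^{1/2}$ --- the last step by $1/2$-Hölder continuity of the PSD square root in dimension two, needing no non-degeneracy --- plus the $\bigo_t(\phi^{-1})$ error from replacing $\Upsilon_t$ by $\widetilde{\Upsilon}_t$. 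For $1\leq s<t$, $\Upsilon_t$ depends on $\mathfrak{z}^{(s)}$ only through the $\phi^{-1}$ Onsager term, so $\partial_{\mathfrak{z}^{(s)}}\Upsilon_t=\bigo_t(\phi^{-1})$ pointwise (here the $\phi$-uniform derivative bounds on earlier $\Upsilon_{s'}$ enter), giving $|\tau_{t,s}|=\bigo_t(\phi^{-1})$; combining yields (iv). Third, $\cov(\mathfrak{W}^{(t)},\mathfrak{W}^{(s)})=\eta_{t-1}^2\phi^{-1}\E^{(0)}\Upsilon_{t;\pi_m}\Upsilon_{s;\pi_m}=\bigo_t(\phi^{-1})$ by the $L^q$ bounds, giving (iii). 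Finally, subtracting $n^{1/2}\mathrm{u}_\ast^{(t)}=(1-\eta_{t-1}\tau_\ast^{(t-1)})n^{1/2}\mathrm{u}_\ast^{(t-1)}+\eta_{t-1}\delta_\ast^{(t-1)}n^{1/2}\mu_\ast$ from the (S3) recursion for $\Omega_t$, the difference splits into the Gaussian term $\mathfrak{W}^{(t)}$ (size $\phi^{-1/2}$ in $L^{100}$ by (iii)), the contraction term $(1-\eta_{t-1}\tau_{t,t})(\Omega_{t-1}-n^{1/2}\mathrm{u}_\ast^{(t-1)})$ (bounded coefficient times (i) at $t-1$), the mismatch terms $(\tau_\ast^{(t-1)}-\tau_{t,t})\eta_{t-1}n^{1/2}\mathrm{u}_\ast^{(t-1)}$ and $(\delta_t-\delta_\ast^{(t-1)})\eta_{t-1}n^{1/2}\mu_\ast$ (item (iv) times sup-bounds), and the off-diagonal sum $\eta_{t-1}\sum_{s<t}\tau_{t,s}\Omega_{s-1}$ ($\bigo_t(\phi^{-1})$ times $\phi$-uniformly bounded $\Omega_{s-1}$) --- each $\bigo_t((\cdots)\phi^{-1/c_t})$ in $L^{100}$, which is (i) at level $t$ with suitably enlarged $c_t$. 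The $\phi$-uniform bounds on $\rho_{t,s}$ follow from the differentiated (S3) recursion $\partial_{\mathfrak{w}^{(s)}}\Omega_t=\bm{1}_{s=t}+\sum_{s'>s}(\bm{1}_{t=s'}-\eta_{t-1}\tau_{t,s'})\partial_{\mathfrak{w}^{(s)}}\Omega_{s'-1}$ and the bounds on $\{\tau_{t,s'}\}$, and the remaining auxiliary bounds are re-derived along the way; the theorem's output is the worst case over $s\leq t$ of (i)--(iv) together with $\pnorm{\bm{\tau}^{[t]}-\mathrm{diag}(\{\tau_\ast^{(s-1)}\}_{s\in[t]})}{\op}\leq t\cdot\max_{r,s\leq t}|\tau_{r,s}-\bm{1}_{r=s}\tau_\ast^{(r-1)}|$.

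The main obstacle I anticipate is not any individual estimate but keeping the induction self-strengthening: the Onsager correction in $\Upsilon_t$ is nested, involving all $\Upsilon_s$ with $s<t$, so bounding $\partial_{\mathfrak{z}^{(s)}}\Upsilon_t=\bigo(\phi^{-1})$ requires having already controlled the Lipschitz behaviour of all earlier $\Upsilon_{s'}$ uniformly in $\phi$; likewise the coefficients $\tau_{t,s},\delta_t,\rho_{t,s}$ must be shown $\phi$-uniformly bounded before they can be compared to their $\phi$-free limits. Packaging exactly the right collection of $\phi$-uniform bounds into the inductive hypothesis --- and accepting the loss of a factor of $2$ in the $\phi$-exponent at each step from the $1/2$-Hölder Gaussian comparison, which is precisely why the theorem claims only the rate $\phi^{-1/c_t}$ rather than $\phi^{-1/2}$ --- is where the care lies.
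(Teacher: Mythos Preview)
Your proposal is correct and follows essentially the same approach as the paper: establish $\phi$-uniform a priori bounds on all mean-field coefficients (the paper isolates this as a separate lemma using a matrix representation $\bm{\Upsilon}_k^{';[t]}=[I_t+\eta\phi^{-1}\bm{L}_k^{[t]}\mathfrak{O}_t(\bm{\rho}^{[t-1]})]^{-1}\bm{L}_k^{[t]}$, whereas you fold these into the induction), then run an induction comparing each ingredient to its memoryless ``tilde'' counterpart, with the $1/2$-power loss at each step accounting for the $\phi^{-1/c_t}$ rate. The only organizational difference is that the paper centers the induction on $\|\tilde{\mathrm{d}}\Sigma_{\mathfrak{Z}}^{[t]}\|_{\op}$ and derives the $\|\mu_\ast\|^{-1}$ factor from the Gaussian integration-by-parts identity $\delta_t=\|\mu_\ast\|^{-2}\big(\E^{(0)}\mathfrak{Z}^{(0)}\Upsilon_{t;\pi_m}-\sum_{s}\tau_{t,s}\cov(\mathfrak{Z}^{(0)},\mathfrak{Z}^{(s)})\big)$, rather than from your $\Sigma^{1/2}$-coupling argument.
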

The proof of the above theorem can be found in Section \ref{section:proof_mean_field_dyn_approx}. Theorem \ref{thm:mean_field_dyn_approx} rigorously unveils the mechanism that leads to (\ref{eqn:gd_mean_field_dist_1}):
\begin{enumerate}
	\item the interaction terms $\tau_{t,s}\approx 0$ for $s\neq t$, so the gradient descent $\mu^{(t)}$ depends only approximately on the last iterate,
	\item the Gaussian noise $\mathfrak{W}^{([1:t])}$ becomes degenerate, so the gradient descent $\mu^{(t)}$ concentrates on a deterministic vector, and
	\item  the mean-field characterization $\{\Omega_{t;\ell}(\mathfrak{W}^{([1:t])}):\ell \in [n]\}$ therefore reduces to the rescaled state evolution $\{n^{1/2}\mathrm{u}_{\ast;\ell}^{(t)}: \ell \in [n]\}$. 
\end{enumerate}

Again, the numeric constant $100$ in Theorem \ref{thm:mean_field_dyn_approx} is chosen for convenience and can be replaced by an arbitrarily large constant at the cost of a potentially larger constant $c_t>0$.

\section{Proofs for Section \ref{section:main_results}}\label{section:proof_main_results}

\subsection{Proof of Theorem \ref{thm:grad_des_se}}\label{subsection:proof_grad_des_se}
Let
\begin{align}\label{def:Delta_X}
\Delta_X^{(t)}&\equiv \frac{1}{m}\sum_{i \in [m]} \big(\mathrm{id}-\E^{(0)}\big) X_i \mathfrak{S}\big(\iprod{X_i}{\mathrm{u}_X^{(t)}},\iprod{X_i}{ \mu_\ast},\xi_i\big)\in \R^n.
\end{align}
\begin{proposition}\label{prop:diff_mu_u_X}
With $\{\Delta_X^{(t)}\}$ defined in (\ref{def:Delta_X}), we have
\begin{align*}
\pnorm{\mu^{(t)}-\mathrm{u}_X^{(t)}}{}\leq \pnorm{\eta_{[0:t)}}{\infty} \cdot \mathfrak{M}_{ \mu^{(\cdot)},\mathrm{u}_X^{(\cdot)} }^{(t-1)}(X) \cdot \max_{s \in [1:t]}  \pnorm{\Delta_X^{(s-1)}}{}.
\end{align*}
\end{proposition}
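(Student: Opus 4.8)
The plan is to reduce both $\mu^{(t)}$ and $\mathrm{u}_X^{(t)}$ to recursions driven by the common kernel $\mathfrak{S}$, extract from their difference a linear recursion for the error $e^{(t)}\equiv\mu^{(t)}-\mathrm{u}_X^{(t)}$, and then unfold it. First I would rewrite the two updates in a unified form. Since $Y_i=\mathcal{F}(\iprod{X_i}{\mu_\ast},\xi_i)$, the definition of $\mathfrak{S}$ gives $\partial_1\mathsf{L}(\iprod{X_i}{\mu^{(t-1)}},Y_i)=\mathfrak{S}(\iprod{X_i}{\mu^{(t-1)}},\iprod{X_i}{\mu_\ast},\xi_i)$, so \eqref{def:grad_des} reads coordinatewise $\mu^{(t)}=\mu^{(t-1)}-\frac{\eta_{t-1}}{m}\sum_{i\in[m]}X_i\,\mathfrak{S}(\iprod{X_i}{\mu^{(t-1)}},\iprod{X_i}{\mu_\ast},\xi_i)$; likewise \eqref{def:theo_grad_des} reads $\mathrm{u}_X^{(t)}=\mathrm{u}_X^{(t-1)}-\frac{\eta_{t-1}}{m}\E^{(0)}\sum_{i\in[m]}X_i\,\mathfrak{S}(\iprod{X_i}{\mathrm{u}_X^{(t-1)}},\iprod{X_i}{\mu_\ast},\xi_i)$, where I would first observe that each $\mathrm{u}_X^{(r)}$ is deterministic given $(\mu^{(0)},\mu_\ast,\xi)$, so $\E^{(0)}$ commutes with the sum.

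Next, subtracting the two recursions, I would split the increment of $e^{(t)}$ into a drift term and a fluctuation term. For the drift, the fundamental theorem of calculus in the first argument of $\mathfrak{S}$ (the required differentiability is supplied by (A2)) yields $\mathfrak{S}(\iprod{X_i}{\mu^{(t-1)}},\iprod{X_i}{\mu_\ast},\xi_i)-\mathfrak{S}(\iprod{X_i}{\mathrm{u}_X^{(t-1)}},\iprod{X_i}{\mu_\ast},\xi_i)=\iprod{X_i}{e^{(t-1)}}\int_0^1\partial_1\mathfrak{S}(\iprod{X_i}{U\mu^{(t-1)}+(1-U)\mathrm{u}_X^{(t-1)}},\iprod{X_i}{\mu_\ast},\xi_i)\,\d U$; multiplying by $X_i$, summing over $i$ and dividing by $m$ produces precisely $M_{\mu^{(t-1)},\mathrm{u}_X^{(t-1)}}(X)\,e^{(t-1)}$ once one recognizes the $\E_{U,\pi_m}$-average in the definition of $M_{\cdot,\cdot}(X)$ in Definition~\ref{def:G_M} (with $\pi_m\sim\mathrm{Unif}[1:m]$ implementing $\frac1m\sum_i$). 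The fluctuation term is $\frac1m\sum_{i\in[m]}(\mathrm{id}-\E^{(0)})X_i\,\mathfrak{S}(\iprod{X_i}{\mathrm{u}_X^{(t-1)}},\iprod{X_i}{\mu_\ast},\xi_i)=\Delta_X^{(t-1)}$ by \eqref{def:Delta_X}. Hence $e^{(0)}=0$ and $e^{(t)}=\big(I_n-\eta_{t-1}M_{\mu^{(t-1)},\mathrm{u}_X^{(t-1)}}(X)\big)e^{(t-1)}-\eta_{t-1}\Delta_X^{(t-1)}$.

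Finally I would unfold this linear recursion to get $e^{(t)}=-\sum_{s=1}^t\big(\prod_{r=s}^{t-1}(I_n-\eta_r M_{\mu^{(r)},\mathrm{u}_X^{(r)}}(X))\big)\eta_{s-1}\Delta_X^{(s-1)}$, with the empty product read as $I_n$, then apply the triangle inequality and submultiplicativity of $\pnorm{\cdot}{\op}$, pull out $\pnorm{\eta_{[0:t)}}{\infty}$ and $\max_{s\in[1:t]}\pnorm{\Delta_X^{(s-1)}}{}$, and bound the remaining factor $\sum_{s=1}^t\prod_{r=s}^{t-1}\pnorm{I_n-\eta_r M_{\mu^{(r)},\mathrm{u}_X^{(r)}}(X)}{\op}=1+\sum_{s=1}^{t-1}\prod_{r=s}^{t-1}\pnorm{I_n-\eta_r M_{\mu^{(r)},\mathrm{u}_X^{(r)}}(X)}{\op}$ by $\mathfrak{M}_{\mu^{(\cdot)},\mathrm{u}_X^{(\cdot)}}^{(t-1)}(X)$ (take $\tau=t-1$ in its definition and use the leading $+1$). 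This gives exactly the claimed inequality. There is no genuinely hard step; the only points needing care are checking that the interpolated integral matches the $(U,\pi_m)$-average defining $M_{\cdot,\cdot}(X)$, and that the index ranges in the telescoped product line up with those in $\mathfrak{M}_{\mu^{(\cdot)},\mathrm{u}_X^{(\cdot)}}^{(t-1)}(X)$.
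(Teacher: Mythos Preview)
Your proposal is correct and follows essentially the same approach as the paper: derive the linear error recursion $e^{(t)}=(I_n-\eta_{t-1}M_{\mu^{(t-1)},\mathrm{u}_X^{(t-1)}}(X))e^{(t-1)}-\eta_{t-1}\Delta_X^{(t-1)}$ via the mean-value/integral representation of $\mathfrak{S}$, then iterate and bound by $\mathfrak{M}_{\mu^{(\cdot)},\mathrm{u}_X^{(\cdot)}}^{(t-1)}(X)$. The paper presents this more tersely (taking norms immediately and iterating the scalar inequality rather than unfolding first), but the argument is the same.
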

\begin{proof}
Comparing (\ref{def:grad_des}) and (\ref{def:theo_grad_des}), we have 
\begin{align*}
\mu^{(t)}-\mathrm{u}_X^{(t)}&= \big(I_n-\eta_{t-1}\cdot M_{\mu^{(t-1)},\mathrm{u}_X^{(t-1)} }(X) \big)\big(\mu^{(t-1)}-\mathrm{u}_X^{(t-1)}\big)\\
&\qquad - \frac{\eta_{t-1}}{m}\cdot \big(\mathrm{id}-\E^{(0)}\big)X^\top \,\mathfrak{S}\big(X\mathrm{u}_X^{(t-1)},X \mu_\ast,\xi\big).
\end{align*}
Consequently,
\begin{align*}
\pnorm{\mu^{(t)}-\mathrm{u}_X^{(t)}}{}&\leq \Big\{\bigpnorm{I_n-\eta_{t-1}\cdot M_{\mu^{(t-1)},\mathrm{u}_X^{(t-1)} }(X)}{\op}\\
&\qquad\times \pnorm{\mu^{(t-1)}-\mathrm{u}_X^{(t-1)}}{}\Big\}+\eta_{t-1} \pnorm{\Delta_X^{(t-1)}}{}.
\end{align*}
Iterating the above display to conclude the claim.
\end{proof}

\begin{proposition}\label{prop:diff_theo_grad_des}
Suppose Assumption \ref{assump:abstract} holds for some $K,\Lambda\geq 2$ and $\mathfrak{p}\geq 1$. Then there exists some constant $c_{\mathfrak{p}}>1$ such that
\begin{align*}
\pnorm{\mathrm{u}_{\ast}^{(t)} - \mathrm{u}_{X}^{(t)} }{}&\leq \big(K\Lambda L_\ast^{(0)}(1\vee\pnorm{\eta_{[0:t)}}{\infty})\big)^{c_{\mathfrak{p}}} \cdot  \Big(1+\max_{s \in [0:t-1]} n^{1/2}\pnorm{\mathrm{u}_\ast^{(s)}}{\infty}\Big)^{c_{\mathfrak{p}}}\cdot n^{-1/2}\mathscr{M}_{ \mathrm{u}_X^{(\cdot)},\mathrm{u}_{\ast}^{(\cdot)} }^{(t-1),X}.
\end{align*}
\end{proposition}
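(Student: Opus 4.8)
The plan is to set up a one-step recursion for the universality gap $\mathrm{u}_{\ast}^{(t)}-\mathrm{u}_X^{(t)}$, unroll it so that the accumulated operator norms assemble into the factor $\mathscr{M}_{ \mathrm{u}_X^{(\cdot)},\mathrm{u}_{\ast}^{(\cdot)} }^{(t-1),X}$, and reduce the whole estimate to a single-step Gaussian comparison. For a design $W$ with independent entries, write $\Phi_W(v)\equiv \frac1m\sum_{i\in[m]}\E\big[W_i\,\mathfrak{S}(\iprod{W_i}{v},\iprod{W_i}{\mu_\ast},\xi_i)\big]$ (expectation over the entries of $W$, with $\mu^{(0)},\mu_\ast,\xi$ held fixed), so that $\mathrm{u}_X^{(t)}=\mathrm{u}_X^{(t-1)}-\eta_{t-1}\Phi_X(\mathrm{u}_X^{(t-1)})$ and, by Proposition \ref{prop:u_Z_u_ast}, $\mathrm{u}_{\ast}^{(t)}=\mathrm{u}_{\mathsf{Z}}^{(t)}=\mathrm{u}_{\ast}^{(t-1)}-\eta_{t-1}\Phi_{\mathsf{Z}}(\mathrm{u}_{\ast}^{(t-1)})$. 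Subtracting and inserting $\pm\,\eta_{t-1}\Phi_X(\mathrm{u}_{\ast}^{(t-1)})$, then using the fundamental theorem of calculus together with the identity $\int_0^1 D\Phi_X\big(U\mathrm{u}+(1-U)\mathrm{v}\big)\,\mathrm{d}U=\mathbb{M}_{\mathrm{u},\mathrm{v}}^X$ — valid since $D\Phi_X(w)=\E^{(0)}\,\partial_1\mathfrak{S}(\iprod{X_{\pi_m}}{w},\iprod{X_{\pi_m}}{\mu_\ast},\xi_{\pi_m})X_{\pi_m}X_{\pi_m}^\top$ and the symmetry of $U\sim\mathrm{Unif}[0,1]$ makes $\mathbb{M}_{\mathrm{u},\mathrm{v}}^X$ symmetric in $(\mathrm{u},\mathrm{v})$ — I obtain
\begin{align*}
\mathrm{u}_{\ast}^{(t)}-\mathrm{u}_X^{(t)}=\big(I_n-\eta_{t-1}\mathbb{M}_{ \mathrm{u}_X^{(t-1)},\mathrm{u}_{\ast}^{(t-1)} }^X\big)\big(\mathrm{u}_{\ast}^{(t-1)}-\mathrm{u}_X^{(t-1)}\big)-\eta_{t-1}R^{(t-1)},
\end{align*}
where $R^{(t-1)}\equiv \Phi_{\mathsf{Z}}(\mathrm{u}_{\ast}^{(t-1)})-\Phi_X(\mathrm{u}_{\ast}^{(t-1)})$ is the per-step universality remainder.

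Since $\mathrm{u}_{\ast}^{(0)}=\mathrm{u}_X^{(0)}=\mu^{(0)}$, unrolling gives $\mathrm{u}_{\ast}^{(t)}-\mathrm{u}_X^{(t)}=-\sum_{s=1}^{t}\eta_{s-1}\big(\prod_{r=s}^{t-1}(I_n-\eta_r\mathbb{M}_{ \mathrm{u}_X^{(r)},\mathrm{u}_{\ast}^{(r)} }^X)\big)R^{(s-1)}$, and taking norms, with the $1+$ term in $\mathscr{M}_{ \mathrm{u}_X^{(\cdot)},\mathrm{u}_{\ast}^{(\cdot)} }^{(t-1),X}$ absorbing the $s=t$ summand (whose product is empty),
\begin{align*}
\pnorm{\mathrm{u}_{\ast}^{(t)}-\mathrm{u}_X^{(t)}}{}\leq \pnorm{\eta_{[0:t)}}{\infty}\cdot \mathscr{M}_{ \mathrm{u}_X^{(\cdot)},\mathrm{u}_{\ast}^{(\cdot)} }^{(t-1),X}\cdot \max_{s\in[1:t]}\pnorm{R^{(s-1)}}{}.
\end{align*}
Thus everything reduces to the per-step estimate $\pnorm{R^{(s-1)}}{}\lesssim_{\mathfrak{p}}(K\Lambda)^{c_{\mathfrak{p}}}\big(1+n^{1/2}\pnorm{\mu_\ast}{\infty}+n^{1/2}\pnorm{\mathrm{u}_{\ast}^{(s-1)}}{\infty}\big)^{c_{\mathfrak{p}}}\,n^{-1/2}$ for $s\in[1:t]$; once this is in hand, bounding $\pnorm{\eta_{[0:t)}}{\infty}\le 1\vee\pnorm{\eta_{[0:t)}}{\infty}$, $n^{1/2}\pnorm{\mathrm{u}_{\ast}^{(s-1)}}{\infty}\le \max_{s'\in[0:t-1]}n^{1/2}\pnorm{\mathrm{u}_{\ast}^{(s')}}{\infty}$, and distributing the power $c_{\mathfrak p}$ over $1+n^{1/2}\pnorm{\mu_\ast}{\infty}+n^{1/2}\pnorm{\mathrm{u}_{\ast}^{(s-1)}}{\infty}\le L_\ast^{(0)}\big(1+\max_{s'\in[0:t-1]}n^{1/2}\pnorm{\mathrm{u}_{\ast}^{(s')}}{\infty}\big)$ yields exactly the asserted bound.

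To prove the per-step estimate I would run a Lindeberg swap. Fixing $v=\mathrm{u}_{\ast}^{(s-1)}$ and $k\in[n]$, the $k$-th entry of $R^{(s-1)}$ is $\frac1m\sum_{i\in[m]}\big(\E_{\mathsf{Z}_i}-\E_{X_i}\big)\big[W_{ik}\,\mathfrak{S}(\iprod{W_i}{v},\iprod{W_i}{\mu_\ast},\xi_i)\big]$; since $X_{ij}$ and $\mathsf{Z}_{ij}$ have matching first two moments, replacing the coordinates of $W_i$ by Gaussians one at a time and Taylor-expanding in the swapped coordinate $j$ (noting that $\partial/\partial W_{ij}$ acts on $\mathfrak{S}(\iprod{W_i}{v},\iprod{W_i}{\mu_\ast},\xi_i)$ as $v_j\partial_1+\mu_{\ast,j}\partial_2$) leaves, for $j\neq k$, a remainder controlled by $(\abs{v_j}+\abs{\mu_{\ast,j}})^3$ times an $\le 3$rd-order derivative of $\mathfrak{S}$, which by (A2) is at most $\Lambda$ times a degree-$\mathfrak{p}$ polynomial in the sub-gaussian linear forms $\iprod{W_i}{v},\iprod{W_i}{\mu_\ast}$ and the swapped coordinate; the special index $j=k$ is dispatched first by expanding the linear prefactor $W_{ik}$ about $0$, the mean-zero/unit-variance property forcing the leading terms to agree and leaving a remainder of size $(\abs{v_k}+\abs{\mu_{\ast,k}})^2$ times a polynomial factor. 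Taking expectations via the sub-gaussian bounds of (A1) and summing over $j$ using $\sum_{j\in[n]}(\abs{v_j}+\abs{\mu_{\ast,j}})^3\lesssim(\pnorm{v}{\infty}\vee\pnorm{\mu_\ast}{\infty})(\pnorm{v}{}^2+\pnorm{\mu_\ast}{}^2)\lesssim n^{-1/2}\big(1+n^{1/2}\pnorm{v}{\infty}+n^{1/2}\pnorm{\mu_\ast}{\infty}\big)^3$, one obtains $\abs{R^{(s-1)}_k}\lesssim n^{-1/2}\cdot(\text{polynomial prefactor})$; squaring, summing over $k$ (using $\sum_k(\abs{v_k}+\abs{\mu_{\ast,k}})^4\lesssim(\pnorm{v}{\infty}\vee\pnorm{\mu_\ast}{\infty})^2(\pnorm{v}{}^2+\pnorm{\mu_\ast}{}^2)$ and $\pnorm{v}{}\le n^{1/2}\pnorm{v}{\infty}$) then collapses everything into the claimed bound on $\pnorm{R^{(s-1)}}{}$, all the various powers being absorbed into a single $c_{\mathfrak{p}}$.

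The bulk of the work — and the main obstacle — is this last paragraph: the Lindeberg expansion must be carried to an order high enough to beat the $n^{-1/2}$ threshold given only two matching moments, the linear prefactor $W_{ik}$ must be handled by a separate expansion, and, crucially, every $\ell_2$-norm arising in the intermediate moment computations must be dominated by the $\ell_\infty$-type quantities ($L_\ast^{(0)}$ and $n^{1/2}\pnorm{\mathrm{u}_{\ast}^{(\cdot)}}{\infty}$) appearing in the statement, which is exactly where delocalization of $\mu_\ast$ and of the state-evolution iterates enters. By contrast, the recursion and its unrolling in the first two paragraphs are exact algebra once the identity $\int_0^1 D\Phi_X(U\mathrm{u}+(1-U)\mathrm{v})\,\mathrm{d}U=\mathbb{M}_{\mathrm{u},\mathrm{v}}^X$ is recorded.
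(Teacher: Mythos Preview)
Your recursion and unrolling are correct and match the paper's Step~2 exactly: the paper also arrives at
\[
\pnorm{\mathrm{u}_{\ast}^{(t)} - \mathrm{u}_{X}^{(t)} }{}\leq \mathscr{M}_{ \mathrm{u}_X^{(\cdot)},\mathrm{u}_{\ast}^{(\cdot)} }^{(t-1),X}\cdot \max_{s\in[1:t]}\err_\ast^{(s-1)},
\]
with $\err_\ast^{(s-1)}\asymp|\eta_{s-1}|\cdot\pnorm{R^{(s-1)}}{}$ in your notation. The issue is the per-step bound on $\pnorm{R^{(s-1)}}{}$.

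Your direct Lindeberg swap does not give the right rate. For $j\neq k$, the $j$-th swap error is controlled by $\E\big[|U_j|^3\,|W_{ik}|\,(|v_j|+|\mu_{\ast,j}|)^3\,|\partial^3\mathfrak{S}|\big]$; the prefactor $|W_{ik}|$ is an $O(1)$ sub-gaussian random variable, not $O(|v_k|+|\mu_{\ast,k}|)$, so after summing over $j$ you get a bound that is \emph{uniform in $k$} of order $\sum_j(|v_j|+|\mu_{\ast,j}|)^3\asymp n^{-1/2}(\cdots)$. When you then form $\pnorm{R^{(s-1)}}{}^2=\sum_k|R_k^{(s-1)}|^2$, this $k$-independent piece contributes $n\cdot n^{-1}=O(1)$, not $O(n^{-1})$, so the argument only yields $\pnorm{R^{(s-1)}}{}=O(1)$ --- a full factor of $n^{1/2}$ short. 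The identity $\sum_k(|v_k|+|\mu_{\ast,k}|)^4\lesssim n^{-1}(\cdots)$ you invoke handles only the $j=k$ contribution, not the dominant $j\neq k$ sum.

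The paper's Step~1 repairs this by \emph{first} extracting the $k$-dependent factor from the linear prefactor via the cumulant expansion (Lemma~\ref{lem:cum_exp}): for either design $W\in\{X,\mathsf{Z}\}$,
\[
\E^{(0)}\big[W_{\pi_m,k}\,\mathfrak{S}(\iprod{W_{\pi_m}}{v},\iprod{W_{\pi_m}}{\mu_\ast},\xi_{\pi_m})\big]=v_k\,\E^{(0)}\partial_1\mathfrak{S}+\mu_{\ast,k}\,\E^{(0)}\partial_2\mathfrak{S}+O\big((v_k^2+\mu_{\ast,k}^2)\big).
\]
The difference of the leading terms between $X$ and $\mathsf{Z}$ is then $(|v_k|+|\mu_{\ast,k}|)\times\max_\ell\big|\E^{(0)}_X\partial_\ell\mathfrak{S}-\E^{(0)}_{\mathsf{Z}}\partial_\ell\mathfrak{S}\big|$, and Lindeberg is applied to the \emph{scalar} $\E^{(0)}\partial_\ell\mathfrak{S}$, yielding a further $\sum_j(|v_j|+|\mu_{\ast,j}|)^3\asymp n^{-1/2}(\cdots)$. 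The product gives $|R_k^{(s-1)}|\lesssim n^{-1}(\cdots)^{c_{\mathfrak p}}$ per coordinate, and the $\ell_2$-sum over $k$ then produces $\pnorm{R^{(s-1)}}{}\lesssim n^{-1/2}(\cdots)^{c_{\mathfrak p}}$ as required. In short: you must pull out the factor $(|v_k|+|\mu_{\ast,k}|)$ \emph{before} running the Lindeberg swap over the remaining coordinates, not treat $j=k$ as merely one swap among many.
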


\begin{proof}
	The proof proceeds in two steps.
	
	\noindent (\textbf{Step 1}). For a generic matrix $Z \in \R^{m\times n}$ and a generic vector $\mathrm{u}\in \R^n$, let
	\begin{align*}
	F_{\mathrm{u}}(Z)\equiv  Z^\top\mathfrak{S}\big(Z\mathrm{u},Z \mu_\ast,\xi\big)\in \R^n.
	\end{align*}
	In this step, we prove that with $\mathsf{Z}\equiv \mathsf{Z}_{m\times n}$ denoting an $m\times n$ matrix with i.i.d. $\mathcal{N}(0,1)$ entries, there exists some constant $c_{\mathfrak{p}}>1$ such that
	\begin{align}\label{ineq:diff_theo_grad_des_step1}
	\frac{1}{m}\cdot \bigpnorm{\E^{(0)} F_{\mathrm{u}}(X)-\E^{(0)} F_{\mathrm{u}}(\mathsf{Z})  }{ }\leq \frac{(K\Lambda)^{c_{\mathfrak{p}}} }{n^{1/2}} \cdot \big(1+n^{1/2}\pnorm{\mathrm{u}}{\infty}+n^{1/2}\pnorm{\mu_\ast}{\infty}\big)^{c_{\mathfrak{p}}}.
	\end{align}
	For $k \in [n]$, note that
	\begin{align*}
	\frac{1}{m}\cdot \E^{(0)} F_{\mathrm{u};k}(X) 
	& = \frac{1}{m} \sum_{i \in [m]} \E^{(0)} X_{ik} \mathfrak{S}\big(\iprod{X_i}{\mathrm{u}},\iprod{X_i}{\mu_\ast},\xi_i\big)\\
	& = \E^{(0)} X_{\pi_m,k} \mathfrak{S}\big(\iprod{X_{\pi_m}}{\mathrm{u}},\iprod{X_{\pi_m}}{\mu_\ast},\xi_{\pi_m}\big).
	\end{align*}
	By using the cumulant expansion in Lemma \ref{lem:cum_exp}, we have
	\begin{align}\label{ineq:diff_theo_grad_des_step1_1}
	&\frac{1}{m}\cdot \bigabs{\E^{(0)} F_{\mathrm{u};k}(X)-\E^{(0)} F_{\mathrm{u};k}(\mathsf{Z}) }\leq \big(\abs{\mathrm{u}_k}+\abs{\mu_{\ast,k}}\big)\nonumber\\
	&\quad\quad  \times \max_{\ell=1,2}\bigabs{\E^{(0)} \big\{\partial_\ell \mathfrak{S}\big(\iprod{X_{\pi_m}}{\mathrm{u}},\iprod{X_{\pi_m}}{\mu_\ast},\xi_{\pi_m}\big)-\partial_\ell \mathfrak{S}\big(\iprod{\mathsf{Z}_{\pi_m}}{\mathrm{u}},\iprod{\mathsf{Z}_{\pi_m}}{\mu_\ast},\xi_{\pi_m}\big)\big\}}\nonumber\\
	&\qquad + \bigg\{c K^3\cdot \big(\mathrm{u}_k^2+\mu_{\ast,k}^2\big)\cdot \frac{1}{m}\sum_{i \in [m]}\nonumber\\
	&\qquad\qquad \sup_{s \in [0,1]^n}\max_{\substack{\alpha\in \mathbb{Z}_{\geq 0}^2, \abs{\alpha}=2 } }\max_{W_i \in \{X_i,\mathsf{Z}_i\}}\E^{(0),1/2} \Big(\partial_{\alpha}\mathfrak{S}\big(\iprod{s\circ W_i}{\mathrm{u}},\iprod{s\circ W_i}{\mu_\ast},\xi_i\big)\Big)^2\bigg\}\nonumber\\
	&\equiv I_{1;k}+I_{2;k}.
	\end{align}
	First we consider $I_{2;k}$. Using the assumption on $\mathfrak{S}$, we may control
	\begin{align}\label{ineq:diff_theo_grad_des_step1_2}
	I_{2;k}&\leq  (K\Lambda)^{c_{\mathfrak{p}}} \cdot \big(\mathrm{u}_k^2+\mu_{\ast,k}^2\big)\cdot \big(1+\pnorm{\mathrm{u}}{}+ \pnorm{\mu_\ast}{}\big)^{c_{\mathfrak{p}}}\nonumber\\
	&\leq \frac{(K\Lambda)^{c_{\mathfrak{p}} } }{n}\cdot \big(1+n^{1/2}\pnorm{\mathrm{u}}{\infty}+n^{1/2} \pnorm{\mu_\ast}{\infty}\big)^{c_{\mathfrak{p}}}.
	\end{align}
	Next we consider $I_{1;k}$. Fixed $i \in [m]$, let $\{X_{ij}(t)\}_{t \in [0,1]}\subset \R^n$ be the Lindeberg path from $X_i$ to $\mathsf{Z}_i$ at position $j$ with $(X_{ij}(t))_{j}=tX_{ij}$, and $\{\mathsf{Z}_{ij}(t)\}_{t \in [0,1]}\subset \R^n$ be the Lindeberg path with $(\mathsf{Z}_{ij}(t))_{j}=t\mathsf{Z}_{ij}$. For notational convenience, we write $\Xi_{\mathrm{u};i}^{(\ell)}(x)\equiv \partial_\ell \mathfrak{S}(\iprod{x}{\mathrm{u}},\iprod{x}{\mu_\ast},\xi_i)$ for $x \in \R^n$, $\ell=1,2$ and $i \in [m]$. Using Chatterjee's Lindeberg principle in Lemma \ref{lem:lindeberg}, 
	\begin{align}\label{ineq:diff_theo_grad_des_step1_3}
	I_{1;k}&\leq \big(\abs{\mathrm{u}_k}+\abs{\mu_{\ast,k}}\big)\max_{\ell=1,2}  \max_{W\in \{X,\mathsf{Z}\}}\biggabs{\sum_{j \in [n]} \E^{(0)} W_{\pi_m j}^3 \int_0^1 \partial_{j}^3  \Xi_{\mathrm{u};\pi_m}^{(\ell)} \big(W_{\pi_m j}(t)\big)(1-t)^2\,\d{t}}\nonumber\\
	&\leq cK^3\cdot \big(\abs{\mathrm{u}_k}+\abs{\mu_{\ast,k}}\big)\cdot\max_{\ell=1,2}  \max_{W\in \{X,\mathsf{Z}\}}\sum_{j \in [n]} \sup_{t \in [0,1]} \E^{(0),1/2} \Big(\partial_{j}^3  \Xi_{\mathrm{u};\pi_m}^{(\ell)}\big(W_{\pi_m j}(t)\big)\Big)^2\nonumber\\
	&\leq \frac{(K\Lambda)^{c_{\mathfrak{p}}}}{n}\cdot \big(1+n^{1/2}\pnorm{\mathrm{u}}{\infty}+n^{1/2} \pnorm{\mu_\ast}{\infty}\big)^{c_{\mathfrak{p}}}.
	\end{align}
	Now combining (\ref{ineq:diff_theo_grad_des_step1_1})-(\ref{ineq:diff_theo_grad_des_step1_3}) yields the claimed estimate (\ref{ineq:diff_theo_grad_des_step1}).
	
	\noindent (\textbf{Step 2}). Using the estimate (\ref{ineq:diff_theo_grad_des_step1}), we have 
	\begin{align}\label{ineq:diff_theo_grad_des_step2_1}
	&\biggpnorm{\mathrm{u}_{\ast}^{(t)} - \bigg[ \mathrm{u}_{\ast}^{(t-1)} - \frac{\eta_{t-1}}{m} \cdot \E^{(0)}  X^\top \,\mathfrak{S}\big(X\mathrm{u}_{\ast}^{(t-1)},X \mu_\ast,\xi\big)\bigg]}{}\nonumber\\
	&\leq \frac{\abs{\eta_{t-1}}}{m}\cdot \bigpnorm{\E^{(0)} F_{\mathrm{u}_{\ast}^{(t-1)}}(X)-\E^{(0)} F_{\mathrm{u}_{\ast}^{(t-1)}}(\mathsf{Z})}{}\nonumber\\
	&\leq (K\Lambda)^{c_{\mathfrak{p}}} \abs{\eta_{t-1}}\cdot n^{-1/2} \cdot \big(1+n^{1/2}\pnorm{\mathrm{u}_{\ast}^{(t-1)}}{\infty}+n^{1/2}\pnorm{\mu_\ast}{\infty}\big)^{c_{\mathfrak{p}}}\equiv \err_{\ast}^{(t-1)}.
	\end{align}
	Comparing the above display (\ref{ineq:diff_theo_grad_des_step2_1}) to the definition of $\mathrm{u}_{X}^{(t)} $ in (\ref{def:theo_grad_des}), 
	we have
	\begin{align*}
	\pnorm{\mathrm{u}_{\ast}^{(t)} - \mathrm{u}_{X}^{(t)} }{}&\leq \bigpnorm{I_n- \eta_{t-1}\cdot \mathbb{M}_{\mathrm{u}_{X}^{(t-1)},\mathrm{u}_{\ast}^{(t-1)}}^X }{\op}\cdot \pnorm{\mathrm{u}_{\ast}^{(t-1)} - \mathrm{u}_{X}^{(t-1)} }{}+ \err_{\ast}^{(t-1)}.
	\end{align*}
	Iterating the above estimate, we have
	\begin{align}\label{ineq:diff_theo_grad_des_step2_2}
	\pnorm{\mathrm{u}_{\ast}^{(t)} - \mathrm{u}_{X}^{(t)} }{}&\leq \mathscr{M}_{ \mathrm{u}_X^{(\cdot)},\mathrm{u}_{\ast}^{(\cdot)} }^{(t-1),X} \cdot \max_{s \in [1:t]} \err_{\ast}^{(s-1)}.
	\end{align}
	The claimed estimate follows by combining (\ref{ineq:diff_theo_grad_des_step2_1}) and (\ref{ineq:diff_theo_grad_des_step2_2}).
\end{proof}

\begin{proof}[Proof of Theorem \ref{thm:grad_des_se}]
Combining Propositions \ref{prop:diff_mu_u_X} and \ref{prop:diff_theo_grad_des}, we have
\begin{align}\label{ineq:grad_des_se_1}
\pnorm{\mu^{(t)}-\mathrm{u}_{\ast}^{(t)}}{}&\leq  \pnorm{\mu^{(t)}-\mathrm{u}_{X}^{(t)}}{}+ \pnorm{\mathrm{u}_{\ast}^{(t)} - \mathrm{u}_{X}^{(t)} }{}\nonumber\\
&\leq  \big(K\Lambda  L_{\ast}^{(0)} (1\vee \pnorm{\eta_{[0:t)}}{\infty}) \big)^{c_{\mathfrak{p}}} \cdot \Big(1+\max_{s \in [0:t-1]} n^{1/2}\pnorm{\mathrm{u}_\ast^{(s)}}{}\Big)^{c_{\mathfrak{p}}} \nonumber\\
&\qquad \times \Big\{\mathfrak{M}_{ \mu^{(\cdot)},\mathrm{u}_X^{(\cdot)} }^{(t-1)}(X)\cdot\max_{s \in [1:t]}  \pnorm{\Delta_X^{(s-1)}}{}+ \mathscr{M}_{ \mathrm{u}_X^{(\cdot)},\mathrm{u}_{\ast}^{(\cdot)} }^{(t-1),X}\cdot n^{-1/2}\Big\}.
\end{align}
Using the assumption on $\mathfrak{S}$, there exists some $\alpha_{\mathfrak{p}} \in (0,1)$ such that the following control of the $\textrm{sub-Weibull}(\alpha_{\mathfrak{p}})$ norms hold:
\begin{align*}
\max_{i \in [m], j \in [n]} \bigpnorm{X_{ij} \mathfrak{S}\big(\iprod{X_i}{\mathrm{u}_X^{(t)}},\iprod{X_i}{ \mu_\ast},\xi_i\big)}{\psi_{\alpha_{\mathfrak{p}} } }\leq \big(K\Lambda\cdot (1+\pnorm{\mathrm{u}_X^{(t)}}{}+\pnorm{\mu_\ast}{}\big)^{c_{\mathfrak{p}}}.
\end{align*}
Using generalized Bernstein's inequality for sub-Weilbull variables in the form of \cite[Theorem 3.1]{kuchibhotla2022moving}, for each $j \in [n]$, with $\Prob^{(0)}$-probability $1-2e^{-x}$, 
\begin{align}\label{ineq:grad_des_se_2}
\abs{m \Delta_{X;j}^{(t)}}\leq \big(K\Lambda L_\ast^{(0)} (1+\pnorm{\mathrm{u}_X^{(t)}}{})\big)^{c_{\mathfrak{p}}}\cdot \big(\sqrt{m x} + x^{1/\alpha_{\mathfrak{p}}} \big).
\end{align}
Consequently, combining (\ref{ineq:grad_des_se_1}) and (\ref{ineq:grad_des_se_2}) and using a union bound, on an event with $\Prob^{(0)}$-probability at least $1-cnt e^{-x}$,
\begin{align*}
\pnorm{\mu^{(t)}-\mathrm{u}_{\ast}^{(t)}}{}
&\leq  \big(K\Lambda  L_{\ast}^{(0)} (1\vee \pnorm{\eta_{[0:t)}}{\infty}) \big)^{c_{\mathfrak{p}}}\cdot \Big(1+\max_{s \in [1:t-1]} n^{1/2}\pnorm{\mathrm{u}_\ast^{(s)}}{\infty}\vee \pnorm{\mathrm{u}_X^{(s)}}{}\Big)^{c_{\mathfrak{p}}} \\
&\qquad \times \bigg\{\bigg(\frac{ x^{1/2} }{ \phi^{1/2}}+\frac{x^{1/\alpha_{\mathfrak{p}}} }{(m\phi)^{1/2}}\bigg)\cdot \mathfrak{M}_{ \mu^{(\cdot)},\mathrm{u}_X^{(\cdot)} }^{(t-1)}(X)+ \frac{1}{n^{1/2}}\cdot  \mathscr{M}_{ \mathrm{u}_X^{(\cdot)},\mathrm{u}_{\ast}^{(\cdot)} }^{(t-1),X} \bigg\}.
\end{align*}
The claim follows.
\end{proof}

\subsection{Proof of Theorem \ref{thm:grad_des_incoh}}\label{subsection:proof_grad_des_incoh}

\begin{lemma}\label{lem:grad_des_loo_diff}
	Suppose (A1) holds for some $K\geq 2$, and for some $\Lambda\geq 2,\mathfrak{p}\geq 1$,
	\begin{align*}
	\max_{i \in [m]}\abs{\mathfrak{S}(x_0,z_0,\xi_i)}\leq \Lambda\cdot (1+\abs{x_0}+\abs{z_0})^{\mathfrak{p}},\quad \forall x_0,z_0 \in \R.
	\end{align*}
	Then there exist some universal constant $c_0$ and another constant $c_{\mathfrak{p}}>1$ such that for any $i \in [m]$ and $x\geq 1$, it holds with $\Prob^{(0)}$-probability at least $1-c_0 e^{-x}$ that
	\begin{align*}
	\pnorm{\mu^{(t)}-\mu_{[-i]}^{(t)}}{}&\leq \big(K\Lambda L_\ast^{(0)} (1\vee \pnorm{\eta_{[0:t)}}{\infty})\cdot x \big)^{c_{\mathfrak{p}}}\cdot \Big(1+\max_{s \in [1:t-1]} \pnorm{\mu_{[-i]}^{(s)}}{}\Big) \\
	&\qquad\qquad \times (m\phi)^{-1/2}\cdot   \mathfrak{M}_{\mu^{(\cdot)}, \mu_{[-i]}^{(\cdot)} }^{(t-1)}(X).
	\end{align*}
\end{lemma}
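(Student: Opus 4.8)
The plan is to mirror the structure of Proposition~\ref{prop:diff_mu_u_X}, replacing the pair $(\mu^{(t)},\mathrm{u}_X^{(t)})$ there by $(\mu^{(t)},\mu_{[-i]}^{(t)})$ here. First I would derive an \emph{exact} one-step recursion. Because the $i$-th row of $X_{[-i]}$ is zero, the $i$-th summand of the gradient drops out of the update for $\mu_{[-i]}^{(t)}$, so that $\mu_{[-i]}^{(t)}=\mu_{[-i]}^{(t-1)}-\tfrac{\eta_{t-1}}{m}\sum_{j\neq i}X_j\,\mathfrak{S}(\iprod{X_j}{\mu_{[-i]}^{(t-1)}},\iprod{X_j}{\mu_\ast},\xi_j)$, whereas $\mu^{(t)}$ has the analogous expression with $\mu^{(t-1)}$ and the full sum over $j\in[m]$. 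Subtracting, I would expand the increment $\mathfrak{S}(\iprod{X_j}{\mu^{(t-1)}},\cdot)-\mathfrak{S}(\iprod{X_j}{\mu_{[-i]}^{(t-1)}},\cdot)$ for $j\neq i$ by the fundamental theorem of calculus as $\bigl(\int_0^1\partial_1\mathfrak{S}(\iprod{X_j}{U\mu^{(t-1)}+(1-U)\mu_{[-i]}^{(t-1)}},\cdot)\,\d{U}\bigr)\iprod{X_j}{\mu^{(t-1)}-\mu_{[-i]}^{(t-1)}}$, and also split the isolated $j=i$ term by the \emph{same} device, $X_i\mathfrak{S}(\iprod{X_i}{\mu^{(t-1)}},\cdot)=X_i\mathfrak{S}(\iprod{X_i}{\mu_{[-i]}^{(t-1)}},\cdot)+g_i\,X_iX_i^\top(\mu^{(t-1)}-\mu_{[-i]}^{(t-1)})$ with $g_i$ the corresponding integral. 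The point of also treating the $j=i$ term this way is that the resulting $X_iX_i^\top$ contribution recombines with the sum over $j\neq i$ to restore the \emph{full} average $\tfrac1m\sum_{j\in[m]}g_jX_jX_j^\top=M_{\mu^{(t-1)},\mu_{[-i]}^{(t-1)}}(X)$, i.e.\ exactly the matrix appearing in $\mathfrak{M}_{\mu^{(\cdot)},\mu_{[-i]}^{(\cdot)}}^{(t-1)}(X)$. This gives
\[
\mu^{(t)}-\mu_{[-i]}^{(t)}=\big(I_n-\eta_{t-1}M_{\mu^{(t-1)},\mu_{[-i]}^{(t-1)}}(X)\big)\big(\mu^{(t-1)}-\mu_{[-i]}^{(t-1)}\big)-\frac{\eta_{t-1}}{m}\,X_i\,\mathfrak{S}\big(\iprod{X_i}{\mu_{[-i]}^{(t-1)}},\iprod{X_i}{\mu_\ast},\xi_i\big),
\]
whose crucial feature is that the inhomogeneous term depends on $X_i$ \emph{only through} $\iprod{X_i}{\mu_{[-i]}^{(t-1)}}$ and $\iprod{X_i}{\mu_\ast}$, and $\mu_{[-i]}^{(t-1)}$ is independent of $X_i$.

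Next, since $\mu^{(0)}=\mu_{[-i]}^{(0)}$, I would iterate this identity, bound the partial products $\prod_{r=s}^{t-1}\pnorm{I_n-\eta_r M_{\mu^{(r)},\mu_{[-i]}^{(r)}}(X)}{\op}$, and recognize the resulting telescoped sum inside $\mathfrak{M}_{\mu^{(\cdot)},\mu_{[-i]}^{(\cdot)}}^{(t-1)}(X)$ (taking $\tau=t-1$). This yields the deterministic estimate
\[
\pnorm{\mu^{(t)}-\mu_{[-i]}^{(t)}}{}\le \pnorm{\eta_{[0:t)}}{\infty}\cdot \mathfrak{M}_{\mu^{(\cdot)},\mu_{[-i]}^{(\cdot)}}^{(t-1)}(X)\cdot \frac{\pnorm{X_i}{}}{m}\cdot \max_{s\in[1:t]}\bigabs{\mathfrak{S}\big(\iprod{X_i}{\mu_{[-i]}^{(s-1)}},\iprod{X_i}{\mu_\ast},\xi_i\big)},
\]
so the lemma reduces to a tail bound on $\tfrac1m\pnorm{X_i}{}\cdot\max_{s\in[1:t]}\bigabs{\mathfrak{S}(\iprod{X_i}{\mu_{[-i]}^{(s-1)}},\iprod{X_i}{\mu_\ast},\xi_i)}$.

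For the tail bound I would condition on $X_{[-i]}$, under which (given the conditioning already present in $\Prob^{(0)}$) the iterates $\mu_{[-i]}^{(s-1)}$ become deterministic vectors and $X_i$ is still fresh. Bernstein's inequality applied to $\pnorm{X_i}{}^2=\sum_j X_{ij}^2$, a sum of $n$ independent sub-exponential variables of mean $1$ under (A1), gives $\pnorm{X_i}{}\le c_0K\sqrt{nx}$ with $\Prob^{(0)}$-probability $\ge 1-2e^{-x}$, and crucially $\sqrt n/m=(m\phi)^{-1/2}$ since $m=n\phi$. Sub-gaussian concentration of the linear functionals $\iprod{X_i}{\mu_{[-i]}^{(s-1)}}$ and $\iprod{X_i}{\mu_\ast}$ (whose $\psi_2$-norms are at most $K\pnorm{\mu_{[-i]}^{(s-1)}}{}$ and $K\pnorm{\mu_\ast}{}\le KL_\ast^{(0)}$), together with a union bound over $s\in[1:t]$, yields $\max_{s\in[1:t]}\abs{\iprod{X_i}{\mu_{[-i]}^{(s-1)}}}\le c_0K\sqrt x\,(1+\max_{s\in[1:t-1]}\pnorm{\mu_{[-i]}^{(s)}}{})$ and $\abs{\iprod{X_i}{\mu_\ast}}\le c_0KL_\ast^{(0)}\sqrt x$ with $\Prob^{(0)}$-probability $\ge 1-c_0te^{-x}$. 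Feeding these into the growth bound $\abs{\mathfrak{S}(x_0,z_0,\xi_i)}\le\Lambda(1+\abs{x_0}+\abs{z_0})^{\mathfrak p}$ and combining with the two displays above gives the claim (with a first power of $1+\max_s\pnorm{\mu_{[-i]}^{(s)}}{}$ when $\mathfrak p=1$, and a $\mathfrak p$-th power absorbed into $c_{\mathfrak p}$ in general); the spurious factor $t$ in the probability is harmless, since it may be absorbed by enlarging $x$ to $x+\log t$, or bounded by $m^{100}$ in all the applications.

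The step I expect to be the main obstacle is the algebraic bookkeeping in the first paragraph: one must apply the ``add and subtract'' device to the isolated $j=i$ term as well as to the interaction sum, precisely so that the two Hessian-type pieces recombine into \emph{exactly} $M_{\mu^{(t-1)},\mu_{[-i]}^{(t-1)}}(X)$ — the same matrix that enters $\mathfrak{M}$ — while the leftover inhomogeneous term depends on $X_i$ only through $\iprod{X_i}{\mu_{[-i]}^{(s-1)}}$. It is exactly this independence that allows one to replace the lossy Cauchy--Schwarz bound $\abs{\iprod{X_i}{\mu_{[-i]}^{(s-1)}}}\lesssim\pnorm{X_i}{}\,\pnorm{\mu_{[-i]}^{(s-1)}}{}\asymp\sqrt n\,\pnorm{\mu_{[-i]}^{(s-1)}}{}$, which would only produce the rate $\phi^{-1}$, by the sharp sub-gaussian bound of order $\sqrt x\,\pnorm{\mu_{[-i]}^{(s-1)}}{}$, thereby yielding the target rate $(m\phi)^{-1/2}$.
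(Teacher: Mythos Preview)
Your proposal is correct and follows essentially the same route as the paper's proof: derive the one-step recursion $\mu^{(t)}-\mu_{[-i]}^{(t)}=\big(I_n-\eta_{t-1}M_{\mu^{(t-1)},\mu_{[-i]}^{(t-1)}}(X)\big)\big(\mu^{(t-1)}-\mu_{[-i]}^{(t-1)}\big)-\tfrac{\eta_{t-1}}{m}X_i\,\mathfrak{S}\big(\iprod{X_i}{\mu_{[-i]}^{(t-1)}},\iprod{X_i}{\mu_\ast},\xi_i\big)$, iterate it from the zero initial condition, and then use sub-gaussian bounds on $\pnorm{X_i}{}$ and the linear forms $\iprod{X_i}{\mu_{[-i]}^{(s)}},\iprod{X_i}{\mu_\ast}$ (exploiting independence of $X_i$ from $\mu_{[-i]}^{(\cdot)}$). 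Your explicit add-and-subtract treatment of the $j=i$ term, so that the full $M_{\mu^{(t-1)},\mu_{[-i]}^{(t-1)}}(X)$ is recovered while the inhomogeneity is evaluated at $\mu_{[-i]}^{(t-1)}$, is exactly the content of the paper's equations~(\ref{ineq:grad_des_loo_diff_0})--(\ref{ineq:grad_des_loo_diff_1_0}); you have simply spelled the algebra out more carefully.
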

\begin{proof}
	Note that
	\begin{align}\label{def:grad_des_loo}
	\mu_{[-i]}^{(t)}&=\mu_{[-i]}^{(t-1)}-\frac{\eta_{t-1}}{m}\cdot X_{[-i]}^\top \mathfrak{S}\big(X_{[-i]}\mu_{[-i]}^{(t-1)},X_{[-i]}\mu_\ast,\xi\big),
	\end{align}
	Let $\Delta \mu_{[-i]}^{(t)}\equiv \mu^{(t)}-\mu_{[-i]}^{(t)}$. Comparing (\ref{def:grad_des}) and the above display  (\ref{def:grad_des_loo}), with
	\begin{align}\label{ineq:grad_des_loo_diff_0}
	\Xi_{[-i]}^{(t)}&\equiv X^\top \mathfrak{S}\big(X\mu_{[-i]}^{(t)},X\mu_\ast,\xi\big)- X_{[-i]}^\top \mathfrak{S}\big(X_{[-i]}\mu_{[-i]}^{(t)},X_{[-i]}\mu_\ast,\xi\big)\in \R^n,
	\end{align}
	we have
	\begin{align}\label{ineq:grad_des_loo_diff_1}
	\Delta \mu_{[-i]}^{(t)}& = \big(I_n-\eta_{t-1} M_{\mu^{(t-1)},\mu_{[-i]}^{(t-1)} }(X)\big)\Delta \mu_{[-i]}^{(t-1)}-\frac{\eta_{t-1}}{m}\cdot \Xi_{[-i]}^{(t-1)}.
	\end{align}
	Note that we may control $\Xi_{[-i]}^{(t-1)}$ as 
	\begin{align}\label{ineq:grad_des_loo_diff_1_0}
	\pnorm{\Xi_{[-i]}^{(t-1)}}{}& \leq \pnorm{X_i}{}\cdot \abs{  \mathfrak{S}\big(\iprod{X_i}{\mu_{[-i]}^{(t-1)}},\iprod{X_i}{\mu_\ast},\xi_i\big)}\nonumber\\
	&\leq \Lambda \pnorm{X_i}{}\cdot\big(1+\abs{\bigiprod{X_i}{ \mu_{[-i]}^{(t-1)}}}+ \abs{\iprod{X_i}{\mu_\ast}}\big)^{c_{\mathfrak{p}}},
	\end{align}
	so by (\ref{ineq:grad_des_loo_diff_1}) we have 
	\begin{align*}
	\pnorm{\Delta \mu_{[-i]}^{(t)}}{}&\leq \bigpnorm{I_n-\eta_{t-1} M_{\mu^{(t-1)},\mu_{[-i]}^{(t-1)} }(X)}{\op}\cdot \pnorm{\Delta \mu_{[-i]}^{(t-1)}}{}\\
	&\qquad + \Lambda \abs{\eta_{t-1}}\cdot  \big(m^{-1}\pnorm{X_i}{}\big)\cdot\big(1+\abs{\bigiprod{X_i}{ \mu_{[-i]}^{(t-1)}}}+ \abs{\iprod{X_i}{\mu_\ast}}\big)^{c_{\mathfrak{p}}}.
	\end{align*}
	Iterating the above display until $t=1$ and using the initial condition $\Delta \mu_{[-i]}^{(0)}=0$, 
	\begin{align*}
	\pnorm{\Delta \mu_{[-i]}^{(t)}}{}&\leq \Lambda \pnorm{\eta_{[0:t)}}{\infty}\cdot \big(m^{-1}\pnorm{X_i}{}\big)\cdot  \mathfrak{M}_{\mu^{(\cdot)}, \mu_{[-i]}^{(\cdot)} }^{(t-1)}(X)\\
	&\qquad\qquad \times  \max_{s \in [0:t-1]} \big(1+\abs{\bigiprod{X_i}{ \mu_{[-i]}^{(s)} } }+ \abs{\iprod{X_i}{\mu_\ast}}\big)^{c_{\mathfrak{p}}}.
	\end{align*}
	As $X_i$ is independent of $\mu_{[-i]}^{(\cdot)}$, standard sub-gaussian estimates for both the linear form $\iprod{X_i}{\cdot}$ and $\pnorm{X_i}{}$ yield that, for $x\geq 1$, with $\Prob^{(0)}$-probability at least $1-2e^{-x}$, 
	\begin{align*}
	\pnorm{\Delta \mu_{[-i]}^{(t)}}{}&\leq \frac{(K\Lambda \pnorm{\eta_{[0:t)}}{\infty}\cdot x)^{c_{\mathfrak{p}}} }{(m\phi)^{1/2}}\cdot \max_{s \in [0:t-1]}\big(1+\pnorm{\mu_{[-i]}^{(s)}}{}+\pnorm{\mu_\ast}{}\big)^{c_{\mathfrak{p}}}\cdot \mathfrak{M}_{\mu^{(\cdot)}, \mu_{[-i]}^{(\cdot)} }^{(t-1)}(X).
	\end{align*}
	The claim follows.
\end{proof}

\begin{proof}[Proof of Theorem \ref{thm:grad_des_incoh}]
	Fix $i \in [m]$. Note that
	\begin{align*}
	\abs{\bigiprod{X_i}{\mu^{(t)}}}\leq \abs{\bigiprod{X_i}{\mu_{[-i]}^{(t)} }}+\pnorm{X_i}{}\cdot \pnorm{\mu^{(t)}-\mu_{[-i]}^{(t)}}{}.
	\end{align*}
	As $X_i$ is independent of $\mu_{[-i]}^{(t)}$, we may use a standard sub-gaussian concentration for both the linear form $\iprod{X_i}{\cdot}$ in the first term and $\pnorm{X_i}{}$ in the second term to conclude that, with $\Prob^{(0)}$-probability at least $1-ce^{-x}$,
	\begin{align*}
	\abs{\bigiprod{X_i}{\mu^{(t)}}}\leq cK\sqrt{x}\cdot \big(\pnorm{\mu_{[-i]}^{(t)}}{}+\sqrt{n+x}\cdot \pnorm{\mu^{(t)}-\mu_{[-i]}^{(t)}}{}\big).
	\end{align*}
	Now using Lemma \ref{lem:grad_des_loo_diff}, with $\Prob^{(0)}$-probability at least $1-c e^{-x}$,
	\begin{align*}
	\abs{\bigiprod{X_i}{\mu^{(t)}}} &\leq\big(K\Lambda L_\ast^{(0)} (1\vee \pnorm{\eta_{[0:t)}}{\infty}) \cdot x\big)^{c_{\mathfrak{p}}} \\
	&\qquad\qquad \times  \Big(1+\max_{s \in [1:t]} \pnorm{\mu_{[-i]}^{(s)}}{}\Big)  \cdot \Big(1+\phi^{-1} \mathfrak{M}_{\mu^{(\cdot)}, \mu_{[-i]}^{(\cdot)} }^{(t-1)}(X)\Big).
	\end{align*}
	The claim now follows by a further union bound over $i \in [m]$.
\end{proof}

\subsection{Proof of Corollary \ref{cor:general_se}}\label{subsection:proof_general_se}
	Using a trivial bound, we have $
	\mathscr{M}_{\cdot,\cdot}^{(t),X} \leq \Lambda^{ct}$ and $ \mathfrak{M}_{\cdot,\cdot}^{(t)}(X)\leq (\Lambda \hat{\Sigma})^{ct}$, where $\hat{\Sigma}\equiv m^{-1}\sum_{i \in [m]} X_iX_i^\top$ is the sample covariance, and $c>0$ is a universal constant whose numeric value may vary from line to line. By \cite[Theorem 4.7.1 \& Exercise 4.7.3]{vershynin2018high}, with probability at least $1-e^{-m}$, we have $
	 \mathscr{M}_{\cdot,\cdot}^{(t),X}  \vee \mathfrak{M}_{\cdot,\cdot}^{(t)}(X)\leq \big(1+ \pnorm{\eta_{[0:t)}}{\infty}\Lambda\big)^{ct}$. 
	
	\noindent (1). We shall now provide a bound for $\pnorm{\mathrm{u}_X^{(\cdot)}}{}$. Using (\ref{def:theo_grad_des}),
	\begin{align}\label{ineq:cor_general_se_1_0}
	\mathrm{u}_X^{(t)} 
	& = \Big(I_n - \eta_{t-1}\cdot \mathbb{M}_{\mathrm{u}_X^{(t-1)},\mu_\ast}^X \Big)\, \mathrm{u}_X^{(t-1)} - \frac{\eta_{t-1}}{m}\cdot \E^{(0)}  X^\top \mathfrak{S}\big(X\mu_\ast,X \mu_\ast,\xi\big).
	\end{align}
	Then using the Lipschitz condition on $\mathfrak{S}(\cdot,\cdot,\xi)$, 
	\begin{align*}
	\pnorm{\mathrm{u}_X^{(t)} }{}&\leq  \big(1+ \pnorm{\eta_{[0:t)}}{\infty}\Lambda\big)\cdot \pnorm{\mathrm{u}_X^{(t-1)} }{}+(K \Lambda L_\ast^{(0)} )^{c}.
	\end{align*}
	Iterating the above estimate leads to 
	\begin{align}\label{ineq:cor_general_se_1}
	\pnorm{\mathrm{u}_X^{(t)} }{}&\leq \big(K \Lambda L_\ast^{(0)} (1+ \pnorm{\eta_{[0:t)}}{\infty})\big)^{c t}.
	\end{align}	
	\noindent (2). We shall now provide an estimate for $\pnorm{\mathrm{u}_{\ast}^{(t)}}{\infty}$. Let
	\begin{align*}
	\mathscr{M}_{\tau_\ast^{(\cdot)}}^{(t)}\equiv 1+ \max_{\tau \in [0:t]}\sum_{s \in [0:\tau]} \bigg(\prod_{r \in [s:\tau]}\abs{1-\eta_r\cdot \tau_\ast^{(r)}}\bigg).
	\end{align*}
	It is easy to see $\mathscr{M}_{\tau_\ast^{(\cdot)}}^{(t)}\leq \Lambda^{ct}$. Iterating the recursion defined in (\ref{def:u_ast}),
	\begin{align}\label{ineq:cor_general_se_2_0}
	\pnorm{\mathrm{u}_{\ast}^{(t)}}{\infty}&\leq c \cdot  L_\ast^{(0)} (1+ \pnorm{\eta_{[0:t)}}{\infty}) \cdot \Big(1+\max_{s \in [0:t-1]} \abs{\delta_\ast^{(s)} }\Big)\cdot n^{-1/2} \mathscr{M}_{\tau_\ast^{(\cdot)}}^{(t-1)}.
	\end{align}
	Using the trivial estimate for $\abs{\delta_\ast^{(\cdot)} }\leq \Lambda$, we then arrive at 
	\begin{align}\label{ineq:cor_general_se_2}
	n^{1/2}\pnorm{\mathrm{u}_\ast^{(t)} }{\infty}&\leq \big(K \Lambda L_\ast^{(0)} (1+ \pnorm{\eta_{[0:t)}}{\infty})\big)^{c t}.
	\end{align}
	
	\noindent (3). We shall now provide a bound for $\pnorm{\mu_{[-i]}^{(\cdot)}}{}$. Using (\ref{def:grad_des_loo}),
	\begin{align*}
	\mu_{[-i]}^{(t)}&=\big(I_n-\eta_{t-1} M_{\mu_{[-i]}^{(t-1)},\mu_\ast}(X_{[-i]})\big)\mu_{[-i]}^{(t-1)} -\frac{\eta_{t-1}}{m}\cdot X_{[-i]}^\top \mathfrak{S}\big(X_{[-i]}\mu_\ast,X_{[-i]}\mu_\ast,\xi\big).
	\end{align*}
	Then using the Lipschitz condition on $\mathfrak{S}(\cdot,\cdot,\xi)$,
	\begin{align*}
	\pnorm{\mu_{[-i]}^{(t)}}{}&\leq \big(1+ \pnorm{\eta_{[0:t)}}{\infty}\Lambda\big)\cdot \pnorm{\mu_{[-i]}^{(t-1)}}{}\\
	&\qquad\qquad  +\frac{ c \Lambda \abs{\eta_{t-1}}}{m}\cdot \pnorm{X_{[-i]}}{\op}\cdot \bigg\{\sum_{k\neq i} \big(1+\abs{\iprod{X_k}{\mu_\ast}}\big)^{2}\bigg\}^{1/2}.
	\end{align*}
	Iterating the above display and using standard sub-gaussian estimates for $\pnorm{X_{[-i]}}{\op}$ and $\{\iprod{X_k}{\mu_\ast}:k\neq i\}$, for $x\geq 1$, with $\Prob^{(0)}$-probability at least $1-2me^{-x}$, uniform in $t\geq 1$, 
	\begin{align}\label{ineq:cor_general_se_3}
	\pnorm{\mu_{[-i]}^{(t)}}{}&\leq \big(K \Lambda L_\ast^{(0)} (1+ \pnorm{\eta_{[0:t)}}{\infty})\cdot x\big)^{c t}.
	\end{align}
	Now we may apply Theorems \ref{thm:grad_des_se} and \ref{thm:grad_des_incoh} along with the estimates in (\ref{ineq:cor_general_se_1}), (\ref{ineq:cor_general_se_2}) and (\ref{ineq:cor_general_se_3}) to conclude. \qed

\section{Proofs for Section \ref{section:long_time_dynamics}}

\subsection{Proof of Proposition \ref{prop:nonconvex_generic_conv_long_contraction}}\label{subsection:proof_nonconvex_generic_conv_long_contraction}

	\noindent (1). Under (N1), the proof follows immediately by noting (\ref{eqn:dynamics_gaussian_N1}) and invoking the condition (\ref{cond:nonconvex_generic_conv_long_lowerbdd}) in (N2).
	
	\noindent (2). Note that by Gaussian integration-by-parts applied twice, with $\mathsf{Z}_n\sim \mathcal{N}(0,I_n)$ and $s_{ab}\equiv  \E^{(0)}   \partial_{1ab}\mathfrak{S}\big(\bigiprod{\mathsf{Z}_n}{\mu_\ast }, \iprod{\mathsf{Z}_n}{\mu_\ast},\xi_{\pi_m}\big)$, for $a,b \in [2]$,
	\begin{align*}
	\mathbb{M}_{\mu_\ast,\mu_\ast }^{\mathsf{Z}}&= \E^{(0)}   \partial_1\mathfrak{S}\big(\bigiprod{\mathsf{Z}_n}{\mu_\ast }, \iprod{\mathsf{Z}_n}{\mu_\ast},\xi_{\pi_m}\big) \mathsf{Z}_n\mathsf{Z}_n^\top\\
	&=\E^{(0)}   \partial_1\mathfrak{S}\big(\bigiprod{\mathsf{Z}_n}{\mu_\ast }, \iprod{\mathsf{Z}_n}{\mu_\ast},\xi_{\pi_m}\big)\cdot I_n+ \bigg(\sum_{a,b\in [2]} s_{ab}\bigg) \,\mu_\ast \mu_\ast^\top.
	\end{align*}
	This means that, with  $\mathsf{Z}_1\sim \mathcal{N}(0,1)$, $\mathbb{M}_{\mu_\ast,\mu_\ast }^{\mathsf{Z}}$ has two distinct eigenvalues given by $\E^{(0)}   \partial_1\mathfrak{S}\big(\pnorm{\mu_\ast}{}\mathsf{Z}_1, \pnorm{\mu_\ast}{}\mathsf{Z}_1,\xi_{\pi_m}\big)$ and $\E^{(0)}   \mathsf{Z}_1^2 \partial_1\mathfrak{S}\big(\pnorm{\mu_\ast}{}\mathsf{Z}_1, \pnorm{\mu_\ast}{}\mathsf{Z}_1,\xi_{\pi_m}\big)$, i.e., $\lambda_{\min}\big(\mathbb{M}_{\mu_\ast,\mu_\ast }^{\mathsf{Z}}\big)=\varrho_\ast$.
	
	On the other hand, under the assumption, we have $\pnorm{\mathrm{u}_\ast^{(t)}}{}\leq c \mathcal{D}_n L_\ast^{(0)}$, so for some $c_2=c_2(\mathfrak{p},c_0)>0$,
	\begin{align}\label{ineq:nonconvex_generic_conv_long_contraction_1}
	&\bigpnorm{\mathbb{M}_{\mathrm{u}_\ast^{(t)},\mu_\ast }^{\mathsf{Z}}- \mathbb{M}_{\mu_\ast,\mu_\ast }^{\mathsf{Z}}}{\op}\nonumber\\
	&\leq  \Lambda^{c_{\mathfrak{p}}}\cdot \bigpnorm{\E^{(0)} \big(1+\iprod{\mathsf{Z}_n}{\mathrm{u}_\ast^{(t)} }+\iprod{\mathsf{Z}_n}{\mu_\ast }\big)^{\mathfrak{p}-1 }\abs{ \iprod{\mathsf{Z}_n}{\mathrm{u}_\ast^{(t)}-\mu_\ast } }\mathsf{Z}_n\mathsf{Z}_n^\top}{\op}\nonumber\\
	&\leq \big(\Lambda (1+\pnorm{\mathrm{u}_\ast^{(t)}}{}+\pnorm{\mu_\ast}{})\cdot \log n\big)^{c_{\mathfrak{p}} }\cdot\big( \pnorm{  \mathrm{u}_\ast^{(t)}-\mu_\ast }{}+n^{-200 c_{\mathfrak{p}}}\big)\nonumber\\
	&\leq  \big(c\Lambda L_\ast^{(0)} \mathcal{D}_n \log n \big)^{c_{\mathfrak{p}}}\cdot  \big( \pnorm{  \mathrm{u}_\ast^{(t)}-\mu_\ast }{}+n^{-200 c_{\mathfrak{p}}}\big)\nonumber\\
	&\leq  \big(c\Lambda L_\ast^{(0)} \mathcal{D}_n \log n \big)^{c_{\mathfrak{p}}}\cdot \big[ (1-\epsilon_0)^{(t-\tau_n)_+}+n^{-200 c_{\mathfrak{p}}}\big]\nonumber\\
	&\leq (\log n)^{c_2}\cdot (\mathcal{D}_n)^{c_{\mathfrak{p}}}\cdot  (1-\epsilon_0)^{(t-\tau_n)_+}+(\log n)^{-1}.
	\end{align}
    Now for some large enough $c_1=c_1(c_2,\epsilon_0)=c_1(\mathfrak{p},c_0,\epsilon_0)>0$, if $t\geq \tau_n+c_1(\log \log n+\log \mathcal{D}_n)$ and $n\geq n_0(\mathfrak{p},c_0,\epsilon_0,\varrho_\ast)$,
	\begin{align*}
	\bigpnorm{\mathbb{M}_{\mathrm{u}_\ast^{(t)},\mu_\ast }^{\mathsf{Z}}- \mathbb{M}_{\mu_\ast,\mu_\ast }^{\mathsf{Z}}}{\op}\leq 2(\log n)^{-1}\leq \varrho_\ast/2\,\Rightarrow\,  \lambda_{\min}\Big(\mathbb{M}_{\mathrm{u}_\ast^{(t)},\mu_\ast }^{\mathsf{Z}}\Big)\geq \varrho_\ast/2,
	\end{align*}
	proving the claim. \qed

\subsection{Preparatory lemmas}\label{subsection:proof_nonconvex_generic_conv_long_lemma}

We need a few auxiliary results tailored to the assumptions in Theorem \ref{thm:nonconvex_generic_conv_long} before proving Propositions \ref{prop:nonconvex_generic_conv_long_Mscr} and \ref{prop:nonconvex_generic_conv_long_Mcal}.

\begin{lemma}\label{lem:nonconvex_generic_conv_long_tau}
	Suppose (N1)-(N2) in Theorem \ref{thm:nonconvex_generic_conv_long} hold and $n\geq 3$. Then $\mathbb{M}_{ \mathrm{u}_{\ast}^{(t)},\mu_\ast }^{\mathsf{Z}}$ must have $n-2$ eigenvalues equal to $\tau_\ast^{(t)}$ and therefore $
	\inf_{t \geq t_n} \tau_\ast^{(t)}\geq \sigma_\ast$.
\end{lemma}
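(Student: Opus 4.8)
The plan is to exploit the fact that the state-evolution iterate $\mathrm{u}_\ast^{(t)}$ is, by iterating the recursion (\ref{def:u_ast}), a deterministic linear combination of $\mu^{(0)}$ and $\mu_\ast$; hence the subspace $V\equiv \mathrm{span}\{\mathrm{u}_\ast^{(t)},\mu_\ast\}\subseteq\R^n$ satisfies $\dim V\leq 2$. For every $U\in[0,1]$, the two arguments $\iprod{\mathsf{Z}_n}{U\mathrm{u}_\ast^{(t)}+(1-U)\mu_\ast}$ and $\iprod{\mathsf{Z}_n}{\mu_\ast}$ inside the defining expectation of $\mathbb{M}_{\mathrm{u}_\ast^{(t)},\mu_\ast}^{\mathsf{Z}}$ are then measurable with respect to $\mathsf{P}_V\mathsf{Z}_n$, which, for $\mathsf{Z}_n\sim\mathcal{N}(0,I_n)$, is independent of $\mathsf{P}_{V^\perp}\mathsf{Z}_n$.

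First I would record, for orthonormal vectors $w,w'\in V^\perp$, that
\begin{align*}
w^\top\,\mathbb{M}_{\mathrm{u}_\ast^{(t)},\mu_\ast}^{\mathsf{Z}}\,w' = \E^{(0)}\big[\partial_1\mathfrak{S}(\cdots)\,\iprod{\mathsf{Z}_n}{w}\iprod{\mathsf{Z}_n}{w'}\big] = \big(\E^{(0)}\partial_1\mathfrak{S}(\cdots)\big)\cdot\bm{1}_{\{w=w'\}},
\end{align*}
and similarly $w^\top\mathbb{M}_{\mathrm{u}_\ast^{(t)},\mu_\ast}^{\mathsf{Z}}v=0$ for $v\in V$, using the stated independence together with $\E\,\iprod{\mathsf{Z}_n}{w}\iprod{\mathsf{Z}_n}{w'}=\bm{1}_{\{w=w'\}}$. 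This shows $\mathbb{M}_{\mathrm{u}_\ast^{(t)},\mu_\ast}^{\mathsf{Z}}$ is block-diagonal with respect to $\R^n=V\oplus V^\perp$ and acts on $V^\perp$ (a subspace of dimension $\geq n-2$) as a scalar multiple of the identity. The second step is to identify that scalar with the state-evolution parameter $\tau_\ast^{(t)}$ of (\ref{def:tau_delta}); here the stationarity hypothesis (N1) enters, through the deterministic recursion (\ref{eqn:dynamics_gaussian_N1}), $\mathrm{u}_\ast^{(t+1)}-\mu_\ast=(I_n-\eta_t\mathbb{M}_{\mathrm{u}_\ast^{(t)},\mu_\ast}^{\mathsf{Z}})(\mathrm{u}_\ast^{(t)}-\mu_\ast)$, which is itself a consequence of (N1): matching this against the scalar update (\ref{def:u_ast}) pins down the coefficient of $\mathbb{M}_{\mathrm{u}_\ast^{(t)},\mu_\ast}^{\mathsf{Z}}$ on $V^\perp$. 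Finally, since $n\geq 3$ forces $n-2\geq 1$, the value $\tau_\ast^{(t)}$ is genuinely among the eigenvalues of $\mathbb{M}_{\mathrm{u}_\ast^{(t)},\mu_\ast}^{\mathsf{Z}}$, whence $\tau_\ast^{(t)}\geq\lambda_{\min}\big(\mathbb{M}_{\mathrm{u}_\ast^{(t)},\mu_\ast}^{\mathsf{Z}}\big)$; invoking (N2) at each $t\geq t_n$ then yields $\inf_{t\geq t_n}\tau_\ast^{(t)}\geq\sigma_\ast$, as claimed.

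The main obstacle I anticipate is the identification step: the block-diagonal reduction and the concluding comparison with (N2) are routine linear algebra and Gaussian integration by parts, but extracting the exact value $\tau_\ast^{(t)}$ of the common eigenvalue on $V^\perp$ is where the argument has content and where (N1) is used in an essential way.
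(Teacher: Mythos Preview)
Your block-diagonal reduction is correct and matches the paper's route: the paper applies Gaussian integration by parts twice to write $\mathbb{M}_{\mathrm{u}_\ast^{(t)},\mu_\ast}^{\mathsf{Z}}=c(t)\,I_n+\mathbb{L}$ with $\mathbb{L}$ a symmetric matrix of rank at most $2$ and range contained in $V=\mathrm{span}\{\mathrm{u}_\ast^{(t)},\mu_\ast\}$. This is exactly the decomposition your independence argument produces, and the paper does not invoke (N1) at this stage.

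The gap is in your identification step. Both sides of the recursion (\ref{eqn:dynamics_gaussian_N1}) and of the scalar update (\ref{def:u_ast}) are vectors in $V$: matching them only constrains the restriction $\mathbb{M}|_V=c(t)I_V+\mathbb{L}|_V$, and you cannot separate $c(t)$ from the unknown low-rank piece $\mathbb{L}|_V$ without additional information. In particular, nothing in that matching touches the action on $V^\perp$, which is precisely where the scalar $c(t)$ sits as the repeated eigenvalue. So (N1) and the recursion comparison do not pin down $c(t)$; the paper instead reads the coefficient of $I_n$ directly from the integration-by-parts identity, namely
\[
c(t)=\E^{(0)}\partial_1\mathfrak{S}\big(\langle\mathsf{Z}_n,\,U\mathrm{u}_\ast^{(t)}+(1-U)\mu_\ast\rangle,\ \langle\mathsf{Z}_n,\mu_\ast\rangle,\ \xi_{\pi_m}\big),
\]
and asserts this equals $\tau_\ast^{(t)}$. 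Note the $U$-average in the first argument: the paper is terse here, and you should check carefully whether this quantity actually coincides with the definition (\ref{def:tau_delta}) of $\tau_\ast^{(t)}$, which has no such average.
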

\begin{proof}
	By Gaussian integration-by-parts applied twice, 
	\begin{align*}
	\mathbb{M}_{ \mathrm{u}_{\ast}^{(t)},\mu_\ast }^{\mathsf{Z}}&= \E^{(0)}   \partial_1\mathfrak{S}\big(\bigiprod{\mathsf{Z}_n}{U \mathrm{u}_{\ast}^{(t)}+(1-U) \mu_\ast }, \iprod{\mathsf{Z}_n}{\mu_\ast},\xi_{\pi_m}\big) \mathsf{Z}_n\mathsf{Z}_n^\top\\
	&=\tau_\ast^{(t)} I_n+s_{11}\mathrm{u}_{\ast}^{(t)}\mathrm{u}_{\ast}^{(t),\top}+ s_{12}\mathrm{u}_{\ast}^{(t)}\mu_\ast^\top+ s_{21} \mu_\ast \mathrm{u}_{\ast}^{(t),\top}+ s_{22}\mu_\ast\mu_\ast^\top\\
	&\equiv \tau_\ast^{(t)} I_n+ \mathbb{L}_{ \mathrm{u}_{\ast}^{(t)},\mu_\ast }.
	\end{align*}
	where $\{s_{ab}\}_{a,b \in [2]}\subset \R$ and $\mathbb{L}_{ \mathrm{u}_{\ast}^{(t)},\mu_\ast }\in \R^{n\times n}$ is a symmetric matrix of at most rank $2$. Consequently, for $n\geq 3$, $\mathbb{M}_{ \mathrm{u}_{\ast}^{(t)},\mu_\ast }^{\mathsf{Z}}$ must have $n-2$ eigenvalues equal to $\tau_\ast^{(t)}$, which by (N2), must be bounded from below by $\sigma_\ast$ for $t \geq t_n$.
\end{proof}

\begin{lemma}\label{lem:nonconvex_generic_conv_long_min_eig}
Suppose (A2) holds for some $\Lambda\geq 2$ and $\mathfrak{p}\geq 1$. Suppose (N1)-(N2) in Theorem \ref{thm:nonconvex_generic_conv_long} hold, and there exists a constant $c_0>0$ such that 
\begin{align*}
\Lambda L_\ast^{(0)}\vee \log m \vee \Big\{ \sup_{t\geq 1} \pnorm{\mathrm{u}_\ast^{(t)}}{} \Big\} \vee \mathcal{B}_0(t_n)\leq (\log n)^{c_0}.
\end{align*}
Fix step sizes $\{\eta_r\}$ such that (\ref{cond:nonconvex_generic_conv_long_eta}) holds with  $\eta_\ast \in (0,1)$.  Then there exists some $c_1=c_1(\mathfrak{p},c_0,\eta_\ast,\sigma_\ast)>0$ such that for $t_n'\equiv t_n+c_1 \log \log n$, we have $\inf_{t \geq t_n'} \lambda_{\min}\big(\mathbb{M}_{ \mathrm{u}_{\ast}^{(t)},\mathrm{u}_{\ast}^{(t)} }^{\mathsf{Z}}\big)\geq 3\sigma_\ast/4$ whenever $n\geq n_0(\mathfrak{p},c_0)$.
\end{lemma}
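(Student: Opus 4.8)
The plan is to compare $\mathbb{M}_{\mathrm{u}_\ast^{(t)},\mathrm{u}_\ast^{(t)}}^{\mathsf{Z}}$ with $\mathbb{M}_{\mathrm{u}_\ast^{(t)},\mu_\ast}^{\mathsf{Z}}$: condition (N2) already provides the uniform lower bound $\lambda_{\min}(\mathbb{M}_{\mathrm{u}_\ast^{(t)},\mu_\ast}^{\mathsf{Z}})\geq\sigma_\ast$ for $t\geq t_n$, so it suffices to show that the operator-norm gap between the two matrices decays geometrically in $t$ and falls below $\sigma_\ast/4$ after a further $c_1\log\log n$ iterations, after which Weyl's inequality closes the argument.

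First I would compute the matrix difference. Since $U\mathrm{u}_\ast^{(t)}+(1-U)\mathrm{u}_\ast^{(t)}=\mathrm{u}_\ast^{(t)}$ whereas $\mathrm{u}_\ast^{(t)}-\big(U\mathrm{u}_\ast^{(t)}+(1-U)\mu_\ast\big)=(1-U)\big(\mathrm{u}_\ast^{(t)}-\mu_\ast\big)$, Definition~\ref{def:G_M} gives
\begin{align*}
\mathbb{M}_{\mathrm{u}_\ast^{(t)},\mathrm{u}_\ast^{(t)}}^{\mathsf{Z}}-\mathbb{M}_{\mathrm{u}_\ast^{(t)},\mu_\ast}^{\mathsf{Z}}
&=\E^{(0)}\Big[\partial_1\mathfrak{S}\big(\iprod{\mathsf{Z}_n}{\mathrm{u}_\ast^{(t)}},\iprod{\mathsf{Z}_n}{\mu_\ast},\xi_{\pi_m}\big)\\
&\qquad\quad -\partial_1\mathfrak{S}\big(\iprod{\mathsf{Z}_n}{U\mathrm{u}_\ast^{(t)}+(1-U)\mu_\ast},\iprod{\mathsf{Z}_n}{\mu_\ast},\xi_{\pi_m}\big)\Big]\mathsf{Z}_n\mathsf{Z}_n^\top.
\end{align*}
By (A2) the map $(x,z)\mapsto\partial_1\mathfrak{S}(x,z,\xi_i)$ is $\Lambda$-pseudo-Lipschitz of order $\mathfrak{p}$, so — changing only the first argument — the bracket is dominated in absolute value by $\Lambda\big(1+|\iprod{\mathsf{Z}_n}{\mathrm{u}_\ast^{(t)}}|+|\iprod{\mathsf{Z}_n}{\mu_\ast}|\big)^{\mathfrak{p}-1}(1-U)\,|\iprod{\mathsf{Z}_n}{\mathrm{u}_\ast^{(t)}-\mu_\ast}|$. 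Bounding the operator norm of $\E^{(0)} g(\mathsf{Z}_n)\mathsf{Z}_n\mathsf{Z}_n^\top$ for this nonnegative $g$ exactly as in the derivation of \eqref{ineq:nonconvex_generic_conv_long_contraction_1} in the proof of Proposition~\ref{prop:nonconvex_generic_conv_long_contraction}, and invoking the hypothesis $\sup_{t\geq1}\pnorm{\mathrm{u}_\ast^{(t)}}{}\vee\pnorm{\mu_\ast}{}\leq(\log n)^{c_0}$, I would obtain, for some $c_2=c_2(\mathfrak{p},c_0)>0$,
\begin{align*}
\bigpnorm{\mathbb{M}_{\mathrm{u}_\ast^{(t)},\mathrm{u}_\ast^{(t)}}^{\mathsf{Z}}-\mathbb{M}_{\mathrm{u}_\ast^{(t)},\mu_\ast}^{\mathsf{Z}}}{\op}\leq(\log n)^{c_2}\big(\pnorm{\mathrm{u}_\ast^{(t)}-\mu_\ast}{}+n^{-200c_{\mathfrak{p}}}\big).
\end{align*}

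Next I would feed in the contraction estimate. Since (N2) holds and the step sizes satisfy \eqref{cond:nonconvex_generic_conv_long_eta}, Proposition~\ref{prop:nonconvex_generic_conv_long_contraction}-(1) gives $\pnorm{\mathrm{u}_\ast^{(t)}-\mu_\ast}{}\leq\mathcal{B}_0(t_n)(1-\eta_\ast\sigma_\ast)^{(t-t_n)_+}\pnorm{\mu^{(0)}-\mu_\ast}{}$; together with $\mathcal{B}_0(t_n)\leq(\log n)^{c_0}$ and $\pnorm{\mu^{(0)}-\mu_\ast}{}\leq n^{1/2}\big(\pnorm{\mu^{(0)}}{\infty}+\pnorm{\mu_\ast}{\infty}\big)\leq L_\ast^{(0)}\leq(\log n)^{c_0}$, the previous display becomes
\begin{align*}
\bigpnorm{\mathbb{M}_{\mathrm{u}_\ast^{(t)},\mathrm{u}_\ast^{(t)}}^{\mathsf{Z}}-\mathbb{M}_{\mathrm{u}_\ast^{(t)},\mu_\ast}^{\mathsf{Z}}}{\op}\leq(\log n)^{c_3}\big((1-\eta_\ast\sigma_\ast)^{(t-t_n)_+}+n^{-200c_{\mathfrak{p}}}\big),\quad c_3=c_3(\mathfrak{p},c_0).
\end{align*}
Choosing $c_1=c_1(\mathfrak{p},c_0,\eta_\ast,\sigma_\ast)$ large enough that $c_1(\log\log n)\log\big(1/(1-\eta_\ast\sigma_\ast)\big)\geq c_3\log\log n+\log(8/\sigma_\ast)$ for $n\geq16$ (possible as $\log\big(1/(1-\eta_\ast\sigma_\ast)\big)\geq\eta_\ast\sigma_\ast$ and $\log\log n\geq1$), the first term is $\leq\sigma_\ast/8$ whenever $t\geq t_n':=t_n+c_1\log\log n$; and the lower-order term $(\log n)^{c_3}n^{-200c_{\mathfrak{p}}}$ is $\leq\sigma_\ast/8$ for $n\geq n_0(\mathfrak{p},c_0)$. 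Hence the operator-norm gap is $\leq\sigma_\ast/4$ for all $t\geq t_n'$, and since $t_n'\geq t_n$, Weyl's inequality combined with (N2) yields $\lambda_{\min}\big(\mathbb{M}_{\mathrm{u}_\ast^{(t)},\mathrm{u}_\ast^{(t)}}^{\mathsf{Z}}\big)\geq\sigma_\ast-\sigma_\ast/4=3\sigma_\ast/4$ uniformly in $t\geq t_n'$, which is the claim.

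The main (and essentially only) nontrivial point is the operator-norm estimate above — controlling $\pnorm{\E^{(0)} g(\mathsf{Z}_n)\mathsf{Z}_n\mathsf{Z}_n^\top}{\op}$, with $g\geq0$ depending on $\mathsf{Z}_n$ through a few Gaussian linear forms, linearly by $\pnorm{\mathrm{u}_\ast^{(t)}-\mu_\ast}{}$ up to polylog prefactors — but this is precisely the computation already carried out in \eqref{ineq:nonconvex_generic_conv_long_contraction_1}, so no genuinely new difficulty arises; everything else is bookkeeping with a geometric series and the burn-in budget $t_n$.
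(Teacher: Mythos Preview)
Your proposal is correct and follows essentially the same route as the paper: bound $\bigpnorm{\mathbb{M}_{\mathrm{u}_\ast^{(t)},\mathrm{u}_\ast^{(t)}}^{\mathsf{Z}}-\mathbb{M}_{\mathrm{u}_\ast^{(t)},\mu_\ast}^{\mathsf{Z}}}{\op}$ by the computation of \eqref{ineq:nonconvex_generic_conv_long_contraction_1}, plug in the geometric decay of $\pnorm{\mathrm{u}_\ast^{(t)}-\mu_\ast}{}$ from Proposition~\ref{prop:nonconvex_generic_conv_long_contraction}-(1), and then invoke (N2). The paper's write-up is terser but the logic is identical.
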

\begin{proof}
By a similar calculation to (\ref{ineq:nonconvex_generic_conv_long_contraction_1}) with $\sup_{t\geq 1} \pnorm{\mathrm{u}_\ast^{(t)}}{}\leq (\log n)^{c_0}$, for some $c_2=c_2(\mathfrak{p},c_0)>0$,
\begin{align*}
\bigpnorm{\mathbb{M}_{ \mathrm{u}_{\ast}^{(t)},\mathrm{u}_{\ast}^{(t)} }^{\mathsf{Z}}-\mathbb{M}_{ \mathrm{u}_{\ast}^{(t)},\mu_\ast }^{\mathsf{Z}}}{\op}\leq (\log n)^{c_2}\cdot \big( \pnorm{  \mathrm{u}_\ast^{(t)}-\mu_\ast }{}+n^{-200}\big).
\end{align*}
By Proposition \ref{prop:nonconvex_generic_conv_long_contraction}-(1), we may further control for $n\geq n_0(\mathfrak{p},c_0)$ that
\begin{align*}
\bigpnorm{\mathbb{M}_{ \mathrm{u}_{\ast}^{(t)},\mathrm{u}_{\ast}^{(t)} }^{\mathsf{Z}}-\mathbb{M}_{ \mathrm{u}_{\ast}^{(t)},\mu_\ast }^{\mathsf{Z}}}{\op} & \leq (\log n)^{c_2}\cdot \big(\mathcal{B}_0(t_n)\cdot (1-\eta_\ast\sigma_\ast)^{(t-t_n)_+}+n^{-200}\big)\\
&\leq (\log n)^{c_2}\cdot (1-\eta_\ast\sigma_\ast)^{(t-t_n)_+}+n^{-1}.
\end{align*}
The claim now follows by the assumption (N2).
\end{proof}

\begin{lemma}\label{lem:Bn_prime}
Suppose (N1)-(N2) in Theorem \ref{thm:nonconvex_generic_conv_long} hold. Fix step sizes $\{\eta_r\}$ such that (\ref{cond:nonconvex_generic_conv_long_eta}) holds with  $\eta_\ast \in (0,1)$. For $c_1>0$, let $t_n'\equiv t_n+c_1\log \log n$. Then for $n\geq 3$, $\#(t_n')\leq (\log n)^{c_1}\cdot \big(\#(t_n)+c_1\log\log n\big)$ for $\# \in \{\mathcal{B},\mathcal{B}_0\}$. 
\end{lemma}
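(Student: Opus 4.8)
The plan is to handle both $\#\in\{\mathcal{B},\mathcal{B}_0\}$ uniformly by writing $\#(t)=\sum_{s\in[0:t)}\prod_{r\in[s:t)}\ell_r$, where the per-iteration factor is $\ell_r\equiv 1+\eta_r\big[\lambda_{\min}\big(\mathbb{M}_{\mathrm{u}_\ast^{(r)},\mu_\ast}^{\mathsf{Z}}\big)\big]_-$ in the case $\#=\mathcal{B}_0$ and $\ell_r\equiv 1+\eta_r\big[\lambda_{\min}\big(\mathbb{M}_{\mathrm{u}_\ast^{(r)},\mathrm{u}_\ast^{(r)}}^{\mathsf{Z}}\big)\big]_-+\epsilon_n$ in the case $\#=\mathcal{B}$. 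The only structural input I would use is the step-size constraint (\ref{cond:nonconvex_generic_conv_long_eta}), which gives $\eta_r\pnorm{\mathbb{M}_{\mathrm{u}_\ast^{(r)},\mu_\ast}^{\mathsf{Z}}}{\op}\le 1$ and $\eta_r\pnorm{\mathbb{M}_{\mathrm{u}_\ast^{(r)},\mathrm{u}_\ast^{(r)}}^{\mathsf{Z}}}{\op}\le 1$; since these matrices are symmetric, $\eta_r[\lambda_{\min}(\cdot)]_-\le\eta_r\pnorm{\cdot}{\op}\le 1$, so $1\le\ell_r\le 2+\epsilon_n$ for every $r$, in both cases. (The hypotheses (N1)--(N2) enter only through the fact that they make $t_n$, hence $t_n'$, well defined.)

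Next I would peel off the final $t_n'-t_n$ factors. For $s<t_n$ split $[s:t_n')=[s:t_n)\sqcup[t_n:t_n')$ and bound each peeled factor by $2+\epsilon_n$; for $s\in[t_n:t_n')$ bound the whole product by $(2+\epsilon_n)^{t_n'-s}\le(2+\epsilon_n)^{t_n'-t_n}$. This yields
\begin{align*}
\#(t_n')&=\sum_{s\in[0:t_n)}\prod_{r\in[s:t_n')}\ell_r+\sum_{s\in[t_n:t_n')}\prod_{r\in[s:t_n')}\ell_r\\
&\le(2+\epsilon_n)^{t_n'-t_n}\,\#(t_n)+(t_n'-t_n)\,(2+\epsilon_n)^{t_n'-t_n},
\end{align*}
because the first sum is at most $(2+\epsilon_n)^{t_n'-t_n}\sum_{s\in[0:t_n)}\prod_{r\in[s:t_n)}\ell_r=(2+\epsilon_n)^{t_n'-t_n}\#(t_n)$ and the second has exactly $t_n'-t_n$ summands. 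Substituting $t_n'-t_n=c_1\log\log n$ gives $\#(t_n')\le(2+\epsilon_n)^{c_1\log\log n}\big(\#(t_n)+c_1\log\log n\big)$.

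Finally I would close the estimate by noting that $\epsilon_n=(\log n)^{-100}<e-2$ for $n\ge 3$ (already at $n=3$ the number $(\log 3)^{-100}$ is minuscule), so $2+\epsilon_n<e$ and therefore $(2+\epsilon_n)^{c_1\log\log n}\le e^{c_1\log\log n}=(\log n)^{c_1}$, which is exactly the asserted bound for both $\mathcal{B}$ and $\mathcal{B}_0$. (For $\#=\mathcal{B}_0$ one could alternatively invoke (N2), which forces $\big[\lambda_{\min}(\mathbb{M}_{\mathrm{u}_\ast^{(r)},\mu_\ast}^{\mathsf{Z}})\big]_-=0$, hence $\ell_r=1$, for all $r\ge t_n$, upgrading the bound to the identity $\mathcal{B}_0(t_n')=\mathcal{B}_0(t_n)+(t_n'-t_n)$; this refinement is not needed.) The argument is a pure bookkeeping exercise, and the only point demanding any care is the passage from the crude geometric factor $(2+\epsilon_n)^{c_1\log\log n}$ to the stated $(\log n)^{c_1}$, which hinges on the elementary inequality $2+\epsilon_n<e$; if one is concerned that $c_1\log\log n$ need not be an integer, replace it by $\lceil c_1\log\log n\rceil$ throughout and absorb the resulting additive $O(1)$ into $c_1$, which changes no step of the argument.
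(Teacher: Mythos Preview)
Your proposal is correct and follows essentially the same approach as the paper: both split the sum at $t_n$, bound each per-iteration factor by $e$ (the paper asserts $S_r\le e$ directly, while you derive the slightly sharper $\ell_r\le 2+\epsilon_n<e$ from the step-size constraint), and then peel off the final $t_n'-t_n$ factors to obtain the same bound $(\log n)^{c_1}(\#(t_n)+c_1\log\log n)$. Your additional care about the non-integer issue and the $\mathcal{B}_0$ refinement are correct but, as you note, not needed.
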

\begin{proof}
We only prove the case for $\#=\mathcal{B}$. For notational simplicity, let
\begin{align*}
S_r&\equiv 1+\eta_r\cdot \Big[\lambda_{\min}\Big(\mathbb{M}_{ \mathrm{u}_{\ast}^{(r)},\mathrm{u}_{\ast}^{(r)} }^{\mathsf{Z}}\Big)\Big]_- \Big)+\epsilon_n\leq e.
\end{align*}
Note that
\begin{align*}
\mathcal{B}(t_n')&\leq \sum_{s \in [0:t_n')} \bigg\{\prod_{r \in [s:t_n)}\cdot \prod_{r \in [t_n:t_n')}\bigg\} S_r\leq \sum_{s \in [0:t_n')} \bigg(\prod_{r \in [s:t_n)} S_r \cdot e^{t_n'-t_n}\bigg)\\
&\leq e^{t_n'-t_n}\cdot \bigg(\sum_{s \in [0:t_n)} \prod_{r \in [s:t_n)} S_r +(t_n'-t_n)\bigg)\leq (\log n)^{c_1}\cdot \big(\mathcal{B}(t_n)+c_1\log\log n\big),
\end{align*}
as desired.
\end{proof}

\subsection{Concentration and universality for weighted sample covariance}

The following generic concentration and universality estimate for a weighted sample covariance will prove essential in the proof of Proposition \ref{prop:nonconvex_generic_conv_long_Mcal}.

\begin{proposition}\label{prop:weighted_sample_cov}
	Suppose $\phi\geq 1$ and the following hold for some $K,\Lambda\geq 2$ and $\mathfrak{p}\geq 1$:
	\begin{enumerate}
		\item $\{X_{ij}\}_{i \in [m], j \in [n]}$ are independent random variables with $\E X_{ij}=0, \var(X_{ij})=1$ and  $ \max_{i \in [m], j \in [n]}\pnorm{X_{ij}}{\psi_2}\leq K$. 
		\item For $q \in [0:4]$, the mapping $x\mapsto \mathsf{H}^{(q)}(x)$ is $\Lambda$-pseudo-Lipschitz of order $\mathfrak{p}$ and $\abs{\mathsf{H}^{(q)}(0)}\leq \Lambda$.  
	\end{enumerate}
	Then there exists some constant $c_{\mathfrak{p}}>0$ such that 
	\begin{align*}
	\biggpnorm{\frac{1}{m}\sum_{k=1}^m  \E\mathsf{H}(\iprod{X_k}{\mu_\ast}) X_kX_k^\top - \mathsf{M}_{\mathsf{H},\mu_\ast}}{\op}\leq (K\Lambda L_\ast)^{c_{\mathfrak{p}}}\cdot n^{-1/2},
	\end{align*}
	and with probability at least $1-m^{-100}$,
	\begin{align*}
	\biggpnorm{\frac{1}{m}\sum_{k=1}^m  \mathsf{H}(\iprod{X_k}{\mu_\ast}) X_kX_k^\top - \mathsf{M}_{\mathsf{H},\mu_\ast}}{\op}\leq \big(K\Lambda L_\ast \log m\big)^{c_{\mathfrak{p}}}\cdot (n\wedge \phi)^{-1/2}.
	\end{align*}
	where $L_\ast\equiv n^{1/2}\pnorm{\mu_\ast}{\infty}$ and
	\begin{align*}
	\mathsf{M}_{\mathsf{H},\mu_\ast} &= \E \mathsf{H}(\iprod{\mathsf{Z}_n}{\mu_\ast})\mathsf{Z}_n\mathsf{Z}_n^\top\\
	&=\E \mathsf{H}(\iprod{\mathsf{Z}_n}{\mu_\ast})\cdot I_n+\E \mathsf{H}''(\iprod{\mathsf{Z}_n}{\mu_\ast})\cdot \mu_\ast \mu_\ast^\top \in \R^{n\times n}.
	\end{align*}
	
\end{proposition}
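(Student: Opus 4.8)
The plan is to prove the two displayed inequalities separately: the deterministic (universality-of-the-mean) bound by a direct Lindeberg/cumulant comparison of quadratic forms that is \emph{uniform in the test direction}, and the high-probability bound by a truncation-plus-$\epsilon$-net argument. Throughout write $g_k\equiv\iprod{X_k}{\mu_\ast}$ and $\bar{\mathsf M}\equiv\frac1m\sum_{k\in[m]}\E\big[\mathsf H(g_k)X_kX_k^\top\big]=\E\big[\frac1m\sum_k\mathsf H(g_k)X_kX_k^\top\big]$. The stated identity $\mathsf M_{\mathsf H,\mu_\ast}=\E\mathsf H(\iprod{\mathsf Z_n}{\mu_\ast})I_n+\E\mathsf H''(\iprod{\mathsf Z_n}{\mu_\ast})\mu_\ast\mu_\ast^\top$ is just Gaussian integration by parts and needs no separate argument.

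\emph{Step 1 (universality of $\bar{\mathsf M}$).} Since $\bar{\mathsf M}$ and $\mathsf M_{\mathsf H,\mu_\ast}$ are symmetric, $\pnorm{\bar{\mathsf M}-\mathsf M_{\mathsf H,\mu_\ast}}{\op}=\sup_{\pnorm v{}=1}\abs{v^\top(\bar{\mathsf M}-\mathsf M_{\mathsf H,\mu_\ast})v}$. For a fixed unit $v$, set $G_v(x)\equiv\mathsf H(\iprod x{\mu_\ast})\iprod x v^2$, so $v^\top\E[\mathsf H(g_k)X_kX_k^\top]v=\E G_v(X_k)$ and $v^\top\mathsf M_{\mathsf H,\mu_\ast}v=\E G_v(\mathsf Z_n)$. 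I would bound $\abs{\E G_v(X_k)-\E G_v(\mathsf Z_n)}$ by the cumulant expansion of Lemma \ref{lem:cum_exp} together with Chatterjee's Lindeberg principle of Lemma \ref{lem:lindeberg}, exactly as in Step 1 of the proof of Proposition \ref{prop:diff_theo_grad_des}: since $X_k$ and $\mathsf Z_n$ match their first two moments, the error is controlled by $\sum_{\ell\in[n]}K^q\cdot\sup\E^{1/2}\big(\partial_\ell^q G_v\big)^2$ for $q\le 4$. By the Leibniz rule $\partial_\ell^q G_v=\sum_{j=(q-2)_+}^{q}\binom qj\,\mathsf H^{(j)}(\iprod x{\mu_\ast})\,\mu_{\ast,\ell}^{\,j}\,\partial_\ell^{q-j}\iprod x v^2$, so only $\mathsf H^{(j)}$ with $j\le 4$ appear (covered by the hypothesis), every surviving summand carries at least one factor of $\mu_{\ast,\ell}$, and the $\Lambda$-pseudo-Lipschitz-of-order-$\mathfrak{p}$ growth together with $\pnorm v{}=1$ gives $\sum_\ell\sup\E^{1/2}(\partial_\ell^q G_v)^2\lesssim\Lambda^{c_{\mathfrak{p}}}\,\mathrm{poly}(\pnorm{\mu_\ast}{})\cdot\pnorm{\mu_\ast}{\infty}$. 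Since $\pnorm{\mu_\ast}{\infty}=n^{-1/2}L_\ast$ and $\pnorm{\mu_\ast}{}\le L_\ast$, this yields $\abs{\E G_v(X_k)-\E G_v(\mathsf Z_n)}\le(K\Lambda L_\ast)^{c_{\mathfrak{p}}}n^{-1/2}$ uniformly in $k$ and in $\pnorm v{}=1$; averaging over $k$ proves the first bound. Crucially, because this estimate is uniform in $v$, no net over the sphere is needed here.

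\emph{Step 2 (concentration).} By Step 1 it suffices to show $\pnorm{M-\bar{\mathsf M}}{\op}\le(K\Lambda L_\ast\log m)^{c_{\mathfrak{p}}}(n\wedge\phi)^{-1/2}$ with probability $\ge 1-m^{-100}$, where $M\equiv\frac1m\sum_k\mathsf H(g_k)X_kX_k^\top$. On the event $\mathcal E\equiv\{\max_k\abs{g_k}\le R\}$ with $R\equiv CK\pnorm{\mu_\ast}{}\sqrt{\log m}$, which has probability $\ge 1-m^{-100}$ for $C$ large, the weights obey $\abs{\mathsf H(g_k)}\le B\equiv\Lambda(1+R)^{\mathfrak{p}}$, a $\mathrm{polylog}(m)$ quantity. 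I would pass to $M^{\mathrm{tr}}\equiv\frac1m\sum_k\mathsf H(g_k)\mathbf 1_{\abs{g_k}\le R}X_kX_k^\top$, which equals $M$ on $\mathcal E$, and note $\pnorm{\E M^{\mathrm{tr}}-\bar{\mathsf M}}{\op}\le(K\Lambda L_\ast)^{c_{\mathfrak{p}}}m^{-50}$ by a crude fourth-moment/tail estimate (the discarded mass sits on $\{\abs{g_k}>R\}$, of probability $\le m^{-c'}$ with $c'$ large). Fix a $1/4$-net $\mathcal N$ of the unit sphere with $\abs{\mathcal N}\le 9^n$, so $\pnorm A{\op}\le 2\max_{v\in\mathcal N}\abs{v^\top Av}$ for symmetric $A$. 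For fixed $v\in\mathcal N$, $v^\top(M^{\mathrm{tr}}-\E M^{\mathrm{tr}})v=\frac1m\sum_k(Z_k-\E Z_k)$ with $Z_k\equiv\mathsf H(g_k)\mathbf 1_{\abs{g_k}\le R}\iprod{X_k}v^2$; as the weight is bounded by $B$ and $\iprod{X_k}v$ is $cK$-sub-Gaussian, each $Z_k$ has sub-exponential norm $\le cBK^2$. Bernstein's inequality for sums of independent sub-exponential variables then gives, with $s\equiv n\log 9+100\log m$, $\abs{\frac1m\sum_k(Z_k-\E Z_k)}\le cBK^2\big(\sqrt{s/m}+s/m\big)$ with probability $\ge 1-2e^{-s}$; using $m=n\phi$, $n\ge 1$, $\phi\ge 1$ one has $s/m\le(\log 9+100\log m)\phi^{-1}$ and $\sqrt{s/m}\le\sqrt{\log 9+100\log m}\,\phi^{-1/2}$, so this deviation is $\le(K\Lambda L_\ast\log m)^{c_{\mathfrak{p}}}\phi^{-1/2}$. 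A union bound over $\mathcal N$ costs $9^n\cdot 2e^{-s}=2m^{-100}$, hence on $\mathcal E$ intersected with the net-event (failure probability $\le 3m^{-100}$), $\pnorm{M-\bar{\mathsf M}}{\op}\le(K\Lambda L_\ast\log m)^{c_{\mathfrak{p}}}\phi^{-1/2}\le(K\Lambda L_\ast\log m)^{c_{\mathfrak{p}}}(n\wedge\phi)^{-1/2}$; combining with Step 1 and absorbing the numeric constant (which may be taken arbitrarily large) finishes the proof.

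\emph{Main obstacle.} The crux is Step 2. A direct matrix-Bernstein bound is far too lossy, since a single summand $\mathsf H(g_k)X_kX_k^\top$ has operator norm of order $n$; and running the $\epsilon$-net argument directly on the untruncated $M-\bar{\mathsf M}$ also fails, because the polynomially heavy tail of the weight $\mathsf H(g_k)$, combined with the necessarily large union-bound level $s\asymp n$, makes the resulting sub-Weibull deviation blow up polynomially in $n$. Truncating the weight to a $\mathrm{polylog}$ bound repairs this: each per-direction quadratic form becomes sub-exponential with a $\mathrm{polylog}(m)$ parameter, so the $\sqrt{s/m}+s/m$ deviation is $O(\phi^{-1/2}\,\mathrm{polylog}(m))$ for every $\phi\ge 1$ and $n\ge 1$, the clean $\phi^{-1/2}$ rate emerging from $n/(n\phi)=\phi^{-1}$ and $\log m/(n\phi)\le\phi^{-1}\log m$. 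I expect the remaining bookkeeping — the growth estimates on $\partial_\ell^q G_v$ in Step 1 and the truncation-error bound in Step 2 — to be routine.
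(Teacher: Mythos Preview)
Your proposal is correct. The concentration step (your Step~2) is essentially identical to the paper's Step~1: both truncate the weight $\mathsf{H}(g_k)$ to a $\mathrm{polylog}(m)$ level, pass to an $\epsilon$-net of the unit sphere, and invoke Bernstein's inequality for sub-exponential sums.

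The universality step, however, takes a genuinely different route. The paper works \emph{entrywise}: for each pair $(i_1,i_2)$ it bounds $\bigabs{\E\mathsf{H}(\iprod{\mathsf{X}_n}{\mu_\ast})\mathsf{X}_{n,i_1}\mathsf{X}_{n,i_2}-\E\mathsf{H}(\iprod{\mathsf{Z}_n}{\mu_\ast})\mathsf{Z}_{n,i_1}\mathsf{Z}_{n,i_2}}$, obtaining $O(n^{-1/2})$ on the diagonal and a refined $O(n^{-3/2})$ off-diagonal via an iterated cumulant expansion (Lemma~\ref{lem:cum_exp} applied in the $i_1$- and then $i_2$-coordinate), and concludes through the crude matrix bound $\pnorm{M}{\op}\le\max_i\abs{M_{ii}}+n\max_{i\neq j}\abs{M_{ij}}$. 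You instead apply Lindeberg directly to the scalar functional $G_v(x)=\mathsf{H}(\iprod{x}{\mu_\ast})\iprod{x}{v}^2$ and extract the $n^{-1/2}$ rate from the structure of $\partial_\ell^3 G_v$: every surviving term carries at least one factor $\mu_{\ast,\ell}$, and the sums $\sum_\ell\abs{\mu_{\ast,\ell}}v_\ell^2$, $\sum_\ell\mu_{\ast,\ell}^2\abs{v_\ell}$, $\sum_\ell\abs{\mu_{\ast,\ell}}^3$ are each $O(n^{-1/2}L_\ast^c)$ uniformly in $\pnorm{v}{}=1$. Your route is shorter and avoids the second cumulant pass; the paper's entrywise argument yields finer information (the $n^{-3/2}$ off-diagonal decay) that is not needed for the operator-norm conclusion here. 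A minor note: since $G_v$ already contains the quadratic $\iprod{x}{v}^2$, direct Lindeberg with third derivatives suffices and the reference to the cumulant expansion Lemma~\ref{lem:cum_exp} is not actually needed in your argument.
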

\begin{proof}
	\noindent (\textbf{Step 1}). In this step, we prove that there exists some constant $c_{\mathfrak{p}}>0$ such that with probability at least $1-m^{-100}$,
	\begin{align}\label{ineq:weighted_sample_cov_step1}
	\biggpnorm{\frac{1}{m}\sum_{k=1}^m (\mathrm{id}-\E) \mathsf{H}(\iprod{X_k}{\mu_\ast}) X_kX_k^\top }{\op}\leq (K\Lambda L_\ast \log m)^{c_{\mathfrak{p}}}\cdot \phi^{-1}.
	\end{align}
	For $\delta>0$, let $\mathcal{U}_\delta$ be a minimal $\delta$-cover of the unit ball $B_n(1)$ under the Euclidean metric $\pnorm{\cdot}{}$. Now using a standard discretization argument, 
	\begin{align*}
	&\biggpnorm{\frac{1}{m}\sum_{k=1}^m (\mathrm{id}-\E) \mathsf{H}(\iprod{X_k}{\mu_\ast}) X_kX_k^\top }{\op}\leq \max_{u \in \mathcal{U}_\delta}\biggabs{ \frac{1}{m}\sum_{k=1}^m (\mathrm{id}-\E) \mathsf{H}(\iprod{X_k}{\mu_\ast}) \iprod{X_k}{u}^2 }\\
	&\qquad\qquad +c\Lambda\cdot (\mathrm{id}+\E) \cdot \E_{\pi_m}\pnorm{X_{\pi_m}}{}^3\cdot (1+\pnorm{\mu_\ast}{})\delta \equiv I_1+I_2.
	\end{align*}
	For $M>0$, let $\mathsf{H}_M(x)\equiv (\mathsf{H}(x)\wedge M)\vee (-M)$. 
	By choosing $M\equiv (K\Lambda L_\ast\log m)^{c_{\mathfrak{p}}}$ for a large enough $c_{\mathfrak{p}}>0$, on an event $E_{0;M}$ with probability at least $1-m^{-200}$, we have $\max_{k \in [m]} \abs{\mathsf{H}(\iprod{X_k}{\mu_\ast})}\leq M$. This means on the event  $E_{0;M}$, 
	\begin{align*}
	I_1&= \max_{u \in \mathcal{U}_\delta}\biggabs{ \frac{1}{m}\sum_{k=1}^m (\mathrm{id}-\E) \mathsf{H}_M(\iprod{X_k}{\mu_\ast}) \iprod{X_k}{u}^2 },
	\end{align*}
	so by using that $\sup_{u \in B_n(1)}\pnorm{\mathsf{H}_M(\iprod{X_1}{\mu_\ast}) \iprod{X_1}{u}^2}{\psi_1}\lesssim MK^2$, and the fact that $\abs{\mathcal{U}_\delta}\lesssim n\log(1/\delta)$, by choosing $\delta=m^{-100}$, an application of Bernstein's inequality yields that $I_1$ satisfies the concentration estimate (\ref{ineq:weighted_sample_cov_step1}). The same estimate trivially holds for $I_2$.

	\noindent (\textbf{Step 2}). In this step, we prove that 
	\begin{align}\label{ineq:weighted_sample_cov_step2}
	&\biggpnorm{\frac{1}{m}\sum_{k=1}^m \Big(\E \mathsf{H}(\iprod{X_k}{\mu_\ast}) X_kX_k^\top-\E \mathsf{H}(\iprod{\mathsf{Z}_{n;k}}{\mu_\ast}) \mathsf{Z}_{n;k}\mathsf{Z}_{n;k}^\top\Big)}{\op}\nonumber\\
	&\leq (K\Lambda L_\ast )^{c_{\mathfrak{p}}}\cdot n^{-1/2}.
	\end{align}
	Clearly, it suffices to prove that with $\mathsf{X}_n\sim X_1$,
	\begin{align}\label{ineq:weighted_sample_cov_step2_alt}
	\bigpnorm{\E \mathsf{H}(\iprod{\mathsf{X}_n}{\mu_\ast}) \mathsf{X}_n\mathsf{X}_n^\top-\E \mathsf{H}(\iprod{\mathsf{Z}_n}{\mu_\ast}) \mathsf{Z}_n\mathsf{Z}_n^\top}{\op}\leq (K\Lambda L_\ast)^{c_{\mathfrak{p}}}\cdot n^{-1/2}.
	\end{align}
	Fix $i_1,i_2 \in [n]$. Then using Lemma \ref{lem:cum_exp} with $f(x_{i_1})\equiv x_{i_2} \mathsf{H}(\iprod{x}{\mu_\ast})$, as 
	\begin{align*}
	f'(x_{i_1})& = \delta_{i_1,i_2}\mathsf{H}(\iprod{x}{\mu_\ast})+ \mu_{i_1} \cdot x_{i_2} \mathsf{H}'(\iprod{x}{\mu_\ast}),\\
	f''(x_{i_1})& = 2\delta_{i_1,i_2}\cdot \mu_{i_1}\mathsf{H}(\iprod{x}{\mu_\ast})+ \mu_{i_1}^2 \cdot x_{i_2} \mathsf{H}''(\iprod{x}{\mu_\ast}),
	\end{align*}
	we have 
	\begin{align}\label{ineq:weighted_sample_cov_step2_1}
	&\bigabs{\E \mathsf{H}(\iprod{\mathsf{X}_n}{\mu_\ast}) \mathsf{X}_{n,i_1} \mathsf{X}_{n,i_2}-\E \mathsf{H}(\iprod{\mathsf{Z}_n}{\mu_\ast}) \mathsf{Z}_{n,i_1}\mathsf{Z}_{n,i_2}}\nonumber\\
	&\leq \delta_{i_1,i_2}\cdot \bigabs{\E \mathsf{H}(\iprod{\mathsf{X}_n}{\mu_\ast}) - \E \mathsf{H}(\iprod{\mathsf{Z}_n}{\mu_\ast}) }+2\Lambda\cdot \delta_{i_1,i_2} \pnorm{\mu_\ast}{\infty}\nonumber\\
	&\qquad + \pnorm{\mu_\ast}{\infty}\cdot \bigabs{\E \mathsf{X}_{n,i_2}\mathsf{H}'(\iprod{\mathsf{X}_n}{\mu_\ast}) -\E  \mathsf{Z}_{n,i_2} \mathsf{H}'(\iprod{\mathsf{Z}_n}{\mu_\ast}) }\nonumber\\
	&\qquad +c\pnorm{\mu_\ast}{\infty}^2\cdot \sup_{s \in [0,1]^n}\max_{q=1,3}\max_{W_n \in \{\mathsf{X}_n,\mathsf{Z}_n\}}\bigabs{\E W_{n,i_1}^qW_{n,i_2}\mathsf{H}''(\iprod{W_n}{s\circ\mu_\ast}) }\nonumber\\
	&\equiv \delta_{i_1,i_2}\cdot J_1+2\Lambda\cdot \delta_{i_1,i_2} \pnorm{\mu_\ast}{\infty}+J_2+J_3.
	\end{align}
	For $J_1$, using the Lindeberg principle in Lemma \ref{lem:lindeberg}, we have
	\begin{align*}
	J_1&\leq (K\Lambda L_\ast)^{c_{\mathfrak{p}}}\cdot n^{-1/2}.
	\end{align*} 
	For $i_1=i_2$, we may therefore use trivial bounds 
	\begin{align*}
	J_2\leq (K\Lambda L_\ast)^{c_{\mathfrak{p}}}\cdot n^{-1/2},\quad J_3\leq (K\Lambda L_\ast)^{c_{\mathfrak{p}}}\cdot n^{-1}.
	\end{align*}
	So for $i_1=i_2$, in view of (\ref{ineq:weighted_sample_cov_step2_1}),
	\begin{align}\label{ineq:weighted_sample_cov_step2_2}
	\bigabs{\E \mathsf{H}(\iprod{\mathsf{X}_n}{\mu_\ast}) \mathsf{X}_{n,i_1} \mathsf{X}_{n,i_2}-\E \mathsf{H}(\iprod{\mathsf{Z}_n}{\mu_\ast}) \mathsf{Z}_{n,i_1}\mathsf{Z}_{n,i_2}}\leq (K\Lambda L_\ast)^{c_{\mathfrak{p}}}\cdot n^{-1/2}.
	\end{align}
	For $i_1\neq i_2$, we need refined the control for $J_2,J_3$:
	\begin{itemize}
		\item For $J_2$, we apply Lemma \ref{lem:cum_exp} (with the obvious choice of the function therein) with trivial residual estimates to obtain that
		\begin{align*}
		J_2&\leq \pnorm{\mu_\ast}{\infty}^2\cdot \bigabs{\E \mathsf{H}''(\iprod{\mathsf{X}_n}{\mu_\ast}) -\E   \mathsf{H}''(\iprod{\mathsf{Z}_n}{\mu_\ast}) }+(K\Lambda L_\ast)^{c_{\mathfrak{p}}}\cdot  \pnorm{\mu_\ast}{\infty}^3\\
		&\leq (K\Lambda L_\ast)^{c_{\mathfrak{p}}}\cdot n^{-3/2}.
		\end{align*}
		Here in the last line we used again the Lindeberg principle in Lemma \ref{lem:lindeberg}.
		
		\item For $J_3$, we apply Lemma \ref{lem:cum_exp} with $g(x_{i_2})\equiv x_{i_1}^q \mathsf{H}''(\iprod{x}{s\circ \mu_\ast})$. Then as $i_1\neq i_2$, 
		$g'(x_{i_2}) = \mu_{i_2}\cdot x_{i_1}^q \mathsf{H}^{(3)}(\iprod{x}{s\circ \mu_\ast})$ and $g''(x_{i_2}) = \mu_{i_2}^2\cdot x_{i_1}^q \mathsf{H}^{(4)}(\iprod{x}{s\circ \mu_\ast})$. Consequently, using trivial residual estimates,
		\begin{align*}
		J_3&\leq (K\Lambda L_\ast)^{c_{\mathfrak{p}}}\cdot  \pnorm{\mu_\ast}{\infty}^2\cdot \big(\pnorm{\mu_\ast}{\infty}+\pnorm{\mu_\ast}{\infty}^2\big)\leq (K\Lambda L_\ast)^{c_{\mathfrak{p}}}\cdot n^{-3/2}. 
		\end{align*}
	\end{itemize}
	Combining the above estimates, for $i_1\neq i_2$, in view of (\ref{ineq:weighted_sample_cov_step2_1}) we have 
	\begin{align}\label{ineq:weighted_sample_cov_step2_3}
	\bigabs{\E \mathsf{H}(\iprod{\mathsf{X}_n}{\mu_\ast}) \mathsf{X}_{n,i_1} \mathsf{X}_{n,i_2}-\E \mathsf{H}(\iprod{\mathsf{Z}_n}{\mu_\ast}) \mathsf{Z}_{n,i_1}\mathsf{Z}_{n,i_2}}\leq(K\Lambda L_\ast)^{c_{\mathfrak{p}}}\cdot n^{-3/2}.
	\end{align}
	The claimed estimate in (\ref{ineq:weighted_sample_cov_step2_alt}) follows by using (\ref{ineq:weighted_sample_cov_step2_2}) and (\ref{ineq:weighted_sample_cov_step2_3}) upon noting the generic estimate $\pnorm{M}{\op}\leq \max_{i \in [n]} \abs{M_{ii}}+n\cdot \max_{i\neq j} \abs{M_{ij}}$ for $M \in \R^{n\times n}$.
	
	\noindent (\textbf{Step 3}). Finally, by applying Gaussian integration-by-parts twice, 
	\begin{align*}
	\E \mathsf{H}(\iprod{\mathsf{Z}_n}{\mu_\ast}) \mathsf{Z}_n\mathsf{Z}_n^\top = \E \mathsf{H}(\iprod{\mathsf{Z}_n}{\mu_\ast})\cdot I_n+\E \mathsf{H}''(\iprod{\mathsf{Z}_n}{\mu_\ast})\cdot \mu_\ast \mu_\ast^\top = \mathsf{M}_{\mathsf{H},\mu_\ast}.
	\end{align*}
	The claim now follows by combining the above identity with (\ref{ineq:weighted_sample_cov_step1}) and (\ref{ineq:weighted_sample_cov_step2}).
\end{proof}

\subsection{Proof of Proposition \ref{prop:nonconvex_generic_conv_long_Mscr}}\label{subsection:proof_nonconvex_generic_conv_long_Mscr_sketch}

	In the proof we shall assume $n\geq 3$ and $t\geq t_n$ without loss of generality. 
	
	We prove the claimed estimate by induction. For $t=0$, note that
	\begin{align*}
	\mathscr{M}_{ \mathrm{u}_X^{(\cdot)},\mathrm{u}_{\ast}^{(\cdot)} }^{(0),X}\leq 1+\bigpnorm{I_n - \eta_0\cdot \mathbb{M}_{ \mu^{(0)},\mu^{(0)} }^{X} }{\op}\leq 2+\eta_0\cdot \bigpnorm{ \mathbb{M}_{ \mu^{(0)},\mu^{(0)} }^{X} }{\op}\leq 3
	\end{align*}
	holds trivially. Fix $c_\ast>0$ to be determined later on, consider the following induction hypothesis at iteration $t-1$:
	\begin{align}\label{ineq:nonconvex_generic_conv_long_ind}
	\mathscr{M}_{ \mathrm{u}_{X}^{(\cdot)},\mathrm{u}_\ast^{(\cdot)} }^{(t-1),X}\leq n^{1/2}/(\log n)^{c_\ast}.
	\end{align}
	Let us now consider iteration $t$. Note that by Proposition \ref{prop:weighted_sample_cov}, for some $c_1=c_1(\mathfrak{p},c_0)>0$, with $\epsilon_n=(\log n)^{-100}$, for $n\geq n_0(\mathfrak{p},c_0)$,
	\begin{align*}
	&\bigpnorm{\mathbb{M}_{ \mathrm{u}_{X}^{(t)},\mathrm{u}_{\ast}^{(t)} }^{X} - \mathbb{M}_{ \mathrm{u}_\ast^{(t)},\mathrm{u}_{\ast}^{(t)} }^{\mathsf{Z}}  }{\op} \nonumber\\
	& \leq \bigpnorm{\mathbb{M}_{ \mathrm{u}_{X}^{(t)},\mathrm{u}_{\ast}^{(t)} }^{X} - \mathbb{M}_{ \mathrm{u}_\ast^{(t)},\mathrm{u}_{\ast}^{(t)} }^{X}  }{\op}+ \bigpnorm{ \mathbb{M}_{ \mathrm{u}_\ast^{(t)},\mathrm{u}_{\ast}^{(t)} }^{X} - \mathbb{M}_{ \mathrm{u}_\ast^{(t)},\mathrm{u}_{\ast}^{(t)} }^{\mathsf{Z}}  }{\op}\nonumber\\
	&\leq c_{\mathfrak{p}}\Lambda\cdot \bigpnorm{ \E^{(0)}\Big\{ \max_{i \in [m]}\Big(1+\abs{ \iprod{X_{i}}{ \mathrm{u}_{X}^{(t)}}  }+\abs{ \iprod{X_{i}}{ \mathrm{u}_{\ast}^{(t)}}  } \Big)^{c_{\mathfrak{p}}}\nonumber\\
	&\qquad\qquad \times \max_{i \in [m]}\abs{ \iprod{X_{i}}{ \mathrm{u}_{X}^{(t)}- \mathrm{u}_{\ast}^{(t)} }  }\cdot X_{\pi_m}X_{\pi_m}^\top \big\} }{\op}+\epsilon_n/4\nonumber\\
	&\leq \big(K\Lambda(1+\pnorm{\mathrm{u}_{X}^{(t)} }{}+\pnorm{ \mathrm{u}_\ast^{(t)} }{}) \log m\big)^{ c_{\mathfrak{p}}} \cdot \big(\pnorm{\mathrm{u}_{X}^{(t)}- \mathrm{u}_\ast^{(t)} }{}+ m^{-101}\big)+\epsilon_n/4\\
	&\leq (\log n)^{c_1}\cdot \big(1+\pnorm{\mathrm{u}_{X}^{(t)} }{}\big)^{c_{\mathfrak{p}} }\cdot \big(\pnorm{\mathrm{u}_{X}^{(t)}- \mathrm{u}_\ast^{(t)} }{}+ m^{-101}\big)+\epsilon_n/4.
	\end{align*}
	Using Proposition \ref{prop:diff_theo_grad_des}, for $n\geq n_0(\mathfrak{p},c_0)$,
	\begin{align*}
	\bigpnorm{\mathbb{M}_{ \mathrm{u}_{X}^{(t)},\mathrm{u}_{\ast}^{(t)} }^{X} - \mathbb{M}_{ \mathrm{u}_\ast^{(t)},\mathrm{u}_{\ast}^{(t)} }^{\mathsf{Z}}  }{\op}&\leq (\log n)^{c_1'}\cdot \Big(1+ n^{-1/2} \mathscr{M}_{ \mathrm{u}_{X}^{(\cdot)},\mathrm{u}_\ast^{(\cdot)} }^{(t-1),X} \Big)^{c_1'}\\
	&\qquad \times \Big( n^{-1/2}\mathscr{M}_{ \mathrm{u}_{X}^{(\cdot)},\mathrm{u}_\ast^{(\cdot)} }^{(t-1),X} + n^{-100}\Big) + \epsilon_n/4.
	\end{align*}
	By the induction hypothesis (\ref{ineq:nonconvex_generic_conv_long_ind}), by choosing $c_\ast\equiv c_1'+1$, for $n\geq n_0(\mathfrak{p},c_0)$,
	\begin{align*}
	\bigpnorm{\mathbb{M}_{ \mathrm{u}_{X}^{(t)},\mathrm{u}_{\ast}^{(t)} }^{X} - \mathbb{M}_{ \mathrm{u}_\ast^{(t)},\mathrm{u}_{\ast}^{(t)} }^{\mathsf{Z}}  }{\op}\leq \epsilon_n.
	\end{align*}	
	Consequently, in view of Lemma \ref{lem:nonconvex_generic_conv_long_min_eig}, with $t_n'\equiv t_n+c_1\log \log n$, for $n\geq n_0(\mathfrak{p},c_0,\eta_\ast,\sigma_\ast)$ and all $t$,
	\begin{align*}
	&\mathscr{M}_{ \mathrm{u}_X^{(\cdot)},\mathrm{u}_{\ast}^{(\cdot)} }^{(t),X}= 1+ \max_{\tau \in [0:t]}\sum_{s \in [0:\tau]} \bigg(\bigg\{\prod_{r \in [s:t_n')}\cdot\prod_{r \in [t_n':\tau]}\bigg\}\bigpnorm{I_n - \eta_r\cdot \mathbb{M}_{ \mathrm{u}_{X}^{(r)},\mathrm{u}_{\ast}^{(r)} }^{X} }{\op }\bigg)\nonumber\\
	&\leq 1+ \max_{\tau \in [0:t]} \bigg(\sum_{s \in [0:\tau]} \prod_{r \in [s:t_n')}  \Big(\bigpnorm{I_n - \eta_t\cdot \mathbb{M}_{ \mathrm{u}_{\ast}^{(t)},\mathrm{u}_{\ast}^{(t)} }^{\mathsf{Z}} }{\op }+\epsilon_n\Big)   \bigg)\cdot \bigg(1-\frac{\eta_\ast \sigma_\ast}{2}\bigg)^{(\tau-t_n'+1)_+}\nonumber\\
	&\leq 1+ \max_{\tau \in [0:t]} \bigg(\sum_{s \in [0:t_n')} \prod_{r \in [s:t_n')}  \Big(\bigpnorm{I_n - \eta_t\cdot \mathbb{M}_{ \mathrm{u}_{\ast}^{(t)},\mathrm{u}_{\ast}^{(t)} }^{\mathsf{Z}} }{\op}+\epsilon_n\Big)\nonumber\\
	&\qquad \qquad\qquad\qquad +(\tau-t_n'+1)_+  \bigg)\cdot \bigg(1-\frac{\eta_\ast \sigma_\ast}{2}\bigg)^{(\tau-t_n'+1)_+}\nonumber\\
	&\leq \mathcal{B}(t_n')+c_2\stackrel{(\ast)}{\leq} (\log n)^{c_2}\cdot \big(\mathcal{B}(t_n)+\log \log n\big)\\
	&\leq n^{1/2}/(\log n)^{c_1'+1}=n^{1/2}/(\log n)^{c_\ast}.
	\end{align*}
	Here $(\ast)$ follows from Lemma \ref{lem:Bn_prime}. This proves the induction (\ref{ineq:nonconvex_generic_conv_long_ind}) for iteration $t$. The claim $\pnorm{\mathrm{u}_{X}^{(\cdot)}- \mathrm{u}_\ast^{(\cdot)} }{}$ now follows by an application of Proposition \ref{prop:diff_theo_grad_des}.\qed

\subsection{Proof of Proposition \ref{prop:nonconvex_generic_conv_long_Mcal}}\label{subsection:proof_nonconvex_generic_conv_long_Mcal}

	Again in the proof we shall assume $n\geq 3$ and $t\geq t_n$ without loss of generality. 	
	
	By Proposition \ref{prop:nonconvex_generic_conv_long_Mscr} and Theorem \ref{thm:grad_des_se}, for some constants $c_1=c_1(\mathfrak{p},c_0)>0$ and  $c_2=c_2(\mathfrak{p},c_0,\eta_\ast,\sigma_\ast)>0$  whose numeric values may change from line to line, on an event $E_{0,1}$ with probability at least $1-m^{-200}$, uniformly in $t\leq m^{100}$, 
	\begin{align}\label{ineq:nonconvex_generic_conv_long_Mcal_step1_0}
	&\pnorm{\mu^{(t)}-\mathrm{u}_\ast^{(t)}}{}\leq (\log n)^{\bar{c}_2} \cdot \Big\{\phi^{-1/2}\cdot \mathfrak{M}_{ \mu^{(\cdot)},\mathrm{u}_X^{(\cdot)} }^{(t-1)}(X)+n^{-1/2}\cdot \mathcal{B}(t_n)\Big\},\nonumber\\
	& \mathscr{M}_{ \mathrm{u}_X^{(\cdot)},\mathrm{u}_{\ast}^{(\cdot)} }^{(t),X}\vee n^{1/2}\pnorm{\mathrm{u}_{X}^{(t)}-\mathrm{u}_{\ast}^{(t)}}{}\leq (\log n)^{\bar{c}_2}\cdot \mathcal{B}(t_n).
	\end{align}
	Fix $c_\ast>\bar{c}_2+100$ to be determined later on, and consider the following induction hypothesis at iteration $t-1$ that holds on $E_\ast^{(t-1)}$:
	\begin{align}\label{ineq:nonconvex_generic_conv_long_Mcal_step1_ind_hyp}
	\mathfrak{M}_{ \mu^{(\cdot)},\mathrm{u}_X^{(\cdot)} }^{(t-1)}(X)\vee  \max_{i \in [m]} \mathfrak{M}_{\mu^{(\cdot)}, \mu_{[-i]}^{(\cdot)} }^{(t-1)}(X)\leq \phi^{1/2}/(\log n)^{c_\ast}.
	\end{align}	
	We note that under the induction hypothesis (\ref{ineq:nonconvex_generic_conv_long_Mcal_step1_ind_hyp}), by Lemma \ref{lem:grad_des_loo_diff}, on an event $E_{0,2}$ with probability at least $1-m^{-200}$, uniformly in $i \in [m]$ and $t\leq m^{100}$, 
	\begin{align*}
	\pnorm{\mu_{[-i]}^{(t)}}{}\leq \pnorm{\mu^{(t)}}{}+ m^{-1/2}(\log n)^{c_1}\cdot \Big(1+\max_{s \in [1:t-1]} \pnorm{\mu_{[-i]}^{(s)}}{}\Big).
	\end{align*}
	In view of (\ref{ineq:nonconvex_generic_conv_long_Mcal_step1_0}) and the trivial estimates for $\pnorm{\mathrm{u}_{\ast}^{(t)} }{}$ by Proposition \ref{prop:nonconvex_generic_conv_long_contraction}-(1), on $E_{0,1}\cap E_{0,2}$, uniformly in $i \in [m]$ and $t\leq m^{100}$, 
	\begin{align*}
	\pnorm{\mu_{[-i]}^{(t)}}{}\leq (\log n)^{c_1}\cdot \Big(1+m^{-1/2}\max_{s \in [1:t-1]} \pnorm{\mu_{[-i]}^{(s)}}{}\Big).
	\end{align*}
	This means that under the induction hypothesis (\ref{ineq:nonconvex_generic_conv_long_Mcal_step1_ind_hyp}), on the event $E_0$, uniformly in $t\leq m^{100}$, for $n\geq n_0(\mathfrak{p},c_0)$,
	\begin{align}\label{ineq:nonconvex_generic_conv_long_Mcal_step1_00}
	\max_{i \in [m]}\pnorm{\mu_{[-i]}^{(t)}}{}\leq (\log n)^{\bar{c}_1}.
	\end{align}
	Below we shall always work on the event $E^{(t-1)}\equiv E_\ast^{(t-1)}\cap E_{0,1}\cap E_{0,2}$ and use an coupled, inductive argument to produce a bound for the quantities on the left hand side of (\ref{ineq:nonconvex_generic_conv_long_Mcal_step1_ind_hyp}) at iteration $t$.
	
	\noindent (\textbf{Step 1}). In this step, we provide inductive control of $\mathfrak{M}_{ \mu^{(\cdot)},\mathrm{u}_X^{(\cdot)} }^{(t)}(X)$. As
	\begin{align*}
	M_{ \mu^{(t)},\mathrm{u}_X^{(t)} }(X)& = \E_{U,\pi_m}  \partial_1\mathfrak{S}\big(\bigiprod{X_{\pi_m}}{U \mu^{(t)}+(1-U) \mathrm{u}_X^{(t)}}, \iprod{X_{\pi_m}}{\mu_\ast},\xi_{\pi_m}\big) X_{\pi_m}X_{\pi_m}^\top,
	\end{align*}
	we have
	\begin{align*}
	&\bigpnorm{M_{\mu^{(t)},\mathrm{u}_X^{(t)} }(X)- \mathbb{M}_{ \mathrm{u}_{\ast}^{(t)},\mathrm{u}_{\ast}^{(t)} }^{X}  }{\op}\\
	&\leq \Lambda^{c_{\mathfrak{p}}} \cdot \sup_{ \pnorm{u}{}=1} \E_{\pi_m} \Big\{\Big(1+\abs{\iprod{X_{\pi_m}}{\mu^{(t)} } }+\abs{\iprod{X_{\pi_m}}{\mathrm{u}_\ast^{(t)}}}+\abs{ \iprod{X_{\pi_m}}{ \mathrm{u}_X^{(t)}  } }+\abs{ \iprod{X_{\pi_m}}{ \mu_\ast  } } \Big)^{c_{\mathfrak{p}}}\\
	&\qquad\qquad\qquad\qquad \times  \big(\abs{\iprod{X_{\pi_m}}{\mu^{(t)}-\mathrm{u}_\ast^{(t)}}}+\abs{ \iprod{X_{\pi_m}}{ \mathrm{u}_X^{(t)}-\mathrm{u}_\ast^{(t)}  } }\big) \iprod{X_{\pi_m}}{u}^2\Big\}\\
	&\leq \Lambda^{c_{\mathfrak{p}}} \cdot \pnorm{\E_{\pi_m} X_{\pi_m} X_{\pi_m}^\top}{\op}\cdot \max_{i \in [m]} \Big(1+\abs{\iprod{X_{i}}{\mu^{(t)} } }+\max_{\mathrm{u}\in \{\mathrm{u}_\ast^{(t)},\mathrm{u}_X^{(t)},\mu_\ast\} }\abs{\iprod{X_{i}}{\mathrm{u} }}\Big)^{c_{\mathfrak{p}}}\\
	&\qquad\qquad \times  \max_{i \in [m]} \Big(\abs{\iprod{X_{i}}{\mu^{(t)}-\mu_{[-i]}^{(t)}}}+\abs{\iprod{X_{i}}{\mu_{[-i]}^{(t)}-\mathrm{u}_X^{(t)}  }} + \abs{ \iprod{X_{i}}{ \mathrm{u}_\ast^{(t)}-\mathrm{u}_X^{(t)}  } }\Big).
	\end{align*}
	Now under the induction hypothesis (\ref{ineq:nonconvex_generic_conv_long_Mcal_step1_ind_hyp}) and using:
	\begin{itemize}
		\item a standard sub-gaussian estimate for $\pnorm{\E_{\pi_m} X_{\pi_m} X_{\pi_m}^\top}{\op}$ (cf. \cite[Theorem 4.7.1 and Exercise 4.7.3]{vershynin2018high});
		\item Proposition \ref{prop:weighted_sample_cov} for the left hand side of the above display;
		\item Theorem \ref{thm:grad_des_incoh} coupled with (\ref{ineq:nonconvex_generic_conv_long_Mcal_step1_00}), the estimate (\ref{ineq:nonconvex_generic_conv_long_Mcal_step1_0}) and  trivial estimates for $\pnorm{\mathrm{u}_{\ast}^{(t)} }{}$ by Proposition \ref{prop:nonconvex_generic_conv_long_contraction}-(1) to deduce that on an event with $\Prob^{(0)}$-probability at least $1-m^{-200}$, uniformly in $i \in [m]$,
		\begin{align*}
		\abs{\iprod{X_{i}}{\mu^{(t)} } }+\max_{\mathrm{u}\in \{\mathrm{u}_\ast^{(t)},\mathrm{u}_X^{(t)},\mu_\ast\} }\abs{\iprod{X_{i}}{\mathrm{u} }}\leq (\log n)^{c_2},
		\end{align*}
	\end{itemize}
	we have on an event with $\Prob^{(0)}$-probability at least $1-cm^{-200}$ (we slightly change the absolute constant), for $n\geq n_0(\mathfrak{p},c_0)$, 
	\begin{align}\label{ineq:nonconvex_generic_conv_long_Mcal_step1_1}
	&\bigpnorm{M_{\mu^{(t)},\mathrm{u}_X^{(t)} }(X)- \mathbb{M}_{ \mathrm{u}_{\ast}^{(t)},\mathrm{u}_{\ast}^{(t)} }^{\mathsf{Z}} }{\op}\leq \frac{(\log n)^{c_1}}{ (n\wedge \phi)^{1/2}}+(\log n)^{c_{2;0}}\nonumber\\
	&\quad  \times \max_{i \in [m]} \Big(\abs{\iprod{X_{i}}{\mu^{(t)}-\mu_{[-i]}^{(t)}}}+\abs{\iprod{X_{i}}{\mu_{[-i]}^{(t)}-\mathrm{u}_X^{(t)}  }} + \abs{ \iprod{X_{i}}{ \mathrm{u}_\ast^{(t)}-\mathrm{u}_X^{(t)}  } }\Big).
	\end{align}	
	For the first term in the parenthesis of (\ref{ineq:nonconvex_generic_conv_long_Mcal_step1_1}), in view of (\ref{ineq:grad_des_loo_diff_1}), recall the definition of $\Xi_{[-i]}^{(\cdot)}$ in (\ref{ineq:grad_des_loo_diff_0}),
	\begin{align*}
	\abs{\iprod{X_{i}}{\mu^{(t)}-\mu_{[-i]}^{(t)}}}& = \frac{1}{m}\cdot  \biggabs{\sum_{s \in [1:t]} \eta_{s-1} \biggiprod{X_i}{\bigg(\prod_{r: t-1\to s} \big(I_n-\eta_r\cdot M_{\mu^{(r)},\mu_{[-i]}^{(r)} }(X)\big)\bigg)\,\Xi_{[-i]}^{(s-1)} } }\\
	&\leq \frac{1}{m}\cdot \pnorm{\eta_{[0:t)}}{\infty} \cdot \pnorm{X_i}{}\cdot \mathfrak{M}_{\mu^{(\cdot)}, \mu_{[-i]}^{(\cdot)} }^{(t-1)}(X)\cdot \max_{s \in [1:t]} \pnorm{\Xi_{[-i]}^{(s-1)}}{}\\
	&\leq \frac{\Lambda^{c_{\mathfrak{p}}}}{m}\cdot \pnorm{X_i}{}^2\cdot \mathfrak{M}_{\mu^{(\cdot)}, \mu_{[-i]}^{(\cdot)} }^{(t-1)}(X)\cdot\big(1+\abs{\iprod{X_i}{ \mu_{[-i]}^{(t-1)}}}+ \abs{\iprod{X_i}{\mu_\ast}}\big)^{c_{\mathfrak{p}}}.
	\end{align*}
	Here the last inequality follows from the estimate (\ref{ineq:grad_des_loo_diff_1_0}). Now by standard concentration and the estimate in (\ref{ineq:nonconvex_generic_conv_long_Mcal_step1_00}), on an event with $\Prob^{(0)}$-probability at least $1-cm^{-200}$, uniformly in $i \in [m]$ and $ t\leq m^{100}$, 
	\begin{align}\label{ineq:nonconvex_generic_conv_long_Mcal_step1_est1}
	\abs{\bigiprod{X_{i}}{\mu^{(t)}-\mu_{[-i]}^{(t)}}}
	&\leq \frac{(\log n)^{c_2}}{\phi^{1/2}}.
	\end{align}
	For the second term in the parenthesis of (\ref{ineq:nonconvex_generic_conv_long_Mcal_step1_1}), in view of the estimates in (\ref{ineq:nonconvex_generic_conv_long_Mcal_step1_0}) and (\ref{ineq:nonconvex_generic_conv_long_Mcal_step1_00}), along with Proposition \ref{prop:diff_theo_grad_des} and Lemma \ref{lem:grad_des_loo_diff}, on an event with $\Prob^{(0)}$-probability at least $1-cm^{-200}$, uniformly in $i \in [m]$ and $ t\leq m^{100}$, 
	\begin{align*}
	\abs{\bigiprod{X_{i}}{\mu_{[-i]}^{(t)}-\mathrm{u}_X^{(t)} }} &\leq c K \sqrt{\log m}\cdot \big(\pnorm{\mu^{(t)}-\mathrm{u}_\ast^{(t)}}{}+\pnorm{\mathrm{u}_\ast^{(t)}-\mathrm{u}_X^{(t)} }{}+ \pnorm{\mu^{(t)}-\mu_{[-i]}^{(t)}}{}\big)\nonumber\\
	&\leq (\log n)^{c_{2;I}}\cdot \Big\{ \phi^{-1/2}\cdot  \mathfrak{M}_{ \mu^{(\cdot)},\mathrm{u}_X^{(\cdot)} }^{(t-1)}(X)+ n^{-1/2}\cdot \mathcal{B}(t_n)+m^{-1/2}\Big\}.
	\end{align*}
	By choosing $c_\ast\geq c_{2;I}+c_{2;0}+101$, on an event with $\Prob^{(0)}$-probability at least $1-cm^{-200}$, uniformly in $i \in [m]$ and $ t\leq m^{100}$, for $n\geq n_0(\mathfrak{p},c_0)$, 
	\begin{align}\label{ineq:nonconvex_generic_conv_long_Mcal_step1_est2}
	\abs{\bigiprod{X_{i}}{\mu_{[-i]}^{(t)}-\mathrm{u}_X^{(t)} }}\leq 3(\log n)^{-c_{2;0}-101}.
	\end{align}
	For the last term in the parenthesis of (\ref{ineq:nonconvex_generic_conv_long_Mcal_step1_1}), by (\ref{ineq:nonconvex_generic_conv_long_Mcal_step1_0}), on an event with $\Prob^{(0)}$-probability at least $1-cm^{-200}$, uniformly in $i \in [m]$ and $t\leq m^{100}$,
	\begin{align}\label{ineq:nonconvex_generic_conv_long_Mcal_step1_est3}
	\abs{ \iprod{X_{i}}{ \mathrm{u}_\ast^{(t)}-\mathrm{u}_X^{(t)}  } }
	&\leq (\log n)^{c_{2;I}'}\cdot n^{-1/2}\cdot \mathcal{B}(t_n)\leq (\log n)^{-c_{2;0}-101},
	\end{align}
	by choosing $c_\ast\geq c_{2;I}'+c_{2;0}+101$.
	
	Now with $c_\ast\geq c_{2;I}\vee c_{2;I}'+c_{2;0}+101$, by combining the estimates (\ref{ineq:nonconvex_generic_conv_long_Mcal_step1_est1})-(\ref{ineq:nonconvex_generic_conv_long_Mcal_step1_est3}) into (\ref{ineq:nonconvex_generic_conv_long_Mcal_step1_1}), if $\phi\geq (\log m)^{c_1}$, on an event $E_{1,1}$ with $\Prob^{(0)}$-probability at least $1-c m^{-200}$, uniformly for $t\leq m^{100}$, for $n\geq n_0(\mathfrak{p},c_0)$,
	\begin{align*}
	& \bigpnorm{M_{\mu^{(t)},\mathrm{u}_X^{(t)} }(X)- \mathbb{M}_{ \mathrm{u}_{\ast}^{(t)},\mathrm{u}_{\ast}^{(t)} }^{\mathsf{Z}} }{\op}\leq \epsilon_n.
	\end{align*}
	By Lemma \ref{lem:nonconvex_generic_conv_long_min_eig}, there exists some $c_1'=c_1'(\mathfrak{p},c_0,\eta_\ast,\sigma_\ast)>0$, such that with $t_n'\equiv t_n+c_1'\log \log n$, for all $t\geq t_n'$, $\lambda_{\min}\big(\mathbb{M}_{ \mathrm{u}_{\ast}^{(t)},\mathrm{u}_{\ast}^{(t)} }^{\mathsf{Z}}\big)\geq 3\sigma_\ast/4$. Consequently, uniformly in $t\leq m^{100}$, for $n\geq n_0(\mathfrak{p},c_0,\eta_\ast,\sigma_\ast)$, 
	\begin{align}\label{ineq:nonconvex_generic_conv_long_Mcal_step1_local_op}
	&\mathfrak{M}_{ \mu^{(\cdot)},\mathrm{u}_X^{(\cdot)} }^{(t)}(X)= 1+ \max_{\tau \in [0:t]}\sum_{s \in [0:\tau]} \bigg(\bigg\{\prod_{r \in [s:t_n')}\cdot\prod_{r \in [t_n':\tau]}\bigg\}\bigpnorm{I_n-\eta_r \cdot M_{\mu^{(r)},\mathrm{u}_X^{(r)} }(X)}{\op}\bigg)\nonumber\\
	&\leq 1+ \max_{\tau \in [0:t]}\sum_{s \in [0:\tau]} \bigg(\prod_{r \in [s:t_n')}\Big(\bigpnorm{I_n-\eta_r\cdot \mathbb{M}_{ \mathrm{u}_{\ast}^{(r)},\mathrm{u}_{\ast}^{(r)} }^{\mathsf{Z}}}{\op}+\epsilon_n\Big)\bigg)\cdot \bigg(1-\frac{\eta_\ast\sigma_\ast}{2}\bigg)^{(\tau-t_n'+1)_+}\nonumber\\
	&\leq \mathcal{B}(t_n')+ c_{2}\leq  (\log n)^{c_2}\cdot \big(\mathcal{B}(t_n)+\log \log n)\leq \phi^{1/2}/(\log n)^{c_\ast}.
	\end{align}
	\noindent (\textbf{Step 2}). In this step, we provide inductive control of $\big\{\mathfrak{M}_{\mu^{(\cdot)}, \mu_{[-i]}^{(\cdot)} }^{(t)}(X):i \in [m]\}$. Note that by definition,
	\begin{align*}
	M_{\mu^{(t)}, \mu_{[-i]}^{(t)} }(X)& = \E_{U,\pi_m} \partial_1\mathfrak{S}\big(\bigiprod{X_{\pi_m}}{U \mu^{(t)}+(1-U) \mu_{[-i]}^{(t)}  }, \iprod{X_{\pi_m}}{\mu_\ast},\xi_{\pi_m}\big) X_{\pi_m}X_{\pi_m}^\top.
	\end{align*}
	Using Proposition \ref{prop:weighted_sample_cov} and the same arguments as in the beginning of Steps 1 and 2, on an event with $\Prob^{(0)}$-probability at least $1-cm^{-200}$, 
	\begin{align*}
	&\bigpnorm{M_{\mu^{(t)}, \mu_{[-i]}^{(t)} }(X)-\mathbb{M}_{ \mathrm{u}_\ast^{(t)},\mathrm{u}_{\ast}^{(t)} }^{\mathsf{Z}} }{\op}\\
	&\leq \frac{(\log n)^{c_1}}{(n\wedge \phi)^{1/2}}+ \Lambda^{c_{\mathfrak{p}}}\cdot \max_{k \in [m]}\Big(1+\abs{\iprod{X_k}{\mu^{(t)} }}+\abs{\iprod{X_k}{\mu_{[-i]}^{(t)} }}+\abs{\iprod{X_k}{\mathrm{u}_\ast^{(t)} }  }+\abs{\iprod{X_k}{\mu_\ast}  }\Big)^{c_{\mathfrak{p}}}\\
	&\qquad\qquad \times \Big(\max_{k\in [m]} \abs{\iprod{X_k}{\mu_{[-i]}^{(t)}-\mathrm{u}_\ast^{(t)} }  }+\max_{k\in [m]} \abs{\iprod{X_k}{\mu^{(t)}-\mathrm{u}_\ast^{(t)} }  } \Big)\cdot \bigpnorm{\E_{\pi_m} X_{\pi_m} X_{\pi_m}^\top}{\op}\\
	&\leq \frac{(\log n)^{c_1}}{(n\wedge \phi)^{1/2}}+(\log n)^{c_2}\cdot  \Big(\max_{k\in [m]} \abs{\iprod{X_k}{\mu^{(t)}-\mathrm{u}_\ast^{(t)} }  }  +\max_{k\in [m]}\pnorm{X_k}{}\cdot \pnorm{\mu^{(t)}-\mu_{[-i]}^{(t)} }{} \Big)\\
	&\leq \frac{(\log n)^{c_1}}{(n\wedge \phi)^{1/2}}+ (\log n)^{c_2}\cdot \bigg\{\max_{k \in [m]}\pnorm{X_k}{}\cdot \pnorm{\mu^{(t)}-\mu_{[-i]}^{(t)}}{}  +\max_{i \in [m]} \Big(\abs{\iprod{X_{i}}{\mu^{(t)}-\mu_{[-i]}^{(t)}}}\\
	&\qquad\qquad\qquad\qquad\qquad\qquad\qquad\qquad +\abs{\iprod{X_{i}}{\mu_{[-i]}^{(t)}-\mathrm{u}_X^{(t)}  }} + \abs{ \iprod{X_{i}}{ \mathrm{u}_\ast^{(t)}-\mathrm{u}_X^{(t)}  } }\Big)\bigg\}.
	\end{align*}
	From here the control of the second term in the bracket on right hand side of the above display parallels exactly as in the Step 1; the first term in the bracket can be handled via Lemma \ref{lem:grad_des_loo_diff} and the induction hypothesis (\ref{ineq:nonconvex_generic_conv_long_Mcal_step1_ind_hyp}). In other words, with $c_\ast=c_\ast(\mathfrak{p},c_0,\eta_\ast,\sigma_\ast)>0$ chosen large enough, if $\phi\geq (\log m)^{c_1}$, on an event $E_{1,2}$ with $\Prob^{(0)}$-probability at least $1-cm^{-200}$, uniformly for $t\leq m^{100}$, for $n\geq n_0(\mathfrak{p},c_0,\eta_\ast,\sigma_\ast)$, 
	\begin{align*}
	&\bigpnorm{M_{\mu^{(t)}, \mu_{[-i]}^{(t)} }(X)-\mathbb{M}_{ \mathrm{u}_\ast^{(t)},\mathrm{u}_{\ast}^{(t)} }^{\mathsf{Z}} }{\op}\leq \epsilon_n.
	\end{align*}
	Now using the same calculations as in (\ref{ineq:nonconvex_generic_conv_long_Mcal_step1_local_op}), there exists some $c_1''=c_1''(\mathfrak{p},c_0,\eta_\ast,\sigma_\ast)>0$, such that with $t_n''\equiv t_n+c_1''\log \log n$, on $E_{1,2}$, uniformly for $t\leq m^{100}$, for $n\geq n_0(\mathfrak{p},c_0,\eta_\ast,\sigma_\ast)$, 
	\begin{align*}
	 \max_{i \in [m]} \mathfrak{M}_{\mu^{(\cdot)}, \mu_{[-i]}^{(\cdot)} }^{(t-1)}(X) \leq  \mathcal{B}(t_n'') + c_2\leq \phi^{1/2}/(\log n)^{c_\ast}.
	\end{align*}
		
	\noindent (\textbf{Step 3}). On the event $E_\ast^{(t-1)}\cap E_1$, where $E_1\equiv \cap_{\ell \in [2]}E_{1,\ell}$, uniformly for $t\leq m^{100}$, for $n\geq n_0(\mathfrak{p},c_0,\eta_\ast,\sigma_\ast)$, if $\phi\geq (\log m)^{c_1}$,
	\begin{align*}
	\hbox{(\ref{ineq:nonconvex_generic_conv_long_Mcal_step1_ind_hyp}) holds for $t-1$} \Rightarrow\, \hbox{(\ref{ineq:nonconvex_generic_conv_long_Mcal_step1_ind_hyp}) holds for $t$}.
	\end{align*}
	It is easy to verify the initial condition,
	so the above induction remains valid for each $t\leq m^{100}$. \qed
	
\subsection{Proof of Theorem \ref{thm:nonconvex_generic_conv_long}}\label{subsection:proof_nonconvex_generic_conv_long_complete}

	The claim follows by combining the estimates obtained Proposition \ref{prop:nonconvex_generic_conv_long_Mscr}, Proposition \ref{prop:nonconvex_generic_conv_long_Mcal} and the master Theorems \ref{thm:grad_des_se} and \ref{thm:grad_des_incoh}.\qed

\subsection{Proof of Corollary \ref{cor:nonconvex_generic_conv}}\label{subsection:proof_nonconvex_generic_conv}

	By the step size choice (\ref{cond:nonconvex_generic_conv_long_eta}), we may control $\mathcal{B}_0(t_n)$ and $\mathcal{B}(t_n)$ via a trivial bound under $t_n\leq c_0\log \log n$: for $n\geq n_0(\mathfrak{p},c_0)$, for some $c_1=c_1(\mathfrak{p},c_0)>0$,
	\begin{align*}
	\mathcal{B}_0(t_n)\vee \mathcal{B}(t_n)\leq \sum_{s \in [0:t_n)} (2+\epsilon_n)^{t_n-s}\leq e^{t_n+1}\leq e\cdot (\log n)^{c_1}.
	\end{align*}
	On the other hand, using the estimate (\ref{ineq:cor_general_se_2_0}), we have
	\begin{align}\label{ineq:nonconvex_generic_conv_1}
	\pnorm{\mathrm{u}_{\ast}^{(t)}}{\infty}
	&\leq (\log n)^{c_1} \cdot \Big(1+\max_{s \in [0:t-1]} \pnorm{\mathrm{u}_\ast^{(s)}}{}\Big)^{c_{\mathfrak{p}}}\cdot n^{-1/2} \mathscr{M}_{\tau_\ast^{(\cdot)}}^{(t-1)},
	\end{align}
	where by definition $\mathscr{M}_{\tau_\ast^{(\cdot)}}^{(\cdot)}$ in (\ref{ineq:cor_general_se_1}), we have for $t\geq t_n$ and some $c_2=c_2(\eta_\ast,\sigma_\ast)>0$,
	\begin{align}\label{ineq:nonconvex_generic_conv_2}
	\mathscr{M}_{\tau_\ast^{(\cdot)}}^{(t)}&= 1+ \max_{\tau \in [0:t]}\sum_{s \in [0:\tau]} \bigg(\prod_{r \in [s:\tau]}\abs{1-\eta_r\cdot \tau_\ast^{(r)}}\bigg)\nonumber\\
	&\leq 1+ \max_{\tau \in [0:t]}\sum_{s \in [0:\tau]} \bigg(2^{(t_n-s)_+}\prod_{r \in [t_n\vee s:\tau]}\abs{1-\eta_r\cdot \tau_\ast^{(r)}}\bigg)\nonumber\\
	&\stackrel{(\ast)}{\leq} 1+ 2^{t_n}\cdot \max_{\tau \in [0:t]} \bigg( t_n\cdot (1-\eta_\ast\sigma_\ast)^{(\tau-t_n+1)_+}+\sum_{s \in [t_n:\tau]}(1-\eta_\ast\sigma_\ast)^{(\tau-s+1)_+}\bigg)\nonumber\\
	&\leq c_2 \cdot (\log n)^{c_1}. 
	\end{align}
	Here in $(\ast)$ we used Lemma \ref{lem:nonconvex_generic_conv_long_tau}.
	
	Furthermore, by (\ref{ineq:cor_general_se_1_0}),
	\begin{align*}
	\pnorm{\mathrm{u}_\ast^{(t)}}{}&\leq \mathscr{M}_{ \mathrm{u}_\ast^{(\cdot)},\mu_\ast }^{(t-1),\mathsf{Z}} \cdot m^{-1} \bigpnorm{\E^{(0)}  \mathsf{Z}^\top \mathfrak{S}\big(\mathsf{Z}\mu_\ast,\mathsf{Z} \mu_\ast,\xi\big)}{}\leq (\log n)^{c_1}\cdot \mathscr{M}_{ \mathrm{u}_\ast^{(\cdot)},\mu_\ast }^{(t-1),\mathsf{Z}}.
	\end{align*}
	Using a similar calculation as in (\ref{ineq:nonconvex_generic_conv_2}), $\mathscr{M}_{ \mathrm{u}_\ast^{(\cdot)},\mu_\ast }^{(t),\mathsf{Z}}\leq c_2(\log n)^{c_1}$, so 
	\begin{align}\label{ineq:nonconvex_generic_conv_3}
	\pnorm{\mathrm{u}_\ast^{(t)}}{}&\leq c_2 \cdot (\log n)^{c_1}. 
	\end{align}
	Combining (\ref{ineq:nonconvex_generic_conv_1})-(\ref{ineq:nonconvex_generic_conv_3}), we then arrive at
	\begin{align*}
	n^{1/2}\pnorm{\mathrm{u}_\ast^{(t)}}{\infty}\leq c_2 \cdot (\log n)^{c_1}. 
	\end{align*}
	Now we may apply Theorem \ref{thm:nonconvex_generic_conv_long} by adjusting the constant $c_0>0$ in the hypothesis that may further depends on $\mathfrak{p}$.\qed

\section{Proofs for Section \ref{section:concrete_models}}

\subsection{Proof of Theorem \ref{thm:single_index_se}}\label{subsection:proof_single_index_se}

\begin{lemma}\label{lem:single_indx_S_fcn}
	In the single-index regression setting,
	\begin{align*}
	\mathfrak{S}(x_0,z_0,\xi_0) & = (\varphi(x_0)-\varphi(z_0)-\xi_0)\cdot \varphi'(x_0),\\
	\partial_1 \mathfrak{S}(x_0,z_0,\xi_0) & =  \big(\varphi'(x_0)\big)^2 +(\varphi(x_0)-\varphi(z_0)-\xi_0)\cdot \varphi''(x_0),\\
	\partial_2 \mathfrak{S}(x_0,z_0,\xi_0) & = - \varphi'(x_0) \varphi'(z_0).
	\end{align*}
\end{lemma}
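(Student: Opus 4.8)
\textbf{Proof proposal for Lemma \ref{lem:single_indx_S_fcn}.} The plan is to unwind the definitions and differentiate directly; no ideas beyond elementary calculus are needed. Recall from Section \ref{section:concrete_models} that in the single-index regression model we have $\mathcal{F}(z_0,\xi_0)=\varphi(z_0)+\xi_0$ and $\mathsf{L}(x_0,y_0)=(\varphi(x_0)-y_0)^2/2$, so that $\partial_1\mathsf{L}(x_0,y_0)=(\varphi(x_0)-y_0)\varphi'(x_0)$. By Definition \ref{def:G_M}(1), $\mathfrak{S}(x_0,z_0,\xi_0)=\partial_1\mathsf{L}(x_0,\mathcal{F}(z_0,\xi_0))$; substituting $y_0=\varphi(z_0)+\xi_0$ yields the first formula $\mathfrak{S}(x_0,z_0,\xi_0)=(\varphi(x_0)-\varphi(z_0)-\xi_0)\varphi'(x_0)$.

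For the second identity, I would differentiate this expression in $x_0$ using the product rule: the derivative of $(\varphi(x_0)-\varphi(z_0)-\xi_0)$ in $x_0$ is $\varphi'(x_0)$, contributing the term $(\varphi'(x_0))^2$, while the derivative of $\varphi'(x_0)$ is $\varphi''(x_0)$, contributing $(\varphi(x_0)-\varphi(z_0)-\xi_0)\varphi''(x_0)$. Adding these gives $\partial_1\mathfrak{S}(x_0,z_0,\xi_0)=(\varphi'(x_0))^2+(\varphi(x_0)-\varphi(z_0)-\xi_0)\varphi''(x_0)$. For the third identity, differentiating $\mathfrak{S}$ in $z_0$ only affects the factor $-\varphi(z_0)$ inside the first parenthesis, whose derivative is $-\varphi'(z_0)$; multiplying by the remaining factor $\varphi'(x_0)$ gives $\partial_2\mathfrak{S}(x_0,z_0,\xi_0)=-\varphi'(x_0)\varphi'(z_0)$.

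There is no real obstacle here; the only point worth noting is that these computations presuppose $\varphi$ is at least twice differentiable, which is guaranteed wherever the lemma is invoked (e.g., under condition (2) of Theorem \ref{thm:single_index_se} the functions $\{\varphi^{(q)}\}_{q\in[0:5]}$ are pseudo-Lipschitz). I would simply record the three displays as the conclusion.
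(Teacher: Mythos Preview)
Your proposal is correct and follows exactly the same approach as the paper: compute $\partial_1\mathsf{L}(x_0,y_0)=(\varphi(x_0)-y_0)\varphi'(x_0)$, substitute $y_0=\mathcal{F}(z_0,\xi_0)=\varphi(z_0)+\xi_0$ to obtain $\mathfrak{S}$, and then differentiate in $x_0$ and $z_0$. The paper's proof is terser (it just says ``taking derivatives with respect to $x_0,z_0$ \ldots concludes the claimed identities''), but your explicit product-rule computation is the same argument.
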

\begin{proof}
	As $\partial_1\mathsf{L}(x_0,y_0)=(\varphi(x_0)-y_0)\varphi'(x_0)$, by definition of $\mathfrak{S}$,
	\begin{align*}
	\mathfrak{S}(x_0,z_0,\xi_0) & = (\varphi(x_0)-\varphi(z_0)-\xi_0)\cdot \varphi'(x_0).
	\end{align*}
	Taking derivatives with respect to $x_0,z_0$ for both sides of the above display concludes the claimed identities.
\end{proof}

\begin{lemma}\label{lem:single_index_mnt_theo_gd}
Suppose the conditions in Theorem \ref{thm:single_index_se} holds. Then there exists some $\eta_0=\eta_0(\mathfrak{p},\varrho_\ast,\kappa,\Lambda,\pnorm{\mu^{(0)}}{}) \in (0,1)$ such that for $\eta \leq \eta_0$, on the event $\big\{\abs{\E_{\pi_m}\xi_{\pi_m}}\leq \eta^{1/2}\}$, we have for $t=1,2,\ldots$,
\begin{align*}
\pnorm{\mathrm{u}_\ast^{(t)}-\mu_\ast}{ }\leq \bigg(1-\frac{\eta }{4}(\kappa_\ast \wedge \varrho_\ast)\bigg)^t\cdot \pnorm{\mu^{(0)}-\mu_\ast}{}.
\end{align*}
\end{lemma}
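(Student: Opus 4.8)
The plan is to recast the state‑evolution recursion as a population gradient descent and to run a single induction on $t$ with the invariant $\pnorm{\mathrm{u}_\ast^{(s)}-\mu_\ast}{}\le\pnorm{\mu^{(0)}-\mu_\ast}{}$ for all $s\le t-1$ (so that in particular $\pnorm{\mathrm{u}_\ast^{(s)}}{}\le 2+\pnorm{\mu^{(0)}}{}$); the geometric decay will then follow from a one‑point strong‑convexity bound for the population gradient, a smoothness bound, and the usual descent identity. Using Gaussian integration by parts exactly as in the proof of Proposition~\ref{prop:u_Z_u_ast} together with Lemma~\ref{lem:single_indx_S_fcn}, one has $\mathrm{u}_\ast^{(t)}=\mathrm{u}_\ast^{(t-1)}-\eta\,G(\mathrm{u}_\ast^{(t-1)})$ with
\begin{align*}
G(\mathrm{u})\equiv \E^{(0)}\,\mathsf{Z}_n\,\big(\varphi(\iprod{\mathsf{Z}_n}{\mathrm{u}})-\varphi(\iprod{\mathsf{Z}_n}{\mu_\ast})-\xi_{\pi_m}\big)\,\varphi'(\iprod{\mathsf{Z}_n}{\mathrm{u}})=\tau_\ast(\mathrm{u})\,\mathrm{u}-\delta_\ast(\mathrm{u})\,\mu_\ast ,
\end{align*}
whence $\pnorm{\mathrm{u}_\ast^{(t)}-\mu_\ast}{}^2=\pnorm{\mathrm{u}_\ast^{(t-1)}-\mu_\ast}{}^2-2\eta\,\iprod{G(\mathrm{u}_\ast^{(t-1)})}{\mathrm{u}_\ast^{(t-1)}-\mu_\ast}+\eta^2\pnorm{G(\mathrm{u}_\ast^{(t-1)})}{}^2$.

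The two ingredients I would establish, uniformly over $\mathrm{u}$ with $\pnorm{\mathrm{u}}{}\le 2+\pnorm{\mu^{(0)}}{}$ and on the event $\{\abs{\E_{\pi_m}\xi_{\pi_m}}\le\eta^{1/2}\}$, are, with $C=C(\mathfrak{p},\Lambda,\pnorm{\mu^{(0)}}{})$,
\begin{align*}
\text{(a)}\quad \iprod{G(\mathrm{u})}{\mathrm{u}-\mu_\ast}\ge \tfrac12(\kappa_\ast\wedge\varrho_\ast)\,\pnorm{\mathrm{u}-\mu_\ast}{}^2-C\eta^{1/2}\pnorm{\mathrm{u}-\mu_\ast}{},\qquad \text{(b)}\quad \pnorm{G(\mathrm{u})}{}\le C\big(\pnorm{\mathrm{u}-\mu_\ast}{}+\eta^{1/2}\big).
\end{align*}
For (b) I would use the exact identity $G(\mathrm{u})-G(\mu_\ast)=\mathbb{M}_{\mathrm{u},\mu_\ast}^{\mathsf{Z}}(\mathrm{u}-\mu_\ast)$ (fundamental theorem of calculus plus the definition of $\mathbb{M}_{\cdot,\cdot}^{\mathsf{Z}}$; cf.\ the derivation of \eqref{eqn:dynamics_gaussian_N1}), bound $\pnorm{\mathbb{M}_{\mathrm{u},\mu_\ast}^{\mathsf{Z}}}{\op}$ by a polynomial in $\Lambda$ and $\pnorm{\mathrm{u}}{}$ via the pseudo‑Lipschitz hypotheses on $\{\varphi^{(q)}\}$ (observing that the noise enters $\mathbb{M}_{\cdot,\cdot}^{\mathsf{Z}}$ only through $\E_{\pi_m}\xi_{\pi_m}\le\eta^{1/2}\le 1$, since $\xi_{\pi_m}$ always multiplies a $\varphi$‑derivative of the Gaussian arguments), and note that $G(\mu_\ast)=-(\E_{\pi_m}\xi_{\pi_m})\big(\E\varphi''(\mathsf{Z}_1)\big)\mu_\ast$ by Gaussian integration by parts (using $\pnorm{\mu_\ast}{}=1$ and $\mathfrak{S}(x_0,x_0,\xi_0)=-\xi_0\varphi'(x_0)$), so that $\pnorm{G(\mu_\ast)}{}\le C\eta^{1/2}$.

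For (a), abbreviating $W\equiv\iprod{\mathsf{Z}_n}{\mathrm{u}}$ and $V\equiv\iprod{\mathsf{Z}_n}{\mu_\ast}$,
\begin{align*}
\iprod{G(\mathrm{u})}{\mathrm{u}-\mu_\ast}=\E^{(0)}(W-V)\big(\varphi(W)-\varphi(V)\big)\varphi'(W)-(\E_{\pi_m}\xi_{\pi_m})\,\E(W-V)\varphi'(W);
\end{align*}
the second term is $\ge-C\eta^{1/2}\pnorm{\mathrm{u}-\mu_\ast}{}$ by Cauchy--Schwarz, while the first is nonnegative by the strict monotonicity of $\varphi$ (condition $\inf_{\abs{x}\le z}\abs{\varphi'(x)}\ge\kappa(z)>0$), and, restricting to $\{\abs{W}\le z,\ \abs{V}\le z\}$ for a $z\asymp 1+\pnorm{\mu^{(0)}}{}$ chosen so this event has probability $\ge 0.99$ (possible since $\var W\le(2+\pnorm{\mu^{(0)}}{})^2$ and $\var V=1$) and $\kappa^2(z)\ge\kappa_\ast=\kappa^2(6(1+\pnorm{\mu^{(0)}}{}))$, the mean value theorem gives $(W-V)(\varphi(W)-\varphi(V))\varphi'(W)\ge\kappa_\ast(W-V)^2$ there, so the first term is $\ge\kappa_\ast\,\E\big[(W-V)^2\bm{1}_{\{\abs{W}\le z,\,\abs{V}\le z\}}\big]\ge\tfrac12\kappa_\ast\pnorm{\mathrm{u}-\mu_\ast}{}^2$ (using $W-V\sim\mathcal{N}(0,\pnorm{\mathrm{u}-\mu_\ast}{}^2)$). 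Sharpening this estimate in a neighbourhood of $\mu_\ast$ via $\lambda_{\min}\big(\mathbb{M}_{\mu_\ast,\mu_\ast}^{\mathsf{Z}}\big)=\varrho_\ast$ (computed inside the proof of Proposition~\ref{prop:nonconvex_generic_conv_long_contraction}) together with $\iprod{G(\mathrm{u})-G(\mu_\ast)}{\mathrm{u}-\mu_\ast}=\iprod{\mathbb{M}_{\mathrm{u},\mu_\ast}^{\mathsf{Z}}(\mathrm{u}-\mu_\ast)}{\mathrm{u}-\mu_\ast}$ produces the constant $\tfrac12(\kappa_\ast\wedge\varrho_\ast)$ in (a).

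Plugging (a)--(b) into the descent identity yields $\pnorm{\mathrm{u}_\ast^{(t)}-\mu_\ast}{}^2\le(1-\eta(\kappa_\ast\wedge\varrho_\ast)+C\eta^2)\pnorm{\mathrm{u}_\ast^{(t-1)}-\mu_\ast}{}^2+C\eta^{3/2}\pnorm{\mathrm{u}_\ast^{(t-1)}-\mu_\ast}{}+C\eta^3$; taking $\eta_0=\eta_0(\mathfrak{p},\varrho_\ast,\kappa,\Lambda,\pnorm{\mu^{(0)}}{})$ sufficiently small and using the induction hypothesis to control the lower‑order terms, one checks that the right‑hand side is at most $\big(1-\tfrac\eta4(\kappa_\ast\wedge\varrho_\ast)\big)^2\pnorm{\mathrm{u}_\ast^{(t-1)}-\mu_\ast}{}^2$, which simultaneously maintains the ball invariant and closes the induction (the factor $\tfrac14$ reflecting one $\tfrac12$ from the convexity constant in (a) and one from $\sqrt{1-x}\le 1-x/2$). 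I expect the main obstacle to be that (a)--(b) hold only while the trajectory remains in the ball of radius $\pnorm{\mu^{(0)}-\mu_\ast}{}$ about $\mu_\ast$ — their constants genuinely degrade with $\pnorm{\mathrm{u}}{}$, so a naive linearization around $\mu_\ast$ is unavailable since $\mu^{(0)}$ is far from $\mu_\ast$, and the contraction and the ball invariant must be propagated jointly; a secondary difficulty is the noise drift $\pnorm{G(\mu_\ast)}{}=O(\eta^{1/2})$, which reflects that $\mu_\ast$ is only an approximate stationary point of the Gaussian state evolution and is precisely what forces both the hypothesis $\abs{\E_{\pi_m}\xi_{\pi_m}}\le\eta^{1/2}$ and the smallness requirement on $\eta_0$.
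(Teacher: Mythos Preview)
Your proposal is essentially the paper's proof. Both recast $\{\mathrm{u}_\ast^{(t)}\}$ as gradient descent on the Gaussian population risk $\mathsf{R}(u)=\E^{(0)}\mathsf{L}(\langle\mathsf{Z}_n,u\rangle,\mathcal{F}(\langle\mathsf{Z}_n,\mu_\ast\rangle,\xi_{\pi_m}))$, prove a one-point convexity inequality (your (a), the paper's Step~1) by restricting to $\{|\langle\mathsf{Z}_n,u\rangle|\le z,\ |\langle\mathsf{Z}_n,\mu_\ast\rangle|\le z\}$ with $z\asymp 1+\pnorm{\mu^{(0)}}{}$ (the paper takes $z=3$), and feed this into the descent identity under the ball invariant $\pnorm{\mathrm{u}_\ast^{(t-1)}}{}\le 2+\pnorm{\mu^{(0)}}{}$.

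The one real difference is your (b). The paper bounds $\pnorm{\nabla\mathsf{R}(\mathrm{u}_\ast^{(t-1)})}{}$ only by a constant $[\Lambda(1+\pnorm{\mu^{(0)}}{})]^{c_{\mathfrak{p}}}$ (Step~2), which forces a two-regime split: the descent inequality from (a) plus this crude bound closes only when $\pnorm{\Delta_\ast^{(t-1)}}{}\gtrsim\eta^{1/2}$, and the complementary small-$\pnorm{\Delta}{}$ regime is handled separately in Step~3 via the local Hessian bound $\nabla^2\mathsf{R}(u)\succeq(\varrho_\ast/2)I_n$ and an appeal to Lemma~\ref{lem:cvx_gd_generic}. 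Your sharper Lipschitz bound (b), obtained from $G(\mathrm{u})-G(\mu_\ast)=\mathbb{M}_{\mathrm{u},\mu_\ast}^{\mathsf{Z}}(\mathrm{u}-\mu_\ast)$, is a genuine streamlining: in the noiseless case it lets the induction close in one shot with rate $1-\eta\kappa_\ast/4$, bypassing Step~3.

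Two cautions. First, your description of $\varrho_\ast$ as a ``sharpening'' of (a) is backwards: the truncation already gives constant $\tfrac12\kappa_\ast\ge\tfrac12(\kappa_\ast\wedge\varrho_\ast)$, so taking the min is a weakening. In the paper $\varrho_\ast$ does not enter through a refined (a); it enters through the separate Step~3 regime. Second, and more substantively, your final recursion $a_t^2\le(1-\eta\sigma+C\eta^2)a_{t-1}^2+C\eta^{3/2}a_{t-1}+C\eta^3$ cannot be bounded by $(1-\eta\sigma/4)^2a_{t-1}^2$ uniformly in $a_{t-1}$: send $a_{t-1}\to 0$ and the left side stays at $C\eta^3>0$. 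So ``one checks'' does not go through when $a_{t-1}\ll\eta^{1/2}$; this is exactly the regime the paper's Step~3 is designed to cover, and your unified induction needs that piece (local strong convexity near $\mu_\ast$) as well.
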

\begin{proof}
	 Let the population loss be 
	 \begin{align}\label{ineq:single_index_mnt_theo_pop_loss}
	 \mathsf{R}(u)&\equiv \E^{(0)}\big(\varphi(\iprod{\mathsf{Z}_n}{u})-\varphi(\iprod{\mathsf{Z}_n}{\mu_\ast})-\xi_{\pi_m}\big)^2/2,\quad \forall u \in \R^n.
	 \end{align}
	 It is easy to compute that
	 \begin{align}\label{ineq:single_index_mnt_theo_pop_loss_1}
	 \nabla \mathsf{R}(u)&\equiv \E^{(0)}\big(\varphi(\iprod{\mathsf{Z}_n}{u})-\varphi(\iprod{\mathsf{Z}_n}{\mu_\ast})-\xi_{\pi_m}\big) \varphi'(\iprod{\mathsf{Z}_n}{u})\cdot \mathsf{Z}_n,\\
	 \nabla^2 \mathsf{R}(u)&\equiv \E^{(0)}\Big[\big(\varphi'(\iprod{\mathsf{Z}_n}{u})\big)^2 -\big(\varphi(\iprod{\mathsf{Z}_n}{u})-\varphi(\iprod{\mathsf{Z}_n}{\mu_\ast})-\xi_{\pi_m}\big) \varphi''(\iprod{\mathsf{Z}_n}{u})\Big]\cdot \mathsf{Z}_n\mathsf{Z}_n^\top.\nonumber
	 \end{align}
	 Using $\mathsf{R}$ defined in (\ref{ineq:single_index_mnt_theo_pop_loss}), we may rewrite (\ref{def:se_single_index}) as
	 \begin{align}\label{ineq:single_index_mnt_theo_u_ast}
	 \mathrm{u}_\ast^{(t)} = \mathrm{u}_\ast^{(t-1)}-\eta\cdot \nabla \mathsf{R}(\mathrm{u}_\ast^{(t-1)}).
	 \end{align}
	
	\noindent (\textbf{Step 1}). In this step, we first prove that for all $u \in \R^n$,
	\begin{align}\label{ineq:single_index_mnt_theo_step1}
    \bigiprod{u-\mu_\ast}{\nabla\mathsf{R}(u)}&\geq \frac{1}{2}\kappa^2\big(3(\pnorm{u}{}+\pnorm{\mu_\ast}{})\big)\cdot \pnorm{u-\mu_\ast}{}^2\nonumber\\
    &\qquad - \Lambda^{c_{\mathfrak{p}}}\cdot \abs{\E_{\pi_m}\xi_{\pi_m}}\cdot  (1+\pnorm{u}{})^{\mathfrak{p}}\cdot \pnorm{u-\mu_\ast}{}.
	\end{align} 
	First note that with $U\sim\mathrm{Unif}[0,1]$,
	\begin{align*}
	&\bigiprod{u-\mu_\ast}{\nabla\mathsf{R}(u)} \\
	&= \E^{(0)} \varphi'(\iprod{\mathsf{Z}_n}{u})\cdot  \big(\varphi(\iprod{\mathsf{Z}_n}{u})-\varphi(\iprod{\mathsf{Z}_n}{\mu_\ast})-\xi_{\pi_m}\big)\iprod{\mathsf{Z}_n}{u-\mu_\ast}\\
	& = \E^{(0)} \Big[\varphi'(\iprod{\mathsf{Z}_n}{u})\cdot \varphi'\big(\iprod{\mathsf{Z}_n}{U u+(1-U)\mu_\ast}\big) \cdot \iprod{\mathsf{Z}_n}{u-\mu_\ast}^2\Big]\\
	&\qquad - \E_{\pi_m}\xi_{\pi_m}\cdot \E^{(0)} \varphi'(\iprod{\mathsf{Z}_n}{u})\cdot  \iprod{\mathsf{Z}_n}{u-\mu_\ast},
	\end{align*}
	Consider the event $E_u(z)\equiv \big\{\abs{\iprod{\mathsf{Z}_n}{u}}\leq \pnorm{u}{}z,\, \abs{\iprod{\mathsf{Z}_n}{\mu_\ast}}\leq \pnorm{\mu_\ast}{}z\big\}$. Then $\Prob^{(0)}\big(E_u^c(z)\big)\leq 4e^{-z^2/2}$ and the above display yields that 
	\begin{align}\label{ineq:single_index_mnt_theo_step1_1}
	\bigiprod{u-\mu_\ast}{\nabla\mathsf{R}(u)} &\geq \kappa^2\big((\pnorm{u}{}+\pnorm{\mu_\ast}{})z\big)\cdot \E^{(0)} \iprod{\mathsf{Z}_n}{u-\mu_\ast}^2\bm{1}_{E_u(z)}\nonumber\\
	&\qquad - \Lambda^{c_{\mathfrak{p}}}\cdot \abs{\E_{\pi_m}\xi_{\pi_m}}\cdot  (1+\pnorm{u}{})^{\mathfrak{p}}\cdot \pnorm{u-\mu_\ast}{}.
	\end{align}
	On the other hand, 
	\begin{align}\label{ineq:single_index_mnt_theo_step1_2}
	\E^{(0)} \iprod{\mathsf{Z}_n}{u-\mu_\ast}^2\bm{1}_{E_u(z)}&=\E^{(0)} \iprod{\mathsf{Z}_n}{u-\mu_\ast}^2-\E^{(0)} \iprod{\mathsf{Z}_n}{u-\mu_\ast}^2\bm{1}_{E_u^c(z)}\nonumber\\
	& \geq  \pnorm{u-\mu_\ast}{}^2- \Prob^{(0),1/2}\big(E_u^c(z)\big)\cdot \E^{(0),1/2} \iprod{\mathsf{Z}_n}{u-\mu_\ast}^4\nonumber\\
	&\geq \pnorm{u-\mu_\ast}{}^2\cdot \big(1-2\sqrt{3} e^{-z^2/4}\big).
	\end{align}
	The claimed estimate (\ref{ineq:single_index_mnt_theo_step1}) now follows by combining (\ref{ineq:single_index_mnt_theo_step1_1}) and (\ref{ineq:single_index_mnt_theo_step1_2}) with the choice $z=3$. 

	\noindent (\textbf{Step 2}). In this step, we prove that there exists $c_1=c_1(\kappa,\Lambda,\pnorm{\mu^{(0)}}{})>0$ such that 
	\begin{align}\label{ineq:single_index_mnt_theo_step2}
	\pnorm{\Delta_{\ast}^{(t-1)}}{ }\geq c_1\cdot \max\big\{ \abs{\E_{\pi_m}\xi_{\pi_m}}, \eta^{1/2} \big\}\,\Rightarrow\, \pnorm{\Delta_{\ast}^{(t)} }{ }\leq \bigg(1-\frac{\eta \kappa_\ast}{4}\bigg)\cdot \pnorm{\Delta_{\ast}^{(t-1)}}{}
	\end{align}
	holds for $t=1,2,\ldots$. Here $\Delta_{\ast}^{(t)}\equiv \mathrm{u}_{\ast}^{(t)}-\mu_\ast $.

	To this end, suppose the right hand  side of (\ref{ineq:single_index_mnt_theo_step2}) is valid up to $t-1$. By the identity in (\ref{ineq:single_index_mnt_theo_u_ast}) and the estimate in (\ref{ineq:single_index_mnt_theo_step1}),
	\begin{align}\label{ineq:single_index_mnt_theo_step2_1}
	&\pnorm{\Delta_\ast^{(t)} }{ }^2- \pnorm{\Delta_\ast^{(t-1)} }{ }^2 \nonumber\\
	&= \pnorm{\Delta_\ast^{(t-1)}-\eta\cdot \nabla \mathsf{R}(\mathrm{u}_\ast^{(t-1)}) }{ }^2- \pnorm{\Delta_\ast^{(t-1)} }{ }^2\nonumber\\
	& = -2\eta\cdot \bigiprod{\Delta_\ast^{(t-1)}}{ \nabla \mathsf{R}(\mathrm{u}_\ast^{(t-1)}) }+ \eta^2\cdot \pnorm{\nabla \mathsf{R}(\mathrm{u}_\ast^{(t-1)})}{}^2\nonumber\\
    &\leq -\eta\cdot \kappa^2\big(3(\pnorm{\mathrm{u}_{\ast}^{(t-1)}}{}+\pnorm{\mu_\ast}{})\big)\cdot \pnorm{\Delta_{\ast}^{(t-1)} }{}^2  \nonumber\\
    &\qquad + \eta \Lambda^{c_{\mathfrak{p}}}\cdot \abs{\E_{\pi_m}\xi_{\pi_m}}\cdot  (1+\pnorm{\mathrm{u}_{\ast}^{(t-1)}}{})^{\mathfrak{p}}\cdot \pnorm{\Delta_{\ast}^{(t-1)}}{}+\eta^2\cdot \pnorm{\nabla \mathsf{R}(\mathrm{u}_{\ast}^{(t-1)})}{}^2.
	\end{align}
	As the right hand side of (\ref{ineq:single_index_mnt_theo_step2}) is valid up to $t-1$, it necessarily holds that $\pnorm{\Delta_\ast^{(t-1)}}{}=\pnorm{\mathrm{u}_\ast^{(t-1)}-\mu_\ast }{ }\leq \pnorm{\mu^{(0)}-\mu_\ast}{}$,  so $\pnorm{\mathrm{u}_\ast^{(t-1)}}{}\leq \pnorm{\mu^{(0)}}{}+2\pnorm{\mu_\ast}{}$. Consequently, 
	\begin{align*}
	\pnorm{\nabla \mathsf{R}(\mathrm{u}_\ast^{(t-1)})}{}&=\pnorm{\tau_\ast^{(t-1)}\cdot \mathrm{u}_\ast^{(t-1)} - \delta_\ast^{(t-1)}\cdot \mu_\ast}{} \nonumber\\
	&\leq \big[\Lambda (1+ \pnorm{\mathrm{u}_\ast^{(t-1)}}{}+\pnorm{\mu_\ast}{} )\big]^{ c_{\mathfrak{p}} }\leq \big[\Lambda (1+\pnorm{\mu^{(0)}}{})\big]^{c_{\mathfrak{p}}}.
	\end{align*}
	Combined with (\ref{ineq:single_index_mnt_theo_step2_1}), we have
	\begin{align*}
	\pnorm{\Delta_{\ast}^{(t)} }{ }^2 &\leq (1-\eta \kappa_\ast)\cdot  \pnorm{\Delta_{\ast}^{(t-1)} }{ }^2\\
	&\qquad  + \eta\cdot \big[\Lambda (1+\pnorm{\mu^{(0)}}{})\big]^{c_{\mathfrak{p}}'}\cdot   \abs{\E_{\pi_m}\xi_{\pi_m}}\cdot \pnorm{\Delta_{\ast}^{(t-1)}}{} +\eta^2\cdot \big[\Lambda (1+\pnorm{\mu^{(0)}}{})\big]^{c_{\mathfrak{p}}'}.
	\end{align*}
	This means if 
	\begin{itemize}
		\item $\pnorm{\Delta_{\ast}^{(t-1)}}{}\geq 4\big[\Lambda (1+\pnorm{\mu^{(0)}}{})\big]^{c_{\mathfrak{p}}'}\cdot   \abs{\E_{\pi_m}\xi_{\pi_m}}/\kappa_\ast$,
		\item $\pnorm{\Delta_{\ast}^{(t-1)}}{}\geq  2\eta^{1/2} \big[\Lambda (1+\pnorm{\mu^{(0)}}{})\big]^{c_{\mathfrak{p}}'/2}/\kappa_\ast^{1/2}$,
	\end{itemize}
	we have $
	\pnorm{\Delta_{\ast}^{(t)} }{ }^2 \leq \big(1-{\eta \kappa_\ast}/{2}\big)\cdot  \pnorm{\Delta_{\ast}^{(t-1)} }{ }^2$, proving (\ref{ineq:single_index_mnt_theo_step2}).
	
	\noindent (\textbf{Step 3}). In view of (\ref{ineq:single_index_mnt_theo_pop_loss_1}), we have 
	\begin{align}\label{ineq:single_index_se_pop_loss_hessian_est_1}
	\nabla^2 \mathsf{R}(u)&\geq \E^{(0)} \big(\varphi'(\iprod{\mathsf{Z}_n}{\mu_\ast})\big)^2 \mathsf{Z}_n\mathsf{Z}_n^\top -\Lambda^{c_{\mathfrak{p}}}(1+\abs{ \E_{\pi_m}\xi_{\pi_m}})\nonumber\\
	&\qquad\qquad \times \E^{(0)}(1+\abs{\iprod{\mathsf{Z}_n}{u}}+\abs{\iprod{\mathsf{Z}_n}{\mu_\ast}})^{c_{\mathfrak{p}}}\abs{\iprod{\mathsf{Z}_n}{u-\mu_\ast}}\mathsf{Z}_n\mathsf{Z}_n^\top.
	\end{align}
	Note that by Gaussian integration-by-parts, with $\mathsf{H}_\varphi\equiv (\varphi')^2$,
	\begin{align*}
	\E^{(0)} \big(\varphi'(\iprod{\mathsf{Z}_n}{\mu_\ast})\big)^2 \mathsf{Z}_n\mathsf{Z}_n^\top= \E \mathsf{H}_\varphi(\iprod{\mathsf{Z}_n}{\mu_\ast})\cdot I_n +\E \mathsf{H}_\varphi''(\iprod{\mathsf{Z}_n}{\mu_\ast})\mu_\ast\mu_\ast^\top.
	\end{align*}
	This means that 
	\begin{align}\label{ineq:single_index_se_pop_loss_hessian_est_2}
	&\lambda_{\min}\Big(\E^{(0)} \big(\varphi'(\iprod{\mathsf{Z}_n}{\mu_\ast})\big)^2 \mathsf{Z}_n\mathsf{Z}_n^\top\Big)\nonumber\\
	&=\min\big\{\E^{(0)}\big( \varphi'(\mathsf{Z}_1)\big)^2, \E^{(0)}\big(\mathsf{Z}_1 \varphi'(\mathsf{Z}_1)\big)^2 \big\}=\varrho_\ast.
	\end{align}
	On the other hand,
	\begin{align}\label{ineq:single_index_se_pop_loss_hessian_est_3}
	&\bigpnorm{\E^{(0)}(1+\abs{\iprod{\mathsf{Z}_n}{u}}+\abs{\iprod{\mathsf{Z}_n}{\mu_\ast}})^{c_{\mathfrak{p}}}\abs{\iprod{\mathsf{Z}_n}{u-\mu_\ast}}\mathsf{Z}_n\mathsf{Z}_n^\top}{\op}\nonumber\\
	& = \sup_{v \in B_n(1)} \E^{(0)}(1+\abs{\iprod{\mathsf{Z}_n}{u}}+\abs{\iprod{\mathsf{Z}_n}{\mu_\ast}})^{c_{\mathfrak{p}}}\abs{\iprod{\mathsf{Z}_n}{u-\mu_\ast}}\iprod{\mathsf{Z}_n}{v}^2\nonumber\\
	&\leq c\cdot (1+\pnorm{u}{})^{c_{\mathfrak{p}}}\cdot \pnorm{u-\mu_\ast}{}.
	\end{align}	
	Combining (\ref{ineq:single_index_se_pop_loss_hessian_est_1})-(\ref{ineq:single_index_se_pop_loss_hessian_est_3}), there exists $c_2=c_2(\mathfrak{p},\varrho_\ast,\Lambda)>0$ such that if $\pnorm{u-\mu_\ast}{}\leq 1/c_2$ and $\abs{\E_{\pi_m}\xi_{\pi_m}}\leq 1$, we have
	\begin{align*}
	\frac{\varrho_\ast}{2}\cdot I_n\leq \nabla^2 \mathsf{R}(u)\leq c_2 \cdot I_n.
	\end{align*}
	Now using the standard convex optimization theory (cf. Lemma \ref{lem:cvx_gd_generic}), for $\eta<1/c_2$,
	\begin{align}\label{ineq:single_index_mnt_theo_step3_1}
	\pnorm{\Delta_\ast^{(t-1)}}{}<1/c_2\,\Rightarrow\,\pnorm{\Delta_\ast^{(t)}}{}^2 &\leq \bigg(1-\frac{\eta \varrho_\ast}{2}\bigg)\cdot  \pnorm{\Delta_\ast^{(t-1)}}{}^2.
	\end{align}
	Let us now choose $\eta\leq 1\wedge (1/c_2)\wedge \{1/(c_1c_2)^2\}$. Then if $\abs{\E_{\pi_m}\xi_{\pi_m}}\leq \eta^{1/2}$, it must also satisfy $\abs{\E_{\pi_m}\xi_{\pi_m}}\leq 1$. Now combining (\ref{ineq:single_index_mnt_theo_step2}) and (\ref{ineq:single_index_mnt_theo_step3_1}), we have for $t=1,2,\ldots$,
	\begin{align*}
	\pnorm{\Delta_\ast^{(t)} }{ }\leq \bigg(1-\frac{\eta }{4}(\kappa_\ast\wedge \varrho_\ast)\bigg)\cdot \pnorm{\Delta_\ast^{(t-1)}}{}.
	\end{align*}
	The claim follows by iterating the above estimate.
\end{proof}

\begin{proof}[Proof of Theorem \ref{thm:single_index_se}]
By Lemma \ref{lem:single_index_mnt_theo_gd} and Proposition \ref{prop:nonconvex_generic_conv_long_contraction}-(2), the condition (N2) is satisfied with $t_n\asymp\log \log n$ and $\sigma_\ast=\varrho_\ast/2$, so Corollary \ref{cor:nonconvex_generic_conv} applies and the conclusion of Theorem \ref{thm:nonconvex_generic_conv_long} is valid on the event $\{\abs{\E_{\pi_m}\xi_{\pi_m}}\leq \eta^{1/2}\}$. This concludes the first inequality. The second inequality follows by a further application of Lemma \ref{lem:single_index_mnt_theo_gd}.
\end{proof}

\subsection{Proof of Theorem \ref{thm:pr_se}}\label{subsection:proof_pr_se}

We need a few auxiliary results.

\begin{lemma}\label{lem:pr_min_eig}
	Suppose the initialization condition (\ref{cond:pr_initialization}) holds and $\abs{\E_{\pi_m} \xi_{\pi_m}}\leq 1/2$. There exists a universal small constant $\epsilon_0 \in (0,1/2)$ such that the following hold:
	\begin{enumerate}
		\item $\pnorm{\mathbb{M}_{ \mathrm{u}_{\ast}^{(t)},\mu_\ast }^{\mathsf{Z}}}{\op}\vee \pnorm{\mathbb{M}_{ \mu_\ast,\mu_\ast }^{\mathsf{Z}}}{\op}\leq 1/\epsilon_0$, and 
		\item $
		\pnorm{\mathrm{u}_{\ast}^{(t)}-\mu_\ast}{}\leq \epsilon_0\,\Rightarrow\, \lambda_{\min}\big(\mathbb{M}_{ \mathrm{u}_{\ast}^{(t)},\mu_\ast }^{\mathsf{Z}}\big)\geq 2$.
	\end{enumerate}
\end{lemma}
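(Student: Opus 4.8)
The plan is to compute the two matrices exactly via Gaussian integration-by-parts and then read off their spectra. Recall from the phase-retrieval computation preceding the lemma that $\partial_1\mathfrak{S}(x_0,z_0,\xi_0)=2(3x_0^2-z_0^2-\xi_0)$, so with $w_U\equiv U\mathrm{u}_\ast^{(t)}+(1-U)\mu_\ast$ (where $U\sim\mathrm{Unif}[0,1]$) and $\mathsf{Z}_n\sim\mathcal{N}(0,I_n)$,
\[
\mathbb{M}_{\mathrm{u}_\ast^{(t)},\mu_\ast}^{\mathsf{Z}}=2\,\E^{(0)}\Big(3\iprod{\mathsf{Z}_n}{w_U}^2-\iprod{\mathsf{Z}_n}{\mu_\ast}^2-\xi_{\pi_m}\Big)\mathsf{Z}_n\mathsf{Z}_n^\top.
\]
Using the elementary identities $\E\,\mathsf{Z}_n\mathsf{Z}_n^\top=I_n$ and $\E\,\iprod{\mathsf{Z}_n}{w}^2\mathsf{Z}_n\mathsf{Z}_n^\top=\pnorm{w}{}^2 I_n+2ww^\top$, the independence of $\pi_m$ from $\mathsf{Z}_n$ (so that $\E^{(0)}\xi_{\pi_m}\mathsf{Z}_n\mathsf{Z}_n^\top=(\E_{\pi_m}\xi_{\pi_m})I_n$), and $\pnorm{\mu_\ast}{}=1$, this becomes
\[
\mathbb{M}_{\mathrm{u}_\ast^{(t)},\mu_\ast}^{\mathsf{Z}}=2\big(3\,\E_U\pnorm{w_U}{}^2-1-\E_{\pi_m}\xi_{\pi_m}\big)I_n+12\,\E_U\,w_Uw_U^\top-4\mu_\ast\mu_\ast^\top.
\]
Specializing to $\mathrm{u}_\ast^{(t)}=\mu_\ast$ (so $w_U\equiv\mu_\ast$) gives $\mathbb{M}_{\mu_\ast,\mu_\ast}^{\mathsf{Z}}=2(2-\E_{\pi_m}\xi_{\pi_m})I_n+8\mu_\ast\mu_\ast^\top$, whose eigenvalues are $2(2-\E_{\pi_m}\xi_{\pi_m})$ on $\mu_\ast^\perp$ and $12-2\E_{\pi_m}\xi_{\pi_m}$ along $\mu_\ast$; hence $\lambda_{\min}(\mathbb{M}_{\mu_\ast,\mu_\ast}^{\mathsf{Z}})\geq 3$ and $\pnorm{\mathbb{M}_{\mu_\ast,\mu_\ast}^{\mathsf{Z}}}{\op}\leq 13$ whenever $\abs{\E_{\pi_m}\xi_{\pi_m}}\leq 1/2$.

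For part (2), when $\pnorm{\mathrm{u}_\ast^{(t)}-\mu_\ast}{}\leq\epsilon_0\leq 1$ I would write $\mathbb{M}_{\mathrm{u}_\ast^{(t)},\mu_\ast}^{\mathsf{Z}}=\mathbb{M}_{\mu_\ast,\mu_\ast}^{\mathsf{Z}}+E_t$ with $E_t=6(\E_U\pnorm{w_U}{}^2-1)I_n+12(\E_U w_Uw_U^\top-\mu_\ast\mu_\ast^\top)$. Since $w_U-\mu_\ast=U(\mathrm{u}_\ast^{(t)}-\mu_\ast)$ has norm at most $\epsilon_0$, the bounds $\abs{\E_U\pnorm{w_U}{}^2-1}\leq\epsilon_0(2+\epsilon_0)$ and $\pnorm{\E_U w_Uw_U^\top-\mu_\ast\mu_\ast^\top}{\op}\leq\epsilon_0(2+\epsilon_0)$ give $\pnorm{E_t}{\op}\leq C\epsilon_0$ for a universal $C$, and Weyl's inequality yields $\lambda_{\min}(\mathbb{M}_{\mathrm{u}_\ast^{(t)},\mu_\ast}^{\mathsf{Z}})\geq 3-C\epsilon_0\geq 2$ once $\epsilon_0\leq 1/C$; the same perturbation bound gives $\pnorm{\mathbb{M}_{\mathrm{u}_\ast^{(t)},\mu_\ast}^{\mathsf{Z}}}{\op}\leq 14$ in this regime.

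For the uniform operator-norm bound in part (1), which is asserted for all $t$, I would instead bound directly from the closed form: $\pnorm{\mathbb{M}_{\mathrm{u}_\ast^{(t)},\mu_\ast}^{\mathsf{Z}}}{\op}\leq 18\,\E_U\pnorm{w_U}{}^2+11\leq 18\max\{\pnorm{\mathrm{u}_\ast^{(t)}}{},1\}^2+11$, which reduces everything to a uniform bound $\sup_t\pnorm{\mathrm{u}_\ast^{(t)}}{}\leq C_0$ for a universal $C_0$. This in turn follows from the scalar recursion (\ref{def:alpha_beta_pr}): under (\ref{cond:pr_initialization}) the initial data obey $\alpha_0^2+\beta_0^2=\pnorm{\mu^{(0)}}{}^2\leq(1+1/\log n)^2$, and for $\eta$ and $\abs{\E_{\pi_m}\xi_{\pi_m}}$ sufficiently small one checks that $\pnorm{\mathrm{u}_\ast^{(t)}}{}^2=\alpha_t^2+\beta_t^2$ remains within a fixed bounded region for all $t$. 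Taking $\epsilon_0=\min\{1/(18C_0^2+11),\,1/C,\,1/2\}$, which is a universal constant, delivers all three claims.

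The main obstacle I anticipate is precisely this uniform boundedness $\sup_t\pnorm{\mathrm{u}_\ast^{(t)}}{}\leq C_0$: the scalar map (\ref{def:alpha_beta_pr}) is cubic, so its coefficient $1-6\eta(\alpha^2+\beta^2-\tfrac13)$ can turn strongly negative and a naive estimate only controls the next iterate as long as the current one has not already grown too large; one must verify that, starting from $O(1)$ data with small step size, the iterates never leave a compact invariant region (this is anyway part of the three-stage state-evolution analysis underlying Theorem \ref{thm:pr_se}). The Gaussian integration-by-parts computation and the Weyl perturbation step are otherwise entirely routine.
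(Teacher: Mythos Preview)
Your proposal is correct and follows essentially the same route as the paper: compute the matrices via Gaussian integration-by-parts (the paper uses $\E\iprod{\mathsf{Z}_n}{w}^2\mathsf{Z}_n\mathsf{Z}_n^\top=\pnorm{w}{}^2 I_n+2ww^\top$ implicitly, you make it explicit), identify $\mathbb{M}_{\mu_\ast,\mu_\ast}^{\mathsf{Z}}=(4-2\E_{\pi_m}\xi_{\pi_m})I_n+8\mu_\ast\mu_\ast^\top$ with $\lambda_{\min}\geq 3$, and then perturb via Weyl to get part~(2); part~(1) reduces in both arguments to the uniform bound $\sup_t\pnorm{\mathrm{u}_\ast^{(t)}}{}\leq C_0$. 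The ``main obstacle'' you flag is exactly what the paper does not reprove but instead imports from \cite[Eq.~(24) in Lemma~1]{chen2019gradient}, which gives $\alpha_t^2+\beta_t^2<8$ under the initialization condition (\ref{cond:pr_initialization}) and small step size; you may simply cite that result rather than re-deriving the invariant region.
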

\begin{proof}
	Using that for $\mathrm{u}\in \{\mathrm{u}_{\ast}^{(t)},\mu_\ast\}$, 
	\begin{align*}
	\mathbb{M}_{ \mathrm{u},\mu_\ast }^{\mathsf{Z}}& = 2\E^{(0)} \big(3 \bigiprod{\mathsf{Z}_{1}}{U \mathrm{u}+(1-U) \mu_\ast }^2-\iprod{\mathsf{Z}_{1}}{\mu_\ast}^2\big) \mathsf{Z}_1\mathsf{Z}_1^\top - 2 \E_{\pi_m} \xi_{\pi_m}\cdot I_n,
	\end{align*}
	we have 
	\begin{align*}
	&\bigpnorm{\mathbb{M}_{ \mathrm{u}_{\ast}^{(t)},\mu_\ast }^{\mathsf{Z}}-\mathbb{M}_{ \mu_\ast,\mu_\ast }^{\mathsf{Z}} }{\op}\\
	&\leq c\cdot \sup_{u \in B_n(1)} \E^{(0)} \Big\{ \abs{ \iprod{\mathsf{Z}_1}{ \mathrm{u}_{\ast}^{(t)}-\mu_\ast} }\cdot \big(\abs{ \iprod{\mathsf{Z}_1}{ \mathrm{u}_{\ast}^{(t)}}}+\abs{ \iprod{\mathsf{Z}_1}{\mu_\ast}}\big)\cdot \iprod{\mathsf{Z}_1}{u}^2\Big\}\\
	&\leq c\cdot \pnorm{\mathrm{u}_{\ast}^{(t)}-\mu_\ast}{}\cdot \big(1+\pnorm{\mathrm{u}_{\ast}^{(t)}}{}\big).
	\end{align*}
	Under the initialization condition (\ref{cond:pr_initialization}), \cite[Eq. (24) in Lemma 1]{chen2019gradient} proves that $\pnorm{\mathrm{u}_{\ast}^{(t)}}{}^2=\alpha_t^2+\beta_t^2\leq 2^2+(1.5)^2<8$, so under $\abs{\E_{\pi_m} \xi_{\pi_m}}\leq 1/2$,
	\begin{align*}
	\lambda_{\min}\Big(\mathbb{M}_{ \mathrm{u}_{\ast}^{(t)},\mu_\ast }^{\mathsf{Z}}\Big)&\geq \lambda_{\min}\Big(\mathbb{M}_{ \mu_\ast,\mu_\ast }^{\mathsf{Z}}\Big)-c\cdot \pnorm{\mathrm{u}_{\ast}^{(t)}-\mu_\ast}{}\geq 3-c\cdot \pnorm{\mathrm{u}_{\ast}^{(t)}-\mu_\ast}{},
	\end{align*}
	where the last inequality follows as
	\begin{align*}
	\mathbb{M}_{ \mu_\ast,\mu_\ast }^{\mathsf{Z}}
	& = 4\E^{(0)}  \bigiprod{\mathsf{Z}_{1}}{\mu_\ast }^2 \mathsf{Z}_1\mathsf{Z}_1^\top - 2 \E_{\pi_m} \xi_{\pi_m}\cdot I_n= (4-2 \E_{\pi_m} \xi_{\pi_m}) \cdot I_n+8 \mu_\ast\mu_\ast^\top.
	\end{align*}
	The claims follow. 
\end{proof}

\begin{lemma}\label{lem:M_generic_eigenvalue}
	Fix $n\geq 3$. Let $u,v \in \R^n$ with $\pnorm{v}{}=1$. Further let
	\begin{align*}
	M&\equiv \pnorm{u}{}^2\cdot I_n+ 2 uu^\top+uv^\top + vu^\top,\\
	N&\equiv (3\pnorm{u}{}^2-1)\cdot I_n + 6 uu^\top - 2 vv^\top.
	\end{align*}
	Then the following hold:
	\begin{enumerate}
		\item $M$ has $n-2$ eigenvalues equal to $\pnorm{u}{}^2$, and the remaining two eigenvalues are given by 
		\begin{align*}
		2\pnorm{u}{}^2+\iprod{u}{v}\pm \sqrt{\big(\pnorm{u}{}^2+\iprod{u}{v}\big)^2+\pnorm{u}{}^2-\iprod{u}{v}^2}.
		\end{align*}
		\item $N$ has $n-2$ eigenvalues equal to $3\pnorm{u}{}^2-1$, and the remaining two eigenvalues are given by
		\begin{align*}
		2(3 \pnorm{u}{}^2-1) \pm \sqrt{ (3 \pnorm{u}{}^2-1)^2+12(\pnorm{u}{}^2-\iprod{u}{v}^2) }.
		\end{align*}
	\end{enumerate}
\end{lemma}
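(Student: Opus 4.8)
The plan is to recognize both $M$ and $N$ as rank-two perturbations of a scaled identity, restricted to the two-dimensional subspace $V\equiv \mathrm{span}\{u,v\}$. Write $M = \pnorm{u}{}^2 I_n + R_M$ where $R_M\equiv 2uu^\top + uv^\top + vu^\top$, and $N = (3\pnorm{u}{}^2-1)I_n + R_N$ where $R_N\equiv 6uu^\top - 2vv^\top$. Both $R_M$ and $R_N$ annihilate $V^\perp$ and map $V$ into itself, so any vector in $V^\perp$ is an eigenvector of $M$ (resp. $N$) with eigenvalue $\pnorm{u}{}^2$ (resp. $3\pnorm{u}{}^2-1$); since $\dim V^\perp\geq n-2$ for $n\geq 3$, this accounts for $n-2$ of the eigenvalues. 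The remaining two eigenvalues are those of the $2\times 2$ matrix representing $M|_V$ (resp. $N|_V$) in any convenient basis.

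First I would handle the case $v\notin\mathrm{span}\{u\}$ (the degenerate case $v=\pm u$, or $u=0$, can be treated separately by direct inspection, or by a continuity/limiting argument from the generic case). On $V$, I would not use an orthonormal basis but rather exploit that the nonzero eigenvalues of $R_M$ (a symmetric rank-$\leq 2$ matrix) acting on $\R^n$ are exactly the nonzero eigenvalues of the $2\times 2$ Gram-type matrix obtained by writing $R_M = \sum_k x_k y_k^\top$ appropriately symmetrized. Concretely, $R_M = \begin{pmatrix} u & v\end{pmatrix}\begin{pmatrix} 2 & 1 \\ 1 & 0\end{pmatrix}\begin{pmatrix} u^\top \\ v^\top\end{pmatrix}$, so its nonzero eigenvalues coincide with those of $\begin{pmatrix} 2 & 1 \\ 1 & 0\end{pmatrix} G$ where $G\equiv \begin{pmatrix} \pnorm{u}{}^2 & \iprod{u}{v} \\ \iprod{u}{v} & 1\end{pmatrix}$ is the Gram matrix of $(u,v)$ (using $\pnorm{v}{}=1$). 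Computing the trace and determinant of $\begin{pmatrix} 2 & 1 \\ 1 & 0\end{pmatrix}G = \begin{pmatrix} 2\pnorm{u}{}^2 + \iprod{u}{v} & 2\iprod{u}{v}+1 \\ \pnorm{u}{}^2 & \iprod{u}{v}\end{pmatrix}$ gives trace $= 2\pnorm{u}{}^2 + 2\iprod{u}{v}$ and determinant $= 2\pnorm{u}{}^2\iprod{u}{v} + \iprod{u}{v}^2 - \pnorm{u}{}^2(2\iprod{u}{v}+1) = \iprod{u}{v}^2 - \pnorm{u}{}^2$; then the two nonzero eigenvalues of $R_M$ are $(\pnorm{u}{}^2+\iprod{u}{v}) \pm \sqrt{(\pnorm{u}{}^2+\iprod{u}{v})^2 - (\iprod{u}{v}^2 - \pnorm{u}{}^2)} = (\pnorm{u}{}^2+\iprod{u}{v}) \pm \sqrt{(\pnorm{u}{}^2+\iprod{u}{v})^2 + \pnorm{u}{}^2 - \iprod{u}{v}^2}$. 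Adding back the shift $\pnorm{u}{}^2$ gives exactly the claimed pair. The same computation with $R_N = \begin{pmatrix} u & v\end{pmatrix}\begin{pmatrix} 6 & 0 \\ 0 & -2\end{pmatrix}\begin{pmatrix} u^\top \\ v^\top\end{pmatrix}$ yields $\begin{pmatrix} 6 & 0 \\ 0 & -2\end{pmatrix}G$ with trace $6\pnorm{u}{}^2 - 2$ and determinant $-12\pnorm{u}{}^2 + 12\iprod{u}{v}^2 = -12(\pnorm{u}{}^2 - \iprod{u}{v}^2)$, so the nonzero eigenvalues of $R_N$ are $(3\pnorm{u}{}^2-1) \pm \sqrt{(3\pnorm{u}{}^2-1)^2 + 12(\pnorm{u}{}^2-\iprod{u}{v}^2)}$; shifting by $3\pnorm{u}{}^2-1$ gives the claimed expression.

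The only genuinely delicate point is justifying that the nonzero eigenvalues of the symmetric matrix $R_M = ABA^\top$ (with $A = [u\ v]\in\R^{n\times 2}$ and $B$ symmetric $2\times 2$) coincide with those of $B A^\top A = BG$; this is a standard fact (nonzero spectra of $PQ$ and $QP$ agree, applied with $P = AB$, $Q = A^\top$, together with symmetry of $R_M$ ensuring its spectrum is real and the count of zero eigenvalues is $n - \mathrm{rank}$), but I would state it cleanly as a one-line lemma or cite it. One should also note that $BG$ need not be symmetric, yet its eigenvalues are still real here because they equal eigenvalues of the symmetric $R_M$ — this is worth a remark so the square-root expressions are manifestly real-valued. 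I expect no real obstacle; the main care is just bookkeeping of the degenerate cases ($u=0$, $v\in\mathrm{span}\{u\}$, or $u,v$ linearly dependent) where $V$ has dimension $<2$ and the ``remaining two eigenvalues'' formulas must be checked to still hold (they do, by continuity or direct substitution, since the formulas degenerate consistently).
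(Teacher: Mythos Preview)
Your proposal is correct, and the overall strategy matches the paper's: both recognize $M$ and $N$ as a scalar multiple of $I_n$ plus a symmetric rank-$\leq 2$ perturbation supported on $\mathrm{span}\{u,v\}$, giving $n-2$ trivial eigenvalues and reducing the remaining two to a $2\times 2$ computation.

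The implementation of that $2\times 2$ step differs. The paper chooses an orthonormal basis $e_1=u/\pnorm{u}{}$, $e_2$ of $\mathrm{span}\{u,v\}$, writes $v=\gamma e_1+\sqrt{1-\gamma^2}\,e_2$ with $\gamma=\iprod{u}{v}/\pnorm{u}{}$, and computes the matrix of the perturbation in that basis explicitly before solving its characteristic equation. You instead factor $R_M=ABA^\top$ with $A=[u\ v]$ and invoke the $PQ$/$QP$ spectral equivalence to pass to the $2\times 2$ matrix $BG$ (with $G$ the Gram matrix of $u,v$), then read off the eigenvalues from trace and determinant. Your route is a bit slicker in that it avoids introducing the orthonormal basis and the parameter $\gamma$, and the trace/determinant bookkeeping is marginally shorter; the paper's route is slightly more concrete and makes the reality of the eigenvalues transparent without needing the remark you (correctly) flag about $BG$ not being symmetric. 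Both handle the degenerate cases the same way (continuity/direct check). Either argument is fully adequate here.
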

\begin{proof}
	First we consider $M$. As $M_0\equiv 2 uu^\top+uv^\top + vu^\top$ is a symmetric matrix with rank at most 2, $M$ has $n-2$ eigenvalues equal to $\pnorm{u}{}^2$, and remaining two real-valued eigenvalues equal to $\pnorm{u}{}^2$ plus the two nontrivial eigenvalues of $M_0$. As $\mathrm{range}(M_0)=\mathrm{span}(u,v)$, we shall compute the nontrivial eigenvalues of $M_0$ by representing $M_0|_{\mathrm{span}(u,v)}$ as a $2\times 2$ matrix in the space $\mathrm{span}(u,v)$. 
	
	To this end, let $e_1,e_2$ be an orthonormal basis of $\mathrm{span}(u,v)$ with  $e_1\equiv u/\pnorm{u}{}$. Let $\gamma\equiv \iprod{v}{e_1}$. Then we may write $v=\gamma e_1+\{1-\gamma^2\}^{1/2}e_2$, and
	\begin{align*}
	M_0&\equiv 2\pnorm{u}{}^2 e_1e_1^\top+\pnorm{u}{}e_1 \big(\gamma e_1+\{1-\gamma^2\}^{1/2}e_2\big)^\top+\big(\gamma e_1+\{1-\gamma^2\}^{1/2}e_2\big) \pnorm{u}{} e_1^\top\\
	& =2(\pnorm{u}{}^2+\pnorm{u}{}\gamma)e_1e_1^\top + \pnorm{u}{}\{1-\gamma^2\}^{1/2}\cdot (e_1e_2^\top+e_2e_1^\top).
	\end{align*}
	In other words, under $\{e_1,e_2\}$, $M_0$ has the matrix representation
	\begin{align*}
	\overline{M}_0\equiv 
	\begin{pmatrix}
	2(\pnorm{u}{}^2+\pnorm{u}{}\gamma) & \pnorm{u}{}\{1-\gamma^2\}^{1/2}\\
	\pnorm{u}{}\{1-\gamma^2\}^{1/2} & 0
	\end{pmatrix} \in \R^{2\times 2},
	\end{align*}
	and the two nontrivial eigenvalues of $M_0$ equal to those of $\overline{M}_0$. The eigenvalues $\lambda_\pm$ of $\overline{M}_0$ can be explicitly solved by the method of characteristic equation via $\det(\overline{M}_0-\lambda_{M;\pm} I_2)=0$, given as
	\begin{align*}
	\lambda_{M;\pm} = \pnorm{u}{}^2+\pnorm{u}{}\gamma\pm \sqrt{\big(\pnorm{u}{}^2+\pnorm{u}{}\gamma\big)^2+\pnorm{u}{}^2(1-\gamma^2)}.
	\end{align*}
	The claim for $M$ follows by recalling $\pnorm{u}{}\gamma=\iprod{u}{v}$.
	
	For $N$, with $N_0\equiv 6 uu^\top - 2 vv^\top$, under $\{e_1,e_2\}$, it has matrix representation
	\begin{align*}
	\overline{N}_0\equiv 2 
	\begin{pmatrix}
	3\pnorm{u}{}^2 -\gamma^2 & - \gamma \{1-\gamma^2\}^{1/2}\\
	- \gamma \{1-\gamma^2\}^{1/2} & -(1-\gamma^2)
	\end{pmatrix} \in \R^{2\times 2},
	\end{align*}
	which has two eigenvalues 
	\begin{align*}
	\lambda_{N;\pm}& = (3 \pnorm{u}{}^2-1) \pm \sqrt{ (3 \pnorm{u}{}^2-1)^2+12\pnorm{u}{}^2(1-\gamma^2) }.
	\end{align*}
	The claim for $N$ follows. 
\end{proof}

We shall define several quantities as in \cite{chen2019gradient}: For a sufficiently large constant $c_0>1$, let
\begin{align}\label{def:pr_T0_Teps}
T_0&\equiv \min\{t:\alpha_{t+1}\sign(\alpha_0)\geq 1/(c_0\log^5 m)\},\nonumber\\
T_{\epsilon_0} &\equiv \min\{t: \abs{\alpha_t-1}\vee \beta_t\leq \epsilon_0/4\}.
\end{align}

\begin{proposition}\label{prop:pr_Bn}
	Suppose the initialization condition (\ref{cond:pr_initialization}) holds, $\phi\geq 1$ and $KL_\ast^{(0)}\vee \log m\leq (\log n)^{c_0}$, $\eta\leq 1/c_0$ and $\abs{\E_{\pi_m}\xi_{\pi_m}}\leq 1/(\log n)^{c_0}$ holds for sufficiently large $c_0>1$. Then there exist $c_1=c_1(c_0,\eta)>0$ and $c_2=c_2(c_0,\epsilon_0,\eta)>0$ such that $T_0\leq c_1 \log n$ and $T_{\epsilon_0}-T_0\leq c_2 \log \log n$. Moreover, with $t_n\equiv T_{\epsilon_0}$, we have $\mathcal{B}_0(t_n)\vee n^{-1/2}\mathcal{B}(t_n)\leq (\log n)^{c_2}$.
\end{proposition}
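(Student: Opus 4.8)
The plan is to reduce the whole statement to the two–dimensional scalar recursion \eqref{def:alpha_beta_pr} for $(\alpha_t,\beta_t)=\big(\iprod{\mathrm{u}_\ast^{(t)}}{\mu_\ast},\,\pnorm{\proj_{\mu_\ast}^\perp(\mathrm{u}_\ast^{(t)})}{}\big)$, using the sign symmetry $\mathrm{u}\mapsto-\mathrm{u}$ to assume $\alpha_0>0$. Under \eqref{cond:pr_initialization} — which is exactly the initialization hypothesis of \cite[Lemma~1]{chen2019gradient} — the three‑stage picture recalled after Figure~\ref{fig:pr} is in force, with $\pnorm{\mathrm{u}_\ast^{(t)}}{}^2=\alpha_t^2+\beta_t^2\le 8$ throughout. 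I would first extract the two time bounds from it. In Stage~2 one has $\beta_t^2=\tfrac13+\bigo(1/\log^4 n)$ while $\alpha_t$ stays $o(1/\log^2 n)$, so the $\alpha$–line of \eqref{def:alpha_beta_pr} multiplies $\alpha_t$ by $1-6\eta(\alpha_{t-1}^2+\beta_{t-1}^2-1)+2\eta\E_{\pi_m}\xi_{\pi_m}=(1+4\eta)(1+o(1))$ per step; since $\alpha_0\ge 1/\sqrt{n\log n}$ and Stage~1 costs only $\bigo(1)$ steps, this yields the sharp bound
\[
T_0\,\log(1+4\eta)\ \le\ \bigo(1)+\log\frac{1/(c_0\log^5 m)}{\alpha_0}\ \le\ \tfrac12\log n+\bigo(\log\log n),
\]
in particular $T_0\le c_1\log n$ with $c_1=c_1(c_0,\eta)$. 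After time $T_0$, $\alpha_t$ climbs from $1/(c_0\log^5 m)$ to a constant in $\bigo(\log\log n)$ further geometric steps, and then Stage~3 — where the $\beta$–line contracts with rate $1-6\eta+o(1)$, and eventually, once $\pnorm{\mathrm{u}_\ast^{(\cdot)}-\mu_\ast}{}\le\epsilon_0$ so that Lemma~\ref{lem:pr_min_eig} gives rate $1-\eta\lambda_{\min}(\mathbb{M}_{\mathrm{u}_\ast^{(\cdot)},\mu_\ast}^{\mathsf{Z}})\le 1-2\eta$ — drives both coordinates below $\epsilon_0/4$ in $\bigo(1)$ steps. Hence $T_{\epsilon_0}-T_0\le c_2\log\log n$, and in particular $t_n=T_{\epsilon_0}=T_0+\bigo(\log\log n)$.

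Next I would compute the two dynamic matrices explicitly. For phase retrieval $\partial_1\mathfrak{S}(x_0,z_0,\xi_0)=2(3x_0^2-z_0^2-\xi_0)$, so Gaussian integration by parts together with $\E U^2=\E\,2U(1-U)=\E(1-U)^2=\tfrac13$ for $U\sim\mathrm{Unif}[0,1]$ gives, with $v=\mathrm{u}_\ast^{(t)}$, $w=\mu_\ast$,
\[
\mathbb{M}_{v,w}^{\mathsf{Z}}=\big(2\pnorm{v}{}^2+2\iprod{v}{w}-2\E_{\pi_m}\xi_{\pi_m}\big)I_n+4vv^\top+2(vw^\top+wv^\top),\qquad
\mathbb{M}_{v,v}^{\mathsf{Z}}=2\big((3\pnorm{v}{}^2-1)I_n+6vv^\top-2ww^\top\big)-2\E_{\pi_m}\xi_{\pi_m}\,I_n.
\]
Lemma~\ref{lem:M_generic_eigenvalue}, applied with $u=v$ and (normalizing $\pnorm{w}{}=1$) $\pnorm{v}{}^2=\alpha_t^2+\beta_t^2$, $\iprod{v}{w}=\alpha_t$, then diagonalizes both matrices and produces closed‑form eigenvalues in $(\alpha_t,\beta_t)$. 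Evaluating these along the trajectory and using $\alpha_t\ge0$, $\abs{\E_{\pi_m}\xi_{\pi_m}}\le(\log n)^{-c_0}$, one checks that whenever $\beta_t^2\ge\tfrac13-c/\log^4 n$ — equivalently $\alpha_t\lesssim 1/\log^2 n$ — both estimates in \eqref{ineq:pr_control_M} hold, while always $\pnorm{\mathbb{M}_{v,w}^{\mathsf{Z}}}{\op}\vee\pnorm{\mathbb{M}_{v,v}^{\mathsf{Z}}}{\op}=\bigo(1)$ (from $\pnorm{v}{}\le\sqrt8$; cf.\ also Lemma~\ref{lem:pr_min_eig}(1)). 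Setting the ``good set''
\[
G\equiv\Big\{t\in[0:t_n):\ \lambda_{\min}\big(\mathbb{M}_{\mathrm{u}_\ast^{(t)},\mu_\ast}^{\mathsf{Z}}\big)\ge-\tfrac{c}{\log^4 n},\ \ \big[\lambda_{\min}\big(\mathbb{M}_{\mathrm{u}_\ast^{(t)},\mathrm{u}_\ast^{(t)}}^{\mathsf{Z}}\big)\big]_-\le 4+\tfrac{c}{\log^4 n}\Big\},
\]
one gets $G\supseteq\mathcal{T}_1\cup\mathcal{T}_2'$ and, since $\alpha_t$ traverses the window $[1/(c_0\log^5 m),\,1/\log^2 n]$ in $\bigo(\log\log n)$ steps and $\abs{\mathcal{T}_3}=\bigo(1)$, also $\abs{[0:t_n)\setminus G}\le\bigo(\log\log n)$; on $[0:t_n)\setminus G$ one still has $[\lambda_{\min}(\mathbb{M}_{\mathrm{u}_\ast^{(t)},\cdot}^{\mathsf{Z}})]_-\le C$ for an absolute $C$.

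It then remains to bound $\mathcal{B}_0(t_n)$ and $\mathcal{B}(t_n)$ from \eqref{def:Bn}. For fixed $s\in[0:t_n)$ I would split the product over $r\in[s:t_n)$ according to $r\in G$ or $r\notin G$. For $\mathcal{B}(t_n)$: on $G$ the factor is $1+\eta[\lambda_{\min}(\mathbb{M}_{\mathrm{u}_\ast^{(r)},\mathrm{u}_\ast^{(r)}}^{\mathsf{Z}})]_-+\epsilon_n\le(1+4\eta)+\bigo(\eta/\log^4 n)+\epsilon_n$, off $G$ it is $\le 1+\eta C+\epsilon_n=\bigo(1)$; since $\abs{[s:t_n)\setminus G}\le\bigo(\log\log n)$, $\epsilon_n=(\log n)^{-100}$, $t_n=\bigo(\log n)$ and $t_n=T_0+\bigo(\log\log n)$, the product is at most
\[
(1+4\eta)^{t_n}\cdot\exp\!\big(\bigo(t_n/\log^4 n)\big)\cdot\bigo(1)^{\bigo(\log\log n)}\ \le\ (1+4\eta)^{T_0}\cdot(\log n)^{\bigo(1)}\ \le\ n^{1/2}(\log n)^{\bigo(1)},
\]
the last step by the sharp bound $T_0\log(1+4\eta)\le\tfrac12\log n+\bigo(\log\log n)$; summing over the $\bigo(\log n)$ values of $s$ gives $\mathcal{B}(t_n)\le n^{1/2}(\log n)^{c_2}$, i.e.\ $n^{-1/2}\mathcal{B}(t_n)\le(\log n)^{c_2}$. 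For $\mathcal{B}_0(t_n)$ the same split is easier: on $G$ the factor $1+\eta_r[\lambda_{\min}(\mathbb{M}_{\mathrm{u}_\ast^{(r)},\mu_\ast}^{\mathsf{Z}})]_-$ is only $1+\bigo(\eta/\log^4 n)$, so the product over $[s:t_n)$ is $\le\exp\bigo(t_n/\log^4 n)\cdot\bigo(1)^{\bigo(\log\log n)}=(\log n)^{\bigo(1)}$, and summing over $s$ keeps $\mathcal{B}_0(t_n)\le(\log n)^{c_2}$.

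The main obstacle is the sharpness required in the length of Stage~2. Because $\lambda_{\min}(\mathbb{M}_{\mathrm{u}_\ast^{(t)},\mathrm{u}_\ast^{(t)}}^{\mathsf{Z}})\approx-4$ is $\Theta(1)$‑negative throughout the time‑dominant substage $\mathcal{T}_2'$, the universality budget accumulated in $\mathcal{B}(t_n)$ is of order $(1+4\eta)^{|\mathcal{T}_2'|}=(1+4\eta)^{T_0+\bigo(\log\log n)}$, and the affordable threshold is exactly $n^{1/2}$ up to polylogs; the crude bound $T_0=\bigo(\log n)$ with an unspecified constant is therefore useless, and one genuinely needs $T_0\log(1+4\eta)\le\tfrac12\log n+\bigo(\log\log n)$. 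This forces control of the per‑step growth factor of $\alpha_t$ on $\mathcal{T}_2'$ to multiplicative precision $1+o(1)$ — i.e.\ quantitative versions of $\beta_t^2=\tfrac13+\bigo(1/\log^4 n)$ and $\alpha_t=o(1)$ there, together with the lower bound $\alpha_0\ge1/\sqrt{n\log n}$ — which is precisely the case‑specific three‑stage analysis behind \eqref{ineq:pr_control_M} inherited from \cite{chen2019gradient}.
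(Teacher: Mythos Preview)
Your proposal is correct and follows essentially the same route as the paper: both reduce to the scalar $(\alpha_t,\beta_t)$ dynamics from \cite{chen2019gradient}, compute the two matrices $\mathbb{M}_{\mathrm{u}_\ast^{(t)},\mu_\ast}^{\mathsf{Z}}$ and $\mathbb{M}_{\mathrm{u}_\ast^{(t)},\mathrm{u}_\ast^{(t)}}^{\mathsf{Z}}$ explicitly via Lemma~\ref{lem:M_generic_eigenvalue}, split the products in $\mathcal{B}_0,\mathcal{B}$ into a ``good'' interval (where $\beta_t^2\approx 1/3$, $\alpha_t$ small) and a residual of length $\bigo(\log\log n)$, and---crucially---use the precise $\alpha$-growth factor $(1+4\eta)(1+\bigo(1/\log^4 n))$ on the good interval to get the sharp estimate $(1+4\eta)^{T_0}\le n^{1/2}(\log n)^{\bigo(1)}$. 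The only cosmetic differences are that the paper makes the good interval explicit as $(t_n^{(1)},t_n^{(2)}]$ with $t_n^{(1)}=c_3\log\log n$ (rather than your claim $|\mathcal{T}_1|=\bigo(1)$; reaching $|\beta_t-1/\sqrt3|\le c/\log^4 n$ genuinely takes $\bigo(\log\log n)$ steps, though this does not affect your bounds), and derives the sharp $(1+4\eta)^{T_0}$ bound at the very end via $\alpha_{t_n^{(2)}}\ge(1+4\eta-c/\log^4 n)^{|(t_n^{(1)},t_n^{(2)}]|}\alpha_{t_n^{(1)}+1}$ together with \cite[Eq.~(64a)]{chen2019gradient}, whereas you state it upfront as a bound on $T_0$.
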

\begin{proof}
	As in the proof of \cite[Lemma 1]{chen2019gradient} in Appendix B therein, we shall focus on the difficult case that $1/\sqrt{n\log n}\leq \abs{\alpha_0}\leq \log n/\sqrt{n}$.
	
	\noindent (\textbf{Step 1}). First, by \cite[Lemma 1-(1)]{chen2019gradient} and the assumption $\log m\leq (\log n)^{c_0}$ and $\abs{\E_{\pi_m}\xi_{\pi_m}}\leq 1/(\log n)^{c_0}$, we have $T_0\leq c_1 \log n$, and $T_{\epsilon_0}-T_0\leq c_2\log\log n$. 
	
	Next, by (\ref{def:alpha_beta_pr}), for any $t\leq T_0$, if $\abs{\beta_{t-1}-1/\sqrt{3}}\leq 1/\log^4 n$, as $\beta_\cdot\lesssim 1$,
	\begin{align*}
	\abs{\beta_{t}-\beta_{t-1}}\leq 6\eta\cdot \big(\abs{\beta_{t-1}^2-1/3}+c/\log^5 m\big)\leq {c}/\log^4 n.
	\end{align*}
	On the other hand, if $\abs{\beta_{t-1}-1/\sqrt{3}}\geq 100/(\log^5 n)$, then for $\eta$ sufficiently small,
	\begin{align*}
	\abs{\beta_{t}-{1}/{\sqrt{3}}}& = \abs{\beta_{t-1}-{1}/{\sqrt{3}}}\cdot \bigabs{1-6\eta\cdot \big(\beta_{t-1}+{1}/{\sqrt{3}}+{\alpha_{t-1}^2}/(\beta_{t-1}-1/\sqrt{3})\big) \\
	&\qquad\qquad - 2\eta\cdot \E_{\pi_m}\xi_{\pi_m}/(\beta_{t-1}-1/\sqrt{3}) }\\
	&\leq \abs{1-\eta/100}\cdot \abs{\beta_{t-1}-{1}/{\sqrt{3}}}.
	\end{align*}
	Combining the above two displays, we conclude that 
	\begin{align*}
	 \abs{\beta_{t}-1/\sqrt{3}}\leq 100/\log^5 n+c/\log ^4 n\hbox{ for all } t \in \big[c_3\log \log n \wedge T_0, T_0\big].
	\end{align*}
	Here $c_3>0$ is universal. Consequently, by letting
	\begin{align}\label{ineq:pr_Bn_step1_t_def}
	t_n^{(1)}\equiv c_3\log\log n\wedge T_0,\quad t_n^{(2)}\equiv T_0,\quad t_n\equiv T_{\epsilon_0},
	\end{align}
	we have the following:
	\begin{itemize}
		\item For $t\leq t_n^{(1)}$ and $t \in (t_n^{(2)},t_n]$, $\beta_t\lesssim 1$.
		\item For $t \in (t_n^{(1)},t_n^{(2)}]$, $\alpha_t\leq 1/(c_0 \log^5 n)$ and $\abs{\beta_t-1/\sqrt{3}}\leq c/\log^4 n$.
	\end{itemize} 
	\noindent (\textbf{Step 2}). In this step, we prove that for small enough $\eta$, $t \in (t_n^{(1)},t_n^{(2)}]$ and $m\geq m_0$,
	\begin{align}\label{ineq:pr_Bn_step2}
	\lambda_{\min}\Big(\mathbb{M}_{ \mathrm{u}_\ast^{(t)},\mu_\ast }^{\mathsf{Z}}\Big)\geq  - c_1/\log^4 n.
	\end{align}
	First by using that $\E^{(0)}\iprod{\mathsf{Z}_1}{u}^2\mathsf{Z}_1\mathsf{Z}_1^\top= \pnorm{u}{}^2 I_n+ 2uu^\top$, we have 
	\begin{align*}
	&\mathbb{M}_{ \mathrm{u}_\ast^{(t)},\mu_\ast }^{\mathsf{Z}}+2 \E_{\pi_m} \xi_{\pi_m}\cdot I_n\\
	& = 2\E^{(0)} \big(3 \bigiprod{\mathsf{Z}_{1}}{U \mathrm{u}_\ast^{(t)}+(1-U) \mu_\ast }^2-\iprod{\mathsf{Z}_{1}}{\mu_\ast}^2\big) \mathsf{Z}_1\mathsf{Z}_1^\top\\
	& = \big(6 \E^{(0)}\pnorm{U \mathrm{u}_\ast^{(t)}+(1-U) \mu_\ast}{}^2-2\pnorm{\mu_\ast}{}^2\big)\cdot I_n\\
	&\qquad + 12\E^{(0)} \big(U \mathrm{u}_\ast^{(t)}+(1-U) \mu_\ast\big)\big(U \mathrm{u}_\ast^{(t)}+(1-U) \mu_\ast\big)^\top -4 \mu_\ast\mu_\ast^\top\\
	&=\big(2 \pnorm{\mathrm{u}_\ast^{(t)}}{}^2+2\iprod{\mathrm{u}_\ast^{(t)}}{\mu_\ast}\big)\cdot I_n +4 \mathrm{u}_\ast^{(t)}\mathrm{u}_\ast^{(t),\top}+ 2 \mathrm{u}_\ast^{(t)} \mu_\ast^\top+2 \mu_\ast \mathrm{u}_\ast^{(t),\top}.
	\end{align*}
	Let 
	\begin{align*}
	M_t\equiv 2 \pnorm{\mathrm{u}_\ast^{(t)}}{}^2\cdot I_n +4 \mathrm{u}_\ast^{(t)}\mathrm{u}_\ast^{(t),\top}+ 2 \mathrm{u}_\ast^{(t)} \mu_\ast^\top+2 \mu_\ast \mathrm{u}_\ast^{(t),\top}.
	\end{align*}
	Then for $t\leq t_n^{(2)}=T_0$,
	\begin{align}\label{ineq:pr_Bn_step2_1}
	\lambda_{\min}\Big(\mathbb{M}_{ \mathrm{u}_\ast^{(t)},\mu_\ast }^{\mathsf{Z}}\Big)\geq \lambda_{\min}(M_t) - c_1/\log^5 n.
	\end{align}
	As $\pnorm{\mathrm{u}_\ast^{(t)}}{}^2\geq 0.3$ for small enough $\eta$ (cf. \cite[Eq. (70)]{chen2019gradient}), an application of Lemma \ref{lem:M_generic_eigenvalue} entails that with
	\begin{align*}
	\lambda^{(t)}\equiv 4\pnorm{\mathrm{u}_\ast^{(t)}}{}^2-2\sqrt{\pnorm{\mathrm{u}_\ast^{(t)}}{}^4+\pnorm{\mathrm{u}_\ast^{(t)}}{}^2},
	\end{align*}
	we have 
	\begin{align}\label{ineq:pr_Bn_step2_2}
	\abs{\lambda_{\min}(M_t) -\lambda^{(t)}}\leq c_1/\log^5 n.
	\end{align}
	On the other hand, using the estimates obtained in Step 1, for $t \in (t_n^{(1)},t_n^{(2)}]$,
	\begin{align}\label{ineq:pr_Bn_step2_3}
	\abs{\lambda^{(t)}}\leq c \pnorm{\mathrm{u}_\ast^{(t)}}{}^2\cdot \abs{3\pnorm{\mathrm{u}_\ast^{(t)}}{}^2-1}\leq c\big(\abs{\beta_t^2-1/3}+\alpha_t^2\big)\leq c/(\log^4 n).
	\end{align}
	Combining (\ref{ineq:pr_Bn_step2_1}), (\ref{ineq:pr_Bn_step2_2}) and (\ref{ineq:pr_Bn_step2_3}) proves the claimed estimate (\ref{ineq:pr_Bn_step2}).

	\noindent (\textbf{Step 3}). In this step, we prove that for small enough $\eta$ and $t \in (t_n^{(1)},t_n^{(2)}]$,
	\begin{align}\label{ineq:pr_Bn_step3}
    \bigabs{\lambda_{\min}\Big(\mathbb{M}_{ \mathrm{u}_{\ast}^{(t)},\mathrm{u}_{\ast}^{(t)} }^{\mathsf{Z}}\Big)+4}\leq c/\log^4 n.
	\end{align}
	Note that
	\begin{align*}
	\mathbb{M}_{ \mathrm{u}_{\ast}^{(t)},\mathrm{u}_{\ast}^{(t)} }^{\mathsf{Z}}+2 \E_{\pi_m} \xi_{\pi_m}\cdot I_n & = 2\E^{(0)} \big(3 \bigiprod{\mathsf{Z}_{1}}{\mathrm{u}_{\ast}^{(t)} }^2-\iprod{\mathsf{Z}_{1}}{\mu_\ast}^2\big) \mathsf{Z}_1\mathsf{Z}_1^\top\\
	&= 2(3 \pnorm{\mathrm{u}_{\ast}^{(t)}}{}^2-1)\cdot I_n+12 \mathrm{u}_{\ast}^{(t)}\mathrm{u}_{\ast}^{(t),\top}-4\mu_\ast\mu_\ast^\top.
	\end{align*}
	The claim in (\ref{ineq:pr_Bn_step3}) follows by Lemma \ref{lem:M_generic_eigenvalue}.
	
	\noindent (\textbf{Step 4}). Now we shall provide an estimate for both $\mathcal{B}_0(t_n)$ and $\mathcal{B}(t_n)$, where $t_n$ is defined in (\ref{ineq:pr_Bn_step1_t_def}). In particular, we shall only use nontrivial estimates on the sub-interval $(t_n^{(1)},t_n^{(2)}]$ obtained in Steps 2 and 3, and the trivial estimates in Lemma \ref{lem:pr_min_eig} for the remaining parts. More specifically, recall $\epsilon_n=(\log n)^{-100}$, for $n\geq n_0(c_0,\epsilon_0,\eta)$,
	\begin{align*}
	\mathcal{B}_0(t_n)&=\sum_{s \in [0:t_n)} \bigg\{\prod_{r \in [s:t_n)\setminus (t_n^{(1)},t_n^{(2)}]}\prod_{r \in (t_n^{(1)},t_n^{(2)}]}\bigg\} \Big(1+\eta\cdot \Big[\lambda_{\min}\Big(\mathbb{M}_{ \mathrm{u}_{\ast}^{(r)},\mu_\ast }^{\mathsf{Z}}\Big)\Big]_- \Big)\\
	&\leq  2^{\abs{[0:t_n)\setminus (t_n^{(1)},t_n^{(2)}]}}\cdot t_n\cdot \big(1+c_1/\log^4 n\big)^{\abs{ (t_n^{(1)},t_n^{(2)}] }}\\
	&\leq c_2^{\log\log n}\cdot \log n\cdot \big(1+c_1/\log^4 n\big)^{c_1 \log n}\leq (\log n)^{c_2}.
	\end{align*}
	Moreover,
	\begin{align*}
	\mathcal{B}(t_n)&= \sum_{s \in [0:t_n)} \bigg\{\prod_{r \in [s:t_n)\setminus (t_n^{(1)},t_n^{(2)}]}\prod_{r \in (t_n^{(1)},t_n^{(2)}]}\bigg\} \Big(1+\eta\cdot \Big[ \lambda_{\min}\Big(\mathbb{M}_{ \mathrm{u}_{\ast}^{(t)},\mathrm{u}_{\ast}^{(t)} }^{\mathsf{Z}}\Big)\Big]_- +\epsilon_n\Big)\nonumber\\
	&\leq 2^{\abs{[0:t_n)\setminus (t_n^{(1)},t_n^{(2)}]}}\cdot t_n\cdot \big(1+4\eta+c/(\log n)^{4}\big)^{\abs{ (t_n^{(1)},t_n^{(2)}] }}\\
	&\leq c_2^{\log \log n}\cdot \log n\cdot (1+4\eta)^{\abs{ (t_n^{(1)},t_n^{(2)}] } }\leq n^{1/2}(\log n)^{c_2}.
	\end{align*}
	Here the last estimate follows as for all $t \in (t_n^{(1)},t_n^{(2)}]$, the first equation of (\ref{def:alpha_beta_pr}) entails that $\alpha_{t}\geq (1+4\eta-c/\log^4 n)\cdot  \alpha_{t-1}$, and therefore 
	\begin{align*}
	1/(c_0\log^5 n)\geq \alpha_{ t_n^{(2)}} &\geq (1+4\eta-c/\log^4 n)^{ \abs{ (t_n^{(1)},t_n^{(2)}] } } \cdot \alpha_{t_n^{(1)}+1}\\
	&\stackrel{(\ast)}{\geq} c (1+4\eta)^{ \abs{ (t_n^{(1)},t_n^{(2)}] } } \big/ (n \log n)^{1/2},
	\end{align*}
	where $(\ast)$ follows from \cite[Eq. (64a)]{chen2019gradient}.
\end{proof}

\begin{proof}[Proof of Theorem \ref{thm:pr_se}]
	The first desired claim follows from Lemma \ref{lem:pr_min_eig} and Proposition \ref{prop:pr_Bn}.

	For the second claim, note that by Proposition \ref{prop:nonconvex_generic_conv_long_contraction}-(1) and the trivial apriori estimate $\sup_{t} \pnorm{\mathrm{u}_\ast^{(t)}}{}\lesssim 1$, we have
	\begin{align*}
	\pnorm{\mathrm{u}_\ast^{(t)}-\mu_\ast}{}\leq (\log n)^{c_1}\cdot \big(1-\eta_\ast\sigma_\ast\big)^{(t-c_1'\log n)_+}\wedge c_1.
	\end{align*}
	By adjusting $c_1'$ in the exponent, we may replace the multiplicative logarithmic factor $(\log n)^{c_1}$ by $c_1$. The second claim then follows by (i) a modified version of Theorem \ref{thm:nonconvex_generic_conv_long} under the extra third moment condition $\E X_{ij}^3=0$, and (ii) an estimate for $\pnorm{\mathrm{u}_\ast^{(t)}}{\infty}$, as detailed below.
	
	For (i), under the extra third moment condition $\E X_{ij}^3=0$, a simple modification of the proof of (\ref{ineq:diff_theo_grad_des_step1}) by cumulant expansion (cf. Lemma \ref{lem:cum_exp}) up to the third order shows that the order of the right hand side of (\ref{ineq:diff_theo_grad_des_step1}) can be improved from $\bigo(n^{-1/2})$ to $\bigo(n^{-1})$ (with extra smoothness assumptions satisfied by the phase retrieval example we consider here). Therefore, the same rate improvement holds for Proposition \ref{prop:diff_theo_grad_des} and Theorem \ref{thm:grad_des_se}, where the term $n^{-1/2}\cdot  \mathscr{M}_{ \mathrm{u}_X^{(\cdot)},\mathrm{u}_{\ast}^{(\cdot)} }^{(t-1),X}$ can be replaced by $n^{-1}\cdot  \mathscr{M}_{ \mathrm{u}_X^{(\cdot)},\mathrm{u}_{\ast}^{(\cdot)} }^{(t-1),X}$. Consequently, (\ref{eqn:nonconvex_generic_conv_long}) in Theorem \ref{thm:nonconvex_generic_conv_long} now reads as follows:
	on an event with $\Prob^{(0)}$-probability at least $1-m^{-100}$, uniformly for $t\leq m^{100}$, 
	\begin{align*}
	(n^2\wedge \phi)^{1/2}\pnorm{\mu^{(t)}-\mathrm{u}_\ast^{(t)}}{} + \max_{i \in [m]}\abs{\iprod{X_i}{\mu^{(t)}}}\leq (\log n)^{c_2}\cdot \mathcal{B}(t\wedge t_n).
	\end{align*}
     For (ii), recall $T_0,T_{\epsilon_0}$ defined in (\ref{def:pr_T0_Teps}) and $t_n^{(1)},t_n^{(2)}=T_0$ defined in the proof of Proposition \ref{prop:pr_Bn} above. By (\ref{def:tau_delta_pr}), with
	\begin{align*}
	\gamma_t&\equiv 
	\begin{cases}
	1+c/\log^4 n, & t \in (t_n^{(1)},t_n^{(2)}];\\
	2, & t \in [0:t_n^{(1)}]\cup (t_n^{(2)}: T_{\epsilon_0});\\
	1-3\eta+c/\log^4 n, & t\geq T_{\epsilon_0},
	\end{cases}
	\end{align*}
	we have for small enough $\eta>0$,
	\begin{align*}
	\pnorm{\mathrm{u}_\ast^{(t+1)}}{\infty}&\leq \gamma_{t}\cdot \pnorm{\mathrm{u}_\ast^{(t)}}{\infty}+c\cdot \pnorm{\mu_\ast}{\infty}.
	\end{align*}
	Iterating the above estimate, we have for $t>T_{\epsilon_0}$,
	\begin{align*}
	&\pnorm{\mathrm{u}_\ast^{(t+1)}}{\infty}\leq c \pnorm{\mu_\ast}{\infty}\cdot \sum_{s \in [0:t]}\prod_{r \in [s:t]}\gamma_r\\
	&\leq c \pnorm{\mu_\ast}{\infty} \cdot 2^{c_1 \log \log n}\cdot \bigg(1+\frac{c}{\log^4 n}\bigg)^{c_1\log n} \sum_{s \in [0:t]} \bigg(1-3\eta+\frac{c}{\log^4 n}\bigg)^{(t-T_{\epsilon_0}\vee s+1)_+}\\
	&\leq (\log n)^{c_1}\cdot \pnorm{\mu_\ast}{\infty}\leq n^{-1/2}(\log n)^{c_1},
	\end{align*}
	proving the desired estimate for $\pnorm{\mathrm{u}_\ast^{(t)}}{\infty}$.
\end{proof}

\subsection{Proof of Theorem \ref{thm:single_index_cor_est}}\label{subsection:proof_single_index_cor_est}

The proof is divided into two steps. We assume without loss of generality that $t\geq t_n$. 

\noindent (\textbf{Step 1}). In this step, we prove that there exists some $c_1=c_1(\epsilon,\eta,\Lambda,\sigma_\ast)>0$ such that for $\eta \leq \sigma_\ast^{-1}\vee (3\Lambda^{2}+1)^{-1}$,
\begin{align}\label{ineq:single_index_cor_est_step1}
\abs{\Delta \hat{\tau}_\ast^{(t)} }\vee \abs{\Delta \hat{\delta}_\ast^{(t)} }\vee \hat{\Delta}_\ast^{(t)}&\leq c_1 \mathscr{B}^3(t_n)\cdot \Big(\max_{s \in [1:t_n]}  \hat{\Delta}_\ast^{(s-1)} +\abs{\E_{\pi_m} \xi_{\pi_m}}\Big)\equiv \beta_n.
\end{align}
First note that as $ \pnorm{\mathrm{u}_\ast^{(t)}}{}^2- (\alpha_\ast^{(t)}/\pnorm{\mu_\ast}{})^2 = \pnorm{\mathrm{u}_\ast^{(t)}}{}^2\cdot \sin^2(\mathrm{u}_\ast^{(t)},\mu_\ast) $, for any $t\leq t_\ast(\epsilon)$, 
\begin{align}\label{ineq:single_index_cor_est_1}
\abs{\Delta \hat{\tau}_\ast^{(t)} }\vee \abs{\Delta \hat{\delta}_\ast^{(t)} }&\leq c\Lambda\cdot \Big(\abs{\E_{\pi_m}\xi_{\pi_m}}+\abs{\hat{\alpha}_\ast^{(t)}-\alpha_\ast^{(t)} }/\pnorm{\mu_\ast}{}\nonumber\\
&\qquad + \bigabs{ \big\{ (\hat{\gamma}_\ast^{(t)})^2 - (\hat{\alpha}_\ast^{(t)}/\pnorm{\mu_\ast}{})^2\big\}^{1/2}-  \big\{ (\gamma_\ast^{(t)})^2- (\alpha_\ast^{(t)}/\pnorm{\mu_\ast}{})^2  \big\}^{1/2} }\Big)\nonumber\\
&\leq c_1\cdot \big( \hat{\Delta}_\ast^{(t)}+\abs{\E_{\pi_m}\xi_{\pi_m}}\big).
\end{align}
We now provide an estimate for $\abs{\Delta (\hat{\gamma}_\ast^{(t)})^2 }$. As
\begin{align}\label{ineq:single_index_cor_est_Delta_gamma}
\abs{\Delta (\hat{\gamma}_\ast^{(t)})^2 }&\leq (1-\eta \tau_\ast^{(t-1)})^2\cdot \abs{\Delta (\hat{\gamma}_\ast^{(t-1)})^2 }\nonumber\\
&\qquad +c_1 \cdot \big(\abs{\Delta \hat{\tau}_\ast^{(t-1)}}+\abs{\Delta \hat{\delta}_\ast^{(t-1)}}+\abs{\Delta \hat{\alpha}_\ast^{(t-1)}}\big),
\end{align}
by iterating the above inequality and using (\ref{ineq:single_index_cor_est_1}), for $\eta \leq \sigma_\ast^{-1}\vee (3\Lambda^{2}+1)^{-1}$,
\begin{align*}
\abs{\Delta (\hat{\gamma}_\ast^{(t)})^2 }&\leq  c_1 \cdot\sum_{s \in [1:t]} \bigg\{\prod_{r \in [s:t_n)} \prod_{r \in [t_n:t-1]}\bigg\}(1-\eta \tau_\ast^{(r)})^2\cdot  \big(\abs{\Delta \hat{\tau}_\ast^{(s-1)}} + \abs{\Delta \hat{\delta}_\ast^{(s-1)}}+\abs{\Delta \hat{\alpha}_\ast^{(s-1)}} \big)\nonumber\\
&\leq c_1\cdot (1-\eta\sigma_\ast)^{2(t-t_n)_+} \mathscr{B}^2(t_n)\cdot \max_{s \in [1:t_n)}  \hat{\Delta}_\ast^{(s-1)}\nonumber\\
&\qquad + c_1\cdot (1-\eta\sigma_\ast)^{2(t-t_n)_+} (t-t_n+1)_+\cdot \max_{s \in [t_n:t]} \hat{\Delta}_\ast^{(s-1)}\nonumber\\
&\qquad + c_1\cdot (1-\eta \sigma_\ast)^{2(t-t_n)_+} \big(\mathscr{B}^2(t_n)+(t-t_n+1)_+\big)\cdot  \abs{\E_{\pi_m} \xi_{\pi_m}}.
\end{align*}
This means 
\begin{align}\label{ineq:single_index_cor_est_Delta_gamma_1}
\abs{\Delta (\hat{\gamma}_\ast^{(t)})^2 }&\leq c_1\cdot  (1-\eta\sigma_\ast)^{2(t-t_n)_+}\cdot \Big[ \mathscr{B}^2(t_n) \max_{s \in [1:t_n)} \hat{\Delta}_\ast^{(s-1)}+(t-t_n+1)_+  \max_{s \in [t_n:t]} \hat{\Delta}_\ast^{(s-1)} \Big]\nonumber\\
&\qquad +c_1\cdot \big[1+ (1-\eta\sigma_\ast)^{(t-t_n)_+} \mathscr{B}(t_n)\big]^2\cdot \abs{\E_{\pi_m} \xi_{\pi_m}}.
\end{align}
We now provide an estimate for $\abs{\Delta \hat{\alpha}_\ast^{(t)} }$. As 
\begin{align}\label{ineq:single_index_cor_est_Delta_alpha}
\abs{\Delta \hat{\alpha}_\ast^{(t)}} &\leq \abs{1-\eta \tau_\ast^{(t-1)}}\cdot \abs{\Delta \hat{\alpha}_\ast^{(t-1)}}+ c_1 \cdot \big( \abs{\Delta \hat{\tau}_\ast^{(t-1)}} +\abs{ \Delta \hat{\delta}_\ast^{(t-1)}}+\abs{\Delta \hat{\gamma}_\ast^{(t)} }\big),
\end{align}
by iterating the above inequality and using the estimates in (\ref{ineq:single_index_cor_est_1}) and (\ref{ineq:single_index_cor_est_Delta_gamma_1}), for $\eta \leq \sigma_\ast^{-1}\vee (3\Lambda^{2}+1)^{-1}$,
\begin{align}\label{ineq:single_index_cor_est_Delta_alpha_1}
\abs{\Delta \hat{\alpha}_\ast^{(t)}}&\leq c_1 \sum_{s \in [1:t]}\bigg\{\prod_{r \in [s:t_n)} \prod_{r \in [t_n:t-1]}\bigg\} \,\abs{1-\eta \tau_\ast^{(r)}}\nonumber\\
&\qquad  \times  \big(\abs{\Delta \hat{\tau}_\ast^{(s-1)} } + \abs{\Delta \hat{\delta}_\ast^{(s-1)}}+\abs{\Delta \hat{\gamma}_\ast^{(s)} }\big)+ \prod_{r \in [0:t-1]}\abs{1-\eta \tau_\ast^{(r)}}\cdot \abs{\Delta \hat{\alpha}_\ast^{(0)}}\nonumber\\
&\leq c_1\cdot  (1-\eta \sigma_\ast)^{(t-t_n)_+} \cdot \Big[\mathscr{B}^3(t_n) \max_{s \in [1:t_n)} \hat{\Delta}_\ast^{(s-1)}  +(t-t_n+1)_+^2 \max_{s \in [t_n:t]}  \hat{\Delta}_\ast^{(s-1)}\Big]\nonumber\\
&\qquad + c_1\cdot \big[1+ (1-\eta\sigma_\ast)^{(t-t_n)_+} \mathscr{B}(t_n)\big]^3\cdot \abs{\E_{\pi_m} \xi_{\pi_m}}.
\end{align}
Combining (\ref{ineq:single_index_cor_est_Delta_gamma_1}) and (\ref{ineq:single_index_cor_est_Delta_alpha_1}), with $\mathfrak{p}(t) \equiv  (t+1)^2\cdot(1-\eta \sigma_\ast)^{t}$, for $t\leq  t_\ast(\epsilon)$, 
\begin{align}\label{ineq:single_index_cor_est_Delta_alpha_4}
\hat{\Delta}_\ast^{(t)}&\leq c_1\cdot \mathfrak{p}\big((t-t_n)_+\big)\cdot \Big(\max_{s \in [t_n:t]}  \hat{\Delta}_\ast^{(s-1)}+\beta_n\Big).
\end{align}
Consequently, for some $c_2=c_2(\epsilon,\eta,\Lambda,\sigma_\ast)>0$ and $\delta_0=\delta_0(\epsilon,\eta,\Lambda,\sigma_\ast)>0$, we may simplify the above recursive estimate (\ref{ineq:single_index_cor_est_Delta_alpha_4}) as
\begin{align*}
\hat{\Delta}_\ast^{(t)}&\leq \Big(c_2\bm{1}_{t\leq t_n+c_2}+(1-\delta_0)^{(t-t_n-c_2)_+}\bm{1}_{t>t_n+c_2}\Big)\cdot \Big(\max_{s \in [t_n:t]}  \hat{\Delta}_\ast^{(s-1)}+\beta_n\Big).
\end{align*}
In particular, we have
\begin{align}\label{ineq:single_index_cor_est_Delta_alpha_5}
\begin{cases}
\big(\hat{\Delta}_\ast^{(t)}+\beta_n \big)\leq c_2\cdot \big(\max_{s \in [t_n:t]} \hat{\Delta}_\ast^{(s-1)}+  \beta_n\big), & t_n\leq t\leq t_n+c_2;\\
\hat{\Delta}_\ast^{(t)}\leq \max_{s \in [t_n:t]} \hat{\Delta}_\ast^{(s-1)}+ (1-\delta_0)^{(t-t_n-c_2)_+}\cdot \beta_n, & t>t_n+c_2.
\end{cases}
\end{align}
Now iterating the above estimate (\ref{ineq:single_index_cor_est_Delta_alpha_5}) until $t=t_n$, we obtain (\ref{ineq:single_index_cor_est_step1}).

\noindent (\textbf{Step 2}). In this step, we prove the desired claims via (\ref{ineq:single_index_cor_est_Delta_alpha_5}) and apriori bounds. Using (\ref{ineq:single_index_cor_est_1}), (\ref{ineq:single_index_cor_est_Delta_gamma}) and trivial bounds for the first terms on the right hand side of the first inequality of (\ref{ineq:single_index_cor_est_Delta_alpha_1}), for $\eta \leq \sigma_\ast^{-1}\vee (3\Lambda^{2}+1)^{-1}$ and $t\leq  t_\ast(\epsilon)$,
\begin{align}\label{ineq:single_index_cor_est_step2_1}
\abs{\Delta \hat{\alpha}_\ast^{(t)}} &\leq c_1 \sum_{s \in [1:t]} 2^{t-s}\cdot \big(\abs{\Delta \hat{\tau}_\ast^{(s-1)}}+\abs{\Delta \hat{\delta}_\ast^{(s-1)}}+\abs{\Delta \hat{\gamma}_\ast^{(s)}}\big) + 2^t\cdot \abs{\Delta \hat{\alpha}_\ast^{(0)}} \nonumber\\
&\leq c_1\sum_{r \in [0:t-1]} 2^{t-r} \hat{\Delta}_\ast^{(r)} + c_12^t\cdot \big(\abs{\Delta\hat{\alpha}_\ast^{(0)}}+\abs{\E_{\pi_m} \xi_{\pi_m}}\big).
\end{align}
Using (\ref{ineq:single_index_cor_est_1}), (\ref{ineq:single_index_cor_est_Delta_gamma}) again, similarly for $\eta \leq \sigma_\ast^{-1}\vee (3\Lambda^{2}+1)^{-1}$ and $t\leq  t_\ast(\epsilon)$,
\begin{align}\label{ineq:single_index_cor_est_step2_2}
\abs{\Delta \hat{\gamma}_\ast^{(t)}} &\leq c_1 \sum_{s \in [1:t]} 2^{t-s}\cdot \big(\abs{\Delta \hat{\tau}_\ast^{(s-1)}}+\abs{\Delta \hat{\delta}_\ast^{(s-1)}}+\abs{\Delta \hat{\alpha}_\ast^{(s-1)}}\big)\nonumber\\
&\leq c_1\sum_{r \in [0:t-1]} 2^{t-r} \hat{\Delta}_\ast^{(r)} + c_12^t\cdot \big(\abs{\Delta\hat{\alpha}_\ast^{(0)}}+\abs{\E_{\pi_m} \xi_{\pi_m}}\big).
\end{align}
Combining (\ref{ineq:single_index_cor_est_step2_1})-(\ref{ineq:single_index_cor_est_step2_2}), for $\eta \leq \sigma_\ast^{-1}\vee (3\Lambda^{2}+1)^{-1}$ and $t\leq  t_\ast(\epsilon)$,
\begin{align*}
\hat{\Delta}_\ast^{(t)} 
&\leq c_1\sum_{r \in [0:t-1]} 2^{t-r} \hat{\Delta}_\ast^{(r)} + c_12^t\cdot \big(\abs{\Delta\hat{\alpha}_\ast^{(0)}}+\abs{\E_{\pi_m} \xi_{\pi_m}}\big).
\end{align*}
Consequently, for $s \in [0:t]$,
\begin{align*}
2^{-s}\hat{\Delta}_\ast^{(s)}  \leq c_1 \sum_{r \in [0:s-1]} 2^{-r} \hat{\Delta}_\ast^{(r)}  + c_1\cdot \big(\abs{\Delta\hat{\alpha}_\ast^{(0)}}+\abs{\E_{\pi_m} \xi_{\pi_m}}\big).
\end{align*}
Let $\{\omega_s>0: s \in [0:t]\}$ to be determined later on. Averaging the above display with the weights $\{\omega_s\}$ and interchanging the order the summation, we have
\begin{align*}
\sum_{s \in [0:t]} \omega_s 2^{-s}\hat{\Delta}_\ast^{(s)}
&\leq c_1\cdot \sum_{s\in [0:t-1]} \bigg(\sum_{r \in (s:t]} \omega_r\bigg)\cdot  2^{-s} \hat{\Delta}_\ast^{(s)}\\
&\qquad + c_1\cdot \sum_{s \in [0:t]} \omega_s \cdot \big(\abs{\Delta\hat{\alpha}_\ast^{(0)}}+\abs{\E_{\pi_m} \xi_{\pi_m}}\big).
\end{align*}
So, by choosing $\{\omega_s\}\subset \R_{>0}$ inductively by the relation $\omega_s\equiv \sum_{r \in (s:t]} \omega_r$ and the normalization $\sum_{r \in [0:t]} \omega_r=1$, for $t\leq  t_\ast(\epsilon)$, we have
\begin{align*}
\sum_{s \in [0:t]} \omega_s 2^{-s}\hat{\Delta}_\ast^{(s)}&\leq c_1 \sum_{s \in [0:t-1]} \omega_s 2^{-s}\hat{\Delta}_\ast^{(s)}+ c_1\cdot \big(\abs{\Delta\hat{\alpha}_\ast^{(0)}}+\abs{\E_{\pi_m} \xi_{\pi_m}}\big).
\end{align*}
Iterating the above inequality, we then arrive at
\begin{align*}
\sum_{s \in [0:t]} \omega_s 2^{-s}\hat{\Delta}_\ast^{(s)}\leq c_1^t\cdot  \big(\abs{\Delta\hat{\alpha}_\ast^{(0)}}+\abs{\E_{\pi_m} \xi_{\pi_m}}\big).
\end{align*}
The desired apriori estimate for $\hat{\Delta}_\ast^{(t)}$ follows by noting that $\omega_t\asymp 2^{-t}$. \qed

\section{Proof of Theorem \ref{thm:mean_field_dyn_approx}}\label{section:proof_mean_field_dyn_approx}

For notational convenience, we shall write $\bm{\delta}^{[t]}\equiv (\delta_s)_{s \in [t]} \in \R^t$, $\bm{\tau}^{[t]}\equiv (\tau_{r,s})_{r,s \in [t]}\in \R^{t\times t}$, $\bm{\rho}^{[t]}\equiv (\rho_{r,s})_{r,s \in [t]}\in \R^{t\times t}$,  $\Sigma_{\mathfrak{Z}}^{[t]} \in \R^{[0:t]\times [0:t]}$ and $\Sigma_{\mathfrak{W}}^{[t]} \in \R^{[1:t]\times [1:t]}$ for the covariance of $\mathfrak{Z}^{([0:t])}$ and $\mathfrak{W}^{([1:t])}$.

We first prove the following apriori estimates.
\begin{lemma}\label{lem:rho_tau_bound}
	Suppose (A2) holds for some $\Lambda\geq 2$ and $\mathfrak{p}\geq 1$, and the step sizes $\{\eta_t\}$ are bounded by $\Lambda$. The following hold for some $c_t=c_t(t)>1$:
	\begin{enumerate}
		\item $ \pnorm{\bm{\tau}^{[t]}}{\op}+ \pnorm{\bm{\rho}^{[t]}}{\op}\leq \big(\Lambda(1\vee \phi^{-1})\big)^{c_t }$.
		\item $\pnorm{\Sigma_{\mathfrak{Z}}^{[t]}}{\op}+ \phi\cdot\pnorm{\Sigma_{\mathfrak{W}}^{[t]}}{\op}+ \pnorm{\bm{\delta}^{[t]}}{}\leq \big(\Lambda L_\ast^{(0)}(1\vee \phi^{-1})\big)^{c_t }$.
		\item $\max\limits_{k \in [m], r \in [1:t]} \big(\abs{\Upsilon_{r;k}(z^{([0:r])})}+\abs{\Theta_{r;k}(z^{([0:r])})}\big)\leq \big(\Lambda (1\vee \phi^{-1})\big)^{c_t }\cdot \big(1+\pnorm{z^{([0:t])} }{}\big)$.
		\item $\max\limits_{\ell \in [n], r \in [1:t]} \abs{\Omega_{r;\ell}(w^{([1:r])})}\leq \big(\Lambda L_\ast^{(0)}(1\vee \phi^{-1})\big)^{c_t }\cdot \big(1+\pnorm{w^{([1:t])}}{}\big)$.
		\item $\max\limits_{\substack{k \in [m], \ell \in [n], r,s \in [1:t]}} \big(\abs{\partial_{(s)}\Upsilon_{r;k}(z^{([0:r])})}+\abs{\partial_{(s)}\Omega_{r;\ell}(w^{([1:r])})}\big)\leq \big(\Lambda (1\vee \phi^{-1})\big)^{c_t }$.
	\end{enumerate}
	Here $\partial_{(s)}\Upsilon_{r;k}(z^{([0:r])})\equiv \partial_{z^{(s)}}\Upsilon_{r;k}(z^{([0:r])})$, and $\partial_{(s)}\Omega_{r;\ell}(w^{([1:r])})\equiv \partial_{w^{(s)}}\Omega_{r;\ell}(w^{([1:r])})$. 
\end{lemma}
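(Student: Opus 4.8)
\textbf{Proof plan for Lemma \ref{lem:rho_tau_bound}.}
The plan is to establish all five bounds simultaneously by induction on $t$, since the definitions in Definition \ref{def:gd_mean_field_se} are genuinely recursive: the objects at iteration $t$ (namely $\Upsilon_t$, the covariances $\Sigma_{\mathfrak{Z}}^{[t]},\Sigma_{\mathfrak{W}}^{[t]}$, the coefficients $\bm{\tau}^{[t]},\bm{\delta}^{[t]},\bm{\rho}^{[t]}$, and $\Omega_t$) are all defined in terms of the same objects at iterations $\leq t-1$. I would set up a single induction hypothesis asserting that all of (1)--(5) hold at level $t-1$ with constants $c_{t-1}$, and then walk through steps (S1), (S2), (S3) in order to produce the level-$t$ bounds.

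First, in step (S1): given the inductive bound (5) on $\partial_{(s)}\Omega_{t-1;\pi_n}$, the coefficients $\rho_{t-1,s}=\E^{(0)}\partial_{\mathfrak{W}^{(s)}}\Omega_{t-1;\pi_n}$ are immediately bounded, giving the $\bm{\rho}^{[t]}$ part of (1). Then $\Upsilon_t$ is defined as $\partial_1\mathsf{L}$ composed with an affine combination of $\mathfrak{z}^{(t)}$ and earlier $\Upsilon_s$'s with coefficients $\phi^{-1}\eta_{s-1}\rho_{t-1,s}$; using (A2) (pseudo-Lipschitzness of order $\mathfrak{p}=1$, hence global Lipschitzness, of $\partial_1\mathfrak{S}$, equivalently of $\partial_1\mathsf{L}(\cdot,\mathcal{F}(\cdot,\cdot))$ in its arguments — note $\mathfrak{p}=1$ is crucial here so there are no polynomial growth factors), together with the inductive bound (3) on the earlier $\Upsilon_s$, yields the growth bound (3) for $\Upsilon_t$ and the Jacobian bound $\partial_{(s)}\Upsilon_{t;k}$ in (5) by the chain rule. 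Next, in step (S2): the entries of $\Sigma_{\mathfrak{Z}}^{[t]}$ are $\E^{(0)}$ of products of two $\Omega_{\ast;\pi_n}$'s, so using (4) and Gaussian moment bounds on $\mathfrak{W}^{([1:t-1])}$ (whose covariance is controlled inductively by (2)) one bounds $\pnorm{\Sigma_{\mathfrak{Z}}^{[t]}}{\op}$ by $\max$ over the $O(t^2)$ entries; crucially the entries of $\Sigma_{\mathfrak{W}}^{[t]}$ carry the prefactor $\eta_{t-1}^2\phi^{-1}$, which produces the extra $\phi^{-1}$ gain in the $\pnorm{\Sigma_{\mathfrak{W}}^{[t]}}{\op}$ bound. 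Finally, step (S3): $\tau_{t,s}=\E^{(0)}\partial_{\mathfrak{Z}^{(s)}}\Upsilon_{t;\pi_m}$ and $\delta_t=-\E^{(0)}\partial_{\mathfrak{Z}^{(0)}}\Upsilon_{t;\pi_m}$ are bounded via the just-established (5) for $\Upsilon_t$ (the $\delta_t$ bound additionally picks up an $L_\ast^{(0)}$ factor because $\mathfrak{Z}^{(0)}$ has variance $\pnorm{\mu_\ast}{}^2$ and the affine expression for $\Omega$ contains the term $\eta_{t-1}\delta_t(n^{1/2}\mu_\ast)$), giving (1) and the $\bm{\delta}^{[t]}$ part of (2); then $\Omega_t(\mathfrak{w}^{([1:t])})=\mathfrak{w}^{(t)}+\sum_s(\bm 1_{t=s}-\eta_{t-1}\tau_{t,s})\Omega_{s-1}+\eta_{t-1}\delta_t(n^{1/2}\mu_\ast)$ is affine in its arguments with coefficients just bounded, so (4) and the $\Omega$-part of (5) follow directly, with the $L_\ast^{(0)}$ factor entering through the $n^{1/2}\mu_\ast$ term.

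A technical point worth isolating is the need to track how the constant $c_t$ grows with $t$: each of the three steps multiplies the previous constant by a factor depending on $\Lambda$ and on $t$ (through the number of summands $O(t)$ and the Gaussian moment constants), so one gets a recursion $c_t \leq C\,c_{t-1}$ or at worst $c_t\leq C(t)\cdot c_{t-1}$, which is fine since we only claim a finite $t$-dependent constant. One should also be slightly careful that the Gaussian vectors $\mathfrak{Z}^{([0:t])}$ and $\mathfrak{W}^{([1:t])}$ are well-defined (i.e.\ the specified covariance matrices are genuinely PSD) — this is automatic because they arise as covariances of honest $L^2$ random variables built from the previous iterates, but it is worth a sentence.

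The main obstacle, I expect, is purely bookkeeping rather than conceptual: correctly threading the $\phi^{-1}$ factors (which appear in $\Upsilon_t$ via the $\phi^{-1}\rho$ coefficients, and in $\Sigma_{\mathfrak{W}}^{[t]}$ via the $\phi^{-1}$ prefactor) so that the final bounds come out with the stated $(1\vee\phi^{-1})^{c_t}$ dependence and not something worse, while also being careful that the $\phi\geq 1$ hypothesis means $1\vee\phi^{-1}=1$ — so in fact the cleanest route is to observe that under $\phi\geq 1$ all the $\phi^{-1}$'s are harmless and only help, and the statement $(1\vee\phi^{-1})^{c_t}=1$-factor is really just robustness notation; nonetheless I would carry the $(1\vee\phi^{-1})$ through symbolically to keep the induction self-contained. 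The other mild nuisance is verifying that $\mathfrak{p}=1$ is genuinely used to kill all polynomial-in-norm factors — if one tried this with $\mathfrak{p}>1$ the products of pseudo-Lipschitz functions would generate growing moments and the clean linear-in-$\pnorm{z}{}$ bounds in (3)--(4) would fail, which is exactly why the hypothesis of Theorem \ref{thm:mean_field_dyn_approx} restricts to $\mathfrak{p}=1$.
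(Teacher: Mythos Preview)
Your plan is correct and follows essentially the same inductive scheme as the paper. The paper organizes the argument slightly differently: rather than a single monolithic induction on all of (1)--(5), it first observes that the derivative bounds (1) and (5) form a closed system among themselves (the recursions for $\partial_{(s)}\Upsilon_{t;k}$ and $\partial_{(s)}\Omega_{t;\ell}$ involve only each other and the coefficients $\bm{\rho}^{[t-1]},\bm{\tau}^{[t]}$, not the Gaussian laws), and handles them first by writing the derivative recursions in matrix form and exploiting that $\bm{L}^{[t]}_k\mathfrak{O}_t(\bm{\rho}^{[t-1]})$ is strictly lower-triangular (hence nilpotent), so the inverse in $\bm{\Upsilon}_k^{';[t]}=[I+\tfrac{\eta}{\phi}\bm{L}^{[t]}_k\mathfrak{O}_t(\bm{\rho}^{[t-1]})]^{-1}\bm{L}^{[t]}_k$ is a finite Neumann sum. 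This is equivalent to your chain-rule-plus-induction but packaged more cleanly; the remaining items (2)--(4) are then layered on top in separate steps.

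One small point to tighten: you bound $\delta_t=-\E^{(0)}\partial_{\mathfrak{Z}^{(0)}}\Upsilon_{t;\pi_m}$ ``via the just-established (5)'', but (5) as stated only covers $s\in[1{:}t]$, not $s=0$. The paper treats $\partial_{(0)}\Upsilon_{t;k}$ separately (its Step~3), since differentiating in $\mathfrak{z}^{(0)}$ brings in $\partial_{12}\mathsf{L}_k^{\mathcal{F}}$ rather than $\partial_{11}\mathsf{L}_k^{\mathcal{F}}$; the argument is the same shape as your chain-rule step, but it is worth writing out. Also, $\pnorm{\bm{\delta}^{[t]}}{}$ does not in fact pick up an $L_\ast^{(0)}$ factor---the paper shows $\pnorm{\bm{\delta}^{[t]}}{}\leq(\Lambda(1\vee\phi^{-1}))^{c_t}$; the $L_\ast^{(0)}$ in item (2) enters only through $\Sigma_{\mathfrak{Z}}^{[t]}$ via the initial conditions $\Omega_{-1}=n^{1/2}\mu_\ast$, $\Omega_0=n^{1/2}\mu^{(0)}$.
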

\begin{proof}
	The proof modifies that of \cite[Lemma 7.1]{han2024gradient} by carefully tracking the impact of the aspect ratio $\phi$. To ease the reading burden for cross reference, we provide below details for these arguments. Let
	\begin{align*}
	\Theta_{t}(z^{([0:t])})\equiv \mathfrak{z}^{(t)} - \frac{1}{\phi}\sum_{s \in [1:t-1]}\eta_{s-1}\rho_{t-1,s}\Upsilon_s(\mathfrak{z}^{([0:s])}).
	\end{align*}
	For a general $n\times n$ matrix $M$, let
	\begin{align*}
	\mathfrak{O}_{n+1}(M)\equiv 
	\begin{pmatrix}
	0_{1\times n} & 0\\
	M & 0_{n\times 1}
	\end{pmatrix} \in \R^{(n+1)\times (n+1)}.
	\end{align*}
	For notational consistency, we write $\mathfrak{O}_1(\emptyset)=0$. Moreover, for notational simplicity, we work with the constant step size $\eta_t\equiv \eta$.
	
	\noindent (\textbf{Step 1}). We prove the estimate in (1) in this step. First, for any $k\in [m]$ and $1\leq s\leq t$, using (S1),
	\begin{align*}
	\partial_{(s)}\Upsilon_{t;k}(z^{([0:t])})&\equiv  \partial_{11} \mathsf{L}\big(\Theta_{t;k}(z^{([0:t])}) ,\mathcal{F}(z^{(0)},\xi_k)\big)\\
	&\qquad \times \bigg(\bm{1}_{t=s}-  \frac{\eta}{\phi}\sum_{r \in [1:t-1]} \rho_{t-1,r} \partial_{(s)} \Upsilon_{r;k}(z^{([0:r])})\bigg).
	\end{align*}
	In the matrix form, with $\bm{\Upsilon}_{k}^{';[t]}(z^{([0:t])}) \equiv \big(\partial_{(s)}\Upsilon_{r;k}(z^{([0:r])})\big)_{r,s \in [1:t]}$ and $\bm{L}^{[t]}_{k}(z^{([0:t])})\equiv \mathrm{diag}\big(\big\{  \partial_{11} \mathsf{L}\big(\Theta_{s;k}(z^{([0:s])}) ,\mathcal{F}(z^{(0)},\xi_k)\big)\big\}_{s \in [1:t]}\big)$, 
	\begin{align*}
	\bm{\Upsilon}_{k}^{';[t]}(z^{([0:t])}) =  \bm{L}^{[t]}_{k}(z^{([0:t])}) -\frac{\eta}{\phi}\cdot  \bm{L}^{[t]}_{k}(z^{([0:t])}) \mathfrak{O}_{t}(\bm{\rho}^{[t-1]}) \bm{\Upsilon}_{k}^{';[t]}(z^{([0:t])}).
	\end{align*}
	Solving for $\bm{\Upsilon}_{k}^{';[t]}$ yields that
	\begin{align}\label{ineq:rho_tau_bound_Upsilon}
	\bm{\Upsilon}_{k}^{';[t]} (z^{([0:t])}) = \Big[I_t+\frac{\eta}{\phi}\cdot \bm{L}^{[t]}_{k}(z^{([0:t])}) \mathfrak{O}_{t}(\bm{\rho}^{[t-1]})\Big]^{-1} \bm{L}^{[t]}_{k}(z^{([0:t])}).
	\end{align}
	As $\bm{L}^{[t]}_{k}(z^{([0:t])}) \mathfrak{O}_{t}(\bm{\rho}^{[t-1]})$ is a lower triangular matrix with $0$ diagonal elements, $\big(\bm{L}^{[t]}_{k}(z^{([0:t])}) \mathfrak{O}_{t}(\bm{\rho}^{[t-1]})\big)^t=0_{t\times t}$, and therefore using $\pnorm{\bm{L}^{[t]}_{k}(z^{([0:t])})}{\op}\leq \Lambda^{c}$,
	\begin{align}\label{ineq:rho_tau_bound_Upsilon_1}
	\pnorm{\bm{\Upsilon}_{k}^{';[t]} (z^{([0:t])}) }{\op}&\leq \bigg(1+\sum_{r \in [1:t]} \phi^{-r}\Lambda^{cr} \pnorm{ \bm{\rho}^{[t-1]}}{\op}^r\bigg)\cdot \Lambda^c\nonumber\\
	&\leq (1\vee \phi^{-1})^t\cdot \Lambda^{c t}\cdot   \pnorm{ \bm{\rho}^{[t-1]}}{\op}^t.
	\end{align}
	Using definition of $\{\tau_{r,s}\}$, we then arrive at
	\begin{align}\label{ineq:rho_tau_bound_1}
	\pnorm{\bm{\tau}^{[t]}}{\op}\leq (1\vee \phi^{-1})^t\cdot \Lambda^{c t}\cdot   \pnorm{ \bm{\rho}^{[t-1]}}{\op}^t.
	\end{align}
	Next, for any $\ell \in [n]$ and $1\leq s\leq t$, using (S3), 
	\begin{align*}
	\partial_{(s)}\Omega_{t;\ell}(w^{([1:t])})&=\bm{1}_{t=s}+ \sum_{r \in [1:t]} (\bm{1}_{t=r}-\eta\cdot \tau_{t,r})\cdot  \partial_{(s)}\Omega_{r-1;\ell}(w^{([1:r-1])}).
	\end{align*}
	In the matrix form, with $\bm{\Omega}_{\ell}^{';[t]}(w^{([1:t])})\equiv \big(\partial_{(s)}\Omega_{r;\ell}(w^{([1:r])})\big)_{r,s \in [1:t]}$,
	\begin{align*}
	\bm{\Omega}_{\ell}^{';[t]}(w^{([1:t])}) = I_t+(I_t-\eta\cdot \bm{\tau}^{[t]})\mathfrak{O}_t\big(\bm{\Omega}_{\ell}^{';[t-1]}(w^{([1:t-1])}) \big).
	\end{align*}
	Consequently, 
	\begin{align*}
	\pnorm{\bm{\Omega}_{\ell}^{';[t]}(w^{([1:t])})}{\op}\leq 1+\Lambda\cdot (1+ \pnorm{\bm{\tau}^{[t]}}{\op})\cdot  \pnorm{\bm{\Omega}_{\ell}^{';[t-1]}(w^{([1:t-1])})}{\op}.
	\end{align*}
	Iterating the bound and using the trivial initial condition $\pnorm{\bm{\Omega}_{\ell}^{';(1)}(w^{(1)})}{\op}\leq \Lambda^c$, 
	\begin{align}\label{ineq:rho_tau_bound_Omega_1}
	\pnorm{\bm{\Omega}_{\ell}^{';[t]}(w^{([1:t])})}{\op}\leq \big(\Lambda (1+ \pnorm{\bm{\tau}^{[t]}}{\op})\big)^{c_t}.
	\end{align}
	Using the definition of $\{\rho_{r,s}\}$, we then have
	\begin{align}\label{ineq:rho_tau_bound_2}
	\pnorm{ \bm{\rho}^{[t]}}{\op}\leq \big(\Lambda (1+ \pnorm{\bm{\tau}^{[t]}}{\op})\big)^{c_t}.
	\end{align}
	Combining (\ref{ineq:rho_tau_bound_1}) and (\ref{ineq:rho_tau_bound_2}), it follows that 
	\begin{align*}
	\pnorm{\bm{\tau}^{[t]}}{\op}\leq \big(\Lambda(1\vee \phi^{-1})\big)^{c_t }\cdot   (1+ \pnorm{\bm{\tau}^{[t-1]}}{\op})^{c_t}.
	\end{align*}
	Iterating the bound and using the initial condition $\pnorm{\bm{\tau}^{[1]}}{\op}\leq \Lambda^c$ to conclude the bound for $\pnorm{\bm{\tau}^{[t]}}{\op}$. The bound for $\pnorm{\bm{\rho}^{[t]}}{\op}$ then follows from (\ref{ineq:rho_tau_bound_2}).

	\noindent (\textbf{Step 2}). In this step we note the following recursive estimates:
	\begin{enumerate}
		\item[(a)] A direct induction argument for (S1) shows that 
		\begin{align*}
		\max_{k \in [m]} \max_{r \in [1:t]} \abs{\Upsilon_{r;k}(z^{([0:r])})}\leq \big(\Lambda(1\vee \phi^{-1})\big)^{c_t }\cdot \big(1+\pnorm{z^{([0:t])} }{}\big).
		\end{align*}
		\item[(b)]  A direct induction argument for (S3) shows that 
		\begin{align*}
		\max_{\ell \in [n]} \max_{r \in [1:t]} \abs{\Omega_{r;\ell}(w^{([1:r])})}&\leq \big(\Lambda L_\ast^{(0)}(1\vee \phi^{-1})\big)^{c_t }\cdot \big(1+\pnorm{w^{([1:t])} }{}+ \pnorm{\bm{\delta}^{[t]}}{} \big).
		\end{align*}
		\item[(c)]  Using (S2), we have
		\begin{align*}
		\pnorm{\Sigma_{\mathfrak{Z}}^{[t]}}{\op}&\leq \big(\Lambda L_\ast^{(0)}(1\vee \phi^{-1})\big)^{c_t }\cdot \big(1+\pnorm{\Sigma_{\mathfrak{W}}^{[t-1]}}{\op}+ \pnorm{\bm{\delta}^{[t-1]}}{} \big),\\
		\phi\cdot \pnorm{\Sigma_{\mathfrak{W}}^{[t]}}{\op} &\leq \big(\Lambda(1\vee \phi^{-1})\big)^{c_t }\cdot \big(1+\pnorm{\Sigma_{\mathfrak{Z}}^{[t]}}{\op} \big).
		\end{align*}
	\end{enumerate}
	
	\noindent (\textbf{Step 3}). In order to use the recursive estimates in Step 2, in this step we prove the estimate for $\pnorm{\bm{\delta}^{[t]}}{}$. For $k \in [m]$, let $\mathsf{L}_{k}^{\mathcal{F}}(u_1,u_2)\equiv \mathsf{L}(u_1,\mathcal{F}(u_2,\xi_k))$. Then by assumption, the mapping $(u_1,u_2)\mapsto \partial_1 \mathsf{L}_{k}^{\mathcal{F}}(u_1,u_2)$ is $\Lambda^c$-Lipschitz on $\R^2$. By using (S1), we then obtain the estimate
	\begin{align*}
	\abs{\partial_{(0)}\Upsilon_{t;k}(z^{([0:t])})}&= \biggabs{  \partial_{11} \mathsf{L}_k^{\mathcal{F}}\big(\Theta_{t;k}(z^{([0:t])}) ,z^{(0)})\big) \cdot \bigg(-\frac{\eta}{\phi} \sum_{r \in [1:t-1]}\rho_{t-1,r} \partial_{(0)} \Upsilon_{r;k}(z^{([0:r])})\bigg)\\
		&\qquad +\partial_{12} \mathsf{L}_{k}^{\mathcal{F}}\big(\Theta_{t;k}(z^{([0:t])}) ,z^{(0)})\big)}\\
	&\leq \big(\Lambda(1\vee \phi^{-1})\big)^{c_t }\cdot \pnorm{\bm{\rho}^{[t-1]} }{\op}\cdot \Big(1+\max_{r \in [1:t-1]} \abs{\partial_{(0)} \Upsilon_{r;k}(z^{([0:r])}) }\Big).
	\end{align*}
	Invoking the proven estimate in (1) and iterating the above bound with the trivial initial condition $\abs{\partial_{(0)}\Upsilon_{1;k}(z^{([0:1])})}\leq \Lambda^c$ to conclude that $
	\abs{\partial_{(0)}\Upsilon_{t;k}(z^{([0:t])})}\leq \big(\Lambda(1\vee \phi^{-1})\big)^{c_t }$, and therefore by definition of $\delta_t$, we conclude that
	\begin{align}\label{ineq:rho_tau_bound_3}
	\pnorm{\bm{\delta}^{[t]}}{}\leq \big(\Lambda(1\vee \phi^{-1})\big)^{c_t }.
	\end{align}
	
	\noindent (\textbf{Step 4}). Now we shall use the estimate for $\pnorm{\bm{\delta}^{[t]}}{}$ in  Step 3 to run the recursive estimates in Step 2. Combining the first line of (c) and (\ref{ineq:rho_tau_bound_3}), we have
	\begin{align*}
	\pnorm{\Sigma_{\mathfrak{Z}}^{[t]}}{\op}&\leq \big(\Lambda L_\ast^{(0)}(1\vee \phi^{-1})\big)^{c_t }\cdot \big(1+\pnorm{\Sigma_{\mathfrak{W}}^{[t-1]}}{\op} \big).
	\end{align*}
	Combined with the second line of (c), we obtain
	\begin{align*}
	\pnorm{\Sigma_{\mathfrak{Z}}^{[t]}}{\op}&\leq \big(\Lambda L_\ast^{(0)}(1\vee \phi^{-1})\big)^{c_t }\cdot \Big(1+\max_{r\in [1:t-1]}\pnorm{\Sigma_{\mathfrak{Z}}^{[r]}}{\op}\Big).
	\end{align*}
	Coupled with the initial condition $\pnorm{\Sigma_{\mathfrak{Z}}^{[1]}}{\op}\leq (L_\ast^{(0)})^c$, we arrive at the estimate
	\begin{align*}
	\pnorm{\Sigma_{\mathfrak{Z}}^{[t]}}{\op}+ \phi\cdot \pnorm{\Sigma_{\mathfrak{W}}^{[t]}}{\op}+ \pnorm{\bm{\delta}^{[t]}}{}\leq \big(\Lambda L_\ast^{(0)}(1\vee \phi^{-1})\big)^{c_t }.
	\end{align*}
	The proof of (1)-(4) is complete by collecting the estimates. The estimate in (5) follows by combining (\ref{ineq:rho_tau_bound_Upsilon_1}) and (\ref{ineq:rho_tau_bound_Omega_1}) along with the estimate in (1).
\end{proof}

\begin{proof}[Proof of Theorem \ref{thm:mean_field_dyn_approx}]
	The proof is divided in several steps.
	
	\noindent (\textbf{Step 1}). Let us now define a revised state evolution for formal comparison. In particular, initialize with (i) two vectors $\tilde{\Omega}_{-1}\equiv n^{1/2}\mu_\ast \in \R^n$ and $\tilde{\Omega}_0 \equiv n^{1/2}\mu^{(0)}\in \R^n$, and (ii) a Gaussian random variable $\tilde{\mathfrak{Z}}^{(0)}\sim \mathcal{N}(0,\pnorm{\mu_\ast}{}^2)$. For $t=1,2,\ldots$, we execute the following steps:
	\begin{enumerate}
		\item[(LS1)] Let $\tilde{\Upsilon}_t: \mathbb{R}^{m\times [0:t]}\to \R^m$ be defined as follows: 
		\begin{align*}
		\tilde{\Upsilon}_t(\mathfrak{z}^{([0:t])})\equiv  \partial_1 \mathsf{L}\big(\mathfrak{z}^{(t)} ,\mathcal{F}(\mathfrak{z}^{(0)},\xi)\big)\in \R^m.
		\end{align*}
		\item[(LS2)] Let $\tilde{\mathfrak{Z}}^{([0:t])}\in \R^{[0:t]}$ be a centered Gaussian random vector whose law at iterate $t$ is determined via the correlation specification: 
		\begin{align*}
		\cov(\tilde{\mathfrak{Z}}^{(t)},\tilde{\mathfrak{Z}}^{(s)})
		& \equiv \E^{(0)}  \tilde{\Omega}_{s-1;\pi_n} \tilde{\Omega}_{t-1;\pi_n},\quad s \in [0:t].
		\end{align*}
		\item[(LS3)] Let the vector $\tilde{\Omega}_t \in \R^n$ be defined as follows:
		\begin{align*}
		\tilde{\Omega}_t\equiv  (1-\eta\tilde{\tau}_{t,t})\cdot  \tilde{\Omega}_{t-1} + \eta\tilde{\delta}_t\cdot (n^{1/2}\mu_\ast).
		\end{align*}
		Here the coefficients are defined via
		\begin{align*}
		\tilde{\tau}_{t,s} &\equiv  \E^{(0)}\partial_{\tilde{\mathfrak{Z}}^{(t)}} \tilde{\Upsilon}_{t;\pi_m}(\tilde{\mathfrak{Z}}^{([0:t])})\in \R,\quad s \in [1:t];\\
		\tilde{\delta}_t &\equiv - \E^{(0)}\partial_{\tilde{\mathfrak{Z}}^{(0)}} \tilde{\Upsilon}_{t;\pi_m}(\tilde{\mathfrak{Z}}^{([0:t])})\in \R.
		\end{align*}
	\end{enumerate}
	Because $\tilde{\tau}_{t,s}=0$ for $s\neq t$ as the function $\tilde{\Upsilon}_t$ depends on $\mathfrak{z}^{([0:t])}$ only via $\big(\mathfrak{z}^{(0)},\mathfrak{z}^{(t)}\big)$, the matrix $\tilde{\bm{\tau}}^{[t]}$ is diagonal with elements $\{\tilde{\tau}_{s,s}\}_{s \in [1:t]}$. Compared to Definition \ref{def:state_evolution}, we have the following identification:
	\begin{align*}
	n^{-1/2}\tilde{\Omega}_t=\mathrm{u}_\ast^{(t)},\quad \tilde{\tau}_{t,t} = \tau_\ast^{(t-1)},\quad \tilde{\delta}_t = \delta_\ast^{(t-1)}.
	\end{align*}
	It is easy to prove the same apriori estimates
	\begin{align}\label{ineq:gd_se_lowd_1}
	\max_{k \in [m]} \frac{\abs{\tilde{\Upsilon}_{t;k}(z^{([0:t])})} }{1+ \pnorm{z^{([0:t])}}{} } + \pnorm{\tilde{\Omega}_t}{\infty}+ \pnorm{\tilde{\bm{\tau}}^{[t]}}{\op} + \pnorm{\tilde{\Sigma}_{\mathfrak{Z}}^{[t]}}{\op}+ \pnorm{\tilde{\bm{\delta}}^{[t]}}{}  \leq \big(\Lambda L_\ast^{(0)}\big)^{c_t}.
	\end{align}

	\noindent (\textbf{Step 2}). For notational convenience, we shall write $\tilde{\d{}}\# = \#-\tilde{\#}$. For instance, $\tilde{\d{}}\Upsilon = \Upsilon-\tilde{\Upsilon}$. In this step we prove that
	\begin{align}\label{ineq:gd_se_lowd_step2}
	&\pnorm{ \tilde{\d{}} \bm{\tau}^{[t]} }{\op}+\pnorm{\tilde{\d{}}\bm{\delta}^{[t]}}{} +\pnorm{ \tilde{\d{}} \Sigma_{\mathfrak{Z}}^{[t]} }{\op}\nonumber\\
	&\qquad + \max_{\ell \in [n]} \frac{\abs{\tilde{\d{}} \Omega_{t;\ell}(w^{([1:t])})}}{ 1+\pnorm{ w^{([1:t])}}{} }\leq \big(\Lambda L_\ast^{(0)} \pnorm{\mu_\ast}{}^{-1}\big)^{c_t} \cdot \phi^{-1/c_t}.
	\end{align}
	Using the representation in (\ref{ineq:rho_tau_bound_Upsilon}) and notation used therein, and the estimates in Lemma \ref{lem:rho_tau_bound},
	\begin{align*}
	\bigpnorm{\bm{\Upsilon}_{k}^{';[t]} (z^{([0:t])}) - {\bm{L}}^{[t]}_{k}(z^{([0:t])})}{\op}&\leq \phi^{-1/c}\cdot \Lambda^{c_t}.
	\end{align*}
	Using the definition of $\{\tau_{r,s}\}$, we then conclude that
	\begin{align*}
	\bigpnorm{\bm{\tau}^{[t]} - \E^{(0)} {\bm{L}}^{[t]}_{k}(\mathfrak{Z}^{([0:t])})}{\op}&\leq \phi^{-1/c}\cdot \Lambda^{c_t}.
	\end{align*}
	This means
	\begin{align}\label{ineq:gd_se_lowd_diff_tau}
	\pnorm{ \tilde{\d{}} \bm{\tau}^{[t]} }{\op}\leq  \Lambda^{c_t}\cdot \big(\phi^{-1/c}+ \pnorm{ \tilde{\d{}} \Sigma_{\mathfrak{Z}}^{[t]} }{\op}\big).
	\end{align}
	Next, using Gaussian integration-by-parts, we have
	\begin{align}\label{def:delta_t_alternative}
	\delta_t \equiv \frac{1}{ \pnorm{\mu_\ast}{}^2}\bigg(\E^{(0)} \mathfrak{Z}^{(0)} \Upsilon_{t;\pi_m}(\mathfrak{Z}^{([0:t])})-\sum_{s \in [1:t]} \tau_{t,s}\cov(\mathfrak{Z}^{(0)},\mathfrak{Z}^{(s)}) \bigg),
	\end{align}
	and a similar representation holds for $\tilde{\delta}_t$. Here for $\mu_\ast=0$, the right hand side is interpreted as the limit as $\pnorm{\mu_\ast}{}\to 0$ whenever well-defined. Now combining the above display (\ref{def:delta_t_alternative}) and the proven estimate (\ref{ineq:gd_se_lowd_diff_tau}), along with the apriori estimates in Lemma \ref{lem:rho_tau_bound} and (\ref{ineq:gd_se_lowd_1}), we have
	\begin{align}\label{ineq:gd_se_lowd_diff_delta}
	\pnorm{\tilde{\d{}}\bm{\delta}^{[t]}}{} &\leq \big(\Lambda L_\ast^{(0)} \pnorm{\mu_\ast}{}^{-1}\big)^{c_t}\cdot \big( \pnorm{ \tilde{\d{}} \Sigma_{\mathfrak{Z}}^{[t]} }{\op}^{1/2}+\pnorm{ \tilde{\d{}} \bm{\tau}^{[t]} }{\op}\big)\nonumber\\
	&\leq \big(\Lambda L_\ast^{(0)} \pnorm{\mu_\ast}{}^{-1}\big)^{c_t}\cdot \big(\phi^{-1/c}+ \pnorm{ \tilde{\d{}} \Sigma_{\mathfrak{Z}}^{[t]} }{\op}^{1/2}\big).
	\end{align}
	Consequently, combining (\ref{ineq:gd_se_lowd_diff_tau}), (\ref{ineq:gd_se_lowd_diff_delta}) and Lemma \ref{lem:rho_tau_bound},
	\begin{align*}
	&\abs{\tilde{\d{}} \Omega_{t;\ell}(w^{([1:t])})}\\
	&\leq \big(\Lambda L_\ast^{(0)}\big)^{c_t}\cdot \Big[\abs{\tilde{\d{}} \Omega_{t-1;\ell}(w^{([1:t-1])})}+\abs{w^{(t)}}+ \pnorm{ \tilde{\d{}} \bm{\tau}^{[t]} }{\op} \big(1+\pnorm{ w^{([1:t])}}{}\big) + \pnorm{\tilde{\d{}}\bm{\delta}^{[t]}}{} \Big]\\
	&\leq \big(\Lambda L_\ast^{(0)} \pnorm{\mu_\ast}{}^{-1}\big)^{c_t}\cdot \Big[\abs{\tilde{\d{}} \Omega_{t-1;\ell}(w^{([1:t-1])})} + \abs{w^{(t)}} +\big(\phi^{-1/c}+ \pnorm{ \tilde{\d{}} \Sigma_{\mathfrak{Z}}^{[t]} }{\op}^{1/2}\big)\cdot \big(1+\pnorm{ w^{([1:t])}}{}\big) \Big].
	\end{align*}
	Iterating the bound, we obtain 
	\begin{align}\label{ineq:gd_se_lowd_diff_Omega}
	\abs{\tilde{\d{}} \Omega_{t;\ell}(w^{([1:t])})}&\leq \big(\Lambda L_\ast^{(0)} \pnorm{\mu_\ast}{}^{-1}\big)^{c_t}\cdot \Big[ \pnorm{w^{([1:t])}}{} \nonumber\\
	&\qquad +\big(\phi^{-1/c}+ \pnorm{ \tilde{\d{}} \Sigma_{\mathfrak{Z}}^{[t]} }{\op}^{1/2}\big)\cdot \big(1+\pnorm{ w^{([1:t])}}{}\big) \Big].
	\end{align}
	Combining the above display (\ref{ineq:gd_se_lowd_diff_Omega}) and the definitions for $\Sigma_{\mathfrak{Z}}^{[t]},\tilde{\Sigma}_{\mathfrak{Z}}^{[t]} $, by using the apriori estimates in Lemma \ref{lem:rho_tau_bound} and in (\ref{ineq:gd_se_lowd_1}),
	\begin{align*}
	\pnorm{ \tilde{\d{}} \Sigma_{\mathfrak{Z}}^{[t]} }{\op}&\leq \big(\Lambda L_\ast^{(0)} \pnorm{\mu_\ast}{}^{-1}\big)^{c_t}\cdot \big(\phi^{-1}+ \pnorm{ \tilde{\d{}} \Sigma_{\mathfrak{Z}}^{[t-1]} }{\op}\big)^{1/c_0}.
	\end{align*}
	Iterating the bound, we have
	\begin{align*}
	\pnorm{ \tilde{\d{}} \Sigma_{\mathfrak{Z}}^{[t]} }{\op}\leq \big(\Lambda L_\ast^{(0)} \pnorm{\mu_\ast}{}^{-1}\big)^{c_t}\cdot \phi^{-1/c_t}.
	\end{align*}
	The claim in (\ref{ineq:gd_se_lowd_step2}) is proven by combining the above display with (\ref{ineq:gd_se_lowd_diff_tau}), (\ref{ineq:gd_se_lowd_diff_delta}) and (\ref{ineq:gd_se_lowd_diff_Omega}).

	\noindent (\textbf{Step 3}). The claimed estimate follows by (\ref{ineq:gd_se_lowd_step2}) and the apriori estimate for $\pnorm{\Sigma_{\mathfrak{W}}^{[t]}}{\op}$ in Lemma \ref{lem:rho_tau_bound}.
\end{proof}

\appendix

\section{A further example}

The main body of this work has focused on applications of the meta Theorems \ref{thm:grad_des_se} and \ref{thm:grad_des_incoh}, in which algorithmic convergence to $\mu_\ast$ is expected (up to sign). We now give an example showing that the iterates $\mu^{(t)}$ need not converge to $\pm \mu_\ast$, yet still fall within the scope of these meta theorems.

This phenomenon arises naturally when the loss function is misaligned with the statistical model, and it occurs already in convex settings. Below we consider a simple case in which the loss function $\mathsf{L}:\R^2\to\R$ is chosen so that the maps $x\mapsto \mathsf{L}(x,y)$ (for each fixed $y\in\R$) are strongly
convex. For simplicity, we use a constant step size $\eta_t \equiv \eta$.

\begin{theorem}\label{thm:cvx_dyn}
	Suppose Assumption \ref{assump:abstract} holds for some $K,\Lambda\geq 2$ and $\mathfrak{p}= 1$, and furthermore $\inf_{i \in [m]}\inf_{x,y \in \R}\partial_1 \mathfrak{S}(x,y,\xi_i)\geq 1/\Lambda$. Then the fixed point equation
	\begin{align}\label{eqn:cvx_dyn_fpe}
	\begin{cases}
	\tau_\ast= \E^{(0)}   \partial_1\mathfrak{S}\big(\frac{\delta_\ast}{\tau_\ast}\cdot\iprod{\mathsf{Z}_n}{ \mu_\ast},\iprod{\mathsf{Z}_n}{ \mu_\ast},\xi_{\pi_m}\big)\\
	\delta_\ast= -\E^{(0)}   \partial_2\mathfrak{S}\big(\frac{\delta_\ast}{\tau_\ast}\cdot \iprod{\mathsf{Z}_n}{ \mu_\ast},\iprod{\mathsf{Z}_n}{ \mu_\ast},\xi_{\pi_m}\big)
	\end{cases}
	\end{align}
	has a unique solution $(\tau_\ast,\delta_\ast)\in (0,\infty)\times [0,\infty)$. 
	
	If $\eta\leq 1/(c_0\Lambda)$ holds for some large constant $c_0>1$, then 
	\begin{align*}
	\biggpnorm{\mathrm{u}_\ast^{(t)}-\frac{\delta_\ast}{\tau_\ast}\cdot\mu_\ast}{}\leq \bigg(1-\frac{\eta}{2\Lambda}\bigg)^{t}\cdot \biggpnorm{\mu^{(0)}-\frac{\delta_\ast}{\tau_\ast}\cdot\mu_\ast}{},\quad t=1,2,\ldots.
	\end{align*}
	Moreover, if the aspect ratio $\phi\geq c_0 (K^2\vee \log m)$, then there exists another constant $c_1=c_1(c_0)>0$ such that with $\Prob^{(0)}$-probability at least $1-m^{-100}$, uniformly in $t\leq m^{100}$,
	\begin{align*}
	(n\wedge \phi)^{1/2}\pnorm{\mu^{(t)}-\mathrm{u}_\ast^{(t)}}{}+\max_{i \in [m]}\abs{\iprod{X_i}{\mu^{(t)}}}&\leq \big(K\Lambda  L_{\ast}^{(0)}\log m\cdot (\eta^{-1}\wedge t)\big)^{c_1}.
	\end{align*}
\end{theorem}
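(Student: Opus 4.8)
The statement splits into a deterministic part — the fixed-point claim and the geometric convergence of $\mathrm{u}_\ast^{(t)}$ — and a probabilistic part bounding $\pnorm{\mu^{(t)}-\mathrm{u}_\ast^{(t)}}{}$ and the incoherence. For the latter I would apply the master Theorems \ref{thm:grad_des_se}--\ref{thm:grad_des_incoh} directly; note that Theorem \ref{thm:nonconvex_generic_conv_long} is unavailable, since its hypothesis (N1) would require $\mu_\ast$ rather than $(\delta_\ast/\tau_\ast)\mu_\ast$ to be a stationary point.

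\emph{Convex-analytic part.} By Proposition \ref{prop:u_Z_u_ast} and Gaussian integration by parts applied to \eqref{def:theo_grad_des} with $X=\mathsf{Z}$, the iterates $\mathrm{u}_\ast^{(t)}=\mathrm{u}_{\mathsf{Z}}^{(t)}$ are exactly gradient descent $\mathrm{u}_\ast^{(t)}=\mathrm{u}_\ast^{(t-1)}-\eta\,\nabla\mathsf{R}(\mathrm{u}_\ast^{(t-1)})$ on the population loss $\mathsf{R}(u)\equiv\E^{(0)}\mathsf{L}(\iprod{\mathsf{Z}_n}{u},\mathcal{F}(\iprod{\mathsf{Z}_n}{\mu_\ast},\xi_{\pi_m}))$, with $\nabla^2\mathsf{R}(u)=\E^{(0)}\partial_1\mathfrak{S}(\iprod{\mathsf{Z}_n}{u},\iprod{\mathsf{Z}_n}{\mu_\ast},\xi_{\pi_m})\,\mathsf{Z}_n\mathsf{Z}_n^\top$. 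Under (A2) with $\mathfrak{p}=1$ the map $(x,z)\mapsto\mathfrak{S}(x,z,\xi_i)$ is $\Lambda$-Lipschitz, so $\abs{\partial_1\mathfrak{S}}\le\Lambda$, which together with $\partial_1\mathfrak{S}\ge1/\Lambda$ gives $\Lambda^{-1}I_n\preceq\nabla^2\mathsf{R}(u)\preceq\Lambda I_n$; hence $\mathsf{R}$ is $\Lambda^{-1}$-strongly convex, $\Lambda$-smooth, with a unique minimizer $u_\ast$. Since Gaussian integration by parts gives $\nabla\mathsf{R}(u)=\tau(u)u-\delta(u)\mu_\ast$ with $\tau(u)=\E^{(0)}\partial_1\mathfrak{S}(\iprod{\mathsf{Z}_n}{u},\iprod{\mathsf{Z}_n}{\mu_\ast},\xi_{\pi_m})$ and $\delta(u)=-\E^{(0)}\partial_2\mathfrak{S}(\cdots)$, the stationarity $\nabla\mathsf{R}(u_\ast)=0$ together with $\tau(u_\ast)\ge1/\Lambda>0$ forces $u_\ast=(\delta(u_\ast)/\tau(u_\ast))\mu_\ast$; as $\tau(c\mu_\ast),\delta(c\mu_\ast)$ depend on $c\mu_\ast$ only through the scalar $c$, the pair $(\tau_\ast,\delta_\ast)\equiv(\tau(u_\ast),\delta(u_\ast))$ solves \eqref{eqn:cvx_dyn_fpe} with $\tau_\ast\ge1/\Lambda$, and conversely any solution of \eqref{eqn:cvx_dyn_fpe} with $\tau_\ast>0$ yields $c=\delta_\ast/\tau_\ast$ with $\nabla\mathsf{R}(c\mu_\ast)=0$, forcing $c\mu_\ast=u_\ast$ — hence uniqueness (and $\delta_\ast\ge0$, i.e.\ $c\ge0$, for the orientation of $\mu_\ast$ at hand). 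The geometric convergence $\pnorm{\mathrm{u}_\ast^{(t)}-(\delta_\ast/\tau_\ast)\mu_\ast}{}\le(1-\eta/(2\Lambda))^t\pnorm{\mu^{(0)}-(\delta_\ast/\tau_\ast)\mu_\ast}{}$ is then the standard estimate for gradient descent on a $\Lambda^{-1}$-strongly convex, $\Lambda$-smooth function with step $\eta\le1/\Lambda$ (Lemma \ref{lem:cvx_gd_generic}), combined with $(1-x)^{1/2}\le1-x/2$.

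\emph{Probabilistic part.} I would feed into Theorems \ref{thm:grad_des_se}--\ref{thm:grad_des_incoh} a family of uniform-in-$t$ bounds, all valid on one $\Prob^{(0)}$-event of probability $\ge1-m^{-100}$: the intersection of the conditioning events for $\hat{\Sigma}\equiv\tfrac1m X^\top X$ and for each $\tfrac1m X_{[-i]}^\top X_{[-i]}$, the events of the two master theorems union-bounded over $t\le m^{100}$ with deviation parameter $x\asymp\log m$, and a handful of elementary sub-Gaussian estimates for $\pnorm{X}{\op}$ and linear forms. First, since $1/\Lambda\le\partial_1\mathfrak{S}\le\Lambda$ pointwise and $\E^{(0)}X_{\pi_m}X_{\pi_m}^\top=I_n$, one has $\Lambda^{-1}I_n\preceq\mathbb{M}^X_{\mathrm{u},\mathrm{v}}\preceq\Lambda I_n$ \emph{deterministically} for all $\mathrm{u},\mathrm{v}$, so $\pnorm{I_n-\eta_r\mathbb{M}^X_{\cdot,\cdot}}{\op}\le1-\eta/\Lambda$ and hence $\mathscr{M}^{(t),X}_{\mathrm{u}_X^{(\cdot)},\mathrm{u}_\ast^{(\cdot)}}\lesssim\Lambda(\eta^{-1}\wedge t)$ (a geometric sum truncated at $t$ terms); likewise $\Lambda^{-1}\hat{\Sigma}\preceq M_{\mathrm{u},\mathrm{v}}(X)\preceq\Lambda\hat{\Sigma}$ and, on the conditioning event, $\tfrac12 I_n\preceq\hat{\Sigma}\preceq\tfrac32 I_n$ (and similarly for each $\tfrac1m X_{[-i]}^\top X_{[-i]}$, using $\phi\ge c_0(K^2\vee\log m)$), so $\pnorm{I_n-\eta_r M_{\cdot,\cdot}(X)}{\op}\le1-\eta/(c\Lambda)$ for a universal $c$ and therefore $\mathfrak{M}^{(t)}_{\mu^{(\cdot)},\mathrm{u}_X^{(\cdot)}}(X)\vee\max_i\mathfrak{M}^{(t)}_{\mu^{(\cdot)},\mu_{[-i]}^{(\cdot)}}(X)\lesssim\Lambda(\eta^{-1}\wedge t)$ for all $t$. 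Second, decomposing $\mathfrak{S}(Z\mathrm{w},Z\mu_\ast,\xi)=\mathfrak{S}(\bm{0},Z\mu_\ast,\xi)+\mathsf{D}\,Z\mathrm{w}$ by the fundamental theorem of calculus (with $\mathsf{D}$ diagonal, entries in $[1/\Lambda,\Lambda]$), one writes each iteration as $\mathrm{w}^{(s)}=(I_n-\eta\,\mathsf{M}^{(s-1)})\mathrm{w}^{(s-1)}-\eta\,g^{(s-1)}$ with $\mathsf{M}^{(s-1)}$ one of the matrices above (so $\pnorm{I_n-\eta\mathsf{M}^{(s-1)}}{\op}\le1-\eta/(c\Lambda)<1$); the strict contraction rules out the $c^s$-growth of Corollary \ref{cor:general_se}, giving $\pnorm{\mathrm{w}^{(s)}}{}\le\pnorm{\mu^{(0)}}{}+O(\Lambda\eta^{-1})\max_r\pnorm{g^{(r)}}{}$, and the forcing terms obey $\max_r\pnorm{g^{(r)}}{}=O((K\Lambda L_\ast^{(0)})^{c})$ uniformly — for $\mathrm{u}_X^{(\cdot)}$ this is estimate \eqref{ineq:diff_theo_grad_des_step1} with $\mathrm{u}=\bm{0}$ together with Gaussian integration by parts (which is precisely what removes a spurious $n^{1/2}$ that a naive $\E\pnorm{X_i}{}$ bound would introduce), while for $\mu^{(\cdot)},\mu_{[-i]}^{(\cdot)}$ it is $\tfrac1m\pnorm{X^\top\mathfrak{S}(\bm{0},X\mu_\ast,\xi)}{}\le\tfrac1m\pnorm{X}{\op}\cdot O(\Lambda\sqrt{m}\,L_\ast^{(0)})=O(\Lambda L_\ast^{(0)})$. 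Also $n^{1/2}\pnorm{\mathrm{u}_\ast^{(s)}}{\infty}\lesssim\Lambda^2 L_\ast^{(0)}$ follows by iterating \eqref{def:u_ast} with $\abs{\tau_\ast^{(s)}}\vee\abs{\delta_\ast^{(s)}}\le\Lambda$. Inserting all of this into Theorem \ref{thm:grad_des_se} (the $x^{c_{\mathfrak{p}}}$ factor absorbed into $(\log m)^{O(1)}$) gives $\pnorm{\mu^{(t)}-\mathrm{u}_\ast^{(t)}}{}\le(K\Lambda L_\ast^{(0)}\log m)^{c}(\eta^{-1}\wedge t)(n\wedge\phi)^{-1/2}$, and into Theorem \ref{thm:grad_des_incoh} (using $\max_{s,i}\pnorm{\mu_{[-i]}^{(s)}}{}\lesssim(K\Lambda L_\ast^{(0)})^{c}$ and $\phi^{-1}\le1$) gives $\max_i\abs{\iprod{X_i}{\mu^{(t)}}}\le(K\Lambda L_\ast^{(0)}\log m)^{c}(\eta^{-1}\wedge t)$; adding the two and relabeling constants yields the claim with a single $c_1$.

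\emph{Main obstacle.} There is no single difficult estimate here; the work is to keep every bound \emph{uniform in $t$} over $t\le m^{100}$ while funneling everything through one high-probability event. The two places that require genuine care are: (i) showing the forcing terms — hence $\pnorm{\mathrm{u}_X^{(s)}}{},\pnorm{\mu^{(s)}}{},\pnorm{\mu_{[-i]}^{(s)}}{}$ — carry no hidden $n^{1/2}$, which for $\mathrm{u}_X$ rests on the universality estimate \eqref{ineq:diff_theo_grad_des_step1} rather than a crude first-moment bound; and (ii) harvesting from strong convexity the strict contraction $\pnorm{I_n-\eta\mathsf{M}}{\op}<1$, which is what converts the $c^t$-type bounds of Corollary \ref{cor:general_se} into $t$-uniform ones and ultimately yields the $(\eta^{-1}\wedge t)$ (rather than $c^t$) dependence in the final estimate. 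Everything else is bookkeeping: verifying the $\hat{\Sigma}$ and leave-one-out conditioning events hold with probability $\ge1-m^{-100}$ under $\phi\ge c_0(K^2\vee\log m)$, and that they survive the union bound over $t$ together with the master-theorem events.
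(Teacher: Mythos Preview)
Your proposal is correct and follows essentially the same route as the paper. The paper packages the convex-analytic part as Lemma~\ref{lem:cvx_dyn_fpe} (population loss $F$, strong convexity from $\partial_1\mathfrak{S}\in[1/\Lambda,\Lambda]$, Gaussian integration by parts for the first-order condition, then Lemma~\ref{lem:cvx_gd_generic}) and the $\mathscr{M}/\mathfrak{M}$ bounds as Lemma~\ref{lem:cvx_dyn_est} (the two-sided sandwich $\Lambda^{-1}I_n\preceq\mathbb{M}^X\preceq\Lambda I_n$ and $\Lambda^{-1}\hat{\Sigma}\preceq M(X)\preceq c\Lambda\hat{\Sigma}$, yielding geometric sums $\lesssim\Lambda/\eta\wedge t$), then plugs into the master theorems with the trajectory-norm bounds obtained by the same contraction argument you describe.
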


We note that at the level of general theory, $\delta_\ast/\tau_\ast$ need not be $1$. For instance, consider the squared loss $\mathsf{L}(x,y)\equiv (x-y)^2/2$ regardless of the model structure $\mathcal{F}$. Then $
\mathfrak{S}(x_0,y_0,\xi_0)\equiv x_0-\mathcal{F}(y_0,\xi_0)$, and therefore 
\begin{align*}
\partial_1 \mathfrak{S}(x_0,y_0,\xi_0)=1,\quad \partial_2 \mathfrak{S}(x_0,y_0,\xi_0)=-\partial_1\mathcal{F}(y_0,\xi_0).
\end{align*}
Consequently, the fixed point equation (\ref{eqn:cvx_dyn_fpe}) reduces to 
\begin{align*}
\tau_\ast = 1,\quad \delta_\ast= \E^{(0)}   \partial_1\mathcal{F}\big(\iprod{\mathsf{Z}_n}{ \mu_\ast},\xi_{\pi_m}\big).
\end{align*}

\subsection{Proof of Theorem \ref{thm:cvx_dyn}}

\begin{lemma}\label{lem:cvx_dyn_fpe}
	Suppose the conditions in Theorem \ref{thm:cvx_dyn} hold. 
	\begin{enumerate}
		\item The fixed point equation (\ref{eqn:cvx_dyn_fpe}) has a unique solution $(\tau_\ast,\delta_\ast)\in (0,\infty)\times [0,\infty)$.
		\item If $\eta\leq 1/\Lambda$, then
		\begin{align*}
		\biggpnorm{\mathrm{u}^{(t)}-\frac{\delta_\ast}{\tau_\ast}\cdot\mu_\ast}{}\leq \Big(1-\frac{\eta}{\Lambda}\Big)^{t/2}\cdot \biggpnorm{\mu^{(0)}-\frac{\delta_\ast}{\tau_\ast}\cdot\mu_\ast}{}. 
		\end{align*}
	\end{enumerate}
\end{lemma}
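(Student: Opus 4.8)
The plan is to recognize the Gaussian theoretical gradient descent $\{\mathrm{u}_\ast^{(t)}\}$ as exact gradient descent on the strongly convex population loss
\begin{align*}
\mathsf{R}(u)\equiv \E^{(0)}\mathsf{L}\big(\iprod{\mathsf{Z}_n}{u},\mathcal{F}(\iprod{\mathsf{Z}_n}{\mu_\ast},\xi_{\pi_m})\big),\quad u \in \R^n,
\end{align*}
and to read off both the fixed point and the contraction from convex optimization theory. First I would verify, via Proposition \ref{prop:u_Z_u_ast} and the identity $\mathfrak{S}(x,z,\xi)=\partial_1\mathsf{L}(x,\mathcal{F}(z,\xi))$, that $\nabla\mathsf{R}(u)=\E^{(0)}\mathsf{Z}_n\,\mathfrak{S}(\iprod{\mathsf{Z}_n}{u},\iprod{\mathsf{Z}_n}{\mu_\ast},\xi_{\pi_m})$, so that $\mathrm{u}_\ast^{(t)}=\mathrm{u}_\ast^{(t-1)}-\eta\,\nabla\mathsf{R}(\mathrm{u}_\ast^{(t-1)})$, exactly as in \eqref{ineq:single_index_mnt_theo_u_ast}. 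Differentiating once more and using $\partial_1\mathfrak{S}\geq 1/\Lambda$ together with $\E^{(0)}\mathsf{Z}_n\mathsf{Z}_n^\top=I_n$ gives $\nabla^2\mathsf{R}(u)=\E^{(0)}\partial_1\mathfrak{S}(\iprod{\mathsf{Z}_n}{u},\iprod{\mathsf{Z}_n}{\mu_\ast},\xi_{\pi_m})\,\mathsf{Z}_n\mathsf{Z}_n^\top\succeq \Lambda^{-1}I_n$, so $\mathsf{R}$ is $\Lambda^{-1}$-strongly convex and admits a unique minimizer $u^\sharp$.

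For part (1), two observations identify $u^\sharp$ with $(\delta_\ast/\tau_\ast)\mu_\ast$. First, the minimizer lies along $\mu_\ast$: decomposing $u=u_\parallel+u_\perp$ with $u_\parallel$ the component of $u$ along $\mu_\ast$, the random variable $\iprod{\mathsf{Z}_n}{u_\perp}$ is a centered Gaussian perturbation of the first argument of $\mathsf{L}$ that is independent of $(\iprod{\mathsf{Z}_n}{u_\parallel},\iprod{\mathsf{Z}_n}{\mu_\ast})$, so convexity of $x\mapsto\mathsf{L}(x,y)$ and Jensen's inequality give $\mathsf{R}(u)\geq\mathsf{R}(u_\parallel)$; hence $u^\sharp=c\mu_\ast$ for a scalar $c$. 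Second, Gaussian integration by parts yields $\nabla\mathsf{R}(u)=\tau(u)\,u-\delta(u)\,\mu_\ast$ with $\tau(u)\equiv\E^{(0)}\partial_1\mathfrak{S}(\iprod{\mathsf{Z}_n}{u},\iprod{\mathsf{Z}_n}{\mu_\ast},\xi_{\pi_m})$ and $\delta(u)\equiv-\E^{(0)}\partial_2\mathfrak{S}(\iprod{\mathsf{Z}_n}{u},\iprod{\mathsf{Z}_n}{\mu_\ast},\xi_{\pi_m})$, so the stationarity condition $\nabla\mathsf{R}(c\mu_\ast)=0$ reads $c\,\tau(c\mu_\ast)=\delta(c\mu_\ast)$. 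Putting $\tau_\ast\equiv\tau(c\mu_\ast)\geq\Lambda^{-1}>0$ and $\delta_\ast\equiv\delta(c\mu_\ast)$ then shows $(\tau_\ast,\delta_\ast)$ solves \eqref{eqn:cvx_dyn_fpe} with $c=\delta_\ast/\tau_\ast$; conversely, any solution $(\tau_\ast,\delta_\ast)$ with $\tau_\ast>0$ makes $(\delta_\ast/\tau_\ast)\mu_\ast$ a stationary point of the strongly convex $\mathsf{R}$, hence equal to $u^\sharp$, which pins down $c$ and therefore $(\tau_\ast,\delta_\ast)$ uniquely. The nonnegativity $\delta_\ast\geq0$ I expect to hold in the examples of interest rather than under the bare assumptions --- for the squared loss it reduces to $\E^{(0)}\partial_1\mathcal{F}(\iprod{\mathsf{Z}_n}{\mu_\ast},\xi_{\pi_m})\geq0$ --- and I would either verify it example by example or record the codomain as $(0,\infty)\times\R$.

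For part (2), I would first establish a localization bound: by induction on the recursion in the $\mu_\ast$ and $\mu_\ast^\perp$ coordinates one shows $\{\mathrm{u}_\ast^{(t)}\}$ stays in a fixed ball $B_n(R)$ with $R\asymp\pnorm{\mu^{(0)}}{}\vee\pnorm{u^\sharp}{}$, and on this ball the pseudo-Lipschitz ($\mathfrak{p}=1$) structure of $\partial_1\mathfrak{S}$ yields the upper curvature bound $\pnorm{\nabla^2\mathsf{R}(u)}{\op}\lesssim\Lambda(1+\pnorm{\mu^{(0)}}{}+\pnorm{\mu_\ast}{})$. Writing $\mathrm{u}_\ast^{(t)}-u^\sharp=(I_n-\eta H_t)(\mathrm{u}_\ast^{(t-1)}-u^\sharp)$ with $H_t\equiv\int_0^1\nabla^2\mathsf{R}\big(u^\sharp+s(\mathrm{u}_\ast^{(t-1)}-u^\sharp)\big)\,\d{s}$ satisfying $\Lambda^{-1}I_n\preceq H_t\preceq L I_n$ along this bounded segment, the elementary estimate $H_t^2\preceq LH_t$ gives $\pnorm{(I_n-\eta H_t)v}{}^2\leq(1-\eta/\Lambda)\pnorm{v}{}^2$ whenever $\eta\leq1/L$ (this is the content of the standard strongly-convex/smooth gradient descent lemma, cf. Lemma \ref{lem:cvx_gd_generic} and \eqref{ineq:single_index_mnt_theo_step3_1}), and iterating in $t$ delivers the claimed $(1-\eta/\Lambda)^{t/2}$ rate with $u^\sharp=(\delta_\ast/\tau_\ast)\mu_\ast$ and $\mathrm{u}_\ast^{(0)}=\mu^{(0)}$. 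The main obstacle is exactly this localization-versus-smoothness interplay: the clean rate exponent $\eta/\Lambda$ uses only the lower curvature bound $\Lambda^{-1}$, but to license the step size $\eta\leq1/\Lambda$ one must rule out a priori that the trajectory ever visits a region where $\nabla^2\mathsf{R}$ is much larger than $\Lambda$, which is where the bulk of the work lies (and where the $O(1)$ control of $\pnorm{\mu^{(0)}}{}$ and $\pnorm{\mu_\ast}{}$ implicit in the large-aspect-ratio regime enters).
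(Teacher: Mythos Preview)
Your approach is essentially the same as the paper's: both recognize $\{\mathrm{u}_\ast^{(t)}\}$ as gradient descent on the Gaussian population loss and use its strong convexity to deduce existence/uniqueness of the stationary point and linear convergence via Lemma~\ref{lem:cvx_gd_generic}. Two points of friction are worth noting.

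For part~(1), the Jensen step showing $u^\sharp\in\mathrm{span}(\mu_\ast)$ is unnecessary. Gaussian integration by parts applied directly to the first-order condition $\nabla\mathsf{R}(\mathrm{u}_0)=0$ already gives $\tau(\mathrm{u}_0)\,\mathrm{u}_0=\delta(\mathrm{u}_0)\,\mu_\ast$, and since $\tau(\mathrm{u}_0)\geq\Lambda^{-1}>0$ this forces $\mathrm{u}_0\propto\mu_\ast$ with no further argument. (Your observation about $\delta_\ast\geq 0$ not following from the bare assumptions is correct; the paper's proof does not address it either.)

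For part~(2), the localization concern is misplaced. Under Assumption~\ref{assump:abstract} with $\mathfrak{p}=1$, the case $\abs{\alpha}=0$ says $(x,z)\mapsto\mathfrak{S}(x,z,\xi_i)$ is $\Lambda$-Lipschitz, which already gives the global pointwise bound $\partial_1\mathfrak{S}(x,z,\xi_i)\leq\Lambda$. Hence $\nabla^2\mathsf{R}(u)=\E^{(0)}\partial_1\mathfrak{S}(\cdot)\,\mathsf{Z}_n\mathsf{Z}_n^\top\preceq\Lambda I_n$ holds for every $u\in\R^n$, so Lemma~\ref{lem:cvx_gd_generic} applies directly with $L=\Lambda$, $\mu=1/\Lambda$, and the step-size condition $\eta\leq 1/L$ is exactly $\eta\leq 1/\Lambda$. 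No a~priori trajectory control and no dependence on $\pnorm{\mu^{(0)}}{}$ or $\pnorm{\mu_\ast}{}$ is needed; this is precisely why the contraction rate in the statement involves only $\Lambda$.
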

\begin{proof}
	\noindent (1). Consider the function $F:\R^n \to \R$ defined by
	\begin{align*}
	F(\mu)\equiv \E^{(0)} \mathsf{L}\big(\iprod{\mathsf{Z}_n}{\mu},\mathcal{F}(\iprod{\mathsf{Z}_n}{\mu_\ast},\xi_{\pi_m})\big).
	\end{align*}
	It is easy to compute its gradient and Hessian as
	\begin{align*}
	\nabla F(\mu)&\equiv \E^{(0)} \partial_1\mathsf{L}\big(\iprod{\mathsf{Z}_n}{\mu},\mathcal{F}(\iprod{\mathsf{Z}_n}{\mu_\ast},\xi_{\pi_m})\big) \mathsf{Z}_n \in \R^n,\\
	\nabla^2 F(\mu)&\equiv \E^{(0)} \partial_1^2\mathsf{L}\big(\iprod{\mathsf{Z}_n}{\mu},\mathcal{F}(\iprod{\mathsf{Z}_n}{\mu_\ast},\xi_{\pi_m})\big) \mathsf{Z}_n\mathsf{Z}_n^\top \in \R^{n\times n}.
	\end{align*}
	Under the condition of Theorem \ref{thm:cvx_dyn}, we have $\nabla^2 F(\mu)\geq I_n/\Lambda$. This means that $F$ is strongly convex, and  its unique minimizer $\mathrm{u}_0\in \R^n$ is given by the first order condition
	\begin{align}\label{ineq:cvx_dyn_fpe_1}
	\bm{0}_n &= \nabla F(\mathrm{u}_0) = \E^{(0)} \mathfrak{S}\big(\iprod{\mathsf{Z}_n}{\mathrm{u}_0},\iprod{\mathsf{Z}_n}{\mu_\ast},\xi_{\pi_m}\big) \mathsf{Z}_n\\
	& = \E^{(0)} \partial_1 \mathfrak{S}\big(\iprod{\mathsf{Z}_n}{\mathrm{u}_0},\iprod{\mathsf{Z}_n}{\mu_\ast},\xi_{\pi_m}\big) \mathrm{u}_0 + \E^{(0)} \partial_2 \mathfrak{S}\big(\iprod{\mathsf{Z}_n}{\mathrm{u}_0},\iprod{\mathsf{Z}_n}{\mu_\ast},\xi_{\pi_m}\big) \mu_\ast,\nonumber
	\end{align}
	where the second identity follows from an application of Gaussian integration-by-parts. Comparing the above identity to (\ref{eqn:cvx_dyn_fpe}), we may set
	\begin{align*}
	\tau_\ast& = \E^{(0)} \partial_1 \mathfrak{S}\big(\iprod{\mathsf{Z}_n}{\mathrm{u}_0},\iprod{\mathsf{Z}_n}{\mu_\ast},\xi_{\pi_m}\big),\\
	\delta_\ast&=-\E^{(0)} \partial_2 \mathfrak{S}\big(\iprod{\mathsf{Z}_n}{\mathrm{u}_0},\iprod{\mathsf{Z}_n}{\mu_\ast},\xi_{\pi_m}\big),
	\end{align*}
	which proves the existence of a solution to (\ref{eqn:cvx_dyn_fpe}). For uniqueness, for any solution $(\tau_\ast,\delta_\ast)$ to (\ref{eqn:cvx_dyn_fpe}), let $\mathrm{u}_\ast\equiv (\delta_\ast/\tau_\ast)\mu_\ast$. Then a straightforward calculation shows that $\mathrm{u}_\ast$ satisfies the first order condition (\ref{ineq:cvx_dyn_fpe_1}) which therefore must satisfy $\mathrm{u}_\ast=\mathrm{u}_0$.
	
	\noindent (2). $\mathrm{u}_\ast^{(t)}$ may be interpreted as the gradient descent on the loss function $F$: with initialization $\mathrm{u}_\ast^{(0)}=\mu^{(0)}$, for $t=1,2,\ldots$,
	\begin{align}\label{ineq:cvx_dyn_fpe_2}
	\mathrm{u}_\ast^{(t)}&= \mathrm{u}_\ast^{(t-1)}-\eta\cdot  \nabla F(\mathrm{u}_\ast^{(t-1)}). 
	\end{align}
	Now we may invoke standard convex optimization theory (cf. Lemma \ref{lem:cvx_gd_generic}) to prove linear convergence of $\pnorm{\mathrm{u}_\ast^{(t)}- \mathrm{u}_\ast}{}$.
\end{proof}

\begin{lemma}\label{lem:cvx_dyn_est}
	Suppose the conditions in Theorem \ref{thm:cvx_dyn} hold. Fix two sequences of generic vectors $\{\mathrm{u}^{(\cdot)}\},\{\mathrm{v}^{(\cdot)}\} \subset \R^n$. 
	\begin{enumerate}
		\item Suppose $\eta\leq 1/(c_0\Lambda) $ holds for some large universal constant $c_0>1$. Then 
		\begin{align*}
		\mathscr{M}_{ \mathrm{u}^{(\cdot)},\mathrm{v}^{(\cdot)} }^{(t),X}\leq 1+ \min\{\Lambda/\eta,t+1\}.
		\end{align*}
		\item Suppose that $\phi\geq c_0 K^2$ for some large universal constant $c_0>1$. Then if $\eta\leq 1/(c_1\Lambda)$ for some large universal constant $c_1>1$, on an event with $\Prob^{(0)}$-probability at least $1-2e^{-n}$,
		\begin{align*}
		\mathfrak{M}_{ \mathrm{u}^{(\cdot)},\mathrm{v}^{(\cdot)} }^{(t)}(X) \leq 1+\min\{2\Lambda/\eta,t+1\}.
		\end{align*}
	\end{enumerate}
\end{lemma}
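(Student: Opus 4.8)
The plan is to prove the two operator-norm bounds on the multiplicative quantities $\mathscr{M}$ and $\mathfrak{M}$ by exploiting the strong convexity hypothesis $\inf_{i}\inf_{x,y}\partial_1\mathfrak{S}(x,y,\xi_i)\geq 1/\Lambda$, which forces the relevant matrices $\mathbb{M}^X_{\mathrm u,\mathrm v}$ and $M_{\mathrm u,\mathrm v}(X)$ to be uniformly positive definite, so that for small step sizes the contraction factors $\pnorm{I_n-\eta_r\cdot(\cdot)}{\op}$ are all strictly below $1$. This gives a geometric (rather than merely exponential-growth) control of the partial products defining $\mathscr{M}$ and $\mathfrak{M}$.

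For part (1), I would start from the definition of $M_{\mathrm u,\mathrm v}(X)$ and $\mathbb{M}^X_{\mathrm u,\mathrm v}=\E^{(0)}M_{\mathrm u,\mathrm v}(X)$ in Definition~\ref{def:G_M}. Since $\partial_1\mathfrak{S}\geq 1/\Lambda$ pointwise and with the sub-gaussian scaling $\var(X_{ij})=1$, we have $\mathbb{M}^X_{\mathrm u,\mathrm v}=\E_{U,\pi_m}\E^{(0)}\,\partial_1\mathfrak{S}(\cdots)X_{\pi_m}X_{\pi_m}^\top\succeq (1/\Lambda)\cdot\E_{\pi_m}\E^{(0)}X_{\pi_m}X_{\pi_m}^\top=(1/\Lambda)I_n$; meanwhile the pseudo-Lipschitz bound on $\partial_1\mathfrak{S}$ of order $\mathfrak{p}=1$ (i.e.\ $\partial_1\mathfrak{S}$ bounded by $\Lambda$) gives $\pnorm{\mathbb{M}^X_{\mathrm u,\mathrm v}}{\op}\leq\Lambda$ by the same averaging. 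Hence for $\eta_r\leq 1/(c_0\Lambda)$ with $c_0$ large enough, $0\prec I_n-\eta_r\mathbb{M}^X_{\mathrm u,\mathrm v}\preceq(1-\eta_r/\Lambda)I_n$, so each factor in the product has operator norm at most $1-\eta_r/\Lambda\leq 1-\eta_\ast/\Lambda$ if $\eta_r\geq\eta_\ast$ — but in fact I only need the upper bound $1-\eta_r/\Lambda$ pointwise. Then $\mathscr{M}^{(t),X}_{\mathrm u^{(\cdot)},\mathrm v^{(\cdot)}}=1+\max_\tau\sum_{s\leq\tau}\prod_{r\in[s:\tau]}\pnorm{I_n-\eta_r\mathbb{M}^X_{\mathrm u^{(r)},\mathrm v^{(r)}}}{\op}\leq 1+\max_\tau\sum_{s=0}^{\tau}\prod_{r=s}^{\tau}(1-\eta_r/\Lambda)$. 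The geometric-series estimate $\sum_{s=0}^{\tau}\prod_{r=s}^{\tau}(1-\eta_r/\Lambda)\leq\sum_{j\geq 0}(1-\eta_\ast/\Lambda)^j\leq\Lambda/\eta_\ast$ on one hand (here using $\eta_r\geq\eta_\ast$, but in this theorem the step size is constant so $\eta_r\equiv\eta$ and this reads $\Lambda/\eta$), and the trivial bound $\sum_{s=0}^{\tau}\prod(\cdots)\leq\tau+1$ on the other hand, together give the claimed $\mathscr{M}^{(t),X}\leq 1+\min\{\Lambda/\eta,t+1\}$.

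For part (2), the only new ingredient is that $M_{\mathrm u,\mathrm v}(X)$ is a \emph{random} matrix, so I need a high-probability two-sided operator-norm bound. I would write $M_{\mathrm u,\mathrm v}(X)=\E_{U,\pi_m}\partial_1\mathfrak{S}(\cdots)X_{\pi_m}X_{\pi_m}^\top$ and bound it above and below by constant multiples of the sample covariance $\hat\Sigma=\frac1m\sum_{i\in[m]}X_iX_i^\top$: namely $(1/\Lambda)\hat\Sigma\preceq M_{\mathrm u,\mathrm v}(X)\preceq\Lambda\hat\Sigma$, using $1/\Lambda\leq\partial_1\mathfrak{S}\leq\Lambda$ pointwise. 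By the standard sub-gaussian sample-covariance estimate (e.g.\ \cite[Theorem~4.7.1 and Exercise~4.7.3]{vershynin2018high}), under $\phi=m/n\geq c_0K^2$ one has $\pnorm{\hat\Sigma-I_n}{\op}\leq 1/2$ on an event of probability at least $1-2e^{-n}$, hence $\frac12 I_n\preceq\hat\Sigma\preceq\frac32 I_n$ there. Therefore on that event $\frac1{2\Lambda}I_n\preceq M_{\mathrm u,\mathrm v}(X)\preceq\frac{3\Lambda}{2}I_n$, and for $\eta\leq 1/(c_1\Lambda)$ with $c_1$ large we get $0\prec I_n-\eta M_{\mathrm u^{(r)},\mathrm v^{(r)}}(X)\preceq(1-\eta/(2\Lambda))I_n$ for every $r$ simultaneously. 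Feeding this into the definition of $\mathfrak{M}^{(t)}_{\mathrm u^{(\cdot)},\mathrm v^{(\cdot)}}(X)$ and running the same geometric-series versus counting dichotomy as in part (1) — now with contraction factor $1-\eta/(2\Lambda)$, giving $\sum_{s=0}^{\tau}\prod_{r=s}^\tau(1-\eta/(2\Lambda))\leq 2\Lambda/\eta$ — yields $\mathfrak{M}^{(t)}(X)\leq 1+\min\{2\Lambda/\eta,t+1\}$ on the same event. I do not expect any serious obstacle here: the whole point of the strong-convexity assumption is to collapse the delicate eigenvalue bookkeeping of Section~\ref{section:long_time_dynamics} into a one-line positive-definiteness statement. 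The only mild care needed is to make sure the high-probability event in part (2) is chosen once, uniformly over all iterations $r$ (which is automatic since it is an event about $\hat\Sigma$ alone, not about the iterates), and to pick the numerical constants $c_0,c_1$ large enough that $1/(2\Lambda)\leq 1/c_0,\,1/c_1$ absorb the $\tfrac32\Lambda$ upper bound when forming $I_n-\eta M$.
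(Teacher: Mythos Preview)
Your proposal is correct and follows essentially the same approach as the paper: sandwich $\mathbb{M}^X_{\mathrm u,\mathrm v}$ (resp.\ $M_{\mathrm u,\mathrm v}(X)$) between constant multiples of $I_n$ (resp.\ of the sample covariance $\hat\Sigma$) using the pointwise bounds $1/\Lambda\leq\partial_1\mathfrak{S}\leq c\Lambda$, invoke Vershynin's sample-covariance concentration for part~(2), deduce the contraction $\pnorm{I_n-\eta\,\cdot}{\op}\leq 1-\eta/\Lambda$ (resp.\ $1-\eta/(2\Lambda)$), and finish with the geometric-series/trivial-count dichotomy. The only cosmetic difference is that the paper writes the upper eigenvalue bound as $c_1\Lambda$ rather than $\Lambda$, which is absorbed into the choice of $c_0$.
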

\begin{proof}
	\noindent (1). By definition of $\mathbb{M}_{\mathrm{u},\mathrm{v} }^X$ in Definition \ref{def:G_M}-(2), uniformly in $\mathrm{u},\mathrm{v} \in \R^n$,
	\begin{align}\label{ineq:cvx_dyn_est_1}
	I_n/\Lambda= \E^{(0)}X_{\pi_m} X_{\pi_m}^\top/\Lambda \leq  \mathbb{M}_{\mathrm{u},\mathrm{v} }^X\leq c_1\Lambda\cdot  \E^{(0)}X_{\pi_m} X_{\pi_m}^\top = c_1\Lambda\cdot I_n.
	\end{align}
	Consequently, by choosing $\eta\leq 1/(2c_1\Lambda)$, we have $\bigpnorm{I_n - \eta\cdot \mathbb{M}_{\mathrm{u},\mathrm{v}}^{X} }{\op}\leq 1-\eta/\Lambda$, and therefore
	\begin{align*}
	\mathscr{M}_{ \mathrm{u}^{(\cdot)},\mathrm{v}^{(\cdot)} }^{(t),X}&\leq 1+\max_{\tau \in [0:t]} \sum_{s \in [0:\tau]} (1-\eta/\Lambda)^{\tau-s+1}\leq 1+ \min\{\Lambda/\eta,t+1\}.
	\end{align*}
	\noindent (2). Similar to the argument above, with $\hat{\Sigma}\equiv m^{-1}\sum_{i \in [m]}X_iX_i^\top$, we have $
	\hat{\Sigma}/\Lambda \leq M_{\mathrm{u},\mathrm{v} }(X)\leq c_1\Lambda\cdot  \hat{\Sigma}$. Using \cite[Theorem 4.7.1]{vershynin2018high}, if $\phi\geq (c_2 K)^2$ for some large enough universal constant $c_2>1$, then on an event $E_0$ with $\Prob^{(0)}$-probability at least $1-2e^{-n}$, $\pnorm{\hat{\Sigma}-I_n}{\op}\leq 1/2$. Consequently, by choosing $\eta\leq 1/(4c_1\Lambda)$, on the event $E_0$ we have 
	\begin{align*}
	\mathfrak{M}_{ \mathrm{u}^{(\cdot)},\mathrm{v}^{(\cdot)} }^{(t)}(X)\leq 1+\max_{\tau \in [0:t]}\sum_{s \in [1:\tau]} \big(1-\eta/(2\Lambda) \big)^{\tau-s+1}\leq  1+ \min\{2\Lambda/\eta,t+1\},
	\end{align*}
	proving the claim. 	
\end{proof}

\begin{proof}[Proof of Theorem \ref{thm:cvx_dyn}]
	The existence and uniqueness of the solution to the fixed point equation (\ref{eqn:cvx_dyn_fpe}) follows directly from Lemma \ref{lem:cvx_dyn_fpe}-(1). Applying Theorem \ref{thm:grad_des_se} with the estimates in Lemma \ref{lem:cvx_dyn_est}, for $\eta\leq 1/(c\Lambda)$, with $\Prob^{(0)}$-probability at least $1- m^{-100}$, uniformly in $t\leq m^{100}$,
	\begin{align*}
	(n\wedge \phi)^{1/2}\pnorm{\mu^{(t)}-\mathrm{u}_\ast^{(t)}}{}&\leq \big(K\Lambda  L_{\ast}^{(0)}\log m\cdot (\eta^{-1}\wedge t)\big)^{c_1}\\
	&\qquad \times \Big(1+\max_{s \in [1:t-1]} n^{1/2}\pnorm{\mathrm{u}_\ast^{(s)}}{\infty}\vee \pnorm{\mathrm{u}_X^{(s)}}{}\Big)^{c_1}.
	\end{align*}
	It is easy to prove by (\ref{def:u_ast}), (\ref{def:theo_grad_des}) and strong convexity that $n^{1/2}\pnorm{\mathrm{u}_\ast^{(t)}}{\infty}\vee \pnorm{\mathrm{u}_X^{(t)}}{}\leq (K\Lambda L_\ast^{(0)}\eta^{-1}\wedge t)^{c_1}$. The first claim now follows by Lemma \ref{lem:cvx_dyn_fpe}-(2). The second claim follows directly from Theorem \ref{thm:grad_des_incoh} and a similar uniform control of $\pnorm{\mu_{[-i]}^{(t)}}{}$ via (\ref{def:grad_des_loo}) and strong convexity. 
\end{proof}

\section{Technical tools}

We need the cumulant expansion in the following form.

\begin{lemma}\label{lem:cum_exp}
	Let $\kappa_0(Z)\equiv 1$ and let for $m=1,2,\ldots,$
	\begin{align*}
	\kappa_m(Z)\equiv (\sqrt{-1})^m\cdot \frac{\partial^m }{\partial t^m} \log \E \exp\big(\sqrt{-1}t Z\big)\big|_{t=0}.
	\end{align*}
	Then for any $\ell \in \N$ and $f\in C^{\ell+1}(\R)$, 
	\begin{align*}
	\biggabs{\E\big[Z f(Z)\big]-\sum_{m=0}^\ell \frac{\kappa_{m+1}(Z)}{m!}\E\big[f^{(m)}(Z)\big]}\leq \sum_{m=0}^{\ell+1} \abs{\kappa_{m}(Z)} \int_0^1 \bigabs{\E \big[f^{(\ell+1)}(Zs)Z^{\ell+2-m}\big]}\,\d{s}.
	\end{align*}
	In particular, if $\E Z=0$ and $\sigma^2\equiv \E Z^2$,
	\begin{align*}
	\bigabs{\E\big[Z f(Z)\big]-\sigma^2 \E\big[f'(Z)\big]}\leq \sup_{s \in [0,1]}\Big\{\bigabs{\E \big[f''(Zs) Z^3\big]}+\sigma^2 \bigabs{\E \big[f''(Zs) Z\big]}\Big\}.
	\end{align*}
\end{lemma}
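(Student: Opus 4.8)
\textbf{Proof proposal for Lemma \ref{lem:cum_exp}.}

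The plan is to prove the general cumulant-expansion identity by a Taylor expansion of the characteristic function in a suitable integral representation, then specialize to $\ell = 1$ for the second statement. First I would recall the standard fact that for a random variable $Z$ with characteristic function $\varphi_Z(t) = \E\exp(\sqrt{-1}\,tZ)$, one has $\E[Z f(Z)]$ expressible via Fourier inversion: writing $f$ in terms of its Fourier transform $\hat f$, one has $\E[Zf(Z)] = \frac{1}{2\pi}\int \hat f(t)\, \E[Z\exp(\sqrt{-1}\,tZ)]\,\d{t} = \frac{1}{2\pi}\int \hat f(t)\cdot (-\sqrt{-1})\varphi_Z'(t)\,\d{t}$. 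The cumulants enter because $\varphi_Z'(t) = \varphi_Z(t)\cdot K_Z'(t)$ where $K_Z(t) = \log\varphi_Z(t)$ is the cumulant generating function, whose Taylor coefficients at $t=0$ are precisely the $\kappa_m(Z)$ (up to the $(\sqrt{-1})^m$ normalization). Rather than chase the Fourier route, which raises integrability issues for general $f \in C^{\ell+1}$, I would instead use the cleaner approach via the integral remainder form of Taylor's theorem applied directly: expand $\E[Z\exp(\sqrt{-1}\,tZ)]$ — or more precisely work with the generating identity $\E[Zf(Z)] = \sum_{m\ge 0} \frac{\kappa_{m+1}(Z)}{m!}\E[f^{(m)}(Z)]$ understood as an asymptotic expansion, and control the remainder by Taylor's theorem with the Lagrange/integral form.

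Concretely, the cleanest rigorous route is the one used in the random-matrix literature (e.g.\ Khorunzhy–Khoruzhenko–Pastur, or Lytova–Pastur): define $g(s) = \E[Z f(sZ)]$ for $s \in [0,1]$, note $g(0) = \E Z\cdot f(0) = \kappa_1(Z) f(0)$, and compute derivatives. One shows inductively that the "main terms" $\sum_{m=0}^{\ell}\frac{\kappa_{m+1}(Z)}{m!}\E[f^{(m)}(Z)]$ arise from repeatedly integrating by parts in the expectation (using that each application of $Z$ against a smooth function lowers the count and produces a cumulant factor), and the remainder after $\ell+1$ steps has the stated integral form. The key algebraic input is the classical moment–cumulant relation together with the combinatorial identity expressing $\E[Z\cdot Z^{k}\text{-terms}]$ through partitions; but for the quantitative bound one does not need the full partition structure — it suffices to track that each of the $\ell+2$ terms in the $(\ell+1)$-th remainder pairs a cumulant $\kappa_m(Z)$ with $m \in \{0,1,\dots,\ell+1\}$ against $\E[f^{(\ell+1)}(Zs)Z^{\ell+2-m}]$, integrated over $s \in [0,1]$ with the Taylor weight, which after absorbing the $(1-s)^{\ell}/\ell!$ factor into the crude bound $\int_0^1(\cdot)\,\d{s}$ gives exactly the claimed inequality.

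For the specialization ($\E Z = 0$, $\sigma^2 = \E Z^2$), set $\ell = 1$: then $\kappa_1(Z) = \E Z = 0$, $\kappa_2(Z) = \sigma^2$, so the main term is $\kappa_1(Z) f(0) + \kappa_2(Z)\E[f'(Z)]/... $ — wait, with the indexing $\sum_{m=0}^{1}\frac{\kappa_{m+1}(Z)}{m!}\E[f^{(m)}(Z)] = \kappa_1(Z)\E[f(Z)] + \kappa_2(Z)\E[f'(Z)] = \sigma^2\E[f'(Z)]$, and the remainder involves $\kappa_0(Z) = 1$, $\kappa_1(Z) = 0$, $\kappa_2(Z) = \sigma^2$ paired with $\E[f''(Zs)Z^{3-m}]$ for $m = 0,1,2$, i.e.\ the surviving terms are $\bigl|\E[f''(Zs)Z^3]\bigr|$ (from $m=0$) and $\sigma^2\bigl|\E[f''(Zs)Z]\bigr|$ (from $m=2$), the $m=1$ term vanishing since $\kappa_1(Z) = 0$; bounding $\int_0^1$ by $\sup_{s\in[0,1]}$ yields the second inequality exactly as stated.

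The main obstacle I anticipate is making the inductive integration-by-parts argument fully rigorous under only $f \in C^{\ell+1}$ without extra integrability or growth assumptions — one must be careful that all the expectations $\E[f^{(m)}(Z)Z^{k}]$ appearing are well-defined, which forces either a standing moment hypothesis on $Z$ (the cumulants $\kappa_1,\dots,\kappa_{\ell+2}$ must exist) or a truncation/localization argument; but since the statement is quoted as a known technical tool and the bound is vacuous unless the right-hand side is finite, the cleanest presentation is to state it for $Z$ with $\E|Z|^{\ell+2} < \infty$ and $f$ such that the relevant expectations converge, and cite the standard reference for the identity, supplying only the short $\ell=1$ verification in detail. \qed
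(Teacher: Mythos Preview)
Your proposal is correct in outline and the $\ell=1$ specialization is worked out accurately, but the route you sketch is more circuitous than the paper's. You propose an inductive integration-by-parts scheme via $g(s)=\E[Zf(sZ)]$ in the style of Lytova--Pastur, tracking how each differentiation produces a cumulant factor. The paper instead uses a two-line reduction: it invokes the known fact that the cumulant expansion is \emph{exact} for polynomials of degree $\leq \ell$, i.e.\ $\E[Z P(Z)] = \sum_{m=0}^\ell \frac{\kappa_{m+1}(Z)}{m!}\E[P^{(m)}(Z)]$, and then writes $f = P_{\ell;f} + R_{\ell;f}$ as its degree-$\ell$ Taylor polynomial plus integral remainder. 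Subtracting, the main terms cancel exactly against the polynomial identity, and both $\E[Z(f-P_{\ell;f})(Z)]$ and each $\E[(f^{(m)}-P_{\ell;f}^{(m)})(Z)]$ are controlled by the integral-remainder formula $f^{(m)}(z)-P_{\ell;f}^{(m)}(z)=\int_0^z \frac{f^{(\ell+1)}(t)}{(\ell-m)!}(z-t)^{\ell-m}\,\d t$, which after the substitution $t=zs$ yields the claimed bound directly. This avoids any induction and makes the appearance of the terms $\kappa_m(Z)\int_0^1|\E[f^{(\ell+1)}(Zs)Z^{\ell+2-m}]|\,\d s$ for $m=0,\dots,\ell+1$ completely transparent: the $m=0$ term comes from $\E[Z\cdot R_{\ell;f}(Z)]$ and the $m\geq 1$ terms from $\frac{\kappa_m(Z)}{(m-1)!}\E[R_{\ell;f}^{(m-1)}(Z)]$. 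Your integrability concern is handled implicitly in both approaches, as you note: the bound is vacuous unless the right-hand side is finite.
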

\begin{proof}
	The proof uses the method in \cite[Appendix A]{he2018isotropic}. First, it is well-known that for any polynomial $P$ with $\mathrm{deg}(P)\leq \ell$, for all $0\leq m\leq \ell$,
	\begin{align}\label{ineq:cum_exp_1}
	\E\big[Z\cdot P(Z)\big]=\sum_{m=0}^\ell \frac{\kappa_{m+1}(Z)}{m!}\E\big[P^{(m)}(Z)\big].
	\end{align}
	See, e.g., \cite[Eq. (A.4)]{he2018isotropic} for a detailed proof of the above display. Now using Taylor expansion for $f(z)\equiv P_{\ell;f}(z)+R_{\ell;f}(z)$ with the integral remainder form in that $
	P_{\ell;f}(z)\equiv \sum_{m=0}^\ell \frac{f^{(m)}(0)}{m!}z^m$ and $R_{\ell;f}(z)\equiv \int_0^z \frac{f^{(\ell+1)}(t)}{\ell!}(z-t)^\ell\,\d{t}$, we have
	\begin{align}\label{ineq:cum_exp_2}
	f^{(m)}(z)-P_{\ell;f}^{(m)}(z)=\int_0^z \frac{f^{(\ell+1)}(t)}{(\ell-m)!}(z-t)^{\ell-m}\,\d{t}.
	\end{align}
	Combining the above two displays (\ref{ineq:cum_exp_1})-(\ref{ineq:cum_exp_2}), we have
	\begin{align*}
	&\biggabs{\E\big[Z f(Z)\big]-\sum_{m=0}^\ell \frac{\kappa_{m+1}(Z)}{m!}\E\big[f^{(m)}(Z)\big]}\\
	& = \biggabs{\E\big[Z \big(f(Z)-P_{\ell;f}(Z)\big)\big]- \sum_{m=0}^\ell \frac{\kappa_{m+1}(Z)}{m!}\E\big[\big(f^{(m)}(Z)-P_{\ell;f}^{(m)}(Z)\big)\big]}\\
	&\leq \biggabs{\E \int_0^1 \frac{f^{(\ell+1)}(Zs)}{\ell!}(1-s)^{\ell}Z^{\ell+2}\,\d{s}}\\
	&\qquad\qquad +\sum_{m=0}^\ell \frac{\abs{\kappa_{m+1}(Z)}}{m!} \biggabs{\E \int_0^1 \frac{f^{(\ell+1)}(Zs)}{(\ell-m)!}(1-s)^{\ell-m}Z^{\ell+1-m}\,\d{s}  }\\
	&\leq \int_0^1 \bigabs{\E \big[f^{(\ell+1)}(Zs)Z^{\ell+2}\big]} \,\d{s}+\sum_{m=1}^{\ell+1} \abs{\kappa_{m}(Z)} \int_0^1 \bigabs{\E \big[f^{(\ell+1)}(Zs)Z^{\ell+2-m}\big]}\,\d{s}.
	\end{align*}
	The claim follows by the definition of $\kappa_0(Z)=1$. 
\end{proof}

We also need the following Lindeberg's principle due to \cite{chatterjee2006generalization}.

\begin{lemma}\label{lem:lindeberg}
	Let $X=(X_1,\ldots,X_n)$ and $Y=(Y_1,\ldots,Y_n)$ be two random vectors in $\R^n$ with independent components such that $\E X_i^\ell=\E Y_i^\ell$ for $i \in [n]$ and $\ell=1,2$. Then for any $f \in C^3(\R^n)$,
	\begin{align*}
	\bigabs{\E f(X) - \E f(Y)}&\leq \max_{U_i \in \{X_i,Y_i\}}\biggabs{\sum_{i=1}^n\E U_i^3 \int_0^{1} \partial_i^3 f(X_{[1:(i-1)]},tU_i, Y_{[(i+1):n]} )(1-t)^2\,\d{t}}.
	\end{align*}
\end{lemma}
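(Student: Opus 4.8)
The plan is to prove Lemma~\ref{lem:lindeberg} by the classical Lindeberg replacement argument, exchanging the coordinates of $Y$ for those of $X$ one at a time. For $i\in[0:n]$ set $Z^{(i)}\equiv(X_1,\ldots,X_i,Y_{i+1},\ldots,Y_n)$, so that $Z^{(0)}=Y$ and $Z^{(n)}=X$, and telescope
\begin{align*}
\E f(X)-\E f(Y)=\sum_{i\in[n]}\big(\E f(Z^{(i)})-\E f(Z^{(i-1)})\big).
\end{align*}
For each fixed $i$ introduce the scalar function $g_i(u)\equiv f(X_1,\ldots,X_{i-1},u,Y_{i+1},\ldots,Y_n)\in C^3(\R)$, in which the randomness $(X_1,\ldots,X_{i-1},Y_{i+1},\ldots,Y_n)$ is independent of both $X_i$ and $Y_i$; note $g_i(X_i)=f(Z^{(i)})$, $g_i(Y_i)=f(Z^{(i-1)})$, and $g_i^{(m)}(u)=\partial_i^m f(X_1,\ldots,X_{i-1},u,Y_{i+1},\ldots,Y_n)$ for $m\in[0:3]$.

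The key step is a third-order Taylor expansion with integral remainder: for every $u\in\R$,
\begin{align*}
g_i(u)=g_i(0)+u\,g_i'(0)+\tfrac{u^2}{2}\,g_i''(0)+\tfrac{u^3}{2}\int_0^1 g_i'''(tu)(1-t)^2\,\d{t}.
\end{align*}
Evaluating at $u=X_i$ and $u=Y_i$, taking expectations, and subtracting: the $g_i(0)$ contributions cancel identically; and since $g_i'(0),g_i''(0)$ are independent of both $X_i$ and $Y_i$ while $\E X_i^\ell=\E Y_i^\ell$ for $\ell=1,2$, one has $\E[X_i^\ell g_i^{(\ell)}(0)]=\E X_i^\ell\cdot\E g_i^{(\ell)}(0)=\E[Y_i^\ell g_i^{(\ell)}(0)]$, so the linear and quadratic terms cancel in expectation as well. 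Setting $\mathcal{R}_i(u)\equiv u^3\int_0^1 g_i'''(tu)(1-t)^2\,\d{t}$, this gives the exact identity
\begin{align*}
\E f(Z^{(i)})-\E f(Z^{(i-1)})=\tfrac12\big(\E\mathcal{R}_i(X_i)-\E\mathcal{R}_i(Y_i)\big);
\end{align*}
moreover $g_i'''(tX_i)=\partial_i^3 f(X_{[1:(i-1)]},tX_i,Y_{[(i+1):n]})$, so $\E\mathcal{R}_i(U_i)$ is precisely the $i$-th summand on the right-hand side of the claimed inequality for the choice $U_i\in\{X_i,Y_i\}$.

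Summing over $i$ and writing $A\equiv\sum_{i\in[n]}\E\mathcal{R}_i(X_i)$, $B\equiv\sum_{i\in[n]}\E\mathcal{R}_i(Y_i)$, we get $\E f(X)-\E f(Y)=\tfrac12(A-B)$, hence
\begin{align*}
\bigabs{\E f(X)-\E f(Y)}\leq\tfrac12\big(\abs{A}+\abs{B}\big)\leq\max\{\abs{A},\abs{B}\}.
\end{align*}
Since $\abs{A}$ and $\abs{B}$ are exactly the values of the right-hand side of the claimed inequality under the two extreme selections $U_i\equiv X_i$ and $U_i\equiv Y_i$, while that right-hand side maximizes over \emph{all} per-coordinate choices $U_i\in\{X_i,Y_i\}$, it dominates $\max\{\abs{A},\abs{B}\}$ and the lemma follows. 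The statement is classical \cite{chatterjee2006generalization}, so I do not anticipate a genuine obstacle; the only points that require care are the independence of the partially-deleted vector $(X_{[1:(i-1)]},Y_{[(i+1):n]})$ from the pair $(X_i,Y_i)$ — which is what makes the first- and second-order cancellations valid — and the elementary estimate $\tfrac12(\abs{A}+\abs{B})\leq\max\{\abs{A},\abs{B}\}$ that absorbs the factor $\tfrac12$ into the maximum on the right.
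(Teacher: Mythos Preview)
Your proposal is correct and is the standard Lindeberg replacement argument. The paper itself does not prove this lemma; it merely states it and attributes it to \cite{chatterjee2006generalization}, so there is no in-paper proof to compare against. Your derivation---telescoping along the hybrid vectors $Z^{(i)}$, third-order Taylor expansion with integral remainder, cancellation of the zeroth through second order terms via moment matching and independence, and the final absorption of the factor $\tfrac12$ into the maximum---is exactly the classical route.
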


The following is a standard convergence result for strongly convex functions from convex optimization theory (cf. \cite{boyd2004convex}) . We include some details for the convenience of the readers.

\begin{lemma}\label{lem:cvx_gd_generic}
Suppose $F: \R^n\to \R$ is a $C^2$ strongly convex function with $\inf_{x \in \R^n}\lambda_{\min}(\nabla^2 F(x))\geq \mu>0$ and $\sup_{x \in \R^n}\pnorm{\nabla^2 F(x)}{\op}\leq L$. Consider the gradient descent: with initialization $x^{(0)}\in \R^n$ and step size $\eta$,
\begin{align*}
x^{(t)}=x^{(t-1)}-\eta \nabla F(x^{(t-1)}),\quad t = 1,2,\ldots.
\end{align*}
Then with $x_\ast \in \R^n$ being the global minimizer of $F$, if $\eta\leq 1/L$, we have
\begin{align*}
\pnorm{x^{(t)}-x_\ast}{}^2\leq \big(1-\eta\mu\big)^t\cdot \pnorm{x^{(0)}-x_\ast}{}^2,\quad t=1,2,\ldots.
\end{align*}

\end{lemma}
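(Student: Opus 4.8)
The statement to prove is the standard linear-convergence bound for gradient descent on a $C^2$ strongly convex function with strong convexity parameter $\mu$ and smoothness parameter $L$, under step size $\eta \le 1/L$. The plan is to carry out the classical one-step contraction argument for the squared distance to the minimizer. First I would invoke the first-order optimality condition $\nabla F(x_\ast) = \bm{0}_n$, so that one may rewrite the gradient descent update as $x^{(t)} - x_\ast = x^{(t-1)} - x_\ast - \eta\big(\nabla F(x^{(t-1)}) - \nabla F(x_\ast)\big)$. Next, using the fundamental theorem of calculus along the segment joining $x_\ast$ to $x^{(t-1)}$, I would write $\nabla F(x^{(t-1)}) - \nabla F(x_\ast) = H^{(t-1)}\,(x^{(t-1)} - x_\ast)$, where $H^{(t-1)} \equiv \int_0^1 \nabla^2 F\big(x_\ast + s(x^{(t-1)} - x_\ast)\big)\,\d{s}$ is a symmetric matrix satisfying $\mu I_n \preceq H^{(t-1)} \preceq L I_n$ by the hypotheses on the Hessian.

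This gives the clean identity $x^{(t)} - x_\ast = (I_n - \eta H^{(t-1)})(x^{(t-1)} - x_\ast)$. The remaining step is purely an operator-norm estimate: since $H^{(t-1)}$ is symmetric with spectrum in $[\mu, L]$ and $\eta \le 1/L$, the matrix $I_n - \eta H^{(t-1)}$ is symmetric with spectrum in $[1-\eta L,\, 1-\eta\mu] \subseteq [0, 1-\eta\mu]$, hence $\pnorm{I_n - \eta H^{(t-1)}}{\op} \le 1 - \eta\mu$. Therefore $\pnorm{x^{(t)} - x_\ast}{} \le (1-\eta\mu)\pnorm{x^{(t-1)} - x_\ast}{}$, and squaring and iterating over $t$ yields $\pnorm{x^{(t)} - x_\ast}{}^2 \le (1-\eta\mu)^{2t}\pnorm{x^{(0)} - x_\ast}{}^2 \le (1-\eta\mu)^{t}\pnorm{x^{(0)} - x_\ast}{}^2$, where the last inequality uses $1-\eta\mu \in [0,1)$ (note $\eta\mu \le \eta L \le 1$). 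This matches the claimed bound.

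There is essentially no hard part here; the only point requiring a word of care is the symmetry and spectral-bound justification for $H^{(t-1)}$ — one must note that an average of symmetric positive-definite matrices is again symmetric positive-definite with eigenvalues trapped between the uniform bounds, so that the operator norm of $I_n - \eta H^{(t-1)}$ is controlled by looking at the extreme eigenvalues of $H^{(t-1)}$. Since this lemma is invoked several times in the main text (for the single-index population loss in Lemma~\ref{lem:single_index_mnt_theo_gd} and for the convex dynamics in Lemma~\ref{lem:cvx_dyn_fpe}), I would keep the writeup short and self-contained, stating the integral representation of the Hessian difference explicitly so that the reader can directly see where $\eta \le 1/L$ enters.
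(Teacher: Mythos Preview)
Your argument is correct and in fact yields the sharper contraction factor $(1-\eta\mu)^{2t}$ before you relax it to $(1-\eta\mu)^t$. The paper, however, takes a different route: it expands $\pnorm{x^{(t)}-x_\ast}{}^2$ directly, then controls the cross term $\bigiprod{\nabla F(x^{(t-1)})}{x^{(t-1)}-x_\ast}$ via the first-order strong convexity inequality $\bigiprod{\nabla F(x)}{x-x_\ast}\geq F(x)-F(x_\ast)+(\mu/2)\pnorm{x-x_\ast}{}^2$, and controls the quadratic term $\pnorm{\nabla F(x^{(t-1)})}{}^2$ via the smoothness-based bound $\pnorm{\nabla F(x)}{}^2\leq 2L\big(F(x)-F(x_\ast)\big)$. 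Combining these two inequalities makes the function-value terms cancel when $\eta\leq 1/L$, leaving $\pnorm{x^{(t)}-x_\ast}{}^2\leq (1-\eta\mu)\pnorm{x^{(t-1)}-x_\ast}{}^2$. Your integral-Hessian approach is arguably more direct here because the $C^2$ hypothesis is available; the paper's approach has the minor advantage of not invoking the Hessian explicitly beyond the scalar bounds, and is closer to the standard convex-optimization presentation that works under first-order strong convexity and smoothness assumptions without requiring $F\in C^2$.
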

\begin{proof}
By the gradient update rule,  
\begin{align}\label{ineq:cvx_gd_generic_1}
&\pnorm{x^{(t)}- x_\ast}{}^2  = \pnorm{x^{(t-1)}-\eta\cdot  \nabla F(x^{(t-1)})-x_\ast}{}^2\nonumber\\
& = \pnorm{x_\ast^{(t-1)}- x_\ast}{}^2-2\eta\cdot \bigiprod{\nabla F(x^{(t-1)})}{x^{(t-1)}- x_\ast}+\eta^2\cdot \pnorm{ \nabla F(x^{(t-1)})}{}^2.
\end{align}
For the second term on the last display of (\ref{ineq:cvx_gd_generic_1}), by strong convexity of $F$, 
\begin{align}\label{ineq:cvx_gd_generic_2}
\bigiprod{\nabla F(x)}{x- x_\ast}\geq F(x)-F(x_\ast)+ (\mu/2) \pnorm{x-x_\ast}{}^2.
\end{align}
For the third term on the last display of (\ref{ineq:cvx_gd_generic_1}), we will use the generic estimate
\begin{align}\label{ineq:cvx_gd_generic_3}
\pnorm{\nabla F(x)}{}^2\leq 2L\cdot \big(F(x)-F(x_\ast)\big).
\end{align}
To see (\ref{ineq:cvx_gd_generic_3}), by the optimality of $x_\ast$ and the fact that $\pnorm{\nabla^2 F(x)}{\op}\leq L$,  using Taylor expansion of $F$ at $x- L^{-1}\nabla F(x)$ up to the second order,
\begin{align*}
F(x_\ast)\leq F \big(x- L^{-1}\nabla F(x)\big)\leq F(x)- L^{-1} \pnorm{\nabla F(x)}{}^2+(2L)^{-1} \pnorm{\nabla F(x)}{}^2,
\end{align*}
proving (\ref{ineq:cvx_gd_generic_3}). Now plugging (\ref{ineq:cvx_gd_generic_2}) and (\ref{ineq:cvx_gd_generic_3}) into (\ref{ineq:cvx_gd_generic_1}) with $x=x^{(t-1)}$, if $\eta\leq 1/L$,
\begin{align*}
\pnorm{x^{(t)}- x_\ast}{}^2 &\leq (1-\eta\mu)\cdot \pnorm{x^{(t-1)}- x_\ast}{}^2-2(\eta-L \eta^2)\cdot  \big(F(x^{(t-1)})-F(x_\ast)\big)\\
&\leq (1-\eta\mu)\cdot \pnorm{x^{(t-1)}- x_\ast}{}^2,
\end{align*}
as desired. 
\end{proof}

\section*{Acknowledgments}
The research of Q. Han is partially supported by NSF grant DMS-2143468.

\bibliographystyle{alpha}
\bibliography{mybib}

\end{document}